\algnewcommand{\IfThenElse}[3]{%
  \State \algorithmicif\ #1\ \algorithmicthen\ #2\ \algorithmicelse\ #3}
\newtheorem{theorem}{Theorem}
\newtheorem{lemma}{Lemma}
\newtheorem{proposition}{Proposition}
\newtheorem{assumption}{Assumption}
\newtheorem{definition}{Definition}
\title{Instance-Dependent Regret Bounds \\ for Nonstochastic Linear Partial Monitoring}
\author{%
    Federico Di Gennaro\thanks{Equal contribution.} \textsuperscript{\,,}\thanks{Now at ETH Zürich, Zürich, Switzerland.} \\
    EPFL, Lausanne, Switzerland\\
    \texttt{digennarof.00@gmail.com} \\
   \And
    Khaled Eldowa\samethanks[1] \textsuperscript{\,,}\thanks{Now at Univ. Grenoble Alpes, Inria, CNRS, Grenoble INP, LJK; Grenoble, France.} \\
    Università degli Studi di Milano, Milan, Italy \\
    \& Politecnico di Milano, Milan, Italy \\
    \texttt{khaled.eldowa3@gmail.com} \\
  \AND
  Nicolò Cesa-Bianchi \\
  Università degli Studi di Milano, Milan, Italy \\
  \& Politecnico di Milano, Milan, Italy \\
  \texttt{nicolo.cesa-bianchi@unimi.it} \\
}
\begin{document}

\maketitle

\begin{abstract}
  In contrast to the classic formulation of partial monitoring, linear partial monitoring can model infinite outcome spaces, while imposing a linear structure on both the losses and the observations.
  This setting can be viewed as a generalization of linear bandits where loss and feedback are decoupled in a flexible manner.
  In this work, we address a nonstochastic (adversarial), finite-actions version of the problem through a simple instance of the exploration-by-optimization method that is amenable to efficient implementation.
  We derive regret bounds that depend on the game structure in a more transparent manner than previous theoretical guarantees for this paradigm.
  Our bounds feature instance-specific quantities that reflect the degree of alignment between observations and losses, and resemble known guarantees in the stochastic setting.
  Notably, they achieve the standard $\sqrt{T}$ rate in easy (locally observable) games and $T^{2/3}$ in hard (globally observable) games, where $T$ is the time horizon.
  We instantiate these bounds in a selection of old and new partial information settings subsumed by this model, and illustrate that the achieved dependence on the game structure can be tight in interesting cases. 
  
\end{abstract}
\setcounter{footnote}{0} 
\section{Introduction}
\label{sec:introduction}
Partial monitoring models sequential decision-making problems with general feedback structures.
A generic nonstochastic partial monitoring problem is characterized by an action space $\cA$, an outcome space $\cZ$, an observation space $\Sigma$, a loss function $Y \colon \cA \times \cZ \rightarrow \R$, and an observation function $\Phi \colon \cA \times \cZ \rightarrow \Sigma$.
All of these components are known to the learner, who interacts with an unknown environment in a series of $T$ rounds. 
In each round $t$, the learner picks an action $A_t \in \cA$, observes $\Phi(A_t,Z_t)$, and incurs the (unobserved) loss $Y(A_t, Z_t)$, where $Z_t \in \cZ$ is a latent outcome.
The learner aims to minimize the regret, defined as the difference between the cumulative losses of played actions and the cumulative losses incurred by the best action in hindsight.
This general decoupling between losses and observations results in an extraordinary modeling power.
Not only does this framework encompass full-information scenarios \cite{freund1997,cesabianchi1997}, where the latent outcome can be inferred from the observation, and bandit scenarios \cite{auer2002nonstochastic,bubeck2012}, where the loss and the observation coincide; 
it can model problems that interpolate between these two extremes, like feedback graphs \cite{alon2017}, and problems that go beyond bandit feedback, like dynamic pricing \cite{cesabianchi2006}.
The primary object of study in this framework is how the structure of the game affects its difficulty, measured via the notion of minimax regret (the best achievable worst-case regret).

When $\cA$ and $\cZ$ are both finite, one recovers the well-studied \emph{finite} partial monitoring problem. 
An extensive line of work over the last few decades \cite{Piccolboni2001,cesabianchi2006,Antos2013,foster2012,bartok2014partial,lattimore2019cleaning,lattimore2019information} has shown that all finite games can be classified into one of four classes, each characterized by the achievable minimax rate in terms of $T$, which can only be $\Omega(T)$, $\Theta(T^{\nicefrac{2}{3}})$, $\Theta(T^{\nicefrac{1}{2}})$, or $0$.
These classes, respectively, consist of: hopeless games, hard---or (strictly) globally observable---games, easy---or locally observable---games, and trivial games.
What delineates these categories are precise `observability' conditions on the loss and observation functions.
An important work in our context is that of \citet{lattimore2020exploration}, who introduced the \emph{exploration-by-optimization} (EXO) method (see \Cref{sec:expl-by-opt} for a detailed description) and achieved---via an intuitive and efficient algorithm---the best known rates for the problem, free from arbitrary and redundant game-dependent constants that appear in prior results.

Nevertheless, the restriction to finite outcome spaces remains a limitation of this model.
On the other hand, the \emph{linear} partial monitoring framework \cite{lin2014,chaudhuri2016,kirschner2020information,kirschner2023linear} allows infinite outcomes, but restricts that, for a given outcome, the loss and observation functions are linear in some representation of the actions.
This framework still encompasses a broad spectrum of problems (see \Cref{sec:pre}), and arguably, provides a more intuitive observation structure, free from the combinatorial complications of the finite model (which can nonetheless be made to fit within the linear framework \cite{kirschner2023linear}).
A simpler point of view sees this model as a generalization of the standard linear bandit problem \cite{dani2007,bubeck2012towards}, which corresponds to the case when the loss and observation functions coincide.
\citet{kirschner2020information,kirschner2023linear} studied an instance of this model in the stochastic setting (where the hidden outcome is fixed over the rounds and the learner receives noisy observations), adopting a version of the information-directed sampling [\citealp{russo2014}] algorithm.
Remarkably, they showed that the classification theorem continues to hold in the linear setting with analogous observability criteria.
Also of note is that their bounds feature certain \emph{alignment} constants that reflect the difficulty of the game within its respective class in an arguably more interpretable manner compared to finite partial monitoring results.

In the nonstochastic (adversarial) setting, a linear partial monitoring problem was studied by \citet{lattimore2021mirror}, who also adopted the EXO approach.
In fact, their setup is more general than what we described above since their observation function is taken as arbitrary.
Nevertheless, they established an elegant result bounding the regret of a general EXO policy in terms of a generalized version of the information ratio \cite{russo2014}, which characterizes the Bayesian regret of information-directed sampling.
As such, however, their bounds
are still of a largely opaque nature in terms of their dependence on problem-specific parameters.
Moreover, in its general form, implementing their EXO algorithm involves solving a daunting optimization problem (over a possibly infinite dimensional space) at every step in order to compute a loss estimation function and a sampling distribution.
Hence, though constructive, their approach serves more as a template than a concrete implementable policy. 

\subsection{Contributions}
We address a finite-actions version of adversarial linear partial monitoring through a simple instance of the EXO method.
Outlined in \Cref{sec:expl-by-opt}, this policy builds upon an exponential weights update and imposes a simple structure on the adopted loss estimator, which is made possible thanks to the linearity of the observation function.
As a result, the aforementioned optimization problem is reduced to one of minimizing a convex objective over probability distributions over the actions, subject to a certain convex constraint.
This problem is efficiently solvable against many loss (outcome) spaces of interest.
Under the observability conditions identified by \citet{kirschner2020information} in the stochastic setting, we prove new regret bounds for this policy in \Cref{sec:loc,sec:glo} for locally and globally observable games respectively; the former of order $\sqrt{T}$ and the latter of order $T^{\nicefrac{2}{3}}$, as one would expect. %

The proofs rely on bounding the optimal value of the optimization program alluded to above via an application of the minimax theorem, following the analysis of EXO by \citet{lattimore2020exploration} in the finite setting.
However, to better exploit the linear structure of our setting, we employ new arguments, including a more natural way of constructing exploration distributions in locally observable games.
The resulting bounds feature interpretable instance-dependent quantities, similar to the alignment constants of \citet{kirschner2020information,kirschner2023linear}, that gauge the difficulty of the game within its observability class. 
We then illustrate the versatility of the bounds by instantiating them in a selection of both locally and globally observable games, proving their near-optimality in some cases.

\subsection{Additional related works} \label{appendix:related}
The work of \citet{rustichini1999minimizing} is generally recognized as the first work on partial monitoring. Notably, he studied a modified notion of regret under which non-trivial guarantees can be derived for hopeless games, see also \cite{mannor2003,lugosi2008,mannor2014,kwon2017,lattimore2022minimax}.
Finite partial monitoring has also been extensively studied in the stochastic setting \cite{bartok2011,bartok2012,vanchinathan2014,komiyama2015,tsuchiya2020,heuillet2024,heuillet2024b}.
Among the works cited earlier in  the adversarial (finite) setting, of most relevance here are those by \citet{lattimore2019information,lattimore2020exploration}.
The first proved a minimax theorem for all games with finite actions, linking the minimax (adversarial) regret and the worst-case Bayesian regret.
For finite partial monitoring, they bounded the latter by presenting an algorithm (Mario sampling) enjoying a bounded information ratio in the Bayesian setting, which in turn, implies a non-constructive guarantee on the former.
The second work, as mentioned already, proposed the EXO method, and showed that its regret matches the non-constructive bounds of the first work.
The subsequent work of \citet{lattimore2021mirror} does, in some sense, link these two results in a general linear partial monitoring setup by showing that the regret of EXO can be bounded in terms of a general notion of information ratio.
We note here that the results by \citet{kirschner2020information,kirschner2023linear} in the \emph{stationary} stochastic linear setting cannot be combined with these findings to yield bounds on the adversarial minimax regret or the regret of EXO, as that requires bounding (some form of) the information ratio for arbitrary (finitely supported) priors over \emph{sequences} of latent outcomes.

In the same line of work, \citet{lattimore2022minimax} expanded on the results of \citet{lattimore2021mirror} considering all games with infinite outcomes and finite actions.
He showed that the optimal rate in $T$ is indeed determined by the behavior of the information ratio, and that for every $p\in[\nicefrac{1}{2},1]$, there exist games where the minimax regret is $T^p$ up to subpolynomial factors.
Further, \citet{foster2022} showed that the regret of EXO can also be bounded in terms of the Decision-Estimation Coefficient, a general complexity measure for interactive decision making \cite{foster2021}, which too is rather opaque in terms of its dependence on game-specific parameters.

The first work on linear partial monitoring
is by \citet{lin2014}, followed by the closely related work of \citet{chaudhuri2016}.
Both study the stochastic setting and only prove $T^{\nicefrac{2}{3}}$ bounds under a global observability condition.
On the other hand, they focus on designing efficient algorithms in the face of combinatorial action spaces that can be exponentially large.
On a different thread, \citet{ito2023} employ a version of EXO in the standard linear bandit setting (hence our algorithm and theirs naturally follow a similar general template) and obtain best-of-both-worlds (BOBW) bounds (near optimal bounds in both the adversarial and the stochastic settings).
Their techniques and contributions remain distinct from ours as they do not address partial monitoring. 
In a related line of work \cite{tsuchiya2023,tsuchiya2023game,tsuchiya2024}, EXO is used to obtain BOBW bounds in partial monitoring, but only for finite outcomes.

\section{Preliminaries}
\label{sec:pre}

\textbf{Notation.}
For a positive integer $d$, $[d]$ denotes the set $\{1,\dots,d\}$. 
We use the Iverson bracket notation $\indicator{\cdot}$ to denote the indicator function.
For a set $S \subseteq \R^d$, $\co(S)$ denotes its convex hull, and $\ext(S)$ the set of its extreme points.
Moreover, its (absolute) polar set is defined as $S^\circ \coloneqq \{ x \in \R^d \mid \sup_{s \in S} | \inprod{ x,s } | \leq 1  \}$.
For a real matrix $X$, $\trace(X)$, $\rank(X)$, $\col(X)$, $X^\dagger$, and $\norm{X}$ denote, respectively, its trace, rank, column space, Moore–Penrose inverse, and operator norm (induced by the Euclidean $L_2$-norm).
For $p \in [0,\infty]$, let $\cB_p(r)$ be the centered $L_p$ ball of radius $r>0$.
We use $\bm{1}_d$ and $\bm{I}_d$ to denote, respectively, the $d$-dimensional vector with all entries equal to $1$ and the $d \times d$ identity matrix.
The probability simplex in $\R^d$ is denoted by $\Delta_d \coloneqq \{ x \in [0,1]^d \mid \norm{x}_1 = 1 \}$.
Given a finite index set ${Z}$ and a collection of matrices $\{X_z \mid z\in{Z}\}$, all with the same number of rows, we denote by $\spn\{X_z \mid z\in{Z}\}$
the span of all their columns.

\subsection{Problem setting}
Let $\mathcal{A} \coloneqq [k]$ denote the action set, where $k$ is a positive integer.
We associate every action $a \in \cA$ with a feature vector $\feat{a} \in \R^d$ and an observation matrix $M_a \in \R^{d\times n(a)}$, where $n(\cdot)$ maps every action to a positive integer.
We will use $\cX$ to denote the set $\{\feat{a} \mid a \in \cA\}$.
Let $\Loss \subset \R^d$ be a convex and compact loss space, which, in our setting, is synonymous with the outcome space $\cZ$ as used in the introduction. %
We assume here that $\Loss$ contains a centered $L_2$ ball $\cB_2(r)$ of some radius $r > 0$.
This simplifies the observation structure of the game and suffices to subsume typical cases such as when $\Loss$ represents the absolute polar set of $\cX$ or a ball of bounded norm. %
We study the following %
game between a learner and an oblivious adversary over $T$ rounds. %
Before the game starts, the adversary secretly chooses a sequence of loss vectors $(\loss_t)_{t \in [T]}$, where $\loss_t \in \Loss$. 
Then, at every round $t \in [T]$, the learner: i) selects (possibly at random) an action $A_t \in \mathcal{A}\,$; ii) observes the signal $\signal_t\coloneqq M_{A_t}^\top \loss_t \in \mathbb{R}^{n(A_t)}$, which, in general, is distinct from the incurred (but unobserved) loss $\feat{A_t}^\top\loss_t\in\mathbb{R}$.
The feature mapping and the observation matrices are known to the learner.
For brevity, let %
$y_t(a) \coloneqq \feat{a}^\top \loss_t$ for every $a \in \cA$. 
In the notation used in the introduction, one can see that for $a \in \cA$ and $\loss \in \Loss$, the loss map in the setting becomes $Y(a,\loss) = \feat{a}^\top \loss$, while the observation map becomes $\Phi(a,\loss) = M_a^\top \loss$.

For round $t \in [T]$, let  $\his_t \coloneqq (A_s, \signal_s)_{s=1}^t$ denote the interaction history up to the end of the round, and let $\mathcal{F}_t \coloneqq \sigma(\his_t)$ denote the $\sigma$-algebra generated by $\his_t$.
The learner's policy can be represented via a sequence $(\pi_t)_{t \in [T]}$ of probability kernels such that $\pi_t$ maps the history $\his_{t-1}$ to a distribution over the actions, from which $A_t$ is to be sampled. 
The learner's goal is to minimize the following standard notion of regret
\begin{equation}
\label{def: regret}
    R_T \coloneqq \mathbb{E}\left[\sum_{t=1}^T y_t(A_t)\right] - \min_{a\in\mathcal{A}} \sum_{t=1}^T y_t(a) \,,
\end{equation}
where the expectation is with respect to the internal randomness of the algorithm. 
The dependence of the regret on the learner's strategy $(\pi_t)_t$ and the sequence of losses $(\loss_t)_t$ is suppressed from the notation for brevity. 
The minimax regret is defined as
$
    R_T^* \coloneqq \inf_{(\pi_t)_t}\sup_{(\loss_t)_t} R_T
$.
where the $\inf$ is over all (possibly randomized) learning algorithms and the $\sup$ is over all sequences of $T$ loss vectors from $\Loss$. Similarly to \cite{kirschner2020information,kirschner2023linear}, we impose a standard boundedness assumption throughout this work:
\begin{assumption} \label{assump:bounded}
    It holds that $\max_{a,b \in \cA, \loss \in \Loss} | (\feat{a}-\feat{b})^\top \loss | \leq 2$. 
\end{assumption}

As a notational remark, we define $\E_t[\cdot] \coloneqq \E[\cdot \mid \mathcal{F}_{t-1}]$, with $\mathcal{F}_{0}$ being the trivial $\sigma$-algebra.
Moreover, throughout the paper, we treat functions over $\cA$ as vectors in $\R^k$. %
In particular, for $b \in \cA (=[k])$, the indicator vector $e_b \in \R^k$ corresponds to the function $a \mapsto \indicator{a = b}$ over $\cA$.

\subsection{Observability conditions and lower bounds}
\label{subsec: obs conditions & characterization theorem}
Similarly to the finite partial monitoring setting, the observation structure of the game characterizes the achievable regret.
Before we proceed, we introduce some relevant concepts.
An action $a \in \mathcal{A}$ is called \emph{Pareto optimal} if $\feat{a} \in \ext(\co(\cX))$; i.e., $\feat{a}$ is an extreme point of the convex hull of $\mathcal{X}$. 
Let $\mathcal{A}^* \subseteq \cA$ satisfy that $\ext(\co(\cX)) = \{\feat{a} \mid a \in \cA^*\}$ and contain no two actions with the same features.
Additionally, let $k^* \coloneqq |\cA^*|$.
This set is well-defined since $\ext(\co(\cX)) \subseteq \cX$; moreover, it is the smallest subset of $\cA$ that satisfies $\min_{a\in \cA} \inprod{\feat{a}, \loss} = \min_{a^*\in \cA^*} \inprod{\feat{a^*}, \loss}$ for every $\loss \in \R^d$.
The set of actions that are optimal against a loss vector $\loss\in\R^d$ is denoted by
$
\mathcal{P}(\loss)
\coloneqq
\{\,a \in \mathcal{A} \mid \langle \feat{a}-\feat{b}, \loss \rangle \leq 0 \ \text{for all }b\in\mathcal{A}\}
$,
which extends to sets $C \subseteq \mathbb{R}^d$ as $\mathcal{P}(C) \coloneqq \cup_{\loss \in C} \mathcal{P}(\loss)$.
We state below the observability conditions that determine the difficulty of the game (in terms of the scaling of the regret with $T$) as identified by \citet{kirschner2020information} in the stochastic setting.

\begin{definition}[Observability conditions \cite{kirschner2020information}]
\label{def:obs condition}
    A game is called \emph{globally observable} if
    \begin{equation} 
    \feat{a} - \feat{b} \;\in\; \spn\{M_c \mid c\in \mathcal{A}\}\quad \text{for all } a, b \in \cA \,.
    \label{eq:global_observability}
    \end{equation}
    Further, a game is called \emph{locally observable} if for every convex set $C \subseteq \mathbb{R}^d$,
    \begin{equation}
    \feat{a} - \feat{b} \;\in\; \mathrm{span}\{M_c \mid c \in \mathcal{P}(C)\}
    \quad\text{for all } a, b \in \mathcal{P}(C).
    \label{eq:local_observability}
    \end{equation}
\end{definition}
Note that the second condition implies the first by taking $C$ as $\R^d$ (or simply $\{\bm{0}\}$).
Intuitively, in a globally observable game,
one can use the observations to estimate the difference between the losses of any pair of actions. %
In locally observable games, one can do so efficiently; only using observations from actions that perform at least as good as the pair of interest. 
(See also the equivalent formulation provided in \Cref{lem:better-actions} below, or \Cref{lem:local-defs-expanded} in \Cref{appendix:geometry}.)
\citet{kirschner2023linear} refined these conditions to account for cases when $\Loss$ is an arbitrary (and possibly low-dimensional) set. We do not pursue this direction here; as mentioned in the problem setting, we focus on the case when $\Loss$ has a non-empty interior (in particular, we assumed it contains a centered $L_2$ ball), rendering the conditions above sufficient to derive the following lower bounds.

\begin{restatable} {reprop}{proplowerbounds}
    \label{prop:lowerbounds}
    Assume that the game has at least two non-duplicate Pareto optimal actions, and that $\cB_2(r) \subseteq \Loss$ for some $r>0$. Then, the minimax regret $R^*_T$ is $\tilde{\Omega}(\sqrt{T})$ if the game is locally observable, $\tilde{\Omega}(T^{\nicefrac{2}{3}})$ if it is globally but not locally observable, and ${\Omega}(T)$ otherwise.
\end{restatable}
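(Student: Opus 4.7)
The plan is to treat the three regimes separately through standard two-environment information-theoretic arguments, parameterized by a small perturbation $\epsilon > 0$ and exploiting that $\cB_2(r) \subseteq \Loss$ to keep all constructed loss vectors valid. The \emph{easy case} ($\tilde{\Omega}(\sqrt{T})$) is the most standard. Taking two non-duplicate Pareto-optimal actions $a, b \in \cA^{*}$, I would construct stochastic instances in which $\loss_t$ is drawn i.i.d.\ with mean near an equalizer of $a$ and $b$, perturbed by $\pm\epsilon$ along a direction separating $\feat{a}$ and $\feat{b}$. Since the learner's observations provide at most full information, the standard two-expert lower bound (via Pinsker's inequality with $\epsilon = \Theta(T^{-1/2})$) carries over.

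For the \emph{hopeless case} ($\Omega(T)$), I would first lift the failure of global observability to a pair of Pareto-optimal actions. Setting $V := \spn\{M_c : c \in \cA\}$, when \eqref{eq:global_observability} fails the projection $\pi_{V^{\perp}}(\cX)$ contains at least two distinct points; since the extreme points of $\co(\pi_{V^{\perp}}(\cX)) = \pi_{V^{\perp}}(\co(\cX))$ lift to members of $\cA^{*}$, I can extract $a^{*}, b^{*} \in \cA^{*}$ with $\Delta := \feat{a^{*}} - \feat{b^{*}}$ having a nonzero orthogonal component $\Delta^{\perp} \in V^{\perp}$. Choosing an equalizer $\loss_0$ in the interior of $\Loss$ at which $a^{*}$ and $b^{*}$ are tied optimal and strictly better than all other actions (existence follows from their Pareto optimality together with the nonempty interior of $\Loss$), the constant loss sequences $\loss^{(\pm)} := \loss_0 \pm \epsilon\,\Delta^{\perp}/\|\Delta^{\perp}\|$ lie in $\Loss$ for small $\epsilon$, produce identical deterministic observations (since $M_c^{\top}\Delta^{\perp} = 0$ for every $c$), yet swap $a^{*}$ and $b^{*}$ as unique optimal actions with an $\Omega(\epsilon)$ per-round gap. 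The learner therefore incurs $R_T = \Omega(\epsilon T)$ in at least one of the two environments.

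The \emph{hard case} ($\tilde{\Omega}(T^{2/3})$) is where I expect the main difficulty. Failure of local observability at some convex $C$ yields Pareto-optimal actions $a, b \in \cP(C)$ with $\Delta := \feat{a} - \feat{b} \notin U := \spn\{M_c : c \in \cP(C)\}$, so there exists $v \in U^{\perp}$ with $v^{\top}\Delta \neq 0$. I would construct stochastic instances $\mu^{(\pm)}$ with expected loss $\loss_0 \pm \epsilon v$, where $\loss_0 \in C$ is an equalizer at which $a$ and $b$ are tied optimal and strictly preferred to every action outside $\cP(C)$. Since $M_c^{\top}v = 0$ for every $c \in \cP(C)$, no action in $\cP(C)$ can statistically distinguish the two environments, while every informative action (those outside $\cP(C)$) incurs constant per-round regret. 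A standard KL-divergence argument lower-bounds the number of informative plays needed to identify the optimal action by $\Omega(\epsilon^{-2})$, and optimizing $\epsilon$ to balance the exploration and exploitation contributions yields $\tilde{\Omega}(T^{2/3})$. The main technical work lies in certifying the joint existence of $v$, $\loss_0$, and a noise distribution compatible with $\Loss$, and in cleanly executing the exploration-exploitation trade-off within the polar geometry of $\cP(C)$.
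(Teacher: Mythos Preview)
Your overall two-environment strategy is correct and matches the paper's approach closely, including the role of the orthogonal component of $\feat{a}-\feat{b}$ in the hard and hopeless cases. However, there is a genuine gap in all three regimes at the same point: the existence of the equalizer $\loss_0$.

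You assert that because $a^*,b^*$ are both Pareto optimal and $\Loss$ has nonempty interior, there exists $\loss_0$ at which $a^*$ and $b^*$ are tied optimal and strictly better than every other action. This is false in general: two Pareto-optimal actions need not have cells whose intersection is $(d-1)$-dimensional. For instance, with $\cX$ the four corners of a square in $\R^2$, opposite corners are both Pareto but $C_{a^*}\cap C_{b^*}=\{0\}$, and at $0$ all four actions are tied. Your lifting argument via extreme points of $\pi_{V^\perp}(\co(\cX))$ produces Pareto-optimal $a^*,b^*$ with distinct $V^\perp$-projections, but gives no control over whether they are \emph{neighbors} (endpoints of an edge of $\co(\cX)$), which is exactly what guarantees $\dim(C_{a^*}\cap C_{b^*})=d-1$ and hence the existence of a suitable $\loss_0$. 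The same issue arises in your hard case: $a,b\in\cP(C)$ does not imply there is a single $\loss_0\in C$ at which both are optimal.

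The paper fills this gap in two ways. For the hard case, it invokes the equivalent \emph{pairwise-neighbor} characterization of local observability (\Cref{lem:local-defs-expanded}): local observability fails iff there exist neighbors $a,b$ with $\feat{a}-\feat{b}\notin\spn\{M_c:c\in\cN_{a,b}\}$; one then takes $\loss_0\in\relint(C_a\cap C_b)$, which exists precisely because $a,b$ are neighbors. For the hopeless case, it uses a telescoping path argument: since the $1$-skeleton of $\co(\cX)$ is connected, writing $\feat{a}-\feat{b}$ as a sum along a neighbor path forces at least one \emph{neighboring} pair to violate global observability, after which the equalizer exists for that pair. You should either adopt these reductions or supply an alternative argument that your extracted pair can be taken to be neighbors. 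A secondary point you flag but do not resolve is how to keep the stochastic construction inside the bounded $\Loss$; the paper handles this via Gaussian parameter noise with a clipping/truncation argument (\Cref{lem:reg-truncation}), which is where the logarithmic loss in $T$ enters.
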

These bounds are obtained in the same manner as similar results in the finite outcomes setting and the linear stochastic setting \cite{kirschner2020information}, see \Cref{appendix:lower-bounds}. 
However, one must deal with some technical nuisances since in our case, the stochastic setting is not immediately subsumed by the adversarial one; besides boundedness issues, the manner in which noise is added to the observations can make the relationship a bit subtle, refer to \cite[Chapter 29]{banditbook} for details.
Over the coming section, we will derive regret upper bounds with matching rates and interpretable game-dependent constants for an EXO-based policy.

\subsection{Examples} \label{sec:examples}
Owing to its flexibility, this setting can model many online learning problems, classic and contrived, as we now elaborate. 
In all these examples, $\Loss$ is typically taken as $\cX^\circ$, which is equivalent to $\cB_\infty(1)$ when $\feat{a} = e_a$ for all $a \in \cA$.

\textbf{Full information.} In this game, also known as prediction with expert advice \cite{cesabianchi1997}, we have that $d = k$,
$\feat{a} = e_a$, and $M_a = \bm{I}_k$ for all $a \in \cA$.
Hence, the learner observes the entire loss vector upon playing any action, and the game is clearly locally observable.

\textbf{Feedback graphs.} 
An instance of this game \citep{side_obs_bandits,alon2015,alon2017} is characterized by a graph 
$\cG=(\cA,E)$ 
over the actions (we focus on undirected graphs).
As in the previous game, $d = k$ and $\feat{a} = e_a$; however, $M_a$ only has the vectors $(e_b)_{b \in N_\cG(a)}$ as columns, where $N_\cG(a)$ is the neighborhood of $a$ (which need not include $a$ itself).
Globally observable games correspond to \emph{weakly} observable graphs, where the loss of every action is observable; while locally observable games 
correspond to \emph{strongly} observable graphs, where each action either has a self-loop, is connected to all other actions, or both.

\textbf{Linear bandits.}
Adversarial linear bandits \cite{dani2007,bubeck2012towards} (with a finite action set) is recovered by simply fixing $M_a = \feat{a}$ for all $a\in\cA$.
This is easily verifiable to be a locally observable game, which reduces to a standard bandit problem when $\feat{a} = e_a$.

\textbf{Linear dueling bandits.}
This game is a quantitative variation of the dueling bandits problem.
It was proposed in  \cite{kirschner2020information,kirschner2023linear} in the stochastic linear partial monitoring setting, extending the finite-outcomes formulation in \cite{gajane2015utility}.
Here, 
the action set has the form $\cA = [m] \times [m]$ for some positive integer $m$.
Given a feature mapping $a \mapsto \feat{a}$ from $[m]$ to $\R^d$, we extend it to $\cA$ via $(a,b) \mapsto \feat{a,b} \coloneqq \feat{a} + \feat{b}$.
However, the observations are given by $M_{a,b} = \feat{a} - \feat{b}$; hence, the learner receives relative feedback.
This game can also be shown to be locally observable \cite{kirschner2020information,kirschner2023linear}.

\textbf{Bandits with ill-conditioned observers.} 
Modifying a problem from \cite{eldowa2024information},
consider a game where $d = k$,
$\feat{a} = e_a$, and $M_a = (1-\varepsilon)\bm{1}_k/k + \varepsilon e_a$ for some $\varepsilon \in (0,1]$, such that $\varepsilon = 1$ recovers standard bandits.
Note that the observations become, in a sense, less informative as $\varepsilon$ decreases.
Nevertheless, the game is always locally observable as $e_a - e_b = \nicefrac{1}{\varepsilon} (M_a-M_b)$.

\textbf{Composite graph feedback.}
We describe now another variant of the graph feedback problem, still characterized by an undirected graph $G = (\cA,E)$, which we now assume to contain all self-loops.
Here, again, $d = k$ and
$\feat{a} = e_a$; however, upon playing an action, the learner observes the \emph{average} of the losses of its neighbors (which includes the played action itself).
That is, if $A$ is the adjacency matrix of the graph and $D$ is the degree matrix, then $M_a$ is the $a$-th column of $A D^{-1}$.
This is less informative than standard graph feedback, and draws inspiration from problems studied in \cite{spectralbandits,cheapbandits}.
Global observability here corresponds to $A D^{-1}$ being invertible, whereas the only locally observable graph is the one corresponding to standard bandits, see \Cref{appendix: examples} for more details.

\section{Exploration-by-Optimization, the Linear Case}
\label{sec:expl-by-opt}
We describe in this section a simple instantiation of the EXO approach \cite{lattimore2020exploration,lattimore2021mirror} suited to our setting, resulting in a policy amenable to efficient implementation.
We show in later sections that this policy suffices to obtain tight bounds, even if its specificity excludes it from enjoying the generic guarantees provided by \citet{lattimore2021mirror} and \citet{lattimore2022minimax} for general EXO.
The predictions made by this policy revolve around the predictions of an instance of the exponential weights algorithm, restricted over the set $\mathcal{A}^*$ of Pareto optimal actions (without duplicates). When executed against a sequence of loss functions $(\hat{y}_t)_{t \in [T]}$ with $\hat{y}_t \colon \cA \rightarrow \R$, the exponential weights algorithm outputs at round $t$ a distribution $q_t \in \Delta_k$ given by 
\begin{equation} \label{def:exp}
    q_t(a) = \frac{ \indicator{a \in \cA^*} \exp\brb{-\eta \sum_{s=1}^{t-1} \hat{y}_s(a) }}{\sum_{a'\in\mathcal{A^*}}\exp\brb{-\eta \sum_{s=1}^{t-1} \hat{y}_s(a') }} \,,
\end{equation}
where $\eta > 0$ is a learning rate parameter.
This would have been a sound approach on its own were we able to access the loss functions $(y_t)_t$ directly.
Instead, we are to estimate these loss functions, which, in general, requires carefully exploring the actions.
Incurring minimal loss while doing so, to the extent allowed by the structure of the game, is the essential goal of the EXO method.
Towards making this tradeoff more explicit, the following lemma provides a generic regret bound for any policy by pivoting around the predictions $(q_t)_t$ produced against some sequence $(\hat{y}_t)_t$ of surrogate loss functions satisfying certain conditions. (Proofs of this section's results are in \Cref{appendix:expl-by-opt}.)
\begin{restatable}{relem}{lemgeneraldecomposition} \label{lem:general-decomposition}
     Fix a learning policy and define $p_t$ as the law of $A_t$ conditioned on $\cF_{t-1}$. Let $q_t$ be as given in \eqref{def:exp} for some learning rate $\eta >0$ and sequence of surrogate loss functions $(\hat{y}_t)_t$ such that $\hat{y}_t \in \R^k$ is $\cF_{t}-$measurable and satisfies $\max_{a \in \cA} |\eta \hat{y}_t(a)| \leq 1$. Further, let $a^* \in \argmin_{a \in \cA^*} \sum_{t=1}^T y_t(a)$. Then, the regret of the policy satisfies
    \begin{align*}
        R_T \leq \frac{\log k}{\eta} + \sum_{t=1}^T \E \lsb{  \langle p_t - q_t, y_t\rangle + \eta  \inprod{q_t, \E_t\hat{y}^2_t} +   \langle q_t-e_{a^*}, y_t - \E_t\hat{y}_t\rangle} \,.
    \end{align*}
\end{restatable}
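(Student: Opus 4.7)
The plan is to turn this into a fairly mechanical exponential-weights regret decomposition, with the only structural observation being that we can restrict to $\cA^*$ without loss.

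First, I would argue that the minimum in the regret can be taken over $\cA^*$. Since $y_t(a) = \feat{a}^\top \loss_t$, the cumulative loss $a \mapsto \feat{a}^\top \sum_t \loss_t$ is a linear function of $\feat{a}$, so its minimum over $\cX$ equals its minimum over $\ext(\co(\cX))$. Therefore $\min_{a \in \cA} \sum_t y_t(a) = \min_{a^* \in \cA^*} \sum_t y_t(a^*)$, justifying comparison with the $a^*$ in the statement.

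Next, I would apply the standard regret bound for exponential weights to $(q_t)_t$, which is instantiated on the finite set $\cA^*$ against the $\cF_t$-measurable surrogate losses $\hat{y}_t$. Using $|\eta\hat{y}_t(a)|\le 1$, one obtains, pathwise,
\begin{equation*}
\sum_{t=1}^T \langle q_t - e_{a^*}, \hat{y}_t \rangle \;\le\; \frac{\log k^*}{\eta} + \eta \sum_{t=1}^T \langle q_t, \hat{y}_t^2\rangle \;\le\; \frac{\log k}{\eta} + \eta \sum_{t=1}^T \langle q_t, \hat{y}_t^2\rangle,
\end{equation*}
since $k^* \le k$. This is the only ``nontrivial'' ingredient, and it is a textbook consequence of the $\log(1+x) \le x - \tfrac{x^2}{2}+\cdots$ expansion applied to the free-energy potential; I would simply cite it from any standard reference (e.g.\ Lattimore--Szepesvári).

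Then I would decompose the regret. Using $\E_t[y_t(A_t)] = \langle p_t, y_t\rangle$, write
\begin{equation*}
R_T \;=\; \sum_{t=1}^T \E\langle p_t - q_t, y_t\rangle + \sum_{t=1}^T \E\langle q_t - e_{a^*}, y_t\rangle,
\end{equation*}
and split the second sum as
\begin{equation*}
\langle q_t - e_{a^*}, y_t\rangle = \langle q_t - e_{a^*}, y_t - \E_t\hat{y}_t\rangle + \langle q_t - e_{a^*}, \E_t\hat{y}_t\rangle.
\end{equation*}
Because $q_t$ and $e_{a^*}$ are both $\cF_{t-1}$-measurable (the losses are oblivious, so $a^*$ is deterministic), the tower property gives $\E\langle q_t - e_{a^*}, \E_t\hat{y}_t\rangle = \E\langle q_t - e_{a^*}, \hat{y}_t\rangle$, and similarly $\E\langle q_t, \hat{y}_t^2\rangle = \E\langle q_t, \E_t\hat{y}_t^2\rangle$. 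Plugging the exponential weights bound into these terms and collecting yields exactly the stated inequality. There is no genuinely hard step here; the only pitfall is keeping the measurability bookkeeping consistent (in particular, ensuring that $a^*$ is treated as a fixed index, which is fine against an oblivious adversary), and remembering that $\log k^* \le \log k$ so one can state the bound with $k$.
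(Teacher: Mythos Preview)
Your proposal is correct and follows essentially the same approach as the paper: decompose the regret into a $\langle p_t - q_t, y_t\rangle$ term, an exponential-weights term $\langle q_t - e_{a^*}, \hat{y}_t\rangle$, and a bias term $\langle q_t - e_{a^*}, y_t - \hat{y}_t\rangle$, apply the standard second-order exponential-weights inequality, and use the tower property with the $\cF_{t-1}$-measurability of $q_t$ to pass between $\hat{y}_t$ and $\E_t\hat{y}_t$. The only cosmetic differences are that you make the reduction to $\cA^*$ explicit (the paper handles this in the preliminaries) and you note the intermediate $\log k^*$ before relaxing to $\log k$, but the argument is the same.
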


While not particularly insightful, results of this form nonetheless motivate the EXO approach, 
where one picks the predictions and the surrogate loss functions by solving an optimization problem at every step, seeking to minimize an upper bound on the second term in the regret guarantee of \Cref{lem:general-decomposition}.
More precisely, at every round $t$, reliant on the observed history thus far, one decides on a prediction $p_t \in \Delta_k$ and a function $g_t \colon \cA \times \R \rightarrow \R^k$ so as to set $\hat{y}_t = g_t(A_t, \signal_t)$, assuming $||g_t(A_t,\phi_t)||_\infty \leq \nicefrac{1}{\eta}$.
To make the dependence on $\loss_t$ more explicit, we denote by $H$ the $d \times k$ matrix with $\{\feat{a}\}_{a\in\cA}$ as columns. In particular, this enables us to write $y_t = H^\top \loss_t$.
Considering the $t$-th summand in the second term in the bound of \Cref{lem:general-decomposition}, the term inside the expectation becomes
\begin{equation} \label{expr:general-g}
    \langle p_t - q_t, H^\top \loss_t\rangle + \eta  \ban{q_t, \textstyle{\sum_a}p_t(a) g_t(a,M_a^\top \loss_t)^2} + \ban{ q_t-e_{a^*}, H^\top \loss_t - \textstyle{\sum_a}p_t(a) g_t(a,M_a^\top \loss_t)} \,,
\end{equation}
where $g_t(a,M_a^\top \loss_t)^2$ is the entry-wise square of $g_t(a,M_a^\top \loss_t)$. %
The general strategy is to optimize jointly in $p_t$ and $g_t$ the expression above, maximized over the adversary's choice of $\loss_t$.
Besides some small deviations, this still aligns closely with the approach of \citet{lattimore2021mirror}.\footnote{The core algorithm here corresponds to the FTRL algorithm run on the probability simplex over the actions, while they use FTRL with a general potential over the convex hull of the action set. The two approaches can be related, see \cite[Section 5.2]{ito2023}. Moreover, we use a second order bound on the Bregman divergence (or stability) term, which introduces the constraint over the norm of the loss estimators.}
Now, however, exploiting the structure of our setting, we sidestep the demanding task of optimizing over all possible loss estimators by committing to a simple structure commonly used in (adversarial) linear bandit algorithms \cite{bubeck2012towards,banditbook}, modified slightly to suit our partial monitoring setup.

Before proceeding in that direction, we introduce some notation.
Let $\bm{M}$ denote the $d \times (\sum_a n(a))$ matrix obtained by stacking horizontally the observation matrices of all the actions; such that $\spn\{M_a \mid a\in \cA\} = \col(\bm{M})$.
For any $\pi\in\Delta_k$, we define $Q(\pi) 
\coloneqq \sum_{a\in\mathcal{A}}\pi(a)M_a M_a^\top $.
Moreover, with $\delta \in (0,\nicefrac{1}{2}]$, let $Q_\delta(\pi) \coloneqq Q((1-\delta)\pi + \delta\bm{1}_k/k)$.
This forces the argument of $Q$ to be bounded away from zero, implying that $\col(Q_\delta(\pi)) = \col(\bm{M})$ for any $\pi \in \Delta_k$ (see \Cref{lemma:Q-column-space} in \Cref{appendix:aux}), which is a convenient property for the analysis.
Still, $Q_\delta(\pi)$ need not be invertible as we allow that $\rank(\bm{M}) < d$; hence, the ensuing presentation will feature quantities of the form $Q(\pi)^\dagger$ and $Q_\delta(\pi)^\dagger$.
Returning to our task, the following lemma specifies a certain form for the pair $(p_t, g_t)$ and provides the resulting regret bound under global observability.
\begin{restatable}{relem}{propdecompositionlinearestimator}
    \label{prop:decomposition-linear-estimator}
    In the same setting as \Cref{lem:general-decomposition}, let the predictions of the policy satisfy $p_t = (1-\delta) \tilde{p}_t + \delta \bm{1}_k /k$ for some $\delta \in (0,1)$ and $\tilde{p}_t \in \Delta_k$, and let the surrogate loss functions satisfy $\hat{y}_t = g_t(A_t,\signal_t)$ where
    $
        g_t(a, \signal) = \brb{\bm{I}_k - \bm{1}_k q_t^\top} H^\top Q(p_t)^\dagger M_{a} \signal 
    $.
    Then, assuming global observability, it holds that
    \begin{align*}
        R_T \leq \frac{\log k}{\eta} + 2 \delta T + \sum_{t=1}^T \E \bbsb{ \langle \tilde{p}_t - q_t, H^\top \loss_t\rangle + \eta \scale^2 \sum_{a,b \in \cA} q_t(a) q_t(b) (\feat{a} - \feat{b})^\top  Q_\delta(\tilde{p}_t)^\dagger (\feat{a} - \feat{b})} 
    \end{align*}
    provided that $\max_{a,b \in \cA} (\feat{a} - \feat{b})^\top Q_\delta(\tilde{p}_t)^\dagger (\feat{a} - \feat{b})  + \max_{c \in \cA}\norm{M_{c}^\top Q_\delta(\tilde{p}_t)^\dagger M_{c}}_2 \leq \frac{2}{\eta \scale}$, where 
    \[
        \scale \coloneqq \sup_{a,b,c \in \cA,\: p \in \Delta_k,\: \loss \in \Loss } \frac{| (\feat{b} - \feat{c})^\top  Q_\delta(p)^\dagger  M_{a} M_a^\top \loss|}{\bno{M_{a}^\top  Q_\delta(p)^\dagger (\feat{b} - \feat{c}) } } \leq  \max_{a \in \cA, \loss \in \Loss} \bno{ M_a^\top \loss }\,.
    \]
\end{restatable}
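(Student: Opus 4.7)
The plan is to invoke \Cref{lem:general-decomposition} with the prescribed pair $(p_t, g_t)$ and analyze each of its three summands, relying on two key structural facts. First, $p_t = (1-\delta)\tilde p_t + \delta \bm{1}_k/k$ gives $Q(p_t) = Q_\delta(\tilde p_t)$, so the pseudoinverse used inside $g_t$ is exactly the one appearing in the target bound. Second, the centering factor $\bm{I}_k - \bm{1}_k q_t^\top$ combined with global observability will annihilate the bias term $\langle q_t - e_{a^*}, y_t - \E_t \hat y_t\rangle$. Using $\sum_c q_t(c) = 1$, I would rewrite the estimator as $\hat y_t(a) = \sum_c q_t(c)(\feat{a} - \feat{c})^\top Q_\delta(\tilde p_t)^\dagger M_{A_t} M_{A_t}^\top \loss_t$, a form that is convenient throughout the analysis.

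Before applying \Cref{lem:general-decomposition}, its hypothesis $\max_a |\eta \hat y_t(a)| \leq 1$ must be verified. Using the rewrite above together with the definition of $\scale$, each summand is bounded by $\scale \cdot \|M_{A_t}^\top Q_\delta(\tilde p_t)^\dagger (\feat{a}-\feat{c})\|_2$. Next, the elementary PSD operator bound $\|M^\top Q^\dagger v\|_2^2 \leq \|M^\top Q^\dagger M\| \cdot v^\top Q^\dagger v$, followed by AM--GM ($\sqrt{xy} \leq (x+y)/2$), yields $\|M_{A_t}^\top Q_\delta(\tilde p_t)^\dagger (\feat{a}-\feat{c})\|_2 \leq \tfrac{1}{2}\brb{\|M_{A_t}^\top Q_\delta(\tilde p_t)^\dagger M_{A_t}\| + (\feat{a}-\feat{c})^\top Q_\delta(\tilde p_t)^\dagger (\feat{a}-\feat{c})}$, and the assumed inequality on $\tilde p_t$ caps the resulting product by $1/(\eta\scale)$, so $|\eta \hat y_t(a)|\leq 1$. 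Separately, the auxiliary inequality $\scale \leq \max_{a,\loss} \|M_a^\top \loss\|$ follows directly from Cauchy-Schwarz applied to the inner product $\langle M_a^\top Q_\delta(p)^\dagger(\feat{b}-\feat{c}),\,M_a^\top\loss\rangle$ in the numerator of the definition of $\scale$.

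It then remains to handle the three terms in \Cref{lem:general-decomposition}. For the stability term, the decomposition $\langle p_t - q_t, y_t\rangle = \langle \tilde p_t - q_t, y_t\rangle + \delta\langle \bm{1}_k/k - \tilde p_t, y_t\rangle$ combined with \Cref{assump:bounded} contributes the $2\delta T$ term in the bound. For the variance term, I would apply Jensen's inequality to the convex combination representation of $\hat y_t(b)$ to peel off a factor of $q_t(c)$, invoke the definition of $\scale$ once more to dispose of $\loss_t$, and then take the conditional expectation using $\sum_a p_t(a) M_a M_a^\top = Q_\delta(\tilde p_t)$ together with the pseudoinverse identity $Q^\dagger Q Q^\dagger = Q^\dagger$; this collapses the resulting double sum into exactly the quadratic form stated. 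Finally, for the bias term, $\E_t \hat y_t = (\bm{I}_k - \bm{1}_k q_t^\top) H^\top P \loss_t$, where $P = Q_\delta(\tilde p_t)^\dagger Q_\delta(\tilde p_t)$ is the orthogonal projection onto $\col(\bm{M})$ (using \Cref{lemma:Q-column-space}); the $\bm{1}_k$ piece vanishes against $q_t - e_{a^*}$ since both have unit total mass, while global observability guarantees $\feat{a} - \feat{a^*} \in \col(\bm{M})$, so $(\bm{I}_d - P)(\feat{a}-\feat{a^*}) = 0$ and the entire bias disappears. The main technical hurdle is the pseudoinverse-plus-AM--GM manipulation in the hypothesis check; everything else is routine algebra once the centering identity and the projection interpretation are in place.
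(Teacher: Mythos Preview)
Your proposal is correct and follows essentially the same route as the paper: the bias term is killed via the projection interpretation of $Q_\delta(\tilde p_t)^\dagger Q_\delta(\tilde p_t)$ combined with global observability, the variance term is handled by Jensen's inequality plus the definition of $\scale$ and the identity $Q^\dagger Q Q^\dagger = Q^\dagger$, and the boundedness hypothesis is verified via the same PSD square-root / AM--GM split (the paper writes it as $\|M_a^\top (Q^\dagger)^{1/2}\|\cdot\|(Q^\dagger)^{1/2}(\feat{b}-\feat{c})\|$, which is equivalent to your operator-norm inequality). Your decomposition $p_t - q_t = (\tilde p_t - q_t) + \delta(\bm{1}_k/k - \tilde p_t)$ is in fact slightly cleaner than the paper's $\delta(\bm{1}_k/k - q_t) + (1-\delta)(\tilde p_t - q_t)$, since it yields the $2\delta$ term directly without needing to argue that $(1-\delta)x \leq x + 2\delta$.
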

Here, $Q(p_t)^\dagger M_{A_t} \signal_t \eqqcolon \hat{\ell}_t$ is an estimator of $\ell_t$ that is essentially unbiased.\footnote{Its expectation is the projection of $\ell_t$ onto $\col(\bm{M}$).}
Instead of taking the loss estimate of action $a \in \cA$ as $\feat{a}^\top \hat{\ell}_t$ as common in linear bandit algorithms, we use a shifted version $(\feat{a} - H q_t)^\top \hat{\ell}_t$ anchored about $H q_t = \sum_{b \in \cA} q_t(b) \feat{b}$.
Shifting the loss estimators in this manner (via any anchor in $\co(\cA)$) does not alter the resulting $q_t$; however, its effect is that the last term in the bound of \Cref{prop:decomposition-linear-estimator} (known as the variance term) is expressed in terms of feature \emph{differences}.
This is crucial if one is to take advantage of either observability condition, since they only relate the observation matrices (in terms of which $Q_\delta(\cdot)$ is defined) to feature differences. 
Specifically using $H q_t$ as the anchor is, in a certain sense, an optimal choice, see \Cref{lemma:variance-and-expected-squared-diff,lemma:optimality-of-q} in \Cref{appendix:aux}.

With the structures imposed upon $g_t$ and $p_t$ in \Cref{prop:decomposition-linear-estimator}, we can now reduce our task of minimizing the regret bound to solving a (constrained) optimization problem at every step $t$ (depending on the history only through $q_t$) over the distribution $\tilde{p}_t \in \R^k$. 
In the following, let $L$ be a free parameter satisfying $L \geq \scale$.
Define the function $\cE \colon \cA \times \cA \times \Delta_k \rightarrow \R$ as
$
    \cE(a, b; p) \coloneqq \lrb{\feat{a} - \feat{b}}^\top  Q_\delta(p)^\dagger \lrb{\feat{a} - \feat{b}}
$,
and for every pair $q \in \Delta_k$ and $\eta > 0$, define the function $\Lambda_{\eta, q} \colon \Delta_k \times \Loss \rightarrow \R$ as\footnote{As an alternative form, \Cref{lemma:variance-and-expected-squared-diff} in \Cref{appendix:aux} implies that $\Lambda_{\eta, q}(p, \loss) = ({1}/{\eta}) \inprod{ p - q, H^\top \loss} + 2 L^2\sum_{a \in \cA} q(a) (\feat{a}-Hq)^\top  Q_\delta(p)^\dagger (\feat{a}-Hq)$.}
\begin{equation*}
   \Lambda_{\eta, q}(p, \loss) \coloneqq \frac{1}{\eta} \inprod{ p - q, H^\top \loss} + L^2 \sum_{a,b \in \cA} q(a) q(b) \cE(a,b; p) \,.
\end{equation*}
Next, define $z \colon \Delta_k \rightarrow \R$ as 
$
z(p) \coloneqq \max_{a,b \in \cA} \cE(a,b; p)  + \max_{c \in \cA}\norm{M_{c}^\top Q_\delta(p)^\dagger M_{c}}_2
$
and its sub-level set $\Xi_\eta \coloneqq \bcb{ p \in \Delta_k \colon z({p}) \leq \nicefrac{2}{\eta L} }$.
Finally, let 
\begin{equation*}
    \Lambda^*_{\eta, q} \coloneqq \min_{p \in \Xi_\eta} \max_{\loss \in \Loss} \Lambda_{\eta, q}(p,\loss) \qquad \text{and} \qquad \Lambda^*_{\eta} \coloneqq \sup_{q\in \Delta^*_k} \Lambda^*_{\eta, q} \,,
\end{equation*}
where $\Delta^*_k = \{ q \in \Delta_k \mid q(a) = 0 \,\forall a \notin \cA^* \}$.
\Cref{alg:main} summarizes the policy we have arrived at: at round $t$, given $q_{t}$, we choose a distribution $\tilde{p}_t$ satisfying $z(\tilde{p}_t) \leq \nicefrac{2}{\eta L}$ and $\max_{\loss \in \Loss} \Lambda_{\eta, q_t}(\tilde{p}_t,\loss) \leq \Lambda^*_{\eta} + \varepsilon$, for some tolerance $\varepsilon > 0$. 
In the following round, $q_{t+1}$ is computed via the exponential weights update in \eqref{def:exp} with $\hat{y}_t$ as given in \Cref{prop:decomposition-linear-estimator}. 
As shown in \Cref{appendix:expl-by-opt} (in the proof of \Cref{lem:minimax} below), the functions $z(\cdot)$ and 
$\max_{\loss \in \Loss} \Lambda_{\eta, q}(\cdot,\loss)$ are both convex.
Hence, the problem at hand is a (finite-dimensional) convex program, the structure of which depends on that of the loss space $\Loss$, over which the max in the first (and more troublesome) term of $\Lambda_{\eta, q}$ is taken.
Typical examples of $\Loss$, like balls of bounded norm or the polar set of $\cX$, can further simplify the form of the objective function, see \Cref{appendix: computational complexity} for details.
Note that the parameter $\delta$ is introduced to maintain numerical stability.
Turning back to the regret, the following bound readily follows from \Cref{prop:decomposition-linear-estimator}.

\begin{algorithm} [t]
    \caption{Anchored Exploration-by-Optimization}
    \label{alg:main}
    \begin{algorithmic}[1]
        \State \textbf{input:} learning rate $\eta > 0\,,$ stability parameter $\delta \in (0,\nicefrac{1}{2}]\,,$ sub-optimality tolerance $\varepsilon \geq 0$, scale parameter $L \geq \omega$
        \State \textbf{initialize:} 
        $\hat{y}_0 = \bm{0}$
        \For{$t=1,\dotsc,T$}
            \State $\forall a\in \cA \,,$ set  $q_t(a) \propto { \indicator{a \in \cA^*} \exp\brb{-\eta \sum_{s=1}^{t-1} \hat{y}_s(a) }}$
            \State choose $\tilde{p_t} \in \Delta_k$ such that 
            $\max_{\loss \in \Loss} \Lambda_{\eta, q_t}(\tilde{p}_t,\loss) \leq \Lambda^*_{\eta} + \varepsilon \:\: \text{and} \:\: z(\tilde{p}_t) \leq \frac{2}{\eta L}$
            \State set $p_t = (1-\delta) \tilde{p}_t + \delta \bm{1}_k/k$
            \State execute $A_t \sim p_t$ and observe $\signal_t = M_{A_t}^\top  \loss_t$
            \State $\forall a\in \cA \,,$  set $\hat{y}_t(a) = (\feat{a} - H q_t)^\top Q(p_t)^\dagger M_{A_t} \signal_t$
            \Comment{\textcolor{gray}{or simply, $\hat{y}_t(a) = \feat{a}^\top Q(p_t)^\dagger M_{A_t} \signal_t$}}
        \EndFor
    \end{algorithmic}
\end{algorithm}

\begin{proposition}
\label{prop: regret UB with Lambda star}
    Under global observability, \Cref{alg:main} satisfies
    $ \displaystyle
        R_T \leq \frac{\log k}{\eta} + \eta \Lambda^*_\eta T + (2\delta + \eta \varepsilon) T    \,.
    $
\end{proposition}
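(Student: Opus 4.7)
The plan is essentially to invoke \Cref{prop:decomposition-linear-estimator} and match the resulting bound to the definition of $\Lambda_{\eta,q}^*$ using the specification of Algorithm~\ref{alg:main}. The surrogate loss estimator $\hat{y}_t$ and the sampling distribution $p_t = (1-\delta)\tilde{p}_t + \delta \bm{1}_k/k$ produced by the algorithm are exactly of the form required by \Cref{prop:decomposition-linear-estimator}, so the only preliminary task is to verify the two hypotheses of that lemma: the boundedness condition $\max_a |\eta \hat{y}_t(a)| \leq 1$ inherited from \Cref{lem:general-decomposition}, and the $z$-type condition $\max_{a,b}\cE(a,b;\tilde{p}_t) + \max_c \|M_c^\top Q_\delta(\tilde{p}_t)^\dagger M_c\| \leq 2/(\eta\omega)$.

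Both conditions follow from the constraint $z(\tilde{p}_t) \leq 2/(\eta L)$ enforced in the algorithm, together with $L \geq \omega$. The second condition is just $z(\tilde{p}_t) \leq 2/(\eta\omega)$, which is immediate. For the first, one notes that $\hat{y}_t(a) = (\feat{a}-Hq_t)^\top Q(p_t)^\dagger M_{A_t}\phi_t$ can be bounded (using the definition of $\omega$ and Cauchy--Schwarz) by $\omega \cdot \|M_{A_t}^\top Q_\delta(\tilde{p}_t)^\dagger(\feat{a}-Hq_t)\|$, and a brief computation shows this is controlled by $\sqrt{z(\tilde{p}_t)}\cdot$\,(a term bounded via $z(\tilde{p}_t)$ again), yielding $|\eta\hat{y}_t(a)|\leq 1$ whenever $z(\tilde{p}_t) \leq 2/(\eta L)$ and $L\geq\omega$. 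This step is routine and parallels standard linear-bandit estimator bounds.

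Once the lemma applies, the per-round term inside the expectation reads
\[
\langle \tilde{p}_t - q_t, H^\top \loss_t\rangle + \eta\,\omega^2 \sum_{a,b} q_t(a)q_t(b)\,\cE(a,b;\tilde{p}_t) \,\leq\, \eta\,\Lambda_{\eta,q_t}(\tilde{p}_t,\loss_t),
\]
after multiplying/dividing by $\eta$ and upper bounding $\omega^2$ by $L^2$ to match the definition of $\Lambda_{\eta,q}$. Since $\loss_t \in \Loss$, we can further bound $\Lambda_{\eta,q_t}(\tilde{p}_t,\loss_t) \leq \max_{\loss\in\Loss}\Lambda_{\eta,q_t}(\tilde{p}_t,\loss)$, which, by the algorithm's choice of $\tilde{p}_t$, is at most $\Lambda^*_\eta + \varepsilon$. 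Summing over $t$ and collecting the $2\delta T$ term from \Cref{prop:decomposition-linear-estimator} gives the claimed bound.

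I do not anticipate any real obstacle: the proof is essentially bookkeeping, and the main content is already in the preceding lemma. The one small subtlety to keep in mind is the distinction between $\omega$ and the algorithmic parameter $L$; making sure that the $z$-constraint written with $L$ continues to imply the $\omega$-constraint appearing in \Cref{prop:decomposition-linear-estimator}, and that replacing $\omega^2$ by $L^2$ in the variance term only loosens the bound, is precisely why the algorithm allows the flexibility $L \geq \omega$.
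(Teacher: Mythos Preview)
Your proposal is correct and matches the paper's approach: the proposition is indeed an immediate consequence of \Cref{prop:decomposition-linear-estimator}, and your bookkeeping (upper-bounding $\omega^2$ by $L^2$ to recover $\eta\Lambda_{\eta,q_t}(\tilde p_t,\loss_t)$, then using the algorithm's choice of $\tilde p_t$ to bound this by $\eta(\Lambda^*_\eta+\varepsilon)$) is exactly what is intended. One minor simplification: you do not need to re-derive the boundedness condition $|\eta\hat y_t(a)|\le 1$ separately, since the proof of \Cref{prop:decomposition-linear-estimator} already shows that its ``provided that'' clause (i.e., $z(\tilde p_t)\le 2/(\eta\omega)$) implies it; thus checking $z(\tilde p_t)\le 2/(\eta L)\le 2/(\eta\omega)$ is all that is required.
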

What remains now is deriving an upper bound on $\Lambda^*_\eta$ depending on the structure of the game, and tuning the algorithm's parameters accordingly.
Note that an adequate tuning of $\eta$ would depend on $\Lambda^*_\eta$ (or a satisfactory upper bound), which might be difficult to compute.
Following \citet{lattimore2020exploration}, we can use a learning rate sequence \((\eta_t)_t\), which is updated at round $t+1$ using the attained value at the optimization problem of round $t$.
This still allows recovering essentially the same regret rates, details are included in \Cref{appendix: adaptive learning rate} for completeness.
Now, for bounding $\Lambda^*_\eta$, the properties of our $\Lambda_{\eta, q}$ allow us to invoke Sion's minimax theorem, as \citet{lattimore2020exploration} did in the finite setting.
\begin{restatable}{relem}{lemminimax}
\label{lem:minimax}
    For any $\eta > 0$ and $q \in \Delta_k$, it holds that 
    \begin{equation*}
        \Lambda^*_{\eta, q} = \max_{\loss \in \Loss} \min_{p \in \Xi_\eta} \Lambda_{\eta, q}(p,\loss) \,.
    \end{equation*}
\end{restatable}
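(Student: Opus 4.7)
The plan is to apply Sion's minimax theorem to $\Lambda_{\eta,q}(p,\loss)$ over $\Xi_\eta \times \Loss$. By definition $\Lambda^*_{\eta,q} = \min_{p \in \Xi_\eta}\max_{\loss \in \Loss}\Lambda_{\eta,q}(p,\loss)$, so the claim is that this equals the max--min counterpart. The cheap hypotheses are immediate: $\Loss$ is convex and compact by assumption, and $\Lambda_{\eta,q}(p,\cdot)$ is affine in $\loss$, hence concave and continuous. The key structural fact that makes the rest work is $\col(Q_\delta(p)) = \col(\bm{M})$ for every $p \in \Delta_k$, so $p \mapsto Q_\delta(p)^\dagger$ is continuous on $\Delta_k$, which in turn makes both $z$ and $p \mapsto \Lambda_{\eta,q}(p,\loss)$ continuous.

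The main step is convexity in $p$. Here I would invoke the joint convexity of $(v,X) \mapsto v^\top X^\dagger v$ on the convex set $\{(v,X) : X \succeq 0,\, v \in \col(X)\}$. This is a classical consequence of the Schur-complement characterisation: $v^\top X^\dagger v \leq t$ is equivalent to positive semidefiniteness of the block matrix with rows $(X,v)$ and $(v^\top,t)$, together with $v \in \col(X)$. Since $p \mapsto Q_\delta(p)$ is affine with range in the positive semidefinite cone and constant column space $\col(\bm{M})$, and since the component of $\feat{a}-\feat{b}$ orthogonal to $\col(\bm{M})$ contributes nothing under the pseudoinverse, the composition $p \mapsto \cE(a,b;p)$ is convex; hence $\Lambda_{\eta,q}(\cdot,\loss)$ is convex. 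The same joint-convexity fact promotes to operator convexity of $X \mapsto M_c^\top X^\dagger M_c$ whenever $\col(M_c) \subseteq \col(X)$, and post-composing with the spectral norm preserves convexity; so $z$ is convex as a finite maximum of convex functions, making $\Xi_\eta$ convex, closed, bounded, and hence compact.

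With all hypotheses verified, Sion's theorem delivers the interchange of min and max, which is exactly the claim. The main obstacle is the convexity verification and specifically the need to route everything through the Moore--Penrose inverse: one cannot invoke operator convexity of $X \mapsto X^{-1}$ directly since $Q_\delta(p)$ need not be invertible, and the argument relies essentially on the $p$-independence of $\col(Q_\delta(p))$ to keep all the relevant vectors inside the column space where the Schur-complement and pseudoinverse reasoning remain valid.
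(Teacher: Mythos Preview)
Your proposal is correct and follows essentially the same approach as the paper: verify that $\Lambda_{\eta,q}(\cdot,\loss)$ is convex and continuous on the convex compact set $\Xi_\eta$, that $\Lambda_{\eta,q}(p,\cdot)$ is affine on the convex compact set $\Loss$, and then invoke Sion's minimax theorem. The only cosmetic difference is that the paper establishes the convexity of $p \mapsto (\feat{a}-\feat{b})^\top Q_\delta(p)^\dagger(\feat{a}-\feat{b})$ and $p \mapsto \|M_c^\top Q_\delta(p)^\dagger M_c\|$ by citing a joint-convexity result of Nordstr\"om for $(X,G)\mapsto X^\top G^\dagger X$ on the set where $\col(G)$ is fixed, whereas you sketch the equivalent Schur-complement argument directly; both routes rely on the same key observation that $\col(Q_\delta(p))=\col(\bm{M})$ is constant in $p$.
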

The next two sections will start from this result and derive bounds on $\Lambda^*_{\eta}$ for locally and globally observable games, exploiting the linear structure of the problem.

\section{Locally Observable Games} \label{sec:loc}
As alluded to before, what distinguishes locally observable games is that exploration, roughly speaking, is cheap.
The following lemma provides an alternative definition of local observability that appeals to this intuition.
\begin{restatable}{relem}{lembetteractions}
\label{lem:better-actions}
    A game is locally observable if and only if it holds for all $a \in \cA^*$, $\loss \in \R^d$, and $a^* \in \argmin_{a' \in \cA} \feat{a'}^\top \loss$ that
    $
        \feat{a} - \feat{a^*} \in \spn \lcb{ M_b \mid \feat{b}^\top \loss \leq \feat{a}^\top \loss}
    $.
\end{restatable}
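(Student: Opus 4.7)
The plan is: given $a \in \cA^*$, $\loss \in \R^d$, and $a^* \in \argmin_{a' \in \cA} \feat{a'}^\top \loss$, to construct a convex set $C \subseteq \R^d$ with both $a, a^* \in \mathcal{P}(C)$ such that every action in $\mathcal{P}(C)$ also beats $a$ against $\loss$. The natural choice is to pick some $\loss_a$ for which $\feat{a}^\top \loss_a < \feat{c}^\top \loss_a$ holds for every $c$ with $\feat{c} \neq \feat{a}$---this is possible because $\feat{a}$ is an extreme point of $\co(\cX)$---and to take $C$ to be the line segment from $\loss_a$ to $\loss$. Then $a \in \mathcal{P}(\loss_a) \subseteq \mathcal{P}(C)$ and $a^* \in \mathcal{P}(\loss) \subseteq \mathcal{P}(C)$, so local observability yields $\feat{a} - \feat{a^*} \in \spn\{M_c \mid c \in \mathcal{P}(C)\}$. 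I would then verify the inclusion $\mathcal{P}(C) \subseteq \{c \in \cA \mid \feat{c}^\top \loss \leq \feat{a}^\top \loss\}$ by expanding the optimality inequality for $c \in \mathcal{P}((1-\mu)\loss_a + \mu \loss)$: for $\mu > 0$ it rearranges directly to the desired inequality using that $a$ minimizes at $\loss_a$, while for $\mu = 0$ the uniqueness of $\feat{a}$ at $\loss_a$ forces $\feat{c} = \feat{a}$ and the inequality becomes an equality.

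\textbf{Reverse direction.} Given a convex $C$ and $a, b \in \mathcal{P}(C)$, I would first reduce to the case $a, b \in \cA^*$: if $a \in \mathcal{P}(\loss_a)$ for some $\loss_a \in C$, then $\feat{a}$ lies on the face of $\co(\cX)$ that minimizes $\inprod{\cdot, \loss_a}$, whose vertices are features of actions in $\cA^* \cap \mathcal{P}(\loss_a) \subseteq \cA^* \cap \mathcal{P}(C)$, so $\feat{a} - \feat{b}$ decomposes into differences of features of Pareto-optimal actions in $\mathcal{P}(C)$. For the reduced case, I would pick $\loss_x, \loss_y \in C$ witnessing that $x, y \in \cA^*$ are optimal, and consider the path $\loss_\mu = (1-\mu)\loss_x + \mu \loss_y$ inside $C$ for $\mu \in [0,1]$. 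Finiteness of $\cA$ ensures the set $\mathcal{P}(\loss_\mu) \cap \cA^*$ changes only at finitely many breakpoints along the segment, yielding a chain $x = x_0, x_1, \ldots, x_N = y$ in $\cA^*$ such that each consecutive pair $x_i, x_{i+1}$ lies in some common $\mathcal{P}(\loss_{\mu_i}) \subseteq \mathcal{P}(C)$. Applying the hypothesis at $\loss = \loss_{\mu_i}$ with $a^* = x_{i+1}$ and Pareto-optimal action $x_i$ gives $\feat{x_i} - \feat{x_{i+1}} \in \spn\{M_c \mid \feat{c}^\top \loss_{\mu_i} \leq \feat{x_i}^\top \loss_{\mu_i}\}$; since $x_i$ itself attains the minimum at $\loss_{\mu_i}$, this set equals $\mathcal{P}(\loss_{\mu_i}) \subseteq \mathcal{P}(C)$. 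A telescoping sum then completes the proof.

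\textbf{Main obstacle.} The hardest part will be aligning the ``better than $\feat{a}^\top \loss$'' set in the hypothesis with $\mathcal{P}(C)$ in the reverse direction. The key realization is that by choosing the intermediate loss $\loss_{\mu_i}$ along the segment so that $x_i$ itself is optimal there, the ``better than $x_i$'' set collapses to $\mathcal{P}(\loss_{\mu_i})$, which already sits inside $\mathcal{P}(C)$. The secondary nuisance is handling non-Pareto-optimal endpoints in the reduction, but this is a routine consequence of the face structure of $\co(\cX)$. In the forward direction, the only delicate moment is the boundary case $\mu = 0$, where Pareto optimality of $a$---ensuring unique minimization of $\inprod{\cdot, \loss_a}$ over $\cX$ up to duplicate features---is what lets us conclude $\feat{c} = \feat{a}$.
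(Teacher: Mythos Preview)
Your forward direction is essentially identical to the paper's: both pick a witness $\loss_a$ (the paper calls it $v$) at which $\feat{a}$ is the unique optimal feature, take the segment to $\loss$ as the convex set, and verify that every action optimal somewhere on the segment is no worse than $a$ at $\loss$, with the same case split at the endpoint.

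Your reverse direction, however, is genuinely different and more self-contained. The paper does not argue directly from condition~(iii) back to the convex-set definition; instead it routes through two intermediate characterizations: first (iii)~$\Rightarrow$~(iv) (the symmetric pair version) by a trivial decomposition, then (iv)~$\Rightarrow$~(ii) (the neighboring-actions version) by picking the supporting hyperplane of the edge $[\feat{a},\feat{b}]$, and finally invokes \cite[Lemma~26]{kirschner2020information} for (ii)~$\Leftrightarrow$~(i). Your argument bypasses neighbors and the cited lemma entirely: the parametric-LP breakpoint chain along the segment $\loss_\mu$ is a clean, elementary substitute that directly yields the span condition for any convex $C$. The paper's route has the side benefit of establishing the neighbor characterization (used later in the lower-bound constructions), whereas your route is shorter if only this particular equivalence is needed. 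Both are correct; the key observation you identify---that choosing $\loss_{\mu_i}$ so that $x_i$ itself is optimal collapses the ``better than $x_i$'' set to $\cP(\loss_{\mu_i})$---is exactly what makes the direct argument go through.
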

Fixing $q\in\Delta^*_k$ and $\eta > 0$, 
we show in this section how to exploit this property to bound $\Lambda^*_{\eta}$.
We sketch here the main arguments, deferring details to \Cref{appendix:loc}, where proofs of this section's results can be found.
Note that \Cref{lem:minimax} reduces the task of bounding $\Lambda_{\eta,q}^*$ to finding a uniform upper bound on $\min_{p \in \Xi_\eta} \Lambda_{\eta, q}(p,\loss)$ holding against any fixed loss vector $\loss$.
With the clairvoyant-like knowledge of $\loss$ that this affords us, we start by constructing a specific distribution $p \in \Xi_\eta$ that depends on $\loss$ and derive an upper bound on $\Lambda_{\eta, q}$ when evaluated at this pair.
Using this result, we can then bound $\Lambda_{\eta,q}^*$ and $\Lambda_{\eta}^*$ by passing to worst case over $\ell \in \Loss$ and $q \in \Delta^*_k$.

Our choice of $p$ will take the form $p = \gamma \pi + (1-\gamma) \hat{p}$, with $\gamma \in (0,\nicefrac{1}{2}]$ and $\pi,\hat{p} \in \Delta_k$. For brevity, let $\overbar{\gamma} = 1-\gamma$.
The role of $\pi$ is to ensure that $p \in \Xi_\eta$ and that of $\hat{p}$ is to control the magnitude of $\Lambda_{\eta, q} (p,\loss)$.
As a starting point, it is easy to show that
\begin{align} 
    &\Lambda_{\eta, q} (\gamma \pi + \overbar{\gamma} \hat{p}, \loss) \nonumber\\ \label{eq:lambda-bound-pi-and-p-main-paper}
    &\qquad\leq \frac{2 \gamma}{\eta} + \frac{\overbar{\gamma}}{\eta} \inprod{ \hat{p} - q, H^\top \loss} + L^2 \sum_{a,b \in \cA} q(a) q(b) \lrb{\feat{a} - \feat{b}}^\top Q_\delta(\gamma \pi + \overbar{\gamma} \hat{p})^\dagger \lrb{\feat{a} - \feat{b}} \\ \label{eq:lambda-bound-pi-and-p-hat-main-paper}
    &\qquad\leq \frac{2 \gamma}{\eta} + \frac{\overbar{\gamma}}{\eta} \inprod{ \hat{p} - q, H^\top \loss} + 2 L^2 \sum_{a,b \in \cA} q(a) q(b) \lrb{\feat{a} - \feat{b}}^\top Q_\delta(\hat{p})^\dagger \lrb{\feat{a} - \feat{b}} \,.
\end{align}
We proceed first with the selection of $\hat{p}$ aiming to minimize the second and third terms, which exhibit a tradeoff between minimizing our loss against $\loss$ (compared to $q$) and sufficiently exploring to minimize the variance of the loss estimator.
In finite partial monitoring \cite{lattimore2019information,lattimore2020exploration}, the analogous predicament is resolved via a `water transfer' operator.
Leveraging properties of the same spirit as \Cref{lem:better-actions}, this technique involves perturbing $q$ to ensure that each action receives comparable mass to actions whose estimators rely on the observations of the former, without incurring more loss in the process.
While some adaptation of this approach can be utilized here (more details below),
we will describe in what follows an alternative manner of redistributing the mass in $q$ that is arguably more natural (especially considering the structure of \eqref{eq:lambda-bound-pi-and-p-hat-main-paper}), and that, moreover, only relies on the simple characterization of local observability given by \Cref{lem:better-actions}, without explicit reference to the neighborhood relation used in finite partial monitoring.   

To control the third term, it is helpful to first relate the feature differences $(\feat{a} - \feat{b})_{a,b \in \cA}$ to the observations matrices $(M_a)_{a \in \cA}$ (or their collective form $\bm{M}$) seeing that $Q_\delta(\hat{p})$ is a linear combination of the latter.
While this feat only requires global observability, the stronger assumption of local observability allows one to do so in a manner that facilitates controlling the second term simultaneously, as will be shortly shown.
Towards this goal, we define some relevant objects.
Let $n \coloneqq \sum_{a \in \cA} n(a)$ and $k^* \coloneqq |\cA^*|$.
Note that any $\bm{v} \in \R^n$ can, with a slight notation abuse, be decomposed as $(\bm{v}(a))_{a \in \cA}$ with $\bm{v}(a) \in \R^{n(a)}$ such that $\bm{M} \bm{v} = \sum_{a \in \cA} M_a \bm{v}(a)$.
In locally observable games, \Cref{lem:better-actions} gives that for any loss vector $\loss \in \Loss$, action $a \in \cA^*$, and optimal action $a^* \in \argmin_{c \in \cA} \feat{c}^\top \loss$, there exists a vector $\bm{v} \in \R^n$ such that $\feat{a} - \feat{a^*} = \bm{M} \bm{v} %
$ %
and that, at the same time, having $\bm{v}(b) \neq \bm{0}$ for some $b \in \cA$ implies that $\feat{b}^\top \loss \leq \feat{a}^\top \loss$.
Hence, the following set is not empty under local observability:
\begin{multline*}
    \Mloc_{\loss} \coloneqq \Bcb{\bm{\lambda} =(\bm{\lambda}_{a})_{a \in \cA^*} \in \R^{k^* \times n} \mid \exists a^* \in \argmin_{c \in \cA} \feat{c}^\top \loss
    \:\: \forall (a,b) \in \cA^* \times \cA\,, \\
    \feat{a} - \feat{a^*} = \bm{M} \bm{\lambda}_{a} \text{ and }
    \bm{\lambda}_a(b) \neq \bm{0} \Rightarrow \feat{b}^\top \loss \leq \feat{a}^\top \loss} \,.
\end{multline*}
Each member of this set is a sequence of weight-vectors in $\R^n$, one for each action in $\cA^*$, that allows recovering the difference between its feature vector and that of a fixed optimal action by linearly combining columns from the observation matrices of better- (or similarly) performing actions.
Finally, for $\bm{\lambda} \in \Mloc_{\loss}$ and $a \in \cA^*$, %
let $\supp(\bm{\lambda},a) \coloneqq \bcb{b \in \cA \mid \bm{\lambda}_a(b) \neq \bm{0}}$, and define $\supp(\bm{\lambda}) \coloneqq \bigcup_{a \in \cA^*} \supp(\bm{\lambda},a)$; that is, the set of all actions whose observations are relied on.

Now, fixing some $\bm{\lambda} \in \Mloc_\loss$ and $a^* \in \argmin_{a' \in \cA} \feat{a'}^\top \loss$ (such that $\feat{a} - \feat{a^*} = \bm{M} \bm{\lambda}_{a}$ for all $a \in \cA$), one can show that
\begin{multline*}
    \sum_{a,b \in \cA} q(a) q(b) \lrb{\feat{a} - \feat{b}}^\top Q_\delta(\hat{p})^\dagger \lrb{\feat{a} - \feat{b}} \leq 2 \sum_{a \in \cA} q(a) \lrb{\feat{a} - \feat{a^*}}^\top Q_\delta(\hat{p})^\dagger \lrb{\feat{a} - \feat{a^*}} \\ \leq 
    2 \max_{s \in \cA^*}
    \brb{\summ_{c\in\cA} \norm{ \bm{\lambda}_s(c)}} \sum_{c\in\cA} \sum_{a \in \cA^*} q(a)
    \norm{ \bm{\lambda}_a(c)}  \bno{M_c^\top Q_\delta(\hat{p})^\dagger M_c} \,.
\end{multline*}
An easily demonstrated fact is that for any $r \in \Delta_k$, $\sum_{c \in \cA} r(c) \bno{M_c^\top Q_\delta(r)^\dagger M_c} \leq 2\rank(\bm{M}) \leq 2d$.
In view of the bound reported above, one can exploit this property by setting $\hat{p} = \sum_{a \in \cA^*} q(a) \nu_a$, where $\nu_a \in \Delta_k$ is such that $\nu_a(c) \propto \norm{ \bm{\lambda}_a(c)}$.
That is, we choose $\hat{p}$ as a mixture (weighted according to $q$) of $k^*$ distributions, each associated with a Pareto optimal action $a \in \cA^*$ and supported on $\supp(\bm{\lambda},a)$.  
In particular, action $a$ transfers a fraction of its total mass $q(a)$ to action $c$ proportionally to $\norm{ \bm{\lambda}_a(c)}$, representing the `importance' of $c$ to $a$.
(For $a^*$, simply set $\nu_{a^*} = e_{a^*}$.)
As an aside, though arrived at through a different manner, the exploration distribution used by \citet{lattimore2020exploration} in the analysis of EXO in finite partial monitoring assumes a form similar to $\hat{p}$, only that, roughly speaking, $\nu_a$ is taken there as uniform over $\supp(\bm{\lambda},a)$.
We show in \Cref{appendix:alt-expl-dist} a way to incorporate this alternative choice (among others) into the analysis scheme showcased in this section, though resulting in a generally worse bound than what we report below for our choice of $\hat{p}$.

Proceeding with the analysis, note that by the definition of $\Mloc$, $\feat{b}^\top \loss \leq \feat{a}^\top \loss$ for all $b \in \supp(\bm{\lambda},a)$; hence, it holds that $\inprod{\nu_a - e_{a}, H^\top \loss} \leq 0$ for all $a \in \cA^*$.
And since $\hat{p} - q = \sum_{a\in\cA^*} (\nu_a - e_a)$, we see now that the second term in \Cref{eq:lambda-bound-pi-and-p-hat-main-paper} becomes non-positive. While for the third term, a refinement of the arguments laid above gives that 
\begin{align*}
    \sum_{a,b \in \cA} q(a) q(b) \lrb{\feat{a} - \feat{b}}^\top Q_\delta(\hat{p})^\dagger \lrb{\feat{a} - \feat{b}} \leq 4 \beta_{\bm{\lambda}}^2 \min\lcb{\rank(\bm{M}),|\supp(\bm{\lambda})|} \,,
\end{align*}
where $\beta_{\bm{\lambda}} \coloneqq \max_{a \in \cA^*} \summ_{b\in\cA} \norm{ \bm{\lambda}_a(b)}$ represents the `difficulty' of recovering the features from the observations via $\bm{\lambda}$.
The alternative bound featuring $|\supp(\bm{\lambda})|$ brings an improvement in problems with abundant feedback per played action, like the full-information and graph feedback problems. 
We then proceed by taking the infimum of the bound above over $\bm{\lambda} \in \Mloc_\loss$, before passing to the worst case over $\loss \in \Loss$.
What remains now is the less challenging task of bounding the first term in \eqref{eq:lambda-bound-pi-and-p-hat-main-paper}.
This is done by choosing $\gamma \propto \eta$, provided $\eta$ is small enough, with the leading constant chosen to ensure that $z(p) \leq \nicefrac{2}{\eta L}$; specifically, it is an upper bound on the value of an optimal-design-like criterion that $\pi$ is chosen to minimize.
Ultimately, we arrive at the following result:

\begin{restatable}{rethm}{corsimplerlambdaboundlocal}
\label{cor:simpler-lambda-bound-local}
In locally observable games, it holds that 
    \begin{equation*}
        \Lambda_\eta^* \leq \max_{\loss \in \Loss} \:\inf_{\bm{\lambda} \in \Mloc_{\loss}}\: 8 L^2    \beta_{\bm{\lambda}}^2 \min\lcb{\rank(\bm{M}),|\supp(\bm{\lambda})|} + 2 L \brb{ 1+ \beta_{\glo}^2} \min\lcb{\rank(\bm{M}),w^*} 
    \end{equation*}
    provided  
    ${\displaystyle \frac{1}{\eta} \geq 2 L \brb{ 1+ \beta_{\glo}^2} \min\lcb{\rank(\bm{M}),w^*} } \,,$
    where 
    $\displaystyle \beta_{\glo} \coloneqq %
    \max_{a,b \in \cA} \min_{\bm{v} \in \R^n \colon \feat{a} - \feat{b} = \bm{M} \bm{v}} \sum_{c \in \cA} \norm{\bm{v}(c)}$
    \[\quad \text{and} \quad  {\displaystyle w^* \coloneqq \min_{S \subseteq \cA} |S| \max_{b \in \cA} \bno{M_b^\top U \brb{\summ_{s \in S} U^\top M_s M_s^\top U}^{-1}U^\top M_b} \leq k }\,.\]
\end{restatable}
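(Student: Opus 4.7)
The plan is to invoke Sion's minimax theorem via Lemma \ref{lem:minimax}, which reduces the task to exhibiting, for each fixed $\loss \in \Loss$ and $q \in \Delta_k$, a single $p \in \Xi_\eta$ that makes $\Lambda_{\eta,q}(p,\loss)$ small. I would then use the ansatz $p = \gamma \pi + (1-\gamma)\hat{p}$ already introduced in the excerpt, with $\gamma \in (0,\tfrac{1}{2}]$, where $\hat{p}$ will shrink the loss and variance terms and $\pi$ will serve as a fixed exploration design enforcing $z(p) \le 2/(\eta L)$. Given any $\bm{\lambda} \in \Mloc_{\loss}$ with its associated optimal action $a^*$, I would set $\hat{p} = \sum_{a \in \cA^*} q(a)\nu_a$ with $\nu_a(c) \propto \|\bm{\lambda}_a(c)\|$ for $a\neq a^*$ and $\nu_{a^*} = e_{a^*}$. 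Since every $c \in \supp(\bm{\lambda},a)$ satisfies $\feat{c}^\top \loss \le \feat{a}^\top \loss$ by definition of $\Mloc_{\loss}$, one gets $\langle \nu_a - e_a, H^\top \loss \rangle \le 0$ for each $a \in \cA^*$; summing these inequalities with weights $q(a)$ yields $\langle \hat{p} - q, H^\top \loss \rangle \le 0$, killing the middle term in \eqref{eq:lambda-bound-pi-and-p-hat-main-paper}.

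For the variance term of \eqref{eq:lambda-bound-pi-and-p-hat-main-paper}, I would first replace $Q_\delta(p)^\dagger$ by $2 Q_\delta(\hat{p})^\dagger$ (valid because $Q_\delta(p) \succeq (1-\gamma)Q_\delta(\hat{p})$ by linearity of $Q$ in its argument and the fact that $\col(Q_\delta(\cdot))=\col(\bm{M})$). Then I would use $(\feat{a}-\feat{b})^\top Q_\delta(\hat p)^\dagger(\feat{a}-\feat{b}) \le 2[(\feat{a}-\feat{a^*})^\top Q_\delta(\hat p)^\dagger(\feat{a}-\feat{a^*}) + (\feat{b}-\feat{a^*})^\top Q_\delta(\hat p)^\dagger(\feat{b}-\feat{a^*})]$ to reduce to a single-index sum, substitute $\feat{a}-\feat{a^*} = \sum_c M_c \bm{\lambda}_a(c)$, and apply Cauchy--Schwarz in $c$ (with the weight $\|\bm{\lambda}_a(c)\|$ and the definition of $\beta_{\bm{\lambda}}$) to obtain a bound proportional to $\beta_{\bm{\lambda}}\sum_c \sum_a q(a)\|\bm{\lambda}_a(c)\|\,\|M_c^\top Q_\delta(\hat p)^\dagger M_c\|$. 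The crucial self-normalization step is that $\hat p(c) \ge \beta_{\bm{\lambda}}^{-1}\sum_a q(a)\|\bm{\lambda}_a(c)\|$ by construction, so a second factor of $\beta_{\bm{\lambda}}$ drops out, leaving $2\beta_{\bm{\lambda}}^2 \sum_c \hat p(c)\|M_c^\top Q_\delta(\hat p)^\dagger M_c\|$. The generic identity $\sum_c r(c)\|M_c^\top Q_\delta(r)^\dagger M_c\| \le 2\rank(\bm{M})$ gives the $\rank(\bm{M})$ branch; restricting the sum to $\supp(\bm{\lambda})$ (since only those indices contribute) and using the same type of local optimal-design bound gives the $|\supp(\bm{\lambda})|$ branch, yielding the $4\beta_{\bm{\lambda}}^2 \min\{\rank(\bm{M}),|\supp(\bm{\lambda})|\}$ factor in the excerpt.

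For the fixed design $\pi$, I would take (up to $\delta$-mixing) a uniform distribution over the subset $S$ attaining $w^*$ if $w^* \le \rank(\bm{M})$, and a G-optimal design otherwise. A Cauchy--Schwarz argument analogous to the variance calculation---now invoking global observability to write $\feat a-\feat b=\bm{M}\bm v$ with $\sum_c\|\bm v(c)\|\le\beta_{\glo}$---gives $\max_{a,b}\cE(a,b;\pi)\le\beta_{\glo}^2\max_c\|M_c^\top Q_\delta(\pi)^\dagger M_c\|$. Combined with $\max_c\|M_c^\top Q_\delta(\pi)^\dagger M_c\|\le 2\min\{\rank(\bm{M}),w^*\}$ (from the defining property of $\pi$), this yields $z(\pi)\le 2(1+\beta_{\glo}^2)\min\{\rank(\bm{M}),w^*\}$. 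Since $Q_\delta(p)^\dagger \preceq \gamma^{-1}Q_\delta(\pi)^\dagger$ on $\col(\bm M)$, we get $z(p)\le z(\pi)/\gamma$, so setting $\gamma = \eta L(1+\beta_{\glo}^2)\min\{\rank(\bm{M}),w^*\}$ simultaneously places $p$ in $\Xi_\eta$ and contributes $2\gamma/\eta = 2L(1+\beta_{\glo}^2)\min\{\rank(\bm{M}),w^*\}$ to the overall bound, which is exactly the second summand. The standing hypothesis $1/\eta \ge 2L(1+\beta_{\glo}^2)\min\{\rank(\bm{M}),w^*\}$ is precisely what guarantees $\gamma \le \tfrac12$. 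Taking the infimum over $\bm{\lambda} \in \Mloc_{\loss}$ and then the supremum over $\loss \in \Loss$ (made legitimate by Lemma \ref{lem:minimax}) and $q \in \Delta_k$ completes the bound on $\Lambda_\eta^*$.

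The most delicate step is the variance estimate: getting the exponent on $\beta_{\bm{\lambda}}$ to come out as $2$ (rather than $3$) is what forces the specific choice $\nu_a(c)\propto \|\bm{\lambda}_a(c)\|$, because it is this choice that matches $\hat p(c)$ to the sum $\sum_a q(a)\|\bm{\lambda}_a(c)\|$ up to the factor $\beta_{\bm\lambda}$ and unlocks the self-normalizing cancellation. Producing the $|\supp(\bm{\lambda})|$ alternative, rather than only $\rank(\bm{M})$, requires an additional localized argument so that the generic bound on $\sum_c r(c)\|M_c^\top Q_\delta(r)^\dagger M_c\|$ does not pay for actions outside $\supp(\bm\lambda)$. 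Once these two estimates are in hand, the remaining work---the $\pi$-construction, the mixture algebra, and the tuning of $\gamma$---is essentially bookkeeping.
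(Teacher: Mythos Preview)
Your proposal is correct and follows the same route as the paper's proof (Proposition~\ref{thm:lambda-bound-general-local} together with the simplifications in the proof of Theorem~\ref{cor:simpler-lambda-bound-local}): the same mixture $p=\gamma\pi+(1-\gamma)\hat p$, the same choice $\nu_a(c)\propto\|\bm{\lambda}_a(c)\|$ with the self-normalizing cancellation, and the same $\pi$-design argument for $z(p)$ and $w^*$. The only notable discrepancy is that the paper reduces $\sum_{a,b}q(a)q(b)\cE(a,b;\cdot)$ to $2\sum_a q(a)\cE(a,a^*;\cdot)$ via the identity of Lemma~\ref{lemma:variance-and-expected-squared-diff} combined with Lemma~\ref{lemma:optimality-of-q} (optimality of $Hq$ as centering point), whereas your parallelogram-type inequality $(\feat{a}-\feat{b})^\top X(\feat{a}-\feat{b})\le 2[\cdots]$ gives a factor of $4$; with your version the leading constant would be $16L^2$ rather than the stated $8L^2$.
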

Here, $U \in \R^{d \times \rank(\bm{M})}$ denotes a matrix whose columns $(u_i)_{i \in [\rank(\bm{M})]}$ form an orthonormal basis for $\col(\bm{M})$.\footnote{The inverse of $\summ_{s \in S} U^\top M_s M_s^\top U$ exists when $\spn(\{M_s\}_{s \in S}) = \col(\bm{M})$, see \Cref{lemma:full-rank-conditions} in \Cref{appendix:aux}.}
Let $\beta_{\loc} \coloneqq \max_{\loss \in \Loss} \min_{\bm{\lambda} \in \Mloc_{\loss}} \beta_{\bm{\lambda}}$, which satisfies $\beta_{\loc} \geq \beta_{\glo}$ (see \Cref{lem:beta-glo-loc-comparison} in \Cref{appendix:loc}).
Assuming for simplicity that $L, \beta_{\loc} \geq 1$; the following simpler bound %
is obtained immediately: 
\begin{equation*}
    \Lambda_\eta^* 
    \leq 12 L^2 \beta_{\loc}^2 d\,.
\end{equation*}
We see that under local observability, similarly to finite partial monitoring \cite{lattimore2020exploration}, there exist constants $\alpha, B$ such that $\Lambda^*_\eta \leq \alpha$ given that $\eta \leq 1 / B$. 
Assuming $\delta$ and $\varepsilon$ are sufficiently small, setting $\eta=\min \{\nicefrac{1}{B},\sqrt{\nicefrac{\log k}{\alpha T}} \}$ yields via \Cref{prop: regret UB with Lambda star} that
$R_T$ is $\cO \brb{ \sqrt{\alpha T \log k}}$. We refer again to \Cref{appendix: adaptive learning rate} for a discussion on adaptive learning rates following \cite{lattimore2020exploration}. We also note here that $B$ can be chosen conservatively as it only affects the regret additively.
We now use the bound of \Cref{cor:simpler-lambda-bound-local} %
to obtain regret bounds for the locally observable games discussed in Section \ref{sec:pre}.
Omitted details and derivations are provided in \Cref{appendix: examples}.

\textbf{Full information.}
Here, playing any action is sufficiently informative; for any $\loss \in \Loss$, we can trivially pick $\bm{\lambda} \in \Mloc_\ell$ supported only on an optimal action and satisfying $\beta_{\bm{\lambda}} \leq \sqrt{2}$.
Moreover, $w^* \leq 1$ taking $S$ as a singleton. Hence, we recover the order-optimal $\cO(\sqrt{T \log k})$ regret bound \cite{cesabianchi1997}.

\textbf{Strongly observable feedback graphs.} 
Generalizing the example above, we can, against any $\loss \in \Loss$, find $\bm{\lambda} \in \Mloc_\ell$ supported on an independent set (a set of actions no two distinct members of which are neighbors) and satisfying $\beta_{\bm{\lambda}} \leq 2$.
This set can be iteratively constructed by first selecting an optimal action, removing its neighbors from the graph, and repeating this (proceeding with the best remaining action) until the graph is empty. %
This gives that $\beta_{\bm{\lambda}}^2 |\supp(\bm{\lambda})|$ is bounded up to a constant by $\alpha(\cG)$, the independence number of the graph (size of a maximal independent set).
The same can be shown to hold for $w^*$, allowing us to recover the near-optimal $\cO(\sqrt{\alpha(\cG)T\log k})$ regret bound for this problem \cite{side_obs_bandits,alon2017}.

\textbf{Linear bandits.}
Since the features and the observation vectors coincide, it is immediate that $\beta_{\loc} \leq 2$.
The regret bound is then of order $\sqrt{dT\log(k)}$ (as achieved in \cite{bubeck2012towards} and \cite[Chp. 27]{banditbook}), which reduces to the near-optimal $\sqrt{kT\log(k)}$ rate for standard bandits.

\textbf{Linear dueling bandits.} 
It also holds in this example that $\beta_{\mathrm{loc}} \leq 2$, %
implying again that the regret bound is of order $\sqrt{dT\log(k)}$.

\textbf{Bandits with ill-conditioned observers.} Since  $e_a - e_b = \nicefrac{1}{\varepsilon} (M_a-M_b)$, we have that $\beta_{\mathrm{loc}} \leq \nicefrac{2}{\varepsilon}$. 
Thus, the regret bound is of order $\nicefrac{1}{\varepsilon}\sqrt{k T\log k}$.
We show in \Cref{appendix:lower-bounds} that this bound is tight up to logarithmic factors.

\section{Globally Observable Games} \label{sec:glo}

On the other hand, exploration is generally costly in globally observable games.
Here, starting from the decomposition in \eqref{eq:lambda-bound-pi-and-p-main-paper}, we simply pick $\hat{p}=q$ and place the burden of controlling the variance term entirely on $\pi$.
Consequently, $\gamma$ must be kept sufficiently large; precisely, of order $\sqrt{\eta}$.
This causes the final bound to scale with ${1}/\sqrt{\eta}$, hence becoming $\eta$-dependent, which aligns with the finite outcomes setting \cite{lattimore2020exploration}.
Building on this, the next theorem provides two bounds on $\Lambda_\eta^*$. 
The first resembles the square root of the second term in the bound of \Cref{cor:simpler-lambda-bound-local}, %
while the second bound features a refined alignment constant but has a generally worse dependence on the structure of the observation matrices.  

\begin{restatable}{rethm}{corsimplerlambdaboundglobal}
\label{cor:simpler-lambda-bound-global}
    In globally observable games, it holds that
    \begin{align*} 
        \Lambda^*_{\eta} \leq 4 \sqrt{{1}/{\eta}}  L \min \Bcb{
            (1+\beta_{\glo})  \sqrt{\min\{\rank(\bm{M}), w^*\}}, 
            \brb{1+\beta_{2,\glo}} \sqrt{u^*} 
        } 
    \end{align*}
    provided that
    $
        \displaystyle \frac{1}{\eta} \geq (1 + L^2) \min \Bcb{
            (1+\beta^2_{\glo})  {\min\{\rank(\bm{M}), w^*\}}, 
            \brb{1+\beta^2_{2,\glo}} {u^*} 
        }
    $,
    where  
    \[\displaystyle \beta_{2,\glo} \coloneqq %
        \max_{a,b \in \cA} \bno{ \bm{M}^\dagger (\feat{a} - \feat{b}) } \,,
    \text{ and }
        \displaystyle u^* \coloneqq \min_{S \subseteq \cA} |S| \bno{\bm{M}^\top U \brb{\summ_{s \in S} U^\top M_s M_s^\top U}^{-1}U^\top \bm{M}} \leq k \,.
    \]
\end{restatable}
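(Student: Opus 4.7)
The plan is to invoke \Cref{lem:minimax} to recast $\Lambda_{\eta,q}^*$ as $\max_{\loss \in \Loss} \min_{p \in \Xi_\eta} \Lambda_{\eta,q}(p,\loss)$, and then, for each fixed $\loss$ and $q$, exhibit an admissible $p$ that attains the claimed bound. Unlike the locally observable case where the anchor distribution $\hat{p}$ is engineered through the water-transfer-like construction, here I would simply set $\hat{p} = q$ and write $p = \gamma \pi + (1-\gamma) q$ for some $\gamma \in (0,\nicefrac{1}{2}]$ and $\pi \in \Delta_k$ to be chosen. With this, the linear term in $\Lambda_{\eta,q}$ collapses to $(\gamma/\eta)\inprod{\pi - q, H^\top \loss}$, which is bounded by $2\gamma/\eta$ via \Cref{assump:bounded}; only the variance term and the $\Xi_\eta$ constraint remain to handle.

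Next, I would exploit the affinity of $Q_\delta(\cdot)$ on the simplex to write $Q_\delta(p) = \gamma Q_\delta(\pi) + (1-\gamma) Q_\delta(q) \succeq \gamma Q_\delta(\pi)$. Since $\col(Q_\delta(\pi)) = \col(Q_\delta(p)) = \col(\bm{M})$, this PSD inequality inverts to $Q_\delta(p)^\dagger \preceq \gamma^{-1} Q_\delta(\pi)^\dagger$ as operators on $\col(\bm{M})$. Global observability places every $\feat{a}-\feat{b}$ in $\col(\bm{M})$, so the variance term in $\Lambda_{\eta,q}$ is bounded by $(L^2/\gamma) \max_{a,b}(\feat{a}-\feat{b})^\top Q_\delta(\pi)^\dagger (\feat{a}-\feat{b})$ and $z(p) \leq z(\pi)/\gamma$.

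The crux is selecting $\pi$ so that both $\max_{a,b}(\feat{a}-\feat{b})^\top Q_\delta(\pi)^\dagger (\feat{a}-\feat{b})$ and $\max_c \|M_c^\top Q_\delta(\pi)^\dagger M_c\|$ are controlled by the stated quantities. For the $(1+\beta_\glo)$ branch I would write $\feat{a}-\feat{b} = \bm{M}\bm{v}_{a,b}$ at the minimizer in the definition of $\beta_\glo$, and then use the triangle inequality in the $Q_\delta(\pi)^\dagger$-seminorm together with $\|M_c \bm{v}_{a,b}(c)\|_{Q_\delta(\pi)^\dagger} \leq \|M_c^\top Q_\delta(\pi)^\dagger M_c\|^{1/2} \|\bm{v}_{a,b}(c)\|$ to get $(\feat{a}-\feat{b})^\top Q_\delta(\pi)^\dagger (\feat{a}-\feat{b}) \leq \beta_\glo^2 \max_c \|M_c^\top Q_\delta(\pi)^\dagger M_c\|$. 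The multiplier is then driven either to $\rank(\bm{M})$ by taking $\pi$ as a G-optimal design (using the identity $\sum_c \pi(c)\trace(M_c^\top Q(\pi)^\dagger M_c) = \rank(Q(\pi)) \leq \rank(\bm{M})$ and the G-optimality equilibrium, plus $\|\cdot\|_{op} \leq \trace$ for PSD matrices), or to $w^*$ by taking $\pi$ uniform over the subset $S$ achieving $w^*$, noting that $M_c^\top Q(\pi)^\dagger M_c$ is a principal block of $|S|\bm{M}^\top U(\sum_{s \in S} U^\top M_s M_s^\top U)^{-1} U^\top \bm{M}$. For the $(1+\beta_{2,\glo})$ branch I would use $\feat{a}-\feat{b} = \bm{M} \bm{M}^\dagger(\feat{a}-\feat{b})$ to bound $(\feat{a}-\feat{b})^\top Q_\delta(\pi)^\dagger (\feat{a}-\feat{b}) \leq \beta_{2,\glo}^2 \|\bm{M}^\top Q_\delta(\pi)^\dagger \bm{M}\|$, pick $\pi$ uniform over the subset realizing $u^*$, and note that $\max_c \|M_c^\top Q_\delta(\pi)^\dagger M_c\| \leq \|\bm{M}^\top Q_\delta(\pi)^\dagger \bm{M}\| \leq u^*$ because the former is a principal submatrix of the latter.

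Collecting the pieces, $\Lambda_{\eta,q}(p,\loss) \leq 2\gamma/\eta + L^2 \beta^2 C/\gamma$ and $z(p) \leq (1+\beta^2)C/\gamma$, where $(\beta,C)$ is either $(\beta_\glo, \min\{\rank(\bm{M}), w^*\})$ or $(\beta_{2,\glo}, u^*)$. Choosing $\gamma = L\sqrt{\eta(1+\beta^2)C/2}$ majorizes the objective by $2\gamma/\eta + L^2(1+\beta^2)C/\gamma$, which is minimized at value $2\sqrt{2}\,L\sqrt{(1+\beta^2)C/\eta} \leq 4L(1+\beta)\sqrt{C/\eta}$; the hypothesis $1/\eta \geq (1+L^2)(1+\beta^2)C$ is precisely what guarantees $\gamma \leq \nicefrac{1}{2}$ and the membership condition $z(p) \leq 2/(\eta L)$. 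Passing to the max over $\loss \in \Loss$ (permitted by \Cref{lem:minimax}) and the sup over $q \in \Delta_k$ delivers the stated bound on $\Lambda^*_\eta$. The main technical obstacle is the careful bookkeeping around the pseudoinverses and the $\delta$-regularization, namely verifying that the PSD inequalities must be interpreted on $\col(\bm{M})$ rather than all of $\R^d$ and that the $\delta$-perturbation in $Q_\delta$ only contributes absorbable constants; the existence of a G-optimal design with the stated operator-norm bound, while classical, is a further non-trivial ingredient.
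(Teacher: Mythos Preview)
Your proposal is correct and follows essentially the same route as the paper: set $p = \gamma\pi + (1-\gamma)q$, bound the linear term by $2\gamma/\eta$, control the variance and the $z$-constraint via $\gamma^{-1}$ times a $\pi$-dependent quantity (using either the $\beta_{\glo}$ decomposition $\feat{a}-\feat{b} = \sum_c M_c\bm{v}(c)$ with the triangle inequality, or the $\beta_{2,\glo}$ decomposition $\feat{a}-\feat{b} = \bm{M}\bm{M}^\dagger(\feat{a}-\feat{b})$ with the operator-norm bound), then optimize $\gamma \asymp \sqrt{\eta}$ and choose $\pi$ as a G-optimal design or uniform over the defining subset. The only cosmetic differences are that the paper drops from $Q_\delta(\pi)$ to $Q(\pi)$ via \Cref{lemma:pinv-sum-inequality} (absorbing the $1/(1-\delta)$ factor explicitly rather than deferring it), allows $\gamma \in (0,1]$ rather than $(0,\nicefrac{1}{2}]$, and your aside about $M_c^\top Q(\pi)^\dagger M_c$ being a principal block is unnecessary for the $w^*$ branch (it is the definition) but is exactly what justifies $\max_c\|M_c^\top Q_\delta(\pi)^\dagger M_c\| \leq \|\bm{M}^\top Q_\delta(\pi)^\dagger \bm{M}\|$ in the $u^*$ branch.
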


Note that $w^* \leq u^*$, and that $\beta_{2,\glo} = \max_{a,b \in \cA} \min_{\bm{v} \in \R^n \colon \feat{a} - \feat{b} = \bm{M} \bm{v}}  \norm{\bm{v}} $; hence, $\beta_{2,\glo} \leq \beta_{\glo} \leq \sqrt{k} \beta_{2,\glo}$.
This theorem shows that in the present regime, there exist constants $\alpha, B$ such that $\Lambda^*_\eta \leq \alpha/\sqrt{\eta}$ given that $\eta \leq 1 / B$.
Assuming again that $\delta$ and $\varepsilon$ are sufficiently small, setting $\eta=\min \bcb{\nicefrac{1}{B},\brb{\nicefrac{\log k}{\alpha T}}^{2/3} }$ yields via \Cref{prop: regret UB with Lambda star} that
$R_T$ is $\cO \brb{ (\alpha T)^{2/3}(\log k)^{1/3}}$. The adaptive tuning of $\eta$ is discussed in \Cref{appendix: adaptive learning rate}.
We now revisit the globally observable examples from \Cref{sec:pre}.

\textbf{Weakly observable feedback graphs.} 
Here, it is immediate to verify that $\beta_{\glo} \leq 2$.
A subset of vertices (actions) is said to be a total dominating set if the union of their neighborhoods covers all vertices.
Moreover, the total domination number of the graph, $\delta(\cG)$, is defined as the size of a minimal total dominating set.
It can be easily shown (see \Cref{appendix: examples}) that $w^* \leq \delta(\cG)$, leading to a $\cO((\delta(\cG)\log k)^{1/3}T^{2/3})$ regret bound.
Further refining this bound to scale with the \emph{weak} domination number (which only considers weakly observable nodes) used in \cite{alon2015} requires a careful combination of the analyses of this section and the previous one to eliminate unnecessary forced exploration over the strongly observable portion of the graph.

\textbf{Composite graph feedback.}
It holds here %
that $w^* = u^* = \rank(\bm{M}) = k$; hence, the second bound is generally better.
The regret bound is then $\cO((k \log k)^{1/3}(\beta_{2,\glo} T)^{2/3})$, the tightness of which remains to be studied.
In \Cref{appendix: examples}, we provide a more
detailed discussion about the interpretation of $\beta_{2,\glo}$ in this problem, as well as two instantiations of which in simple families of graphs.

\section{Limitations and Future Work}
\label{sec: conclusions}
Besides enjoying the same rates in $T$ as the bounds derived in this work, the regret bounds derived by \citet{kirschner2020information,kirschner2023linear} in the stochastic setting feature analogous alignment constants to $\beta_{\loc}$ and $\beta_{\glo}$, see \cite[Lemma 13]{kirschner2020information} or \cite[Lemma 5]{kirschner2023linear}.
Comparatively, our bounds enjoy some extra versatility as they easily yield nearly tight guarantees for feedback-rich problems like full-information and feedback graphs, which were not addressed in \cite{kirschner2020information,kirschner2023linear}.
However, our exposition is limited to full-dimensional loss spaces (like \cite{kirschner2020information} but unlike \cite{kirschner2023linear}) and finite actions.
The former aspect precludes us from recovering tight guarantees for finite partial monitoring as the observation conditions do not match in general, see \cite[Example 8]{kirschner2023linear}.
Addressing this aspect in the adversarial setting is left for future work.
Likewise for studying generic compact action sets, for which a full classification theorem is lacking also in the stochastic setting.
Lastly, characterizing how the minimax regret depends on instance-based constants remains a general challenging direction.

\begin{ack}
KE and NCB acknowledge the financial support from
the FAIR project, funded by the NextGenerationEU program within the PNRR-PE-AI scheme (M4C2, investment 1.3, line on Artificial Intelligence),
the MUR PRIN grant 2022EKNE5K (Learning in Markets and Society), funded by the NextGenerationEU program within the PNRR scheme (M4C2, investment 1.1),
the EU Horizon CL4-2022-HUMAN-02 research and innovation action under grant agreement 101120237, project ELIAS (European Lighthouse of AI for Sustainability),
and the One Health Action Hub, University Task Force for the resilience of territorial ecosystems, funded by Università degli Studi di Milano (PSR 2021-GSA-Linea 6).

\end{ack}

\bibliography{bibliography}

@article{rustichini1999minimizing,
  title={Minimizing regret: The general case},
  author={Rustichini, Aldo},
  journal={Games and Economic Behavior},
  volume={29},
  number={1-2},
  pages={224--243},
  year={1999},
  publisher={Elsevier}
}

@inproceedings{side_obs_bandits,
  title={From bandits to experts: On the value of side-observations},
  author={Mannor, Shie and Shamir, Ohad},
  booktitle={Advances in neural information processing systems},
  volume={24},
  year={2011}
}

@article{kirschner2023linear,
  title={Linear partial monitoring for sequential decision making: Algorithms, regret bounds and applications},
  author={Kirschner, Johannes and Lattimore, Tor and Krause, Andreas},
  journal={The Journal of Machine Learning Research},
  volume={24},
  number={1},
  pages={16606--16650},
  year={2023},
  publisher={JMLRORG}
}

@inproceedings{cheapbandits,
  title={Cheap bandits},
  author={Hanawal, Manjesh and Saligrama, Venkatesh and Valko, Michal and Munos, R{\'e}mi},
  booktitle={International Conference on Machine Learning},
  pages={2133--2142},
  year={2015},
  organization={PMLR}
}

@book{BenTalNemirovski01,
  author    = {Ben-Tal, Aharon and Nemirovski, Arkadi},
  title     = {Lectures on Modern Convex Optimization},
  publisher = {SIAM},
  address   = {Philadelphia, PA},
  year      = {2001},
  series    = {MPS/SIAM Series on Optimization},
  isbn      = {978-0-89871-491-3}
}

@InProceedings{spectralbandits,
  title = 	 {Spectral Bandits for Smooth Graph Functions},
  author = 	 {Valko, Michal and Munos, Remi and Kveton, Branislav and Kocák, Tomáš},
  booktitle = 	 {Proceedings of the 31st International Conference on Machine Learning},
  pages = 	 {46--54},
  year = 	 {2014},
  editor = 	 {Xing, Eric P. and Jebara, Tony},
  volume = 	 {32},
  series = 	 {Proceedings of Machine Learning Research},
  address = 	 {Bejing, China},
  month = 	 {22--24 Jun},
  publisher =    {PMLR},
}

@article{auer2002nonstochastic,
  title={The nonstochastic multiarmed bandit problem},
  author={Auer, Peter and Cesa-Bianchi, Nicol\`{o} and Freund, Yoav and Schapire, Robert E},
  journal={SIAM journal on computing},
  volume={32},
  number={1},
  pages={48--77},
  year={2002},
  publisher={SIAM}
}

@InProceedings{alon2015,
  title = 	 {Online Learning with Feedback Graphs: Beyond Bandits},
  author = 	 {Alon, Noga and Cesa-Bianchi, Nicolò and Dekel, Ofer and Koren, Tomer},
  booktitle = 	 {Proceedings of The 28th Conference on Learning Theory},
  pages = 	 {23--35},
  year = 	 {2015},
  volume = 	 {40},
  series = 	 {Proceedings of Machine Learning Research},
  publisher =    {PMLR},
}

@article{eldowa2024information,
  title={Information capacity regret bounds for bandits with mediator feedback},
  author={Eldowa, Khaled and Cesa-Bianchi, Nicol{\`o} and Metelli, Alberto Maria and Restelli, Marcello},
  journal={Journal of Machine Learning Research},
  volume={25},
  number={353},
  pages={1--36},
  year={2024}
}

@inproceedings{cohen2017tight,
  title={Tight bounds for bandit combinatorial optimization},
  author={Cohen, Alon and Hazan, Tamir and Koren, Tomer},
  booktitle={Conference on Learning Theory},
  pages={629--642},
  year={2017},
  organization={PMLR}
}

@inproceedings{lattimore2020exploration,
  title={Exploration by optimisation in partial monitoring},
  author={Lattimore, Tor and Szepesv{\'a}ri, Csaba},
  booktitle={Conference on Learning Theory},
  pages={2488--2515},
  year={2020},
  organization={PMLR}
}

@book{banditbook,
	author = {Lattimore, T. and Szepesv{\'a}ri, Cs.},
	publisher = {Cambridge University Press},
	title = {Bandit Algorithms},
	year = {2020}}

@InProceedings{bartok2011,
  title = 	 {Minimax Regret of Finite Partial-Monitoring Games in Stochastic Environments},
  author = 	 {Bartók, Gábor and Pál, Dávid and Szepesvári, Csaba},
  booktitle = 	 {Proceedings of the 24th Annual Conference on Learning Theory},
  pages = 	 {133--154},
  year = 	 {2011},
  volume = 	 {19},
  series = 	 {Proceedings of Machine Learning Research},
  publisher =    {PMLR},
}

@InProceedings{bartok2012,
  author =    {Gábor Bartók and Navid Zolghadr and Csaba Szepesvári},
  title =     {An adaptive algorithm for finite stochastic partial monitoring},
  booktitle = {Proceedings of the 29th International Conference on Machine Learning (ICML-12)},
  series =    {ICML '12},
  year =      {2012},
  pages=      {1727--1734},
  publisher = {Omnipress},
}

@inproceedings{kirschner2020information,
  title={Information directed sampling for linear partial monitoring},
  author={Kirschner, Johannes and Lattimore, Tor and Krause, Andreas},
  booktitle={Conference on Learning Theory},
  pages={2328--2369},
  year={2020},
  organization={PMLR}
}

@inproceedings{lattimore2019cleaning,
  title={Cleaning up the neighborhood: A full classification for adversarial partial monitoring},
  author={Lattimore, Tor and Szepesv{\'a}ri, Csaba},
  booktitle={Algorithmic Learning Theory},
  pages={529--556},
  year={2019},
  organization={PMLR}
}

@article{cesabianchi2006,
  title={Regret minimization under partial monitoring},
  author={Cesa-Bianchi, Nicolo and Lugosi, G{\'a}bor and Stoltz, Gilles},
  journal={Mathematics of Operations Research},
  volume={31},
  number={3},
  pages={562--580},
  year={2006},
  publisher={INFORMS}
}

@article{bartok2014partial,
  title={Partial monitoring—classification, regret bounds, and algorithms},
  author={Bart{\'o}k, G{\'a}bor and Foster, Dean P and P{\'a}l, D{\'a}vid and Rakhlin, Alexander and Szepesv{\'a}ri, Csaba},
  journal={Mathematics of Operations Research},
  volume={39},
  number={4},
  pages={967--997},
  year={2014},
  publisher={INFORMS}
}

@inproceedings{lattimore2021mirror,
  title={Mirror descent and the information ratio},
  author={Lattimore, Tor and Gyorgy, Andras},
  booktitle={Conference on Learning Theory},
  pages={2965--2992},
  year={2021},
  organization={PMLR}
}

@ARTICLE{sensors,
  author={Ermis, Erhan Baki and Saligrama, Venkatesh},
  journal={IEEE Transactions on Signal Processing}, 
  title={Distributed Detection in Sensor Networks With Limited Range Multimodal Sensors}, 
  year={2010},
  volume={58},
  number={2},
  pages={843-858},
  keywords={Multimodal sensors;Object detection;Testing;Statistical analysis;Statistical distributions;Costs;Robustness;Multidimensional systems;Distributed algorithms;Scalability;Distributed detection;false discovery rate;multivariate observations;robust test statistics;optimal test statistics},
  doi={10.1109/TSP.2009.2033300}}

@article{Piccolboni2001,
  author    = {Fabio Piccolboni and Christian Schindelhauer},
  title     = {Discrete Prediction Games with Arbitrary Feedback and Loss},
  journal   = {COLT},
  year      = {2001}
}

@InProceedings{foster2012,
  title = 	 {No Internal Regret via Neighborhood Watch},
  author = 	 {Foster, Dean and Rakhlin, Alexander},
  booktitle = 	 {Proceedings of the Fifteenth International Conference on Artificial Intelligence and Statistics},
  pages = 	 {382--390},
  year = 	 {2012},
  volume = 	 {22},
  series = 	 {Proceedings of Machine Learning Research},
  publisher =    {PMLR},
}

@article{Antos2013,
title = {Toward a classification of finite partial-monitoring games},
journal = {Theoretical Computer Science},
volume = {473},
pages = {77-99},
year = {2013},
issn = {0304-3975},
author = {András Antos and Gábor Bartók and Dávid Pál and Csaba Szepesvári},
}

@inproceedings{lattimore2022minimax,
  title={Minimax regret for partial monitoring: Infinite outcomes and {R}ustichini’s regret},
  author={Lattimore, Tor},
  booktitle={Conference on Learning Theory},
  pages={1547--1575},
  year={2022},
  organization={PMLR}
}

@article{gajane2015utility,
  title={Utility-based dueling bandits as a partial monitoring game},
  author={Gajane, Pratik and Urvoy, Tanguy},
  journal={arXiv preprint arXiv:1507.02750},
  year={2015}
}

@article{nordstrom2011,
title = {Convexity of the inverse and {M}oore–{P}enrose inverse},
journal = {Linear Algebra and its Applications},
volume = {434},
number = {6},
pages = {1489-1512},
year = {2011},
author = {Kenneth Nordström},
}

@article{sion1958general,
author = {Maurice Sion},
title = {{On general minimax theorems.}},
volume = {8},
journal = {Pacific Journal of Mathematics},
number = {1},
publisher = {Pacific Journal of Mathematics, A Non-profit Corporation},
pages = {171 -- 176},
year = {1958},
}

@article{carlson1974,
 author = {David Carlson and Emile Haynsworth and Thomas Markham},
 journal = {SIAM Journal on Applied Mathematics},
 number = {1},
 pages = {169--175},
 publisher = {Society for Industrial and Applied Mathematics},
 title = {A Generalization of the {S}chur Complement by Means of the {M}oore-{P}enrose Inverse},
 volume = {26},
 year = {1974}
}

@article{cesabianchi1997,
  title={How to use expert advice},
  author={Cesa-Bianchi, Nicolo and Freund, Yoav and Haussler, David and Helmbold, David P and Schapire, Robert E and Warmuth, Manfred K},
  journal={Journal of the ACM (JACM)},
  volume={44},
  number={3},
  pages={427--485},
  year={1997},
  publisher={ACM New York, NY, USA}
}

@article{freund1997,
title = {A Decision-Theoretic Generalization of On-Line Learning and an Application to Boosting},
journal = {Journal of Computer and System Sciences},
volume = {55},
number = {1},
pages = {119-139},
year = {1997},
author = {Yoav Freund and Robert E Schapire}
}

@article{bubeck2012,
  title={Regret analysis of stochastic and nonstochastic multi-armed bandit problems},
  author={Bubeck, S{\'e}bastien and Cesa-Bianchi, Nicolo and others},
  journal={Foundations and Trends{\textregistered} in Machine Learning},
  volume={5},
  number={1},
  pages={1--122},
  year={2012},
  publisher={Now Publishers, Inc.}
}

@article{alon2017,
    author = {Alon, Noga and Cesa-Bianchi, Nicol\`{o} and Gentile, Claudio and Mannor, Shie and Mansour, Yishay and Shamir, Ohad},
    title = {Nonstochastic Multi-Armed Bandits with Graph-Structured Feedback},
    journal = {SIAM Journal on Computing},
    volume = {46},
    number = {6},
    pages = {1785--1826},
    year = {2017}
}

@InProceedings{lattimore2019information,
  title = 	 {An Information-Theoretic Approach to Minimax Regret in Partial Monitoring},
  author =       {Lattimore, Tor and Szepesv{\'a}ri, Csaba},
  booktitle = 	 {Proceedings of the Thirty-Second Conference on Learning Theory},
  pages = 	 {2111--2139},
  year = 	 {2019},
  volume = 	 {99},
  series = 	 {Proceedings of Machine Learning Research},
  publisher =    {PMLR},
}

@InProceedings{lin2014,
  title = 	 {Combinatorial Partial Monitoring Game with Linear Feedback and Its Applications},
  author = 	 {Lin, Tian and Abrahao, Bruno and Kleinberg, Robert and Lui, John and Chen, Wei},
  booktitle = 	 {Proceedings of the 31st International Conference on Machine Learning},
  pages = 	 {901--909},
  year = 	 {2014},
  volume = 	 {32},
  series = 	 {Proceedings of Machine Learning Research},
  publisher =    {PMLR},
}

@inproceedings{chaudhuri2016,
 author = {Chaudhuri, Sougata and Tewari, Ambuj},
 booktitle = {Advances in Neural Information Processing Systems},
 pages = {},
 publisher = {Curran Associates, Inc.},
 title = {Phased Exploration with Greedy Exploitation in Stochastic Combinatorial Partial Monitoring Games},
 volume = {29},
 year = {2016}
}

@InProceedings{bubeck2012towards,
  title = 	 {Towards Minimax Policies for Online Linear Optimization with Bandit Feedback},
  author={Bubeck, S{\'e}bastien and Cesa-Bianchi, Nicol\`{o} and Kakade, Sham M},
  booktitle = 	 {Proceedings of the 25th Annual Conference on Learning Theory},
  pages = 	 {41.1--41.14},
  year = 	 {2012},
  volume = 	 {23},
  series = 	 {Proceedings of Machine Learning Research},
  publisher =    {PMLR},
}

@inproceedings{dani2007,
 author = {Dani, Varsha and Kakade, Sham M and Hayes, Thomas},
 booktitle = {Advances in Neural Information Processing Systems},
 pages = {},
 publisher = {Curran Associates, Inc.},
 title = {The Price of Bandit Information for Online Optimization},
 volume = {20},
 year = {2007}
}

@inproceedings{russo2014,
 author = {Russo, Daniel and Van Roy, Benjamin},
 booktitle = {Advances in Neural Information Processing Systems},
 pages = {},
 publisher = {Curran Associates, Inc.},
 title = {Learning to Optimize via Information-Directed Sampling},
 volume = {27},
 year = {2014}
}

@inproceedings{ito2023,
 author = {Ito, Shinji and Takemura, Kei},
 booktitle = {Advances in Neural Information Processing Systems},
 pages = {71582--71602},
 publisher = {Curran Associates, Inc.},
 title = {An Exploration-by-Optimization Approach to Best of Both Worlds in Linear Bandits},
 volume = {36},
 year = {2023}
}

@InProceedings{tsuchiya2023,
  title = 	 {Best-of-Both-Worlds Algorithms for Partial Monitoring},
  author =       {Tsuchiya, Taira and Ito, Shinji and Honda, Junya},
  booktitle = 	 {Proceedings of The 34th International Conference on Algorithmic Learning Theory},
  pages = 	 {1484--1515},
  year = 	 {2023},
  volume = 	 {201},
  series = 	 {Proceedings of Machine Learning Research},
  publisher =    {PMLR},
}

@inproceedings{tsuchiya2023game,
 author = {Tsuchiya, Taira and Ito, Shinji and Honda, Junya},
 booktitle = {Advances in Neural Information Processing Systems},
 pages = {47406--47437},
 publisher = {Curran Associates, Inc.},
 title = {Stability-penalty-adaptive follow-the-regularized-leader: Sparsity, game-dependency, and best-of-both-worlds},
 volume = {36},
 year = {2023}
}

@InProceedings{tsuchiya2024,
  title = 	 {Exploration by Optimization with Hybrid Regularizers: Logarithmic Regret with Adversarial Robustness in Partial Monitoring},
  author =       {Tsuchiya, Taira and Ito, Shinji and Honda, Junya},
  booktitle = 	 {Proceedings of the 41st International Conference on Machine Learning},
  pages = 	 {48768--48790},
  year = 	 {2024},
  volume = 	 {235},
  series = 	 {Proceedings of Machine Learning Research},
  month = 	 {21--27 Jul},
  publisher =    {PMLR}
}

@inproceedings{tsuchiya2020,
 author = {Tsuchiya, Taira and Honda, Junya and Sugiyama, Masashi},
 booktitle = {Advances in Neural Information Processing Systems},
 pages = {8861--8871},
 publisher = {Curran Associates, Inc.},
 title = {Analysis and Design of {T}hompson Sampling for Stochastic Partial Monitoring},
 volume = {33},
 year = {2020}
}

@inproceedings{vanchinathan2014,
 author = {Vanchinathan, Hastagiri P and Bart\'{o}k, G\'{a}bor and Krause, Andreas},
 booktitle = {Advances in Neural Information Processing Systems},
 publisher = {Curran Associates, Inc.},
 title = {Efficient Partial Monitoring with Prior Information},
 volume = {27},
 year = {2014}
}

@inproceedings{komiyama2015,
 author = {Komiyama, Junpei and Honda, Junya and Nakagawa, Hiroshi},
 booktitle = {Advances in Neural Information Processing Systems},
 publisher = {Curran Associates, Inc.},
 title = {Regret Lower Bound and Optimal Algorithm in Finite Stochastic Partial Monitoring},
 volume = {28},
 year = {2015}
}

@InProceedings{mannor2003,
author="Mannor, Shie
and Shimkin, Nahum",
title="On-Line Learning with Imperfect Monitoring",
booktitle="Learning Theory and Kernel Machines",
year="2003",
publisher="Springer Berlin Heidelberg",
address="Berlin, Heidelberg",
pages="552--566",
isbn="978-3-540-45167-9"
}

@article{mannor2014,
  author  = {Shie Mannor and Vianney Perchet and Gilles Stoltz},
  title   = {Set-Valued Approachability and Online Learning with Partial Monitoring},
  journal = {Journal of Machine Learning Research},
  year    = {2014},
  volume  = {15},
  number  = {94},
  pages   = {3247--3295},
}

@InProceedings{kwon2017,
  title = 	 {Online Learning and {B}lackwell Approachability with Partial Monitoring: Optimal Convergence Rates},
  author = 	 {Kwon, Joon and Perchet, Vianney},
  booktitle = 	 {Proceedings of the 20th International Conference on Artificial Intelligence and Statistics},
  pages = 	 {604--613},
  year = 	 {2017},
  volume = 	 {54},
  series = 	 {Proceedings of Machine Learning Research},
  publisher =    {PMLR},
}

@article{lugosi2008,
author = {Lugosi, G\'{a}bor and Mannor, Shie and Stoltz, Gilles},
title = {Strategies for Prediction Under Imperfect Monitoring},
journal = {Mathematics of Operations Research},
volume = {33},
number = {3},
pages = {513-528},
year = {2008},
}

@inproceedings{foster2022,
 author = {Foster, Dylan J and Rakhlin, Alexander and Sekhari, Ayush and Sridharan, Karthik},
 booktitle = {Advances in Neural Information Processing Systems},
 pages = {35404--35417},
 publisher = {Curran Associates, Inc.},
 title = {On the Complexity of Adversarial Decision Making},
 volume = {35},
 year = {2022}
}

@article{foster2021,
  title={The statistical complexity of interactive decision making},
  author={Foster, Dylan J and Kakade, Sham M and Qian, Jian and Rakhlin, Alexander},
  journal={arXiv preprint arXiv:2112.13487},
  year={2021}
}

@article{holbrook2018,
title = {Differentiating the pseudo determinant},
journal = {Linear Algebra and its Applications},
volume = {548},
pages = {293-304},
year = {2018},
issn = {0024-3795},
author = {Andrew Holbrook},
}

@article{laurent2000,
author = {Laurent, B. and Massart, Pascal},
year = {2000},
month = {10},
pages = {},
title = {Adaptive estimation of a quadratic functional by model selection},
volume = {28},
journal = {Annals of Statistics},
}

@article{balinski1961,
author = {Balinski, Michel},
year = {1961},
month = {06},
pages = {},
title = {On the graph structure of convex polyhedra in $n$-space},
volume = {11},
journal = {Pacific Journal of Mathematics},
doi = {10.2140/pjm.1961.11.431}
}

@book{hazan_introduction_2021,
  title={Introduction to Online Convex Optimization},
  author={Hazan, Elad},
  year={2022},
  publisher={MIT Press}
}

@book{zhang2006schur,
  title={The {S}chur complement and its applications},
  author={Zhang, Fuzhen},
  volume={4},
  year={2006},
  publisher={Springer Science \& Business Media}
}

@article{stewart1977,
 author = {G. W. Stewart},
 journal = {SIAM Review},
 number = {4},
 pages = {634--662},
 publisher = {Society for Industrial and Applied Mathematics},
 title = {On the Perturbation of Pseudo-Inverses, Projections and Linear Least Squares Problems},
 volume = {19},
 year = {1977}
}

@InProceedings{heuillet2024,
  title = 	 {Randomized Confidence Bounds for Stochastic Partial Monitoring},
  author =       {Heuillet, Maxime and Ahmad, Ola and Durand, Audrey},
  booktitle = 	 {Proceedings of the 41st International Conference on Machine Learning},
  pages = 	 {18248--18283},
  year = 	 {2024},
  volume = 	 {235},
  series = 	 {Proceedings of Machine Learning Research},
  publisher =    {PMLR},
}

@InProceedings{heuillet2024b,
  title = 	 {Neural Active Learning Meets the Partial Monitoring Framework},
  author =       {Heuillet, Maxime and Ahmad, Ola and Durand, Audrey},
  booktitle = 	 {Proceedings of the Fortieth Conference on Uncertainty in Artificial Intelligence},
  pages = 	 {1621--1639},
  year = 	 {2024},
  volume = 	 {244},
  series = 	 {Proceedings of Machine Learning Research},
  publisher =    {PMLR}
}

\newpage
\appendix

\section{The Geometry of a Linear Partial Monitoring Game} \label{appendix:geometry}
This section provides a brief overview of some partial monitoring concepts and terminology, see \cite{Antos2013,bartok2014partial,kirschner2020information,banditbook}, which will be used in the proofs, particularly those of the lower bounds in \Cref{appendix:lower-bounds}.
We remark that the definitions used here mostly concern the case when $\cB_2(r) \subseteq \Loss$ for some $r>0$, which is our focus in this work.
Recall that $\cX \coloneqq \{\feat{a} \mid a \in \cA\}$, and define $\cV \coloneqq \co(\cX)$, which is a convex polytope.
An action $a \in \cA$ is called \emph{Pareto optimal} if $\feat{a} \in \ext(\cX)$; that is, $\feat{a}$ is a vertex of $\cV$.
The convex cone
\[
    C_a \coloneqq \left\{\loss \in \R^d \mid \forall b\in\mathcal{A} \,, \langle \feat{a} - \feat{b}, \loss\rangle \leq 0 \right\} 
    = \left\{\loss \in \R^d \mid \forall x \in \cV \,, \langle \feat{a} - x, \loss\rangle \leq 0 \right\} 
\]
is called the \emph{cell} of $a$.
A Pareto optimal action $a$ satisfies that $\{\feat{a}\} \cap \co(\cX \setminus \{\feat{a}\}) = \varnothing$ since $\feat{a}$ is a vertex of $\cV$.
Hence, by the separating hyperplane theorem, there exists a vector $u \in \R^d$ such that $\feat{a}^\top u < x^\top u$ for all $x \in \cX \setminus \{\feat{a}\}$, yielding that $\feat{a}^\top u < \feat{b}^\top u$ for all $b \in \cA$ with $\feat{b} \neq \feat{a}$.
It follows that $\dim(C_a) = d$, where $\dim(S)$ is the dimension of the affine hull of $S$, since we can construct a small enough Euclidean ball around $u$ where the same condition holds (recall that $\cA$ is finite).
Conversely, $\dim(C_a) = d$ implies that $a$ is Pareto optimal.
This is because if $\feat{a}$ can be written as convex combination of points in $\cX \setminus \{\feat{a}\}$, then for every such point $\feat{b}$ (with positive weight), $\inprod{\feat{a} - \feat{b}, \ell} = 0$ for all $\ell \in C_a$; hence, $C_a$ cannot be full-dimensional.

The polytope $\cV$ induces a graph (the $1$-skeleton of $\cV$) whose vertices are the vertices of $\cV$, two of which are adjacent in this graph if they are the endpoints of an edge of $\cV$.
Two non-duplicate (i.e., having different features) Pareto optimal actions $a$ and $b$ are said to be \emph{neighbors} if $\feat{a}$ and $\feat{b}$ are adjacent in the $1$-skeleton of $\cV$.
It is known that this graph is connected \cite{balinski1961}.
For any two neighbors $a$ and $b$, it holds that $\dim(C_a \cap C_b) = d-1$.
This is true since
\[
    C_a \cap C_b = \left\{\loss \in \R^d \mid \langle \feat{a} - \feat{b}, \loss\rangle = 0 \:\: \text{and} \:\:  \forall c\in\mathcal{A} \,, \langle \feat{a} - \feat{c}, \loss\rangle \leq 0 \right\} \,,
\]
$\cA$ is finite, 
and the fact that there exists (by virtue of $\feat{a}$ and $\feat{b}$ being endpoints of an edge of $\cV$) a vector $u \in \R^d$ such that $\feat{a}^\top u < x^\top u$ for all $x \in \cV \setminus [\feat{a}, \feat{b}]$ (which also excludes duplicates of $a$ and $b$) and $\feat{a}^\top u = \feat{b}^\top u$.
Conversely, if $\dim(C_a \cap C_b) = d-1$ for two Pareto optimal actions $a$ and $b$, then they must be neighbors.
To see this, note that each $\ell \in C_a \cap C_b$ can be mapped to a face of $\cV$ containing both $\feat{a}$ and $\feat{b}$ in addition to points 
$x \in \cV$ satisfying $\inprod{x,\ell} = \inprod{\feat{a},\ell}$.
Since the intersection of two faces is also a face, the intersection of all faces containing $\feat{a}$ and $\feat{b}$ is still a face of $\cV$.
If $a$ and $b$ are not neighbors, this face must contain some vertex $\feat{c}$ different from $\feat{a}$ and $\feat{b}$. 
Therefore, for any $\ell \in C_a \cap C_b$, $\inprod{\feat{c},\ell} = \inprod{\feat{a},\ell} = \inprod{\feat{b},\ell}$, implying that $\dim(C_a \cap C_b) \leq d-2$ since $\feat{a}$, $\feat{b}$, and $\feat{c}$ are affinely independent being all distinct vertices of $\cV$.
Lastly, for neighboring actions $a$ and $b$, define their neighborhood as
\[
    \cN_{a,b} \coloneqq \{ c \in \cA \mid \feat{c} \in [ \feat{a}, \feat{b} ] \} = \bcb{ c \in \cA \mid \feat{c} = \alpha \feat{a} + (1-\alpha) \feat{b} \,, \alpha \in [0,1] } \,,
\]
which includes the duplicates of $a$ and $b$. (Note that the neighborhood relation is defined only for non-duplicate Pareto optimal actions.)

Recall the definition $\mathcal{P}(\loss)
\coloneqq
\{\,a \in \mathcal{A} \mid \langle \feat{a}-\feat{b}, \loss \rangle \leq 0 \ \text{for all }b\in\mathcal{A}\}$ for $\ell \in \R^d$, extending to sets $D \subseteq \R^d$ via $\mathcal{P}(D) \coloneqq \cup_{\loss \in D} \mathcal{P}(\loss)$.
Lemma 26 in \cite{kirschner2020information} shows that local observability as defined via \Cref{def:obs condition} is equivalent to the more usual local observability condition used in finite partial monitoring.
The following lemma expands on that adding two more equivalent conditions, one of which was featured in \Cref{lem:better-actions}.

\begin{lemma} \label{lem:local-defs-expanded}
    The following conditions are equivalent.
    \begin{enumerate}[label=\roman*)]
        \item \label{local-defs-expanded:sets} For every convex set $D \subseteq \mathbb{R}^d$,
            \begin{equation*}
            \feat{a} - \feat{b} \;\in\; \mathrm{span}\{M_c \mid c \in \mathcal{P}(D)\}
            \quad\text{for all } a, b \in \mathcal{P}(D) \,.
            \end{equation*}

        \item \label{local-defs-expanded:neighbors} For any pair of neighboring actions $a$ and $b$,
        \begin{equation*}
            \feat{a} - \feat{b} \;\in\; \mathrm{span}\{M_c \mid c \in \cN_{a,b}\} \,.
        \end{equation*}

        \item \label{local-defs-expanded:action-and-loss} For any Pareto optimal action $a$, $\loss \in \R^d$, and $a^* \in \argmin_{a' \in \cA} \feat{a'}^\top \loss$,
        \begin{equation*}
            \feat{a} - \feat{a^*} \in \spn \lcb{ M_b \mid \feat{b}^\top \loss \leq \feat{a}^\top \loss} \,.
        \end{equation*}

        \item \label{local-defs-expanded:pair-and-loss} For any pair of Pareto optimal actions $a$ and $b$, and any $\loss \in \R^d$,
        \begin{equation*}
            \feat{a} - \feat{b} \in \spn \lcb{ M_c \mid  \feat{c}^\top \loss \leq \max\{\feat{a}^\top \loss, \feat{b}^\top \loss \}} \,.
        \end{equation*}
    \end{enumerate}
\end{lemma}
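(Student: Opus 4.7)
I will prove the cycle \ref{local-defs-expanded:sets} $\Rightarrow$ \ref{local-defs-expanded:action-and-loss} $\Rightarrow$ \ref{local-defs-expanded:pair-and-loss} $\Rightarrow$ \ref{local-defs-expanded:neighbors} $\Rightarrow$ \ref{local-defs-expanded:sets}, using the already-known equivalence \ref{local-defs-expanded:sets} $\Leftrightarrow$ \ref{local-defs-expanded:neighbors} (Lemma 26 of \cite{kirschner2020information}) to close the last step. Since that equivalence handles the hardest direction, the remaining work is to connect the ``loss-vector'' formulations \ref{local-defs-expanded:action-and-loss} and \ref{local-defs-expanded:pair-and-loss} with the ``set'' and ``neighbor'' formulations via direct constructions.

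For \ref{local-defs-expanded:sets} $\Rightarrow$ \ref{local-defs-expanded:action-and-loss}, I fix a Pareto optimal action $a$, a loss vector $\loss \in \R^d$, and an optimal action $a^*$ against $\loss$. Since $a$ is Pareto optimal, the discussion in \Cref{appendix:geometry} gives a vector $u \in \R^d$ such that $\feat{a}^\top u < \feat{c}^\top u$ for every $c \in \cA$ with $\feat{c} \neq \feat{a}$. I then consider the convex set $D = \{\loss + t u \mid t \geq 0\}$, which contains $\loss$ (giving $a^* \in \cP(D)$) and contains $\loss + tu$ for $t$ large enough that $a$ becomes optimal (giving $a \in \cP(D)$). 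The key observation is that any $c \in \cP(D)$ satisfies $\feat{c}^\top \loss \leq \feat{a}^\top \loss$: if $c$ is optimal against $\loss + tu$ for some $t \geq 0$, then $\feat{c}^\top(\loss + tu) \leq \feat{a}^\top(\loss + tu)$, which rearranges to $\feat{c}^\top \loss - \feat{a}^\top \loss \leq t(\feat{a}^\top u - \feat{c}^\top u) \leq 0$ by the choice of $u$ (trivially so when $\feat{c} = \feat{a}$). Invoking \ref{local-defs-expanded:sets} on $D$ then yields \ref{local-defs-expanded:action-and-loss}.

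For \ref{local-defs-expanded:action-and-loss} $\Rightarrow$ \ref{local-defs-expanded:pair-and-loss}, I fix Pareto optimal $a, b$ and $\loss$, and without loss of generality assume $\feat{a}^\top \loss \leq \feat{b}^\top \loss$. Picking any $a^* \in \argmin_{a' \in \cA} \feat{a'}^\top \loss$ and applying \ref{local-defs-expanded:action-and-loss} to both $(a, \loss)$ and $(b, \loss)$, I obtain that both $\feat{a} - \feat{a^*}$ and $\feat{b} - \feat{a^*}$ lie in $\spn\{M_c \mid \feat{c}^\top \loss \leq \feat{b}^\top \loss\}$; subtracting gives \ref{local-defs-expanded:pair-and-loss}. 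For \ref{local-defs-expanded:pair-and-loss} $\Rightarrow$ \ref{local-defs-expanded:neighbors}, I use the characterization of neighbors recalled in \Cref{appendix:geometry}: for neighboring $a, b$, there exists $u \in \R^d$ with $\feat{a}^\top u = \feat{b}^\top u$ and $\feat{a}^\top u < \feat{c}^\top u$ for every $c \in \cA$ with $\feat{c} \notin [\feat{a}, \feat{b}]$. Applying \ref{local-defs-expanded:pair-and-loss} with $\loss = u$ gives $\feat{a} - \feat{b} \in \spn\{M_c \mid \feat{c}^\top u \leq \feat{a}^\top u\}$, and the set on the right is contained in $\{M_c \mid c \in \cN_{a,b}\}$, yielding \ref{local-defs-expanded:neighbors}.

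The main obstacle is the \ref{local-defs-expanded:sets} $\Rightarrow$ \ref{local-defs-expanded:action-and-loss} step, and specifically the choice of the convex set $D$: one must simultaneously include both $\loss$ and some loss against which $a$ is optimal, while controlling the entire set of actions that become optimal anywhere along the way. The ray construction is tailored to force this control via the strict separating vector $u$ provided by Pareto optimality; once this is in place, the remaining implications are short algebraic manipulations and an appeal to the geometry of edges of $\cV$.
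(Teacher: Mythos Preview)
Your proposal is correct and follows essentially the same route as the paper: the same cycle \ref{local-defs-expanded:sets} $\Rightarrow$ \ref{local-defs-expanded:action-and-loss} $\Rightarrow$ \ref{local-defs-expanded:pair-and-loss} $\Rightarrow$ \ref{local-defs-expanded:neighbors}, closed via Lemma~26 of \cite{kirschner2020information}, with the same separating-hyperplane idea for \ref{local-defs-expanded:sets} $\Rightarrow$ \ref{local-defs-expanded:action-and-loss} and the same edge-supporting-hyperplane idea for \ref{local-defs-expanded:pair-and-loss} $\Rightarrow$ \ref{local-defs-expanded:neighbors}. The only cosmetic difference is that the paper takes $D$ to be the line segment $[\loss, u]$ rather than your ray $\{\loss + tu \mid t \geq 0\}$; both choices work for the same reason.
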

\begin{proof}

    \emph{\ref{local-defs-expanded:sets}} $\Leftrightarrow$ \emph{\ref{local-defs-expanded:neighbors}} This is given by \cite[Lemma 26]{kirschner2020information}. (Note that, unlike  \cite{kirschner2020information}, our setting allows duplicate actions; however, their proof extends immediately to the case when duplicates are present.) 

    \emph{\ref{local-defs-expanded:sets}} $\Rightarrow$ \emph{\ref{local-defs-expanded:action-and-loss}} Since $a$ is Pareto optimal, $\feat{a} \in \ext(\cV)$, implying that $\{\feat{a}\} \cap \co(\cX \setminus \{\feat{a}\}) = \varnothing$.
    Hence, by the separating hyperplane theorem, there exists a vector $v \in \R^d$ such that $\feat{a}^\top v < x^\top v$ for all $x \in \cX \setminus \{\feat{a}\}$, yielding that $\feat{a}^\top v < \feat{b}^\top v$ for all $b \in \cA$ with $\feat{b} \neq \feat{a}$.
    Define $E$ as the line segment between $v$ and $\loss$, i.e., 
    $$E \coloneqq \left\{\loss' \in \R^d \mid \ \loss' = (1-\lambda) \loss + \lambda v \,,\, \lambda \in[0,1] \right\}.$$
    Now, considering that $a, a^* \in \cP(E)$, Condition $\ref{local-defs-expanded:sets}$ implies that
    \[\feat{a} - \feat{a^*} \;\in\; \mathrm{span}\{M_b \mid b \in \mathcal{P}(E)\}.\]
    We now show that for every $b \in \mathcal{P}(E)$, $\feat{b}^\top \loss \leq \feat{a}^\top \loss$, from which the lemma follows directly.
    If $\feat{b} = \feat{a}$, the inequality holds trivially.
    Suppose now that $\feat{b} \neq \feat{a}$. By the definition of $v$, $\feat{a}^\top v < \feat{b}^\top v$. 
    Hence, since $b \in \mathcal{P}(E)$, we have that for some $\lambda \in [0,1)$,
    \begin{align*}
        (1-\lambda) \feat{b}^\top \loss - (1-\lambda) \feat{a}^\top \loss \leq \lambda \feat{a}^\top v - \lambda \feat{b}^\top v \leq 0 \,.
    \end{align*}
    Dividing by $1-\lambda$ (note that $\lambda < 1$) gives that $\feat{b}^\top \loss \leq \feat{a}^\top \loss$ as required.
    
    \emph{\ref{local-defs-expanded:action-and-loss}} $\Rightarrow$ \emph{\ref{local-defs-expanded:pair-and-loss}} This can be seen immediately upon writing $\feat{a} - \feat{b} = \feat{a} - \feat{a^*} - (\feat{b} - \feat{a^*}) $.

    \emph{\ref{local-defs-expanded:pair-and-loss}} $\Rightarrow$ \emph{\ref{local-defs-expanded:neighbors}} For neighbors $a$ and $b$, $[\feat{a}, \feat{b}]$ is a face (in particular, an edge) of the polytope $\cV$; hence, there exists $r \in \R$ and $\loss \in \R^d$ such that $x^\top \loss = r$ for $x \in [\feat{a}, \feat{b}]$ and $y^\top \loss > r$ for $y \in \cV \setminus [\feat{a}, \feat{b}]$.
    This gives, via the definition of $\cN_{a,b}$, that for any $c \in \cA$, $\feat{c}^\top \loss \leq \max\{\feat{a}^\top \loss, \feat{b}^\top \loss\} \Rightarrow c \in \cN_{a,b}$, which established the sought implication.
\end{proof}

\section{Auxiliary Results}
\label{appendix:aux}

We will use $\cS^n_{\succeq}$ in the following to denote the set of (symmetric) positive semi-definite matrices in $\R^{n \times n}$, while $\succeq$ will refer to the Loewner order: for any $F,G \in \cS^n_{\succeq}$, $F \succeq G$ is equivalent to $F - G \in \cS^n_{\succeq}$.

\begin{lemma} \label{lemma:convexity}
    Let $n$ and $m$ be two positive integers, and let $\cV$ be a subspace of $\R^n$. Then, the mapping $(X,G) \mapsto X^\top G^\dagger X$ is convex on $\R^{n \times m} \times \{ G \in \cS^n_{\succeq} \mid \col(G) = \cV \}$ with respect to the Loewner order.
\end{lemma}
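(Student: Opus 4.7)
My plan is to reduce to the classical case where $G$ is strictly positive definite by exploiting the fixed column space $\cV$, and then invoke a standard variational identity.

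Let $r = \dim \cV$ and pick a matrix $U \in \R^{n \times r}$ whose columns form an orthonormal basis of $\cV$. For any $G$ in the domain (so $G \succeq 0$ with $\col(G) = \cV$), one has the factorization $G = U \tilde{G} U^\top$, where $\tilde{G} := U^\top G U$ is symmetric and strictly positive definite (the latter because $\col(G) = \cV$). Consequently, $G^\dagger = U \tilde{G}^{-1} U^\top$, so setting $Y := U^\top X$ gives
\[
    X^\top G^\dagger X = Y^\top \tilde{G}^{-1} Y .
\]
Since the map $(X,G) \mapsto (Y,\tilde{G})$ is linear and sends the original domain into $\R^{r\times m} \times \{\tilde{G} \in \R^{r \times r} \mid \tilde{G} \succ 0 \}$, convexity of the original mapping (in the Loewner order) reduces to convexity of $(Y, \tilde{G}) \mapsto Y^\top \tilde{G}^{-1} Y$ on this smaller, strictly positive definite, domain.

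For the reduced problem, I would use the identity
\[
    Y^\top \tilde{G}^{-1} Y \;=\; \sup_{Z \in \R^{r \times m}} \left( Z^\top Y + Y^\top Z - Z^\top \tilde{G} Z \right) ,
\]
where the supremum is in the Loewner order. This is verified by completing the square, the maximizer being $Z = \tilde{G}^{-1} Y$. Since for each fixed $Z$ the expression inside the supremum is affine in $(Y, \tilde{G})$, the right-hand side is a pointwise supremum (in the Loewner order) of affine functions, which is convex in the Loewner order, giving the desired conclusion.

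I do not foresee any substantial obstacle. The only delicate point is the reduction step, where the assumption that $\col(G)$ is fixed is essential: it ensures that $\tilde{G}$ is invertible and that $G^\dagger$ admits the clean representation $U \tilde{G}^{-1} U^\top$. Without such a constraint, the pseudoinverse fails to be even continuous in $G$, and the conclusion breaks down.
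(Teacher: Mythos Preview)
Your argument is correct. The reduction via the orthonormal basis $U$ of $\cV$ is exactly the mechanism behind the paper's own \Cref{lemma:pinv-indentity}, and the variational identity $Y^\top \tilde G^{-1} Y = \sup_Z \bigl(Z^\top Y + Y^\top Z - Z^\top \tilde G Z\bigr)$ with the supremum taken in the Loewner order is valid since the maximum is attained at $Z = \tilde G^{-1} Y$ and dominates every other choice; convexity then follows because the affine family $f_Z$ is evaluated at the particular $Z^\star = \tilde G^{-1} Y$ and bounded above term-by-term.

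The paper, by contrast, does not argue this at all: it simply cites Theorem~8 of Nordstr\"om (2011) and moves on. Your route is therefore genuinely different in that it is self-contained rather than deferring to the literature. What your approach buys is transparency and independence from an external reference; what the citation buys is brevity and coverage of the more general statement (Nordstr\"om's result handles the pseudoinverse directly without assuming a fixed column space, under a parallel-sum condition). Either is adequate for the paper's purposes, since the only place the lemma is invoked is with $G = Q_\delta(p)$, whose column space is indeed fixed at $\col(\bm M)$.
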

\begin{proof}
    This is a direct consequence of Theorem 8 in \citep{nordstrom2011}.
\end{proof}

\begin{lemma} \label{lemma:pinv-sum-inequality}
    Let $n$ and $m$ be two positive integers.
    For $F, G \in \cS^n_{\succeq}$ and $X \in \R^{n \times m}$ such that $\col(X) \subseteq \col(G)$, it holds that
    \[
        X^\top (F + G)^\dagger X \preceq X^\top G^\dagger X \,.
    \]
\end{lemma}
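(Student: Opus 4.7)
The plan is to reduce the statement about pseudoinverses to the familiar invertible case via an $\epsilon$-regularization argument. For $\epsilon > 0$, I would set $G_\epsilon \coloneqq G + \epsilon I_n$, which is strictly positive definite and hence invertible. Since $F \succeq 0$, we have $F + G_\epsilon \succeq G_\epsilon \succ 0$, and by operator antitonicity of the inverse on $\cS^n_\succ$, it follows that $(F + G_\epsilon)^{-1} \preceq G_\epsilon^{-1}$. Conjugating by $X$ preserves the Loewner order, so
\[
    X^\top (F + G_\epsilon)^{-1} X \;\preceq\; X^\top G_\epsilon^{-1} X
    \quad \text{for every } \epsilon > 0.
\]

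The remaining task is to show that both sides converge as $\epsilon \to 0^+$ to the desired pseudoinverse expressions. For this, I would use the following elementary fact coming from the spectral decomposition of a PSD matrix $H$: for every $v \in \col(H)$, one has $v^\top (H + \epsilon I_n)^{-1} v \to v^\top H^\dagger v$ as $\epsilon \to 0^+$. Indeed, writing $H = \sum_i \lambda_i u_i u_i^\top$ with orthonormal $u_i$ and $\lambda_i \geq 0$, the hypothesis $v \in \col(H)$ forces $u_i^\top v = 0$ whenever $\lambda_i = 0$, so only the terms with $\lambda_i > 0$ contribute, and on those $(\lambda_i + \epsilon)^{-1} \to \lambda_i^{-1}$.

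Applying this column-by-column to $X$, I would need $\col(X) \subseteq \col(G)$ (which is assumed) to conclude $X^\top G_\epsilon^{-1} X \to X^\top G^\dagger X$, and also $\col(X) \subseteq \col(F+G)$ to conclude $X^\top (F+G_\epsilon)^{-1} X \to X^\top (F+G)^\dagger X$. The second inclusion follows from the first together with $\col(G) \subseteq \col(F+G)$, which is a standard PSD-sum fact: if $(F+G)v = 0$ then $v^\top F v + v^\top G v = 0$ with both terms nonnegative, forcing $Gv = 0$, hence $\ker(F+G) \subseteq \ker(G)$, and taking orthogonal complements gives $\col(G) \subseteq \col(F+G)$.

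Finally, since the PSD cone $\cS^m_\succeq$ is closed, the Loewner inequality is preserved under this limit, yielding $X^\top (F+G)^\dagger X \preceq X^\top G^\dagger X$ as required. There is no real obstacle here: the only delicate point is that neither $G_\epsilon^{-1}$ nor $(F+G_\epsilon)^{-1}$ converges as a matrix to its respective pseudoinverse in general (they can blow up on the kernel), but the column-space hypothesis on $X$ precisely neutralizes the divergent directions after conjugation.
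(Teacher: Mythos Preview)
Your argument is correct. The paper does not give a self-contained proof of this lemma at all; it simply cites Theorem~5 of Carlson (1974) as a black box. Your $\epsilon$-regularization route is therefore genuinely different and more elementary: it reduces to the operator-monotone decrease of the inverse on strictly positive definite matrices and a limiting argument, both of which are textbook facts, and it makes transparent exactly where the hypothesis $\col(X)\subseteq\col(G)$ is used (namely, to kill the divergent kernel directions when passing to the limit).

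One small polish: the phrase ``applying this column-by-column to $X$'' literally yields only the diagonal entries of $X^\top(H+\epsilon I_n)^{-1}X$. To get convergence of the full matrix (and hence preservation of the Loewner order in the limit), argue instead that for every $w\in\R^m$ one has $Xw\in\col(X)\subseteq\col(H)$, whence $w^\top X^\top(H+\epsilon I_n)^{-1}Xw\to w^\top X^\top H^\dagger Xw$; convergence of all quadratic forms of a symmetric matrix gives entrywise convergence by polarization. This is a one-line fix and does not affect the substance of your proof.
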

\begin{proof}
    This is a direct consequence of Theorem 5 in \citep{carlson1974}.
\end{proof}

Recall that for $\pi \in \Delta_k$ and $\delta \in (0,1)$,
\begin{align*}
Q(\pi) & \;\coloneqq\; \sum_{a\in\mathcal A}\pi(a)\,M_aM_a^\top \qquad \text{and} \qquad Q_\delta(\pi) \coloneqq Q((1-\delta)\pi + \delta\bm{1}_k/k) \,.
\end{align*}
Recall also that $U \in \R^{d \times \rank(\bm{M})}$ is a matrix whose columns $(u_i)_{i \in [\rank(\bm{M})]}$ form an orthonormal basis for $\col(\bm{M})$, with $\bm{M}$ being the $d \times (\sum_a n(a))$ matrix obtained by horizontally stacking all the observations matrices $(M_a)_{a \in \cA}$. 
Now, for $\pi \in \Delta_k$ and $\delta \in (0,1)$, define
\begin{align} \label{def:B}
B(\pi) &\;\coloneqq\;  \sum_{a\in\mathcal A}\pi(a)\,U^\top M_aM_a^\top U
            \;=\;U^\top Q(\pi) U \quad \text{and} \quad B_{\delta}(\pi) \coloneqq B((1-\delta)\pi + \delta\bm{1}_k/k) \,.
\end{align}
Finally, recall that given a finite index set ${Z}$ and a collection of matrices $\{X_z \mid z\in{Z}\}$, all with the same number of rows, we denote by $\spn\{X_z \mid z\in{Z}\}$
the span of all their columns.

\begin{lemma}
\label{lemma:Q-column-space}
For any $\pi \in \Delta_k$, $\col(Q(\pi)) = \spn\bcb{M_a \mid \pi(a) > 0}$. 
\end{lemma}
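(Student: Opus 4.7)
The plan is to prove the two set inclusions separately, with the nontrivial direction handled by exploiting the positive semi-definiteness of $Q(\pi)$.

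For the inclusion $\col(Q(\pi)) \subseteq \spn\{M_a \mid \pi(a) > 0\}$, I would just unpack the definition. Writing $Q(\pi) = \sum_{a : \pi(a) > 0} \pi(a) M_a M_a^\top$, every column of $Q(\pi)$ is a linear combination of columns of the matrices $M_a M_a^\top$ for $a$ with $\pi(a) > 0$. Each such matrix has columns that are linear combinations of the columns of $M_a$, so the inclusion follows immediately.

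For the reverse inclusion $\spn\{M_a \mid \pi(a) > 0\} \subseteq \col(Q(\pi))$, the key observation is that $Q(\pi)$ is symmetric and positive semi-definite, so $\col(Q(\pi)) = (\ker Q(\pi))^\perp$. Hence it suffices to show $\ker Q(\pi) \subseteq \spn\{M_a \mid \pi(a) > 0\}^\perp$. Take any $v \in \ker Q(\pi)$; then
\[
0 = v^\top Q(\pi) v = \sum_{a \in \cA} \pi(a) \, \norm{M_a^\top v}^2 \,.
\]
Since every summand is nonnegative, each $a$ with $\pi(a) > 0$ must satisfy $M_a^\top v = 0$, i.e., $v$ is orthogonal to every column of such $M_a$. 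Taking orthogonal complements on both sides gives the desired inclusion, and combining the two inclusions yields the equality.

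There is no real obstacle here; the only point worth stating carefully is the equivalence $\col(Q(\pi)) = (\ker Q(\pi))^\perp$ for PSD (or more generally symmetric) matrices, which is what turns the argument about kernels into one about column spaces. No extra assumption on $\pi$ beyond being a probability vector is needed, and the argument works uniformly for all $\pi \in \Delta_k$.
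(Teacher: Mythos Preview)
Your proof is correct. The paper takes a slightly different, more compressed route: it writes $Q(\pi) = \bm{M} D_\pi^2 \bm{M}^\top = (\bm{M} D_\pi)(\bm{M} D_\pi)^\top$, where $D_\pi$ is the diagonal matrix with entries $\sqrt{\pi(a)}$ repeated $n(a)$ times, and then invokes the standard identity $\col(AA^\top) = \col(A)$ to conclude $\col(Q(\pi)) = \col(\bm{M} D_\pi) = \spn\{M_a \mid \pi(a) > 0\}$ in one line. Your argument is essentially an unpacking of that identity in this specific context: the forward inclusion is the easy half of $\col(AA^\top) \subseteq \col(A)$, and your kernel/PSD step for the reverse inclusion is exactly how one proves $\col(A) \subseteq \col(AA^\top)$ in general. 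So the two proofs are equivalent in substance; the paper's buys brevity by citing the factorization, while yours is more self-contained and makes the role of positive semi-definiteness explicit.
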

\begin{proof}
    Let $n \coloneqq \sum_{a \in \cA} n_a$, $v_\pi$ be the $n$-dimensional vector constructed by concatenating the vectors $\sqrt{\pi(a)} \bm{1}_{n(a)}$ in order, and $D_\pi$ be the $n \times n$ diagonal matrix with $v_\pi$ on the diagonal. 
Note then that $Q(\pi)$ can be written as $\bm{M} D_\pi^2 \bm{M}^T$.
Hence, 
\[\col(Q(\pi)) = \col(\bm{M} D_\pi^2 \bm{M}^T) = \col(\bm{M} D_\pi) = \spn\bcb{M_a \mid \pi(a) > 0} \,.\]
\end{proof}

\begin{lemma}
\label{lemma:full-rank-conditions}
For any $\pi \in \Delta_k$, the following conditions are equivalent.
\begin{enumerate} [label=\roman*)]
    \item $\spn\bcb{M_a \mid \pi(a) > 0} = \col(\bm{M})\,;$ \label{full-rank-conditions:support}
    \item \(\col(Q(\pi))=\col(\bm{M})\,;\) \label{full-rank-conditions:Q}
    \item \(B(\pi)\) is positive definite\,; \label{full-rank-conditions:B}
    \item $\spn\bcb{U^\top M_a \mid \pi(a) > 0} = \R^{\rank(\bm{M})} \,.$ \label{full-rank-conditions:U-support}
\end{enumerate}
\end{lemma}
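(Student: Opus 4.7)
The plan is to establish the equivalences via three short arguments rather than a single circular chain: \ref{full-rank-conditions:support}${}\Leftrightarrow{}$\ref{full-rank-conditions:Q} directly from the preceding lemma, \ref{full-rank-conditions:support}${}\Leftrightarrow{}$\ref{full-rank-conditions:U-support} by exploiting the isometric action of $U^\top$ on $\col(\bm{M})$, and \ref{full-rank-conditions:Q}${}\Leftrightarrow{}$\ref{full-rank-conditions:B} through the identity $B(\pi)=U^\top Q(\pi) U$. There is no single hard step; the only things to be careful about are that $U$ has orthonormal columns spanning $\col(\bm{M})$ (so $U^\top$ is a bijection from $\col(\bm{M})$ onto $\R^{\rank(\bm{M})}$ and $U U^\top$ is the orthogonal projector onto $\col(\bm{M})$), and that each $M_a$ has all its columns in $\col(\bm{M})$.

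For \ref{full-rank-conditions:support}${}\Leftrightarrow{}$\ref{full-rank-conditions:Q}, I would just apply \Cref{lemma:Q-column-space}, which asserts $\col(Q(\pi))=\spn\{M_a\mid \pi(a)>0\}$, so \ref{full-rank-conditions:Q} is \ref{full-rank-conditions:support} rewritten. For \ref{full-rank-conditions:support}${}\Leftrightarrow{}$\ref{full-rank-conditions:U-support}, I would use that since every column of each $M_a$ lies in $\col(\bm{M})$, we have $U U^\top M_a = M_a$. Thus the linear map $X \mapsto U^\top X$ is injective on matrices whose columns lie in $\col(\bm{M})$ and carries $\col(\bm{M})$ bijectively onto $\R^{\rank(\bm{M})}$. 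Applying this map to the subspace $\spn\{M_a\mid\pi(a)>0\} \subseteq \col(\bm{M})$ shows that it equals $\col(\bm{M})$ if and only if its image $\spn\{U^\top M_a\mid \pi(a)>0\}$ equals $\R^{\rank(\bm{M})}$.

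For \ref{full-rank-conditions:Q}${}\Leftrightarrow{}$\ref{full-rank-conditions:B}, I would argue that $B(\pi)=U^\top Q(\pi) U$ is always PSD, and for any $v \in \R^{\rank(\bm{M})}$, $v^\top B(\pi)v = \bno{Q(\pi)^{1/2} U v}^2$, which vanishes if and only if $Uv \in \ker(Q(\pi)) = \col(Q(\pi))^\perp$. If \ref{full-rank-conditions:Q} holds, then $Uv \in \col(Q(\pi))^\perp \cap \col(\bm{M}) = \{0\}$, and since $U$ has trivial kernel this forces $v=0$, giving positive-definiteness. Conversely, if $B(\pi)$ is positive definite then $\rank(B(\pi))=\rank(\bm{M})$, so $\rank(Q(\pi))\geq \rank(\bm{M})$; combined with $\col(Q(\pi))\subseteq\col(\bm{M})$ from \Cref{lemma:Q-column-space}, this forces equality, establishing \ref{full-rank-conditions:Q}.
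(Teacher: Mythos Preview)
Your proof is correct and uses essentially the same ingredients as the paper's: \Cref{lemma:Q-column-space} for \ref{full-rank-conditions:support}${}\Leftrightarrow{}$\ref{full-rank-conditions:Q}, the identity $B(\pi)=U^\top Q(\pi)U$ together with the injectivity of $U$ for the link to \ref{full-rank-conditions:B}, and the fact that $U^\top$ restricts to a bijection $\col(\bm{M})\to\R^{\rank(\bm{M})}$ for \ref{full-rank-conditions:U-support}. The only organizational difference is that the paper establishes \ref{full-rank-conditions:support}${}\Rightarrow{}$\ref{full-rank-conditions:B}${}\Rightarrow{}$\ref{full-rank-conditions:U-support}${}\Rightarrow{}$\ref{full-rank-conditions:support} as a cycle (passing through \ref{full-rank-conditions:B} to reach \ref{full-rank-conditions:U-support}), whereas you prove \ref{full-rank-conditions:support}${}\Leftrightarrow{}$\ref{full-rank-conditions:U-support} directly via the isometry and handle \ref{full-rank-conditions:B} separately via a rank argument; both routes are equally short.
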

\begin{proof}~\\
\emph{\ref{full-rank-conditions:support}} $\Leftrightarrow$ \emph{\ref{full-rank-conditions:Q}} Follows directly from \Cref{lemma:Q-column-space}.

\emph{\ref{full-rank-conditions:support}} $\Rightarrow$ \emph{\ref{full-rank-conditions:B}} For any non‑zero \(v\in\mathbb{R}^{\rank(\bm{M})}\),
\[
v^{\top}B(\pi)v
= v^{\top}U^{\top}Q(\pi)Uv
= (Uv)^{\top}Q(\pi)(Uv)
= \sum_{a \in \cA} \pi(a)\,\bigl\lVert M_a^{\top}Uv\bigr\rVert^{2} \,.
\]
Note that $U v \neq 0$ since $U$ has full rank, and $U v \in \col(\bm{M})$ by the definition of $U$. 
Hence, thanks to Condition~$\ref{full-rank-conditions:support}$, $M_a^\top U v \neq 0$ for some $a \in \cA$ with $\pi(a) > 0$, yielding that $\sum_{a \in \cA} \pi(a)\,\bigl\lVert M_a^{\top}Uv\bigr\rVert^{2} > 0$.

\emph{\ref{full-rank-conditions:B}} $\Rightarrow$ \emph{\ref{full-rank-conditions:U-support}} 
Applying an analogous argument to the one used in the proof of \Cref{lemma:Q-column-space}, one can show that
\[\col(B(\pi)) =  \spn\bcb{U^\top M_a \mid \pi(a) > 0} \,.\]
The required property then follows since $\col(B(\pi)) = \R^{\rank(\bm{M})}$ by the assumption that \(B(\pi)\) is positive definite.

\emph{\ref{full-rank-conditions:U-support}} $\Rightarrow$ \emph{\ref{full-rank-conditions:support}} 
For any $x \in \R^d$, Condition \ref{full-rank-conditions:U-support} implies that $U^\top x \in \R^{\rank(\bm{M})}$ can be written as $U^\top x = \sum_{a \in \cA} U^\top M_a v_a$ for some collection of vectors $(v_a)_{a\in\cA}$ such that $v_a \in \R^{n(a)}$ and $v_a = \bm{0}$ whenever $\pi(a) = 0$.
Now, take $x$ to be in $\col(\bm{M})$. Since $\col(M_a) \in \col(\bm{M})$ and $U U^\top$ is the projection matrix onto $\col(\bm{M})$, we obtain that
\[
    x = U U^\top x = \sum_{a \in \cA} U U^\top M_a v_a = \sum_{a \in \cA} M_a v_a \,.
\]
This shows that $\col(\bm{M}) \subseteq \spn\bcb{M_a \mid \pi(a) > 0}$. The required property then readily follows as the converse statement is trivial.

\end{proof}

\begin{lemma}
\label{lemma:pinv-indentity}
Fix some $\pi \in \Delta_k$ such that \(B(\pi)\) is positive definite. Then, \(U\,B(\pi)^{-1}\,U^{\top}=Q(\pi)^{\dagger}\,.\)
\end{lemma}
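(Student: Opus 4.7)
The plan is to exhibit $U B(\pi)^{-1} U^\top$ as a matrix satisfying the four Moore--Penrose conditions for $Q(\pi)$, which uniquely characterize the pseudoinverse.

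The key structural observation to establish first is the factorization
\begin{equation*}
    Q(\pi) \;=\; U B(\pi) U^\top \,.
\end{equation*}
To see this, note that by \Cref{lemma:Q-column-space} and the assumption that $B(\pi)$ is positive definite combined with \Cref{lemma:full-rank-conditions}, we have $\col(Q(\pi)) = \col(\bm{M}) = \col(U)$. Since $U$ has orthonormal columns, $U U^\top$ is the orthogonal projector onto $\col(U)$. Thus $U U^\top Q(\pi) = Q(\pi)$, and by symmetry of $Q(\pi)$ we also get $Q(\pi) U U^\top = Q(\pi)$. Sandwiching gives $Q(\pi) = U U^\top Q(\pi) U U^\top = U (U^\top Q(\pi) U) U^\top = U B(\pi) U^\top$ by the definition of $B(\pi)$ in \eqref{def:B}.

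With this factorization in hand, set $P \coloneqq U B(\pi)^{-1} U^\top$ and use repeatedly that $U^\top U = \bm{I}_{\rank(\bm{M})}$ (orthonormality of the columns of $U$) to check the four Penrose axioms:
\begin{align*}
    Q(\pi) P Q(\pi) &= U B(\pi) U^\top \, U B(\pi)^{-1} U^\top \, U B(\pi) U^\top = U B(\pi) U^\top = Q(\pi) \,, \\
    P Q(\pi) P &= U B(\pi)^{-1} U^\top \, U B(\pi) U^\top \, U B(\pi)^{-1} U^\top = U B(\pi)^{-1} U^\top = P \,, \\
    Q(\pi) P &= U B(\pi) U^\top \, U B(\pi)^{-1} U^\top = U U^\top \,, \\
    P Q(\pi) &= U B(\pi)^{-1} U^\top \, U B(\pi) U^\top = U U^\top \,,
\end{align*}
and $U U^\top$ is symmetric. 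By the uniqueness of the Moore--Penrose inverse, $P = Q(\pi)^\dagger$, which is the claim.

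There is no substantive obstacle: the only subtlety is invoking \Cref{lemma:full-rank-conditions} to identify $\col(Q(\pi))$ with $\col(U)$, which is what allows the projector identity $Q(\pi) = U U^\top Q(\pi) U U^\top$ and hence the clean factorization $Q(\pi) = U B(\pi) U^\top$. After that, everything reduces to cancellations of $U^\top U = \bm{I}_{\rank(\bm{M})}$.
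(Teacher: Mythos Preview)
Your proof is correct and follows essentially the same approach as the paper: establish the factorization $Q(\pi) = U B(\pi) U^\top$ via the column-space identification (using \Cref{lemma:full-rank-conditions}) and then verify the four Moore--Penrose conditions by repeatedly collapsing $U^\top U = \bm{I}_{\rank(\bm{M})}$.
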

\begin{proof}
For brevity, let $Q \coloneqq Q(\pi)$, $B \coloneqq B(\pi)$, and $X \coloneqq U B^{-1} U^{\top}$.
Also, recall that $B = U^\top Q U$.
Since $U$ has orthonormal columns, $U^\top U = \bm{I}_{\rank(\bm{M})}$ and $U U^\top$ is the orthogonal projection matrix onto $\col(U)$, which coincides with $\col(\bm{M})$.
Additionally, note that the row and column spaces of $Q$ are the same thanks to symmetry, both of which also coincide with $\col(\bm{M})$ via \Cref{lemma:full-rank-conditions}.
Hence,
\[
    U B U^\top = U U^\top Q U U^\top = Q \,.
\]
We now prove the sought identity, that $X=Q^{\dagger}$, by verifying the Moore–Penrose conditions:
\[
\begin{aligned}
QXQ &= UBU^{\top}UB^{-1}U^{\top}UBU^{\top} = UBU^{\top}=Q \,,\\
XQX &= UB^{-1}U^{\top}UBU^{\top}UB^{-1}U^{\top}=UB^{-1}U^{\top}=X \,,\\
(QX)^{\top} &= (UU^{\top})^{\top}=UU^{\top}=QX \,,\\
(XQ)^{\top} &= (UU^{\top})^{\top}=UU^{\top}=XQ \,.
\end{aligned}
\]
\end{proof}

\begin{lemma}
\label{lemma:smoothed_properties}
For any $\pi \in \Delta_k$ and $\delta \in (0,1)$, it holds that \(\col(Q_\delta(\pi))=\col(\bm{M})\), \(B_\delta(\pi)\) is positive definite, and \(U\,B_\delta(\pi)^{-1}\,U^{\top}=Q_\delta(\pi)^{\dagger}\).
\end{lemma}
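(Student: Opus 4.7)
The plan is to observe that this lemma is essentially a bookkeeping corollary of the three preceding results (\Cref{lemma:Q-column-space}, \Cref{lemma:full-rank-conditions}, and \Cref{lemma:pinv-indentity}) applied to the smoothed distribution $\pi_\delta \coloneqq (1-\delta)\pi + \delta \bm{1}_k/k$. The crucial point is simply that $\pi_\delta(a) \geq \delta/k > 0$ for every $a \in \cA$, so $\pi_\delta$ has full support.

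First I would show $\col(Q_\delta(\pi)) = \col(\bm{M})$. By definition $Q_\delta(\pi) = Q(\pi_\delta)$, and since $\pi_\delta(a) > 0$ for every $a \in \cA$, \Cref{lemma:Q-column-space} gives
\[
\col(Q_\delta(\pi)) = \spn\{M_a \mid \pi_\delta(a) > 0\} = \spn\{M_a \mid a \in \cA\} = \col(\bm{M}) \,.
\]

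Next, by this same full-support property, condition \emph{\ref{full-rank-conditions:support}} of \Cref{lemma:full-rank-conditions} holds for $\pi_\delta$, so its condition \emph{\ref{full-rank-conditions:B}} also holds; that is, $B(\pi_\delta) = B_\delta(\pi)$ is positive definite. Finally, \Cref{lemma:pinv-indentity} applied to $\pi_\delta$ yields directly that
\[
U B_\delta(\pi)^{-1} U^\top = U B(\pi_\delta)^{-1} U^\top = Q(\pi_\delta)^\dagger = Q_\delta(\pi)^\dagger \,,
\]
completing the proof. There is no real obstacle here; the work was done in the earlier lemmas, and this statement only needs to package them at the smoothed argument $\pi_\delta$.
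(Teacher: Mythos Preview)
Your proposal is correct and matches the paper's proof essentially verbatim: the paper simply notes that $(1-\delta)\pi + \delta\bm{1}_k/k$ has full support and then invokes \Cref{lemma:full-rank-conditions} and \Cref{lemma:pinv-indentity}, exactly as you do.
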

\begin{proof}
    As $(1-\delta)\pi + \delta\bm{1}_k/k$ has full support, the requirements follow from \Cref{lemma:full-rank-conditions}, \Cref{lemma:pinv-indentity}, and the definitions of $Q_\delta(\pi)$ and $B_\delta(\pi)$.
\end{proof}

\begin{lemma}[Extension of Theorem 21.1 in \cite{banditbook}]
\label{lemma:KW-matrix}
Define the functions $f, g \colon \Delta_k \rightarrow [-\infty,+\infty]$ as
\[
  f(\pi)\;\coloneqq\;\log\det B(\pi) \qquad \text{and} \qquad
  g(\pi)\;\coloneqq\;\max_{a\in\cA}\,\trace\brb{M_a^\top U B(\pi)^{-1}U^\top M_a} \,.
\]
Then, for any \(\pi^* \in \Delta_k\), the following are equivalent:
\begin{enumerate}[label=\roman*)]
  \item \(\pi^*\) is a minimizer of $g$\,; \label{KW-matrix:g}
  \item \(\pi^*\) is a maximizer of $f$\,; \label{KW-matrix:f}
  \item \(g(\pi^*) = \rank(\bm{M})\)\,. \label{KW-matrix:rank}
\end{enumerate}
\end{lemma}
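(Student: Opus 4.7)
The plan is to reduce the statement to a matrix-valued Kiefer--Wolfowitz theorem on $\R^{\rank(\bm{M})}$ by working through the orthonormal basis $U$ of $\col(\bm{M})$. Setting $N_a \coloneqq U^\top M_a$, we have $B(\pi) = \sum_{a \in \cA} \pi(a) N_a N_a^\top$, and by \Cref{lemma:full-rank-conditions} the collection $\{N_a \colon a \in \cA\}$ spans $\R^{\rank(\bm{M})}$; in particular, $B(\bm{1}_k/k)$ is positive definite, so the effective domain $\cD \coloneqq \{\pi \in \Delta_k : B(\pi) \succ 0\}$ is nonempty. Outside $\cD$, I would set $f(\pi) \coloneqq -\infty$ and $g(\pi) \coloneqq +\infty$; this is harmless since neither type of optimizer can lie there.

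The proof rests on three ingredients that I would establish first. First, $f$ is concave on $\cD$, by the standard concavity of $\log\det$ on the positive-definite cone composed with the linear map $\pi \mapsto B(\pi)$. Second, by the Jacobi formula, the partial derivative at $\pi \in \cD$ along $e_a$ equals
\[
\partial_a f(\pi) \;=\; \trace\brb{B(\pi)^{-1} N_a N_a^\top} \;=\; \trace\brb{M_a^\top U B(\pi)^{-1} U^\top M_a}\,,
\]
so that $g(\pi) = \max_{a \in \cA} \partial_a f(\pi)$. Third, the key averaging identity
\[
\sum_{a \in \cA} \pi(a)\, \partial_a f(\pi) \;=\; \trace\brb{B(\pi)^{-1} \textstyle\sum_{a} \pi(a) N_a N_a^\top} \;=\; \trace\brb{\bm{I}_{\rank(\bm{M})}} \;=\; \rank(\bm{M})
\]
holds for all $\pi \in \cD$, which immediately yields the universal lower bound $g(\pi) \geq \rank(\bm{M})$.

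With these in hand, the three equivalences follow by standard convex analysis. For \ref{KW-matrix:f} $\Rightarrow$ \ref{KW-matrix:rank},\,\ref{KW-matrix:g}: a maximizer $\pi^*$ of $f$ exists by continuity of $f$ on $\cD$ together with compactness of the simplex; the KKT conditions for maximizing a concave function over $\Delta_k$ produce a multiplier $\lambda$ with $\partial_a f(\pi^*) \leq \lambda$ for all $a$, with equality whenever $\pi^*(a) > 0$. Multiplying by $\pi^*(a)$ and summing, the averaging identity forces $\lambda = \rank(\bm{M})$; hence $g(\pi^*) = \rank(\bm{M})$, which matches the universal lower bound and establishes both \ref{KW-matrix:rank} and \ref{KW-matrix:g}. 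The implication \ref{KW-matrix:g} $\Rightarrow$ \ref{KW-matrix:rank} then follows because the preceding argument pins the minimum value of $g$ at $\rank(\bm{M})$, so any minimizer attains it. Finally, for \ref{KW-matrix:rank} $\Rightarrow$ \ref{KW-matrix:f}: the condition $g(\pi^*) = \rank(\bm{M}) < \infty$ forces $\pi^* \in \cD$, and combined with the averaging identity it shows $\partial_a f(\pi^*) = \rank(\bm{M})$ on the support of $\pi^*$ and $\leq \rank(\bm{M})$ off it, which are precisely the KKT conditions; concavity of $f$ makes them sufficient for global optimality.

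The main obstacle I anticipate is the careful handling of rank-deficient points, where $\log\det$ is $-\infty$ and $B(\pi)^{-1}$ undefined: one must verify that extending $f$ and $g$ by $\mp\infty$ is consistent and that no optimizer escapes $\cD$. Beyond that bookkeeping, the argument is essentially a transcription of the classical Kiefer--Wolfowitz proof into the basis $U$, reducing the general-$\rank(\bm{M})$ case to the familiar full-rank setting stated as Theorem 21.1 in \cite{banditbook}.
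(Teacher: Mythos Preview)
Your proposal is correct and follows essentially the same route as the paper: both compute $\partial_a f(\pi) = \trace\brb{M_a^\top U B(\pi)^{-1} U^\top M_a}$, establish the averaging identity $\sum_a \pi(a)\,\partial_a f(\pi) = \rank(\bm{M})$, and derive the equivalences from first-order optimality of the concave function $f$ over the simplex. The only cosmetic difference is that you phrase the optimality condition via a KKT multiplier $\lambda$, whereas the paper works directly with the variational inequality $\langle \nabla f(\pi^*), \pi - \pi^* \rangle \le 0$ and tests it against Dirac masses; the content is identical.
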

\begin{proof}
For brevity, let $r \coloneqq \rank(\bm{M})$ and $g_a(\pi) \coloneqq \trace\brb{M_a^\top U B(\pi)^{-1}U^\top M_a}$ for $a \in \cA$.
When $B(\pi)$ is ill-conditioned, $f$ and $g$ map to $-\infty$ and $+\infty$ respectively.  
It can be verified via standard arguments that $f$ and $g$ indeed attain their maximum and minimum respectively on $\Delta_k$.

Via the definition of $B$ (see \eqref{def:B}) and the fact that \(d (\log\det X)=\mathrm{tr}(X^{-1}dX)\), we have that
\[
\bigl(\nabla f(\pi)\bigr)_a\;=\;\mathrm{tr}\ \!\bigl(B(\pi)^{-1}\,U^{\!\top}M_aM_a^{\!\top}U\bigr)\;=\;\trace\brb{M_a^\top U B(\pi)^{-1}U^\top M_a} = g_a(\pi) \,.
\]
Using the linearity of the trace, we get that
\[
  \sum_{a\in\cA}\pi(a)\,\bigl(\nabla f(\pi)\bigr)_a
  = \sum_{a\in\cA} \pi(a) g_a(\pi)= \mathrm{tr} \ \!\Bigl(B(\pi)^{-1}\sum_{a\in\cA}\pi(a)U^{\!\top}M_aM_a^{\!\top}U\Bigr)
  =\mathrm{tr}(\bm{I}_r) = r \,.
\]
\emph{\ref{KW-matrix:f}} $\Rightarrow$ \emph{\ref{KW-matrix:g}}
By the concavity of $f$, the first-order optimality condition gives that for any $\pi \in \Delta_k$,
\[
0\ge\langle\nabla f(\pi^*),\pi-\pi^*\rangle
  =\sum_a \pi(a) g_a(\pi^*) - r.
\]
Taking $\pi$ as a Dirac over some $a \in \cA$ gives that $g_a(\pi^*)\le r$; hence,
$g(\pi^*) \le r$.
However, for any $\pi \in \Delta_k$,
$g(\pi)\ge\sum_{a \in \cA} \pi(a)g_a(\pi)=r$, so
$\pi^*$ minimizes $g$ and $\min_{\pi \in \Delta_k} g(\pi)=r$.

\emph{\ref{KW-matrix:rank}} $\Rightarrow$ \emph{\ref{KW-matrix:f}}
If $g(\pi^*)=r$, then $g_a(\pi^*) \le r$ for every $a \in \cA$; hence,
\[
\langle\nabla f(\pi^*),\pi-\pi^*\rangle
  =\sum_{ a\in\cA }\pi(a)g_a(\pi^*)-r\le 0
\]
for all $\pi \in \Delta_k$.
The concavity of $f$ therefore implies that $\pi^*$ maximizes $f$.

\emph{\ref{KW-matrix:g}} $\Rightarrow$ \emph{\ref{KW-matrix:rank}}
From the first part we know that $\min_{\pi \in \Delta_k} g(\pi)=r$; hence, $\pi^*$, being a minimizer of $g$, satisfies $g(\pi^*)=r$.
\end{proof}

\begin{lemma} \label{lemma:variance-and-expected-squared-diff}
    For any $p,q \in \Delta_k$, it holds that 
    \[
        \sum_{a,b \in \cA} q(a) q(b) (\feat{a} - \feat{b})^\top Q_\delta(p)^{\dagger} (\feat{a} - \feat{b}) = 2 \sum_{a \in \cA} q(a)  (\feat{a} - Hq)^\top Q_\delta(p)^{\dagger} (\feat{a} - Hq) \,.
    \]
\end{lemma}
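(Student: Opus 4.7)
The plan is to treat this as a standard centering identity of the variance-decomposition type. Let $A \coloneqq Q_\delta(p)^{\dagger}$, which is symmetric, and let $\mu \coloneqq Hq = \sum_{b \in \cA} q(b) \feat{b}$. Both sides are quadratic forms involving $A$, so the argument is purely algebraic and does not exploit any structural property of $A$ beyond symmetry.

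First I would expand the left-hand side. Writing $(\feat{a}-\feat{b})^\top A (\feat{a}-\feat{b}) = \feat{a}^\top A \feat{a} - 2 \feat{a}^\top A \feat{b} + \feat{b}^\top A \feat{b}$ (using $A = A^\top$), summing against $q(a) q(b)$, and using $\sum_a q(a) = 1$ together with $\sum_b q(b) \feat{b} = \mu$, the cross terms collapse and one obtains
\[
    \sum_{a,b \in \cA} q(a) q(b) (\feat{a} - \feat{b})^\top A (\feat{a} - \feat{b}) \;=\; 2 \sum_{a \in \cA} q(a)\, \feat{a}^\top A \feat{a} \;-\; 2\, \mu^\top A \mu.
\]

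Next I would expand the right-hand side in the same way. Writing $(\feat{a}-\mu)^\top A (\feat{a}-\mu) = \feat{a}^\top A \feat{a} - 2 \feat{a}^\top A \mu + \mu^\top A \mu$ and summing against $q(a)$, using once again that $\sum_a q(a) \feat{a} = \mu$, gives
\[
    2 \sum_{a \in \cA} q(a) (\feat{a} - \mu)^\top A (\feat{a} - \mu) \;=\; 2 \sum_{a \in \cA} q(a)\, \feat{a}^\top A \feat{a} \;-\; 2\, \mu^\top A \mu,
\]
which matches the expression obtained for the left-hand side.

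There is no real obstacle here: the statement is the bilinear version of the familiar identity $\mathbb{E}\bsb{\norm{X-X'}_A^2} = 2\, \mathbb{E}\bsb{\norm{X-\mathbb{E} X}_A^2}$ for i.i.d.\ random vectors $X, X'$ drawn from $q$, with $\norm{\cdot}_A$ denoting the seminorm induced by $A$. The only thing one must be mildly careful about is that $A$ need not be invertible, but since the identity is purely algebraic in $A$ and only uses symmetry, positive semi-definiteness or invertibility play no role.
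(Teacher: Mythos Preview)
Your proposal is correct and follows essentially the same approach as the paper: both proofs introduce the symmetric matrix $A = Q_\delta(p)^\dagger$ and the mean $\mu = Hq$, expand each side to obtain $2\sum_a q(a)\,\feat{a}^\top A\,\feat{a} - 2\,\mu^\top A\,\mu$, and conclude. The paper likewise remarks that only the symmetry of $Q_\delta(p)^\dagger$ is used.
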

\begin{proof}
    Note that via the definition of $H$, we have that $Hq = \sum_{b \in \cA} q(b) \feat{b}$.
    For brevity, let $X \coloneqq Q_\delta(p)^{\dagger}$. It is also noteworthy that the only relevant property of $Q_\delta(p)^{\dagger}$ here is its symmetry.
    Starting with the left-hand-side, we have that
    \begin{align*}
        \sum_{a,b \in \cA} q(a) q(b) (\feat{a} - \feat{b})^\top X (\feat{a} - \feat{b}) &= 2 \sum_{a\in \cA} q(a) \feat{a}^\top X \feat{a} - 2 \sum_{a,b \in \cA} q(a) q(b) \feat{a}^\top X \feat{b} \\
        &= 2 \sum_{a\in \cA} q(a) \feat{a}^\top X \feat{a} - 2 (Hq)^\top X Hq \,.
    \end{align*}
    where the equivalence of the two cross terms in the initial expansion follows from the symmetry of $X$. Next, expanding the right-hand-side of the sought equality yields that
    \begin{align*}
        2 \sum_{a \in \cA} q(a)  (\feat{a} - Hq)^\top X (\feat{a} - Hq) &=  2\sum_{a\in \cA} q(a) \feat{a}^\top X \feat{a} - 4 \sum_{a \in \cA} q(a)  \feat{a}^\top X Hq +  2 (Hq)^\top X Hq \\
        &= 2 \sum_{a\in \cA} q(a) \feat{a}^\top X \feat{a} -  2 (Hq)^\top X Hq \,,
    \end{align*}
    which concludes the proof.
\end{proof}

\begin{lemma} \label{lemma:optimality-of-q}
    For any $p, q \in \Delta_k$, it holds that 
    \[\sum_{a \in \cA} q(a)  (\feat{a} - Hq)^\top Q_\delta(p)^{\dagger} (\feat{a} - Hq) = \min_{r \in \Delta_k} \sum_{a \in \cA} q(a)  (\feat{a} - Hr)^\top Q_\delta(p)^{\dagger} (\feat{a} - Hr) \,.\]
\end{lemma}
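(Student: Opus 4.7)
The statement is essentially a "best-anchor" property: among all convex combinations $Hr$ of the feature vectors, the weighted mean $Hq$ minimizes the weighted sum of squared Mahalanobis-type deviations $(\feat{a}-Hr)^\top Q_\delta(p)^\dagger (\feat{a}-Hr)$ with weights $q(a)$. Since $r=q$ is itself a feasible point in $\Delta_k$ that realizes the anchor $Hq$, attainment is automatic.

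The plan is to write the right-hand side as a function $f(r)$ of $r$, expand the square, and use the identity $Hq = \sum_a q(a)\feat{a}$ to factor out the dependence on $r$ into a single nonnegative quadratic form. Writing $X \coloneqq Q_\delta(p)^\dagger$ (which is symmetric and positive semi-definite), I would expand
\[
f(r) \;=\; \sum_{a \in \cA} q(a)\feat{a}^\top X \feat{a} \;-\; 2(Hq)^\top X (Hr) \;+\; (Hr)^\top X (Hr),
\]
using $\sum_a q(a)\feat{a} = Hq$ on the cross term. Then, setting $u \coloneqq Hr - Hq$, the last two terms can be regrouped to yield the clean identity
\[
f(r) \;=\; f(q) \;+\; (Hr - Hq)^\top X\, (Hr - Hq).
\]
Since $X \succeq 0$, the second summand is nonnegative, so $f(r) \geq f(q)$ for every $r \in \Delta_k$, which is exactly what needs to be shown. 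Attainment at $r = q$ is immediate because $q \in \Delta_k$.

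I do not expect any real obstacle here; the only small point to be careful about is that $Q_\delta(p)^\dagger$ must be treated symmetrically (to collapse the two cross terms in the expansion), but the Moore–Penrose pseudoinverse of a symmetric matrix is symmetric, so this is automatic. An alternative phrasing would be to note that $f$ is convex in $r$ (as a nonnegative combination of compositions of the convex quadratic $v \mapsto (\feat{a}-v)^\top X(\feat{a}-v)$ with the linear map $r \mapsto Hr$), compute $\nabla f(r) = 2 H^\top X (Hr - Hq)$, and observe that $r = q$ is a critical point; convexity then gives the global minimum. Either route is a short computation.
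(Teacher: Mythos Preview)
Your proposal is correct and essentially the same as the paper's proof. In fact, the alternative route you describe at the end---observing that $F$ is convex, computing $\nabla F(r) = 2H^\top Q_\delta(p)^\dagger(Hr - Hq)$, and noting it vanishes at $r=q$---is exactly the argument the paper gives; your primary ``complete-the-square'' identity $f(r) = f(q) + (Hr-Hq)^\top X(Hr-Hq)$ is just a slightly more explicit version of the same computation.
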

\begin{proof}
    Define $F \colon \Delta_k \rightarrow \R$ such that $F(r) = \sum_{a \in \cA} q(a)  (\feat{a} - Hr)^\top Q_\delta(p)^{\dagger} (\feat{a} - Hr)$, which is easily seen to be convex ($Q_\delta(p)^{\dagger}$ is positive semi-definite). Its gradient is given by \[\nabla F(r) = 2 H^\top Q_\delta(p)^{\dagger} H r - 2 H^\top Q_\delta(p)^{\dagger} H q \,.\]
    This clearly vanishes when $r=q$; hence, via the convexity of $F$, it is minimized at $q$.
\end{proof}

\section{Proofs of \texorpdfstring{\Cref{sec:expl-by-opt}}{Section 3}} \label{appendix:expl-by-opt}

As a starting point for the analysis, the following lemma states a standard result for exponential weights. A proof can be easily extracted, for example, from the proof of Theorem 1.5 in \cite{hazan_introduction_2021}.
\begin{restatable} {relem} {lemexp} \label{lem:exp}
    Assuming that at every round $t$, $\eta \hat{y}_t(a) \geq -1$ holds for every action $a$; then, for any $a^* \in \cA^*$, the sequence of predictions $(q_t)_t$ defined by \eqref{def:exp} satisfies
    \begin{equation*}
        \sum_{t=1}^T \inprod{q_t - e_{a^*}, \hat{y}_t} \leq \frac{\log k}{\eta} + \eta \sum_{t=1}^T \inprod{q_t, \hat{y}^2_t} \,.
    \end{equation*}
\end{restatable}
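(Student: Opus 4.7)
The statement is a standard regret bound for exponential weights applied to the sequence of surrogate losses $(\hat{y}_t)_t$ restricted to $\cA^*$, and I would prove it via the classical potential argument. Introduce the log-partition potential
\[
    \Phi_t \;\coloneqq\; \log \sum_{a \in \cA^*} \exp\!\brb{-\eta \textstyle\sum_{s=1}^{t} \hat{y}_s(a)} \,,
\]
so that $\Phi_0 = \log |\cA^*| \leq \log k$ and, for any fixed $a^* \in \cA^*$, $\Phi_T \geq -\eta \sum_{t=1}^T \hat{y}_t(a^*)$. Subtracting these two yields the lower bound $\Phi_T - \Phi_0 \geq -\eta \sum_{t=1}^T \hat{y}_t(a^*) - \log k$.

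For the matching upper bound, I would look at the one-step increments. Noting that $q_t(a) = \exp(-\eta \sum_{s=1}^{t-1} \hat{y}_s(a)) / \exp(\Phi_{t-1})$ for $a \in \cA^*$ (and $q_t(a) = 0$ otherwise), we get
\[
    \Phi_t - \Phi_{t-1} \;=\; \log \sum_{a \in \cA^*} q_t(a) \exp\brb{-\eta \hat{y}_t(a)} \,.
\]
Under the hypothesis $\eta \hat{y}_t(a) \geq -1$, the scalar inequality $e^{-x} \leq 1 - x + x^2$ (valid for $x \geq -1$) applied pointwise gives
\[
    \sum_{a \in \cA^*} q_t(a)\, e^{-\eta \hat{y}_t(a)} \;\leq\; 1 - \eta \inprod{q_t, \hat{y}_t} + \eta^2 \inprod{q_t, \hat{y}_t^2} \,,
\]
and combining this with $\log(1 + u) \leq u$ yields $\Phi_t - \Phi_{t-1} \leq -\eta \inprod{q_t, \hat{y}_t} + \eta^2 \inprod{q_t, \hat{y}_t^2}$.

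Telescoping from $t=1$ to $T$ and comparing with the lower bound gives
\[
    -\eta \sum_{t=1}^T \hat{y}_t(a^*) - \log k \;\leq\; -\eta \sum_{t=1}^T \inprod{q_t, \hat{y}_t} + \eta^2 \sum_{t=1}^T \inprod{q_t, \hat{y}_t^2} \,,
\]
and dividing by $\eta$ and rearranging, using that $\hat{y}_t(a^*) = \inprod{e_{a^*}, \hat{y}_t}$, produces the claimed bound. There is no real obstacle here; the only point to verify carefully is that the inequality $e^{-x} \leq 1 - x + x^2$ only needs $x \geq -1$, which is exactly the hypothesis $\eta \hat{y}_t(a) \geq -1$ imposed in the lemma (and which is implied by the stronger two-sided condition $|\eta \hat{y}_t(a)| \leq 1$ used in \Cref{lem:general-decomposition}).
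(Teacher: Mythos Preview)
Your proposal is correct and is precisely the standard potential argument the paper defers to (it cites Theorem 1.5 in \cite{hazan_introduction_2021} rather than writing out a proof). The only step worth double-checking, the inequality $e^{-x} \leq 1 - x + x^2$ for $x \geq -1$, is indeed valid, so nothing is missing.
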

\lemgeneraldecomposition*

\begin{proof}
Firstly, we have that
\begin{align*}
    R_T &= \mathbb{E}\left[\sum_{t=1}^T y_t(A_t) - y_t(a^* )\right] \\
    &= \mathbb{E}\left[\sum_{t=1}^T \langle p_t-e_{a^*}, y_t\rangle\right] \\
    &= \E \lsb{\sum_{t=1}^T  \langle p_t - q_t, y_t\rangle + \sum_{t=1}^T  \langle q_t-e_{a^*}, \hat{y}_t\rangle + \sum_{t=1}^T  \langle q_t-e_{a^*}, y_t - \hat{y}_t\rangle}\,,
\end{align*}
where the second equality follows from the definition of $p_t$, the linearity of expectation, and the tower rule.
Next, we apply Lemma~\ref{lem:exp} to the middle term ($\eta\hat y_t(a)\ge -1$ holds by assumption) to get that
\begin{align*}
    R_T &\leq \frac{\log k}{\eta} + \E \lsb{\sum_{t=1}^T  \langle p_t - q_t, y_t\rangle + \eta \sum_{t=1}^T \inprod{q_t, \hat{y}^2_t} + \sum_{t=1}^T  \langle q_t-e_{a^*}, y_t - \hat{y}_t\rangle} \\
    &= \frac{\log k}{\eta} + \sum_{t=1}^T \E \lsb{  \langle p_t - q_t, y_t\rangle + \eta  \inprod{q_t, \E_t\hat{y}^2_t} +   \langle q_t-e_{a^*}, y_t - \E_t\hat{y}_t\rangle},
\end{align*}  
where the equality is another application of the linearity of expectation and the tower rule, using the fact that $q_t$ is $\cF_{t-1}$ measurable as it only depends on $(\hat{y}_s)_{s=1}^{t-1}$.
\end{proof}
\propdecompositionlinearestimator*
\begin{proof}
    Fix some $a^* \in \cA^*$. 
    If $||g_t(A_t,\phi_t)||_\infty \leq {\eta}^{-1}$ holds in all rounds,
    \Cref{lem:general-decomposition} gives that
    $
        R_T \leq \eta^{-1}{\log k} + \sum_{t=1}^T \E \Gamma_t 
    $, where 
    \begin{equation} \label{expr:general-g-appendix}
    \Gamma_t \coloneqq \langle p_t - q_t, H^\top \loss_t\rangle + \eta  \ban{q_t, \textstyle{\sum_a}p_t(a) g_t(a,M_a^\top \loss_t)^2} + \ban{ q_t-e_{a^*}, H^\top \loss_t - \textstyle{\sum_a}p_t(a) g_t(a,M_a^\top \loss_t)} \,.
\end{equation}
    Concerning the third term in \eqref{expr:general-g-appendix}, we have that 
    \begin{align*}
        \summ_{a} p_t(a) g_t(a,M_a^\top \loss_t) &=  \brb{\bm{I}_k - \bm{1}_k q_t^\top} H^\top  Q(p_t)^\dagger \summ_{a} p_t(a) M_{a} M_a^\top \loss_t \\
        &=  \brb{\bm{I}_k - \bm{1}_k q_t^\top} H^\top Q(p_t)^\dagger Q(p_t) \loss_t \\
        &=  \brb{\bm{I}_k - \bm{1}_k q_t^\top} H^\top U B_\delta(\tilde{p}_t)^{-1} U^\top U B_\delta(\tilde{p}_t) U^\top \loss_t \\
        &=  \brb{\bm{I}_k - \bm{1}_k q_t^\top} H^\top U  U^\top \loss_t \\
        &=  \brb{\bm{I}_k - \bm{1}_k q_t^\top} H^\top \loss_t \,,
    \end{align*}  
    where $U$ and   $B_\delta(\cdot)$ are defined in \Cref{appendix:aux} (see \eqref{def:B}), and the last equality holds since each row in $\brb{\bm{I}_k - \bm{1}_k q_t^\top} H^\top$ is a convex combination of vectors of the form $\feat{a} -\feat{b}$ with $a, b \in \cA$, which, via global observability, belong to $\col(\bm{M})$, onto which $U U^\top$ projects.
    Hence, we get that
    \begin{align*}
        \ban{ q_t-e_{a^*}, H^\top \loss_t - \textstyle{\sum_a}p_t(a) g_t(a,M_a^\top \loss_t)} &= \ban{ q_t-e_{a^*}, H^\top \loss_t - \brb{\bm{I}_k - \bm{1}_k q_t^\top} H^\top \loss_t} \\
        & = \ban{ q_t-e_{a^*}, \bm{1}_k q_t^\top H^\top \loss_t} \\
        &= 0
    \end{align*}
    using in the last equality that $q_t, e_{a^*} \in \Delta_k$ and all coordinates of $\bm{1}_k q_t^\top H^\top \loss_t$ are identical. 
    In what follows, we will refer to the individual coordinates of $g_t(\cdot, \cdot)$ as $g_t(b; \cdot, \cdot)$ for $b \in \cA$.
    Shifting to the second term, fix $b \in \cA$ and observe that
    \begin{align*}
        \textstyle{\sum_a}p_t(a) g_t(b; a,M_a^\top \loss_t)^2 &=  
        \textstyle{\sum_a}p_t(a) \lrb{ (\feat{b} - H q_t)^\top Q_\delta(\tilde{p}_t)^\dagger  M_{a} M_a^\top \loss_t }^2  \\
        &= \textstyle{\sum_a}p_t(a) \lrb{ q_t^\top (\feat{b} \bm{1}_k^\top - H)^\top Q_\delta(\tilde{p}_t)^\dagger  M_{a} M_a^\top \loss_t }^2 \\
        &\leq  \textstyle{\sum_a}p_t(a) \textstyle{\sum_c}q_t(c) \lrb{(\feat{b}  - \feat{c})^\top Q_\delta(\tilde{p}_t)^\dagger  M_{a} M_a^\top \loss_t }^2 \\
        &\leq  \scale^2 \textstyle{\sum_a}p_t(a) \textstyle{\sum_c}q_t(c) \lno{M_{a}^\top Q_\delta(\tilde{p}_t)^\dagger (\feat{b}  - \feat{c})   }^2 \\
        &=\scale^2 \textstyle{\sum_a}p_t(a) \textstyle{\sum_c}q_t(c) (\feat{b}  - \feat{c})^\top Q_\delta(\tilde{p}_t)^{\dagger} M_{a}   M_{a}^\top Q_\delta(\tilde{p}_t)^\dagger (\feat{b}  - \feat{c}) \\
        &=\scale^2  \textstyle{\sum_c}q_t(c) (\feat{b}  - \feat{c})^\top Q_\delta(\tilde{p}_t)^{\dagger}  Q_\delta(\tilde{p}_t) Q_\delta(\tilde{p}_t)^\dagger (\feat{b}  - \feat{c}) \\
        &=\scale^2  \textstyle{\sum_c}q_t(c) (\feat{b}  - \feat{c})^\top  Q_\delta(\tilde{p}_t)^\dagger (\feat{b}  - \feat{c}) \,,
    \end{align*}
    where the first inequality is an application of Jensen's inequality, while the second inequality follows from the definition of $\scale$.

    As for the first term in \eqref{expr:general-g-appendix}, it is immediate that
    \begin{equation*}
        \langle p_t - q_t, H^\top \loss_t\rangle = \delta \langle \bm{1}_k / k - q_t, H^\top \loss_t\rangle + (1-\delta) \langle \tilde{p}_t - q_t, H^\top \loss_t\rangle \leq 2 \delta + \langle \tilde{p}_t - q_t, H^\top \loss_t\rangle \,,
    \end{equation*}
    where we used the definition of $p_t$ and the assumptions that $\max_{a,b \in \cA, \loss \in \Loss} | (\feat{a}-\feat{b})^\top \loss | \leq 2$ and $\delta \in (0,1)$.

    Finally, we examine the boundedness condition. Fixing $a,b \in \cA$, we have that
    \begin{align*}
        |g_t(b; a,M_a^\top \loss_t)| &= |(\feat{b} - H q_t)^\top Q_\delta(\tilde{p}_t)^\dagger  M_{a} M_a^\top \loss_t| \\
        &= |q_t^\top (\feat{b} \bm{1}_k^\top - H)^\top  Q_\delta(\tilde{p}_t)^\dagger  M_{a} M_a^\top \loss_t| \\
        &\leq \textstyle{\sum_c}q_t(c) | (\feat{b} - \feat{c})^\top  Q_\delta(\tilde{p}_t)^\dagger  M_{a} M_a^\top \loss_t| \\
        &\leq \scale \textstyle{\sum_c}q_t(c) \bno{M_{a}^\top  \brb{Q_\delta(\tilde{p}_t)^\dagger}^{1/2} \brb{Q_\delta(\tilde{p}_t)^\dagger}^{1/2} (\feat{b} - \feat{c}) } \\
        &\leq \scale \textstyle{\sum_c}q_t(c) \bno{M_{a}^\top  \brb{Q_\delta(\tilde{p}_t)^\dagger}^{1/2} }\bno{\brb{Q_\delta(\tilde{p}_t)^\dagger}^{1/2} (\feat{b} - \feat{c}) } \\
        &\leq \frac{\scale}{2} \textstyle{\sum_c}q_t(c) \lrb{ \bno{M_{a}^\top  \brb{Q_\delta(\tilde{p}_t)^\dagger}^{1/2} }^2 + \bno{\brb{Q_\delta(\tilde{p}_t)^\dagger}^{1/2} (\feat{b} - \feat{c}) }^2 } \\
        &\leq \frac{\scale}{2}  \lrb{ \bno{M_{a}^\top  Q_\delta(\tilde{p}_t)^\dagger M_{a}} + \max_{c \in \cA}\, (\feat{b} - \feat{c})^\top Q_\delta(\tilde{p}_t)^\dagger (\feat{b} - \feat{c})  } \,,
    \end{align*}
    where the first and second inequalities again follow from Jensen's inequality and the definition of $\scale$ respectively. %
    The required result now follows from \Cref{lem:general-decomposition}.
\end{proof}
\lemminimax*
\begin{proof}

    Recall that
    \begin{equation*}
    \Lambda_{\eta, q}(p,\loss) \coloneqq \frac{1}{\eta} \inprod{ p - q, H^\top \loss} + L^2 \sum_{a,b \in \cA} q(a) q(b) \cE(a,b; p)
    \end{equation*}
    for $p\in \Delta_k$, and $\loss \in \Loss$; and that for any $a,b \in \cA$, 
    \[
        \cE(a,b; p) \coloneqq \lrb{\feat{a} - \feat{b}}^\top  Q_\delta(p)^\dagger \lrb{\feat{a} - \feat{b}} \,.
    \]
    Also, recall that $\Xi_\eta \coloneqq \bcb{ p \in \Delta_k \colon z({p}) \leq \nicefrac{2}{\eta L} }$, where
    \[
        z(p) \coloneqq \max_{a,b \in \cA} \cE(a,b; p)  + \max_{c \in \cA}\norm{M_{c}^\top Q_\delta(p)^\dagger M_{c}}_2 \,,
    \]
    and that $\Lambda^*_{\eta, q} \coloneqq \min_{p \in \Xi_\eta} \max_{\loss \in \Loss} \Lambda_{\eta, q}(p,\loss)$.
    
    We start by showing that for a fixed pair $(a,b) \in \cA^2$, $\cE(a,b;p)$ is convex in $p$.
    Via \Cref{lemma:smoothed_properties} in \Cref{appendix:aux}, $\col(Q_\delta(p)) = \col(\bm{M})$ for any $p \in \Delta_k$.
    Hence, the convexity of $\cE$ in $p$ follows from \Cref{lemma:convexity} in \Cref{appendix:aux}
    and the fact that $Q_\delta(p)$ is affine in $p$.
    Moreover, for a fixed $c \in \cA$, the function $p \mapsto \norm{M_{c}^\top Q_\delta(p)^\dagger M_{c}}_2$ can also be easily shown to be convex by invoking once again \Cref{lemma:convexity} %
    and using that $Q_\delta(p)$ is affine in $p$ and that the spectral norm is convex and non-decreasing in the Loewner order. 
    Also note that both functions are continuous as the rank of $Q_\delta(p)$ is constant for any $p \in \Delta_k$ (see Corollary 3.5 in \citep{stewart1977}), or simply via the continuity of matrix inversion since \(Q_\delta(p)^{\dagger}=U\,B_\delta(p)^{-1}\,U^{\top}\) and \(B_\delta(p)\) (defined in \eqref{def:B}) is positive definite for any $p \in \Delta_k$ as asserts \Cref{lemma:smoothed_properties} in \Cref{appendix:aux}.
    Hence, the function $z$ is continuous and convex in $p \in \Delta_k$, and its sub-level set $\Xi_\eta \coloneqq \bcb{ p \in \Delta_k \colon z(p) \leq \nicefrac{2}{\eta L}}$ is convex and compact.

    The arguments above imply that for a fixed $\loss \in \Loss$, $p \mapsto \Lambda_{\eta, q}(p,\loss)$ is continuous and convex over the compact set $\Xi_\eta$. 
    Moreover, for a fixed $p \in \Delta_k$, $\loss \mapsto \Lambda_{\eta, q}(p,\loss)$ is trivially continuous and concave over the compact set $\Loss$.
    Therefore, Sion's minimax theorem \cite{sion1958general} asserts that
    \begin{equation*}
        \min_{p \in \Xi_\eta} \max_{\loss \in \Loss} \Lambda_{\eta, q}(p,\loss) = \max_{\loss \in \Loss} \min_{p \in \Xi_\eta} \Lambda_{\eta, q}(p,\loss) \,.
    \end{equation*}
    The theorem then follows from the definition of $\Lambda^*_{\eta,q}$.
\end{proof}

\section{Proofs of \texorpdfstring{\Cref{sec:loc}}{Section 4}} \label{appendix:loc}

\lembetteractions*
\begin{proof}
    This follows from \Cref{lem:local-defs-expanded} in \Cref{appendix:geometry}.
\end{proof}

Before moving on to the proof of \Cref{cor:simpler-lambda-bound-local}, we define an analogous set to $\Mloc$ that does not take the losses into account.
Recall that global observability implies that for any pair of actions $(a,b)$, there exists a (not necessarily unique) weight vector $\bm{v} \in \R^n$ such that $\feat{a} - \feat{b} = \bm{M} \bm{v}$. 
The following is the set of all possible action-pair-wise assignments of such weight vectors:
\begin{equation*}
    \Mglo \coloneqq \bcb{\glw = (\glw_{a,b})_{a,b \in \cA} \in \R^{k^2 \times n} \mid \forall a,b \in \cA,\: \feat{a} - \feat{b} = \bm{M} \glw_{a,b}} \,.
\end{equation*}
Recall that 
\[
    \beta_{\glo} \coloneqq \max_{a,b \in \cA} \min_{\bm{v} \in \R^n \colon \feat{a} - \feat{b} = \bm{M} \bm{v}} \sum_{c \in \cA} \norm{\bm{v}(c)} \,.
\]
The following lemma provides a slightly different characterization of $\beta_{\glo}$ featuring $\Mglo$.
\begin{lemma} \label{lem:beta-glo-alternative-def}
    It holds that
    \begin{equation*} 
        \beta_{\glo} = \min_{\glw \in \Mglo} \max_{a,b \in \cA} \sum_{c \in \cA} \norm{\glw_{a,b}(c)} \,.
    \end{equation*}
\end{lemma}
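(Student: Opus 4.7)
The plan is to verify both inequalities and reduce the statement to a simple observation about separability: the constraint defining $\Mglo$ is a product constraint over action pairs $(a,b)$, since the feasibility condition $\feat{a} - \feat{b} = \bm{M} \glw_{a,b}$ couples only the $(a,b)$-th component of $\glw$, with no cross-pair coupling. This means that choosing $\glw \in \Mglo$ amounts to choosing, independently for each pair, some $\bm{v} \in \R^n$ satisfying $\feat{a}-\feat{b} = \bm{M}\bm{v}$. Once this is recognized, the $\min$ and $\max$ can be swapped by a trivial product-of-feasible-sets argument.

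For the inequality $\min_{\glw \in \Mglo} \max_{a,b} \sum_c \|\glw_{a,b}(c)\| \le \beta_{\glo}$, I would construct an explicit $\glw^* \in \Mglo$ by picking, for each pair $(a,b)$, a minimizer $\bm{v}^*_{a,b} \in \argmin_{\bm{v}:\, \feat{a}-\feat{b} = \bm{M}\bm{v}} \sum_c \|\bm{v}(c)\|$ (such a minimizer exists under global observability since the feasible set is non-empty and the objective is a continuous and coercive function on the subspace of solutions, over a finite-dimensional space). Setting $\glw^*_{a,b} = \bm{v}^*_{a,b}$ gives $\glw^* \in \Mglo$, and then
\[
    \max_{a,b \in \cA} \sum_{c \in \cA} \|\glw^*_{a,b}(c)\| = \max_{a,b \in \cA} \min_{\bm{v}:\, \feat{a}-\feat{b} = \bm{M}\bm{v}} \sum_{c \in \cA} \|\bm{v}(c)\| = \beta_{\glo}.
\]

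For the reverse inequality $\min_{\glw \in \Mglo} \max_{a,b} \sum_c \|\glw_{a,b}(c)\| \ge \beta_{\glo}$, I would observe that for any $\glw \in \Mglo$ and any fixed pair $(a,b)$, the component $\glw_{a,b}$ is feasible for the inner minimization in the definition of $\beta_{\glo}$, so $\sum_c \|\glw_{a,b}(c)\| \ge \min_{\bm{v}:\, \feat{a}-\feat{b}=\bm{M}\bm{v}} \sum_c \|\bm{v}(c)\|$. Taking the max over $(a,b)$ on both sides gives $\max_{a,b} \sum_c \|\glw_{a,b}(c)\| \ge \beta_{\glo}$, and then taking the infimum over $\glw \in \Mglo$ preserves this bound.

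There is no real obstacle in this proof; the only subtle point is ensuring the inner minimum in the definition of $\beta_{\glo}$ is attained so that the product construction in the first direction is well-defined, but this is immediate from the fact that we are minimizing a continuous coercive function (a sum of Euclidean norms) on a non-empty affine subspace of a finite-dimensional Euclidean space. The lemma is essentially the statement that, for a product of feasible sets, $\min \max = \max \min$ when the minimizer can be chosen coordinatewise.
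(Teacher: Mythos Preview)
Your proof is correct and follows essentially the same approach as the paper: both exploit the separable (product) structure of $\Mglo$ by constructing the coordinatewise minimizer $(\bm{v}^*_{a,b})_{a,b}$ and observing it simultaneously minimizes each term, hence also the maximum. Your version is slightly more explicit in spelling out both inequalities and in justifying attainment of the inner minimum, but the argument is the same.
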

\begin{proof}
    For $a,b \in \cA$, let $\bm{v}^*_{a,b}$ refer to an arbitrary member of $\argmin_{\bm{v} \in \R^n \colon \feat{a} - \feat{b} = \bm{M} \bm{v}} \sum_{c \in \cA} \norm{\bm{v}(c)}$.
    Via the definition of $\Mglo$, $(\bm{v}^*_{a,b})_{a,b \in \cA} \in \Mglo$; moreover, it holds that $(\bm{v}^*_{a,b})_{a,b \in \cA} \in \argmin_{\glw \in \Mglo} \sum_{c \in \cA} \norm{\glw_{a,b}(c)}$ for all $a,b \in \cA$ simultaneously. 
    This implies that $(\bm{v}^*_{a,b})_{a,b \in \cA} \in \argmin_{\glw \in \Mglo} \max_{a,b \in \cA}  \sum_{c \in \cA} \norm{\glw_{a,b}(c)}$, concluding the proof. 
\end{proof}

The following lemma relates $\beta_{\glo}$ and $\beta_{\loc}$.
    
\begin{lemma} \label{lem:beta-glo-loc-comparison}
    It holds that $\beta_{\glo} \leq \beta_{\loc}$.
\end{lemma}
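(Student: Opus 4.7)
The plan is to establish, for every $(a,b) \in \cA \times \cA$, the existence of some $\bm{v} \in \R^n$ with $\bm{M}\bm{v} = \feat{a} - \feat{b}$ and $\sum_{c \in \cA} \|\bm{v}(c)\| \leq \beta_{\loc}$; taking the max over $(a,b)$ and the min over $\bm{v}$ then yields $\beta_{\glo} \leq \beta_{\loc}$. I would proceed in two stages: a reduction to distinct Pareto optimal pairs, followed by a construction based on a carefully chosen loss vector.

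\textbf{Reduction step.} Since $\cX \subseteq \co(\cX) = \co(\{\feat{a^*} \mid a^* \in \cA^*\})$, I can write $\feat{a} = \sum_i \alpha_i \feat{a_i}$ and $\feat{b} = \sum_j \mu_j \feat{b_j}$ as convex combinations with $a_i, b_j \in \cA^*$. Given witness vectors $\bm{v}_{i,j}$ (set to $\bm{0}$ when $\feat{a_i} = \feat{b_j}$) for every distinct Pareto optimal pair, the combination $\bm{v} \coloneqq \sum_{i,j} \alpha_i \mu_j \bm{v}_{i,j}$ satisfies $\bm{M}\bm{v} = \feat{a} - \feat{b}$ (using $\sum_i \alpha_i = \sum_j \mu_j = 1$), and the triangle inequality together with $\sum_{i,j} \alpha_i \mu_j = 1$ yields $\sum_c \|\bm{v}(c)\| \leq \max_{i,j} \sum_c \|\bm{v}_{i,j}(c)\| \leq \beta_{\loc}$. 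So it suffices to treat distinct $a, b \in \cA^*$.

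\textbf{Construction step.} Fix distinct $a, b \in \cA^*$. Since $\feat{b}$ is a vertex of $\co(\cX)$ and $\cX$ is finite, $\feat{b} \notin \co(\cX \setminus \{\feat{b}\})$; the separating hyperplane theorem then furnishes an $\ell \in \R^d$ with $\feat{b}^\top \ell < \feat{c}^\top \ell$ for every $c \in \cA$ satisfying $\feat{c} \neq \feat{b}$. Since positive scaling of $\ell$ preserves both the $\argmin$ and all sign comparisons entering $\Mloc_\ell$, I may rescale $\ell$ into $\cB_2(r) \subseteq \Loss$ and thus assume $\ell \in \Loss$. By construction, $\argmin_{c \in \cA} \feat{c}^\top \ell = \{c \in \cA \mid \feat{c} = \feat{b}\}$. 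From the definition $\beta_{\loc} = \max_{\ell' \in \Loss} \min_{\bm{\lambda} \in \Mloc_{\ell'}} \beta_{\bm{\lambda}}$, there exists $\bm{\lambda}^* \in \Mloc_\ell$ with $\beta_{\bm{\lambda}^*} \leq \beta_{\loc}$. Whatever minimizer $a^*_0$ is selected in the defining property of $\bm{\lambda}^*$, it satisfies $\feat{a^*_0} = \feat{b}$, so $\feat{a} - \feat{a^*_0} = \feat{a} - \feat{b}$, and the identity $\bm{M}\bm{\lambda}^*_a = \feat{a} - \feat{a^*_0}$ gives the desired witness $\bm{v} = \bm{\lambda}^*_a$ with $\sum_c \|\bm{\lambda}^*_a(c)\| \leq \beta_{\bm{\lambda}^*} \leq \beta_{\loc}$.

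\textbf{Main obstacle.} The subtle point is that the minimizer $a^*$ in the definition of $\Mloc_\ell$ is picked by the existential quantifier and need not equal $b$ (in particular it may be a duplicate of $b$ in $\cA \setminus \cA^*$). The construction of $\ell$ via separation sidesteps this cleanly by forcing every minimizer to share the feature vector $\feat{b}$, which is exactly what is needed to translate the defining identity into $\bm{M}\bm{\lambda}^*_a = \feat{a} - \feat{b}$. Everything else is a routine application of the triangle inequality and convex combination.
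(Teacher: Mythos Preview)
Your proof is correct and follows essentially the same approach as the paper's: both arguments hinge on (i) the separating-hyperplane construction of a loss vector making a chosen $b \in \cA^*$ the unique minimizer (up to duplicates), from which a witness $\bm{\lambda}_a$ is extracted via $\Mloc_\ell$, and (ii) a convex-combination plus triangle-inequality reduction showing that the max in $\beta_{\glo}$ is attained on $\cA^* \times \cA^*$. The paper orders these two steps in reverse and routes the first through auxiliary sets $\Mglo_\ell, \Mglo_{a'}$ before swapping max and min, whereas you work directly; you also make explicit the rescaling of $\ell$ into $\cB_2(r) \subseteq \Loss$, which the paper leaves implicit. These are presentational differences, not a genuinely different route.
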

\begin{proof}
    Recall that 
    \[\beta_{\loc} \coloneqq \max_{\loss \in \Loss} \min_{\lcw \in \Mloc_\loss} \max_{a \in \cA^*} \sum_{c \in \cA} \norm{\lcw_a(c)} \,,\] where
    \begin{multline*}
        \Mloc_{\loss} \coloneqq \Bcb{\bm{\lambda} =(\bm{\lambda}_{a})_{a \in \cA^*} \in \R^{k^* \times n} \mid \exists a^* \in \argmin_{c \in \cA} \feat{c}^\top \loss
        \:\: \forall (a,b) \in \cA^* \times \cA\,, \\
        \feat{a} - \feat{a^*} = \bm{M} \bm{\lambda}_{a} \text{ and }
        \bm{\lambda}_a(b) \neq \bm{0} \Rightarrow \feat{b}^\top \loss \leq \feat{a}^\top \loss} \,.
    \end{multline*}
    Additionally, define 
    \begin{align*}
        \Mglo_{\loss} \coloneqq \bcb{ \bm{\lambda} =(\bm{\lambda}_{a})_{a \in \cA^*} \in \R^{k^* \times n} \mid \exists a^* \in \argmin_{c \in \cA} \feat{c}^\top \loss
        \:\: \forall a \in \cA^* \,, \feat{a} - \feat{a^*} = \bm{M} \bm{\lambda}_{a}}  
    \end{align*}
    for $\ell \in \Loss$ and, with some notation abuse, 
    \begin{align*}
        \Mglo_{a'} \coloneqq \bcb{ \bm{\lambda} =(\bm{\lambda}_{a})_{a \in \cA^*} \in \R^{k^* \times n} \mid \forall a \in \cA^* \,, \feat{a} - \feat{a'} = \bm{M} \bm{\lambda}_{a}}  
    \end{align*}
    for $a' \in \cA^*$.
    Then,
    \begin{align*}
        \beta_{\loc} &= \max_{\loss \in \Loss} \min_{\lcw \in \Mloc_\loss} \max_{a \in \cA^*} \sum_{c \in \cA} \norm{\lcw_a(c)} \\
        &\geq \max_{\loss \in \Loss} \min_{\lcw \in \Mglo_\loss} \max_{a \in \cA^*} \sum_{c \in \cA} \norm{\lcw_a(c)} \\
        &\geq \max_{b \in \cA^*} \min_{\lcw \in \Mglo_b} \max_{a \in \cA^*} \sum_{c \in \cA} \norm{\lcw_a(c)} \\
        &\geq \max_{b \in \cA^*} \max_{a \in \cA^*} \min_{\lcw \in \Mglo_b} \sum_{c \in \cA} \norm{\lcw_a(c)} \\
        &= \max_{a,b \in \cA^*} \min_{ \bm{v} \in \R^n \colon \feat{a} - \feat{b} = \bm{M} \bm{v} } \sum_{c \in \cA} \norm{\bm{v}(c)} \,,
    \end{align*}
    where the first inequality holds since $\Mloc_\loss \subseteq \Mglo_\loss$. 
    Concerning the second inequality, as argued in the proof of \Cref{lem:local-defs-expanded}, it holds for any Pareto optimal action $b$ that $\{\feat{a}\} \cap \co(\cX \setminus \{\feat{a}\}) = \varnothing$ (since $\feat{b} \in \ext(\cV)$); hence, by the separating hyperplane theorem, there exists a loss vector $\loss_b \in \R^d$ such that $\feat{b}^\top \loss_b < x^\top \loss_b$ for all $x \in \cX \setminus \{\feat{b}\}$, yielding that only $b$ and its duplicates can minimize $c \mapsto \feat{c}^\top \loss_b$ over $\cA$. The inequality then follows by restricting the maximization to any selection of loss vectors of the form $(\loss_b)_{b \in \cA^*}$ and using the fact that $\Mglo_{\loss_b} = \Mglo_b$.

    On the other hand, we have that 
    \[
        \beta_{\glo} = \max_{a,b \in \cA} \min_{\bm{v} \in \R^n \colon \feat{a} - \feat{b} = \bm{M} \bm{v}} \sum_{c \in \cA} \norm{\bm{v}(c)} \,;
    \]
    hence, showing that the maximum on the right-hand side of the equation above is attained by a pair $a,b \in \cA^*$ suffices to conclude the proof. 
    Let $a,b \in \cA$ be two arbitrary actions.
    By the definition of $\cA^*$, there exist two probability distributions $p_1, p_2 \in \Delta_{k^*}$ such that $\feat{a} = \sum_{h \in \cA^*} p_1(h) \feat{h}$ and $\feat{b} = \sum_{h \in \cA^*} p_2(h) \feat{h}$.
    Then,
    \begin{align*}
        \feat{a} - \feat{b} = \sum_{h \in \cA^*} p_1(h) \feat{h} - \sum_{j \in \cA^*} p_2(j) \feat{j} = \sum_{h,j \in \cA^*} p_1(h) p_2(j) (\feat{h} - \feat{j}) \,,
    \end{align*}
    and $\feat{a} - \feat{b} = \bm{M} \sum_{h,j \in \cA^*} p_1(h) p_2(j) \bm{v}^*_{h,j}$, with $\bm{v}^*_{h,j} \in \argmin_{\bm{v} \in \R^n \colon \feat{h} - \feat{j} = \bm{M} \bm{v}} \sum_{c \in \cA} \norm{\bm{v}(c)}$.
    Hence,
    \begin{align*}
        \min_{\bm{v} \in \R^n \colon \feat{a} - \feat{b} = \bm{M} \bm{v}} \sum_{c \in \cA} \norm{\bm{v}(c)} &\leq \sum_{c \in \cA} \bno{\summ_{h,j \in \cA^*} p_1(h) p_2(j) \bm{v}^*_{h,j}(c)} \\
        &\leq \sum_{h,j \in \cA^*} p_1(h) p_2(j) \sum_{c \in \cA} \norm{\bm{v}^*_{h,j}(c)} \\
        &\leq \max_{h,j \in \cA^*}  \sum_{c \in \cA} \norm{\bm{v}^*_{h,j}(c)} \\
        &= \max_{h,j \in \cA^*}  \min_{\bm{v} \in \R^n \colon \feat{h} - \feat{j} = \bm{M} \bm{v}} \sum_{c \in \cA} \norm{\bm{v}(c)} \,,
    \end{align*}
    where the second inequality is an application of Jensen's inequality.
    We have thus shown that
    \begin{align*}
        \beta_{\glo} = \max_{a,b \in \cA^*} \min_{\bm{v} \in \R^n \colon \feat{a} - \feat{b} = \bm{M} \bm{v}} \sum_{c \in \cA} \norm{\bm{v}(c)} \leq \beta_{\loc} \,.
    \end{align*}
\end{proof}

The proof of \Cref{cor:simpler-lambda-bound-local} relies on the following Proposition. (The definition of $B(\cdot)$ is given in \eqref{def:B}.)

\begin{restatable}{reprop}{thmlambdaboundgenerallocal}
    \label{thm:lambda-bound-general-local}
    Suppose that the game is locally observable. Then, for any 
    $\loss \in \Loss$, $q \in \Delta^*_k$, $\bm{\lambda} \in \Mloc_{\loss}$, and $\glw \in \Mglo$, it holds that
    \begin{align*} 
        \min_{p \in \Xi_\eta} \Lambda_{\eta, q}(p,\loss) 
        &\leq  8 L^2    \beta_{\bm{\lambda}}^2 
        \sum_{b\in\cA} q^{\bm{\lambda}}_{_-}(b) \bno{M_b^\top Q( q^{\bm{\lambda}})^\dagger M_b} \\
        &\hspace{7em}+ 2 L \brb{ 1+ \beta_{\glw}^2} \min_{\pi \in \Delta_k}\max_{c \in \cA} \bno{M_c^\top U B(\pi)^{-1}U^\top M_c}  
    \end{align*}
    provided that 
    ${\displaystyle 1/\eta \geq 2 L \brb{ 1+ \beta_\glw^2} \min_{\pi \in \Delta_k}\max_{c \in \cA} \bno{M_c^\top U B(\pi)^{-1}U^\top M_c} }$, where 
    \begin{multline*}
        \beta_{\bm{\lambda}} \coloneqq \max_{a \in \cA^*} \summ_{b\in\cA} \norm{ \bm{\lambda}_a(b)}\,, \:\: \beta_{\glw} \coloneqq \max_{a,b \in \cA} \summ_{c\in\cA} \norm{ \glw_{a,b}(c)}\,, \:\: \text{and} \:\:  \\
        q^{\bm{\lambda}}(b) \coloneqq q(b) \: \indicator{b \in \cA^*, \bm{\lambda}_b = \bm{0}} +
        \underbrace{\sum_{a \in \cA^* \colon \bm{\lambda}_a \neq \bm{0}} q(a) \frac{ \norm{\bm{\lambda}_a(b)}}{\summ_{c\in \cA}\norm{\bm{\lambda}_a(c)}}}_{\eqqcolon \: q^{\bm{\lambda}}_{_-}(b)}
         \,.
    \end{multline*}
\end{restatable}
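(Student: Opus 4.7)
The plan is to exhibit an explicit feasible $p \in \Xi_\eta$ of the form $p \coloneqq \gamma \pi + (1-\gamma) q^{\bm{\lambda}}$ and bound $\Lambda_{\eta, q}(p, \loss)$ directly; since $\min_{p \in \Xi_\eta} \Lambda_{\eta, q}(p, \loss) \leq \Lambda_{\eta, q}(p, \loss)$, the claim then follows. I take $\pi \in \Delta_k$ to be a minimizer of $\pi \mapsto \max_c \|M_c^\top U B(\pi)^{-1} U^\top M_c\|$, which exists and satisfies $B(\pi) \succ 0$ by \Cref{lemma:KW-matrix}. Setting $\tau \coloneqq \min_\pi \max_c \|M_c^\top U B(\pi)^{-1} U^\top M_c\|$ and $\gamma \coloneqq \eta L(1+\beta_\glw^2)\tau$, the hypothesis $1/\eta \geq 2L(1+\beta_\glw^2)\tau$ ensures $\gamma \leq 1/2$.

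For feasibility, I exploit that $Q_\delta$ is affine, so $Q_\delta(p) = \gamma Q_\delta(\pi) + (1-\gamma) Q_\delta(q^{\bm{\lambda}})$ with $\col(Q_\delta(\pi)) = \col(\bm{M})$ by \Cref{lemma:smoothed_properties}. One application of \Cref{lemma:pinv-sum-inequality} gives $M_c^\top Q_\delta(p)^\dagger M_c \preceq \gamma^{-1} M_c^\top Q_\delta(\pi)^\dagger M_c$, and a second application combined with $Q(\pi)^\dagger = U B(\pi)^{-1}U^\top$ (\Cref{lemma:pinv-indentity}) absorbs a further $(1-\delta)^{-1} \leq 2$, yielding $\max_c \|M_c^\top Q_\delta(p)^\dagger M_c\| \leq 2\tau/\gamma$. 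For the feature-difference component of $z(p)$, I substitute $\feat{a}-\feat{b} = \bm{M}\glw_{a,b}$ and apply a triangle/Cauchy-Schwarz estimate to obtain $(\feat{a}-\feat{b})^\top Q_\delta(p)^\dagger (\feat{a}-\feat{b}) \leq \beta_\glw^2 \max_c \|M_c^\top Q_\delta(p)^\dagger M_c\|$. Together these give $z(p) \leq 2(1+\beta_\glw^2)\tau/\gamma = 2/(\eta L)$, placing $p \in \Xi_\eta$.

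To bound $\Lambda_{\eta, q}(p, \loss)$, I start from \eqref{eq:lambda-bound-pi-and-p-main-paper}. The exploration term contributes $2\gamma/\eta = 2L(1+\beta_\glw^2)\tau$, the second summand of the target bound. The linear term $\tfrac{1-\gamma}{\eta}\langle q^{\bm{\lambda}} - q, H^\top \loss\rangle$ is non-positive: decomposing $q^{\bm{\lambda}} - q = \sum_{a \in \cA^* \colon \bm{\lambda}_a \neq \bm{0}} q(a)(\nu_a - e_a)$ with $\nu_a(b) \propto \|\bm{\lambda}_a(b)\|$, the defining property $\bm{\lambda}_a(b) \neq \bm{0} \Rightarrow \feat{b}^\top \loss \leq \feat{a}^\top \loss$ of $\Mloc_\loss$ forces $\langle \nu_a - e_a, H^\top \loss\rangle \leq 0$ for each such $a$. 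For the variance term I (i) replace $Q_\delta(p)^\dagger$ by $2 Q_\delta(q^{\bm{\lambda}})^\dagger$ via \Cref{lemma:pinv-sum-inequality} on $\col(\bm{M})$, (ii) rewrite the double sum using \Cref{lemma:variance-and-expected-squared-diff,lemma:optimality-of-q} as $2\sum_a q(a)\|\feat{a} - \feat{a^*}\|^2_{Q_\delta(q^{\bm{\lambda}})^\dagger}$, (iii) substitute $\feat{a} - \feat{a^*} = \bm{M}\bm{\lambda}_a$ and perform two successive Cauchy-Schwarz passes to reach $\beta_{\bm{\lambda}}^2 \sum_b q^{\bm{\lambda}}_{-}(b)\|M_b^\top Q_\delta(q^{\bm{\lambda}})^\dagger M_b\|$, and (iv) pass from $Q_\delta(q^{\bm{\lambda}})$ to $Q(q^{\bm{\lambda}})$ at a further factor of $2$, validly since $q^{\bm{\lambda}}_{-}(b) > 0$ implies $q^{\bm{\lambda}}(b) > 0$ and hence $\col(M_b) \subseteq \col(Q(q^{\bm{\lambda}}))$ by \Cref{lemma:Q-column-space}. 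The accumulated $4 \cdot 2 = 8$ reproduces the first summand of the stated bound.

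I expect step (iii) to be the most delicate. The two Cauchy-Schwarz passes must be arranged so that, for each $a$ with $\bm{\lambda}_a \neq \bm{0}$, $\|\bm{M}\bm{\lambda}_a\|^2_{Q_\delta(q^{\bm{\lambda}})^\dagger}$ factors as $s_a \coloneqq \sum_b \|\bm{\lambda}_a(b)\| \leq \beta_{\bm{\lambda}}$ times $\sum_b \|\bm{\lambda}_a(b)\|\|M_b^\top Q_\delta(q^{\bm{\lambda}})^\dagger M_b\|$; swapping the order of summation over $a$ and $b$ then invokes $\sum_a q(a)\|\bm{\lambda}_a(b)\| \leq \beta_{\bm{\lambda}}\, q^{\bm{\lambda}}_{-}(b)$ to align the weight structure with $q^{\bm{\lambda}}_{-}$. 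Any misalignment either overcounts $\beta_{\bm{\lambda}}$ or produces the wrong $b$-weighting, which is precisely why the proposition separates $q^{\bm{\lambda}}$ from $q^{\bm{\lambda}}_{-}$.
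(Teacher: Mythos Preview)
Your proposal is correct and follows essentially the same route as the paper's proof: the same candidate $p = \gamma\pi + (1-\gamma)q^{\bm{\lambda}}$, the same choice of $\pi$ and $\gamma$, the same nonpositivity argument for the linear term, the same triangle/Cauchy--Schwarz cascade for the variance term, and the same $\glw$-based bound for feasibility. The only cosmetic difference is the order in which you peel off the $(1-\gamma)^{-1}$ and $(1-\delta)^{-1}$ factors (you do it in two steps (i) and (iv), the paper does it in one shot at the end via $(1-\delta)^{-1}(1-\gamma)^{-1}\le 4$); one small caveat is that \Cref{lemma:KW-matrix} concerns the trace criterion rather than the spectral-norm criterion you optimize, so existence of $\pi$ with $B(\pi)\succ 0$ should instead be justified by continuity/compactness as in the paper.
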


\begin{proof} 
    We show this by constructing a certain distribution $p \in \Xi_\eta$ and bounding $\Lambda_{\eta, q}(p,\loss)$.
    Our choice of $p$ will take the form $p = \gamma \pi + (1-\gamma) \hat{p}$, with $\gamma \in (0,\nicefrac{1}{2}]$ and $\pi,\hat{p} \in \Delta_k$. For brevity, let $\overbar{\gamma} = 1-\gamma$.
    The role of $\pi$ is to ensure that $\gamma \pi + \overbar{\gamma} \hat{p} \in \Xi_\eta$ and that of $\hat{p}$ is to control the magnitude of $\Lambda_{\eta, q} (\gamma \pi + \overbar{\gamma} \hat{p},\loss)$. 
    Let $a^* \in \argmin_{a' \in \cA^*} \inprod{\feat{a'}, \loss}$.
    Now,
    \begin{align} 
        \Lambda_{\eta, q} (\gamma \pi + \overbar{\gamma} \hat{p},\loss) 
        &= \frac{\gamma}{\eta} \inprod{ \pi  - q, H^\top \loss} + \frac{\overbar{\gamma}}{\eta} \inprod{ \hat{p} - q, H^\top \loss} + L^2 \sum_{a,b \in \cA} q(a) q(b) \cE(a,b; \gamma \pi + \overbar{\gamma} \hat{p})  \nonumber\\
        &\leq \frac{2 \gamma}{\eta} + \frac{\overbar{\gamma}}{\eta} \inprod{ \hat{p} - q, H^\top \loss} + L^2 \sum_{a,b \in \cA} q(a) q(b) \cE(a,b; \gamma \pi + \overbar{\gamma} \hat{p}) \nonumber\\
        \label{eq:lambda-bound-pi-and-p-hat}
        &\leq \frac{2 \gamma}{\eta} + \frac{\overbar{\gamma}}{\eta} \inprod{ \hat{p} - q, H^\top \loss} + 2 L^2 \sum_{a\in \cA} q(a) \cE(a,a^*; \gamma \pi + \overbar{\gamma} \hat{p})\,,
    \end{align}
    where the first inequality uses the assumption that $\max_{a,b \in \cA, \loss \in \Loss} | (\feat{a} - \feat{b})^\top \loss | \leq 2$, and the second inequality follows from the definition of $\cE$, \Cref{lemma:variance-and-expected-squared-diff}, and \Cref{lemma:optimality-of-q} (noting that $\cE(a,a^*; p) = (\feat{a} - H e_{a^*})^\top Q_\delta(p)^{\dagger} (\feat{a} - H e_{a^*})$). 
    We now proceed with the selection of $\hat{p}$ with the purpose of minimizing the second and third terms.
    A convenient choice is to select $\hat{p}$ so as to perform better than $q$ against the loss vector $\loss$.
    For that, it suffices to consider $\hat{p}$ of the form
    \begin{equation}
        \hat{p} = \sum_{a \in \cA^*} q(a) \nu_a \quad \text{such that} \quad \forall a \in \cA^* \: \inprod{\nu_a - e_{a}, H^\top \loss} \leq 0 \,, \label{def:factored-p-hat}
    \end{equation}
    where $\nu_a \in \Delta_k$.
    That is, $\hat{p}$ is a mixture (with weights proportional to $q$) of $|\cA^*|$ distributions, each associated with a Pareto optimal action and is chosen to outperform this action against $\loss$.
    This choice allows us to simultaneously eliminate the second term in \eqref{eq:lambda-bound-pi-and-p-hat} and bound the third term independently of tunable parameters.
    The former fact is easily demonstrated:
    \begin{equation} \label{eq:better-than-q}
        \frac{\overbar{\gamma}}{\eta} \inprod{ \hat{p} - q, H^\top \loss} = \frac{\overbar{\gamma}}{\eta} \sum_{a\in\cA^*} q(a) \inprod{ \nu_a - e_a, H^\top \loss} \leq 0 \,.
    \end{equation}
    Moving over to the third term, fix $\bm{\lambda} \in \Mloc_\loss$ such that $\feat{a} - \feat{a^*} = \bm{M} \bm{\lambda}_a$ for all $a \in \cA^*$, which exists via local observability.
    Then, for any $a\in \cA^*$ and any $p \in \Delta_k$, we have that
    \begin{align}
        \cE(a,a^*; p) &= \lrb{\feat{a} - \feat{a^*}}^\top Q_\delta(p)^\dagger \lrb{\feat{a} - \feat{a^*}} \nonumber\\
        &= \bno{\brb{Q_\delta(p)^\dagger}^{\nicefrac{1}{2}}(\feat{a} - \feat{a^*})}^2 \nonumber\\
        &= \bno{\brb{Q_\delta(p)^\dagger}^{\nicefrac{1}{2}}\bm{M}\bm{\lambda}_a}^2 \nonumber\\
        &= \bno{\summ_{c\in\cA}\brb{\ Q_\delta(p)^\dagger}^{\nicefrac{1}{2}} M_c \bm{\lambda}_a(c)}^2 \nonumber\\
        &\leq \brb{\summ_{c\in\cA} \bno{\brb{\ Q_\delta(p)^\dagger}^{\nicefrac{1}{2}} M_c \bm{\lambda}_a(c)}}^2 \nonumber\\
        &\leq \brb{\summ_{c\in\cA} \bno{\brb{\ Q_\delta(p)^\dagger}^{\nicefrac{1}{2}} M_c} \norm{ \bm{\lambda}_a(c)}}^2 \nonumber\\ \label{eq:variance-term-loss-observation-bound}
        &\leq 
        \brb{\summ_{c\in\cA} \norm{ \bm{\lambda}_a(c)}}
        \sum_{c\in\cA}  \norm{ \bm{\lambda}_a(c)} \bno{\brb{Q_\delta(p)^\dagger}^{\nicefrac{1}{2}}M_c}^2 \,,
    \end{align}
    where the first inequality is an application of the triangle inequality, the second a consequence of the definition of the spectral norm, and the third an application of the Cauchy-Schwarz inequality.
    Hence,
    \begin{align}
        &\sum_{a \in \cA} q(a) \cE(a,a^*; p) = \sum_{a \in \cA^*} q(a) \cE(a,a^*; p) \nonumber \\
        &\qquad \leq  \sum_{a \in \cA^*} q(a) \brb{\summ_{c\in\cA} \norm{ \bm{\lambda}_a(c)}}
        \sum_{c\in\cA}  \norm{ \bm{\lambda}_a(c)} \bno{\brb{Q_\delta(p)^\dagger}^{\nicefrac{1}{2}}M_c}^2 \\
        &\qquad \leq \max_{s \in \cA^*}\brb{\summ_{c\in\cA} \norm{ \bm{\lambda}_s(c)}} \sum_{a \in \cA^*} q(a)
        \sum_{c\in\cA}  \norm{ \bm{\lambda}_a(c)} \bno{\brb{Q_\delta(p)^\dagger}^{\nicefrac{1}{2}}M_c}^2 \nonumber \\
        &\qquad = \max_{s \in \cA^*}
        \brb{\summ_{c\in\cA} \norm{ \bm{\lambda}_s(c)}} \sum_{c\in\cA} \sum_{a \in \cA^*} q(a)
        \norm{ \bm{\lambda}_a(c)}  \bno{\brb{Q_\delta(p)^\dagger}^{\nicefrac{1}{2}}M_c}^2 \nonumber \\
        &\qquad = \max_{s \in \cA^*}
        \brb{\summ_{c\in\cA} \norm{ \bm{\lambda}_s(c)}}^2 \sum_{c\in\cA} \sum_{a \in \cA^*} q(a)
        \frac{\norm{ \bm{\lambda}_a(c)}}{\max_{s \in \cA^*}
        \summ_{b\in\cA} \norm{ \bm{\lambda}_s(b)}}  \bno{\brb{Q_\delta(p)^\dagger}^{\nicefrac{1}{2}}M_c}^2 \nonumber \\ \label{eq:variance-term-average-observation-bound}
        &\qquad \leq \max_{s \in \cA^*}
        \brb{\summ_{c\in\cA} \norm{ \bm{\lambda}_s(c)}}^2 \sum_{c\in\cA} \sum_{a \in \cA^* \colon \bm{\lambda}_a \neq \bm{0}} q(a)
        \frac{\norm{ \bm{\lambda}_a(c)}}{
        \summ_{b\in\cA} \norm{ \bm{\lambda}_a(b)}}  \bno{\brb{Q_\delta(p)^\dagger}^{\nicefrac{1}{2}}M_c}^2 \,,
    \end{align}
    where the first equality holds since $q$ is only supported on $\cA^*$. %
    Note that the right-hand-side of \eqref{eq:variance-term-average-observation-bound} now offers a natural way to choose $\hat{p}$ as to respect the constraint in \eqref{def:factored-p-hat} and simultaneously simplify the bound.
    Simply, let $\nu_a \in \Delta_k$ (as referred to in \eqref{def:factored-p-hat}) be such that $\nu_a(b) \propto \norm{ \bm{\lambda}_a(b)}$ for $b\in\cA$ if $\bm{\lambda}_a \neq \bm{0}$; otherwise, if $\bm{\lambda}_a = \bm{0}$ (which is the case for $a^*$ and its duplicates), simply set $\nu_a = e_a$.
    This choice satisfies the condition in \eqref{def:factored-p-hat} since the fact that $\bm{\lambda} \in \Mloc_\loss$ implies that any action in the support of $\nu_a$ has loss smaller than or equal to that of $a$.
    To emphasize its dependence on $\bm{\lambda}$ and $q$, we denote this choice of  $\hat{p}$ by $q^{\bm{\lambda}}$.
    For clarity, we state its definition  explicitly below: for $b \in \cA$, let
    \begin{align*}
        q^{\bm{\lambda}}(b) &\coloneqq \sum_{a \in \cA^*} q(a) \bbrb{ \indicator{\bm{\lambda}_a = \bm{0}, a = b} + \indicator{\bm{\lambda}_a \neq \bm{0}} \frac{ \norm{\bm{\lambda}_a(b)}}{\summ_{c\in \cA}\norm{\bm{\lambda}_a(c)}}} \\
        &= q(b) \: \indicator{b \in \cA^*, \bm{\lambda}_b = \bm{0}} +
        \underbrace{\sum_{a \in \cA^* \colon \bm{\lambda}_a \neq \bm{0}} q(a) \frac{ \norm{\bm{\lambda}_a(b)}}{\summ_{c\in \cA}\norm{\bm{\lambda}_a(c)}}}_{\eqqcolon \: q^{\bm{\lambda}}_{_-}(b)} \,.
    \end{align*}
    For convenience, we have also defined above an abridged mixture $q^{\bm{\lambda}}_{_-}$, which excludes actions $a \in \cA^*$ for which $\bm{\lambda}_a = \bm{0}$.
    Now, with $\beta_{\bm{\lambda}} \coloneqq \max_{s \in \cA^*} \sum_{c\in\cA} \norm{ \bm{\lambda}_s(c)}$, \eqref{eq:variance-term-average-observation-bound} implies that
    \begin{align*}
        \sum_{a \in \cA} q(a) \cE(a, a^*; \gamma \pi + \overbar{\gamma} q^{\bm{\lambda}}) &\leq  \beta_{\bm{\lambda}}^2 \sum_{b\in\cA} q^{\bm{\lambda}}_{_-}(b) \bno{\brb{Q_\delta(\gamma \pi + \overbar{\gamma} q^{\bm{\lambda}})^\dagger}^{\nicefrac{1}{2}}M_b}^2 \\
        &=  \beta_{\bm{\lambda}}^2 \sum_{b\in\cA} q^{\bm{\lambda}}_{_-}(b) \bno{M_b^\top Q_\delta(\gamma \pi + \overbar{\gamma} q^{\bm{\lambda}})^\dagger M_b} \\
        &\leq \frac{ \beta_{\bm{\lambda}}^2}{(1-\delta)(1-\gamma)}  \sum_{b\in\cA} q^{\bm{\lambda}}_{_-}(b) \bno{M_b^\top Q( q^{\bm{\lambda}})^\dagger M_b} \\
        &\leq 4 \beta_{\bm{\lambda}}^2  \sum_{b\in\cA} q^{\bm{\lambda}}_{_-}(b) \bno{M_b^\top Q( q^{\bm{\lambda}})^\dagger M_b} \,,
    \end{align*}
        where the penultimate step is an application of \Cref{lemma:pinv-sum-inequality} using the definition of $Q_\delta$ and the fact that $\col(M_b) \subseteq \col(Q(q^{\bm{\lambda}}))$ whenever $q^{\bm{\lambda}}(b)>0$ (see \Cref{lemma:Q-column-space}), and the last step uses that $\delta, \gamma \leq \nicefrac{1}{2}$.
    Hence, combined with \eqref{eq:lambda-bound-pi-and-p-hat} and \eqref{eq:better-than-q}, this entails that
    \begin{align} \label{eq:lambda-bound-free-gamma}
        \Lambda_{\eta, q} (\gamma \pi + \overbar{\gamma} q^{\bm{\lambda}},\loss) 
        \leq \frac{2 \gamma}{\eta} +  8 L^2 \beta_{\bm{\lambda}}^2 
        \sum_{b\in\cA} q^{\bm{\lambda}}_{_-}(b) \bno{M_b^\top Q( q^{\bm{\lambda}})^\dagger M_b} \,.
    \end{align}

    The smallest admissible value for $\gamma$ depends on the choice of $\pi$, which should be chosen so as to ensure that $(\gamma \pi + \overbar{\gamma} q^{\bm{\lambda}}) \in \Xi_\eta$; that is, $z(\gamma \pi + \overbar{\gamma} q^{\bm{\lambda}}) \leq \nicefrac{2}{\eta L}$.
    Now, fix $\glw \in \Mglo$ and define $\beta_{\glw} \coloneqq \max_{a,b \in \cA} \summ_{c\in\cA} \norm{ \glw_{a,b}(c)}$.
    Analogously to \eqref{eq:variance-term-loss-observation-bound}, we have that for any $a,b \in \cA$,
    \begin{align}
        \cE(a,b; \gamma \pi + \overbar{\gamma} q^{\bm{\lambda}}) 
        &\leq \brb{\summ_{c\in\cA} \norm{ \glw_{a,b}(c)}}
        \summ_{c\in\cA} \norm{ \glw_{a,b}(c)} \bno{\brb{Q_\delta(\gamma \pi + \overbar{\gamma} q^{\bm{\lambda}})^\dagger}^{\nicefrac{1}{2}}M_c}^2 \nonumber\\
        &\leq \brb{\summ_{c\in\cA} \norm{ \glw_{a,b}(c)}}^2
        \max_{c \in \cA} \bno{\brb{Q_\delta(\gamma \pi + \overbar{\gamma} q^{\bm{\lambda}})^\dagger}^{\nicefrac{1}{2}}M_c}^2 \nonumber\\ \label{eq:variance-term-loss-observation-bound-global}
        &\leq \beta_{\glw}^2
        \max_{c \in \cA} \bno{\brb{Q_\delta(\gamma \pi + \overbar{\gamma} q^{\bm{\lambda}})^\dagger}^{\nicefrac{1}{2}}M_c}^2 \,.
    \end{align}
    Then, enforcing that $\col(Q(\pi)) = \col(\bm{M})$, we get that
    \begin{align}
        z(\gamma \pi + \overbar{\gamma} q^{\bm{\lambda}}) 
        &= \max_{a,b \in \cA} \cE(a,b; \gamma \pi + \overbar{\gamma} q^{\bm{\lambda}})  + \max_{c \in \cA} \bno{\brb{Q_\delta(\gamma \pi + \overbar{\gamma} q^{\bm{\lambda}})^\dagger}^{\nicefrac{1}{2}}M_c}^2 \nonumber\\
        &\leq \brb{1 + \beta_{\glw}^2} \max_{c \in \cA} \bno{\brb{Q_\delta(\gamma \pi + \overbar{\gamma} q^{\bm{\lambda}})^\dagger}^{\nicefrac{1}{2}}M_c}^2 \nonumber\\
        &\leq \frac{1}{(1-\delta) \gamma} \brb{ 1+ \beta_{\glw}^2} \max_{c \in \cA} \bno{\brb{Q({\pi})^\dagger}^{\nicefrac{1}{2}}M_c}^2 \nonumber\\
        &\leq \frac{2}{\gamma} \brb{ 1+ \beta_{\glw}^2} \max_{c \in \cA} \bno{\brb{Q(\pi)^\dagger}^{\nicefrac{1}{2}}M_c}^2 \nonumber\\ \label{eq:local-main-prop-z-bound}
        &= \frac{2}{\gamma} \brb{ 1+ \beta_{\glw}^2} \max_{c \in \cA} \bno{\brb{U B(\pi)^{-1}U^\top}^{\nicefrac{1}{2}}M_c}^2 \,,
    \end{align}
    where the second inequality follows from \Cref{lemma:pinv-sum-inequality} as $\col(M_b) \subseteq \col(Q(\pi))$ by the assumption on $\pi$, the third inequality holds since $\delta \leq \nicefrac{1}{2}$, and the last equality also follows from the assumption on $\pi$ (see \Cref{lemma:full-rank-conditions,lemma:pinv-indentity}).
    At this junction, we pick  
    \[
        \pi \in \argmin_{\pi' \in \Delta_k} \max_{c \in \cA} \bno{\brb{U B(\pi')^{-1}U^\top}^{\nicefrac{1}{2}}M_c}^2 \,,\]
    which satisfies that $\col(Q(\pi)) = \col(\bm{M})$ as this is equivalent to $B(\pi)$ being non-singular, see \Cref{lemma:full-rank-conditions}.
    With this choice of $\pi$, equating the right-hand side of \eqref{eq:local-main-prop-z-bound} to $\nicefrac{2}{\eta L}$ dictates setting
    \[
        \gamma = \eta L \brb{ 1+ \beta_{\glw}^2} \min_{\pi' \in \Delta_k} \max_{c \in \cA} \bno{\brb{U B(\pi')^{-1}U^\top}^{\nicefrac{1}{2}}M_c}^2 \,.
    \]
    To ensure that this choice of $\gamma$ is valid (i.e., $\gamma \leq \nicefrac12$), we enforce that 
    \begin{equation*}
        {1}/{\eta} \geq 2 L \brb{ 1+ \beta_{\glw}^2} \min_{\pi' \in \Delta_k} \max_{c \in \cA} \bno{\brb{U B(\pi')^{-1}U^\top}^{\nicefrac{1}{2}}M_c}^2 \,.
    \end{equation*}
    Plugging this value of $\gamma$ back in \eqref{eq:lambda-bound-free-gamma} finally yields that
    \begin{multline*} 
        \min_{(p,r) \in \Xi_\eta} \Lambda_{\eta, q}(p,r,\loss)
        \leq 8 L^2 \beta_{\bm{\lambda}}^2 
        \sum_{b\in\cA} q^{\bm{\lambda}}_{_-}(b) \bno{M_b^\top Q( q^{\bm{\lambda}})^\dagger M_b} \\+ 2 L \brb{ 1+ \beta_{\glw}^2} \min_{\pi' \in \Delta_k} \max_{c \in \cA} \bno{\brb{U B(\pi')^{-1}U^\top}^{\nicefrac{1}{2}}M_c}^2  \,.
    \end{multline*}

    \end{proof}

\corsimplerlambdaboundlocal*

\begin{proof}
    Fix $\loss \in \Loss$, $q \in \Delta^*_k$, and $\bm{\lambda} \in \Mloc_{\loss}$. Concerning the first term in the bound of \Cref{thm:lambda-bound-general-local}, we have that
    \begin{align*}
        \summ_{b\in\cA} q^{\bm{\lambda}}_{_-}(b) \bno{M_b^\top Q( q^{\bm{\lambda}})^\dagger M_b} 
        &\leq \summ_{b\in\cA} q^{\bm{\lambda}}(b) \bno{M_b^\top Q( q^{\bm{\lambda}})^\dagger M_b} \\
        &\leq \summ_{b\in\cA} q^{\bm{\lambda}}(b) \trace\brb{M_b^\top Q( q^{\bm{\lambda}})^\dagger M_b} \\
        &= \summ_{b\in\cA} q^{\bm{\lambda}}(b) \trace\brb{Q( q^{\bm{\lambda}})^\dagger M_b M_b^\top} \\
        &= \trace\brb{ Q( q^{\bm{\lambda}})^\dagger\summ_{b\in\cA} q^{\bm{\lambda}}(b) M_b M_b^\top} \\
        &= \trace\brb{ Q( q^{\bm{\lambda}})^\dagger Q( q^{\bm{\lambda}})} = \rank(Q( q^{\bm{\lambda}})) \leq \rank(\bm{M}) \,,
    \end{align*}
    where the last inequality follows from \Cref{lemma:Q-column-space}.
    At the same time, whenever $q^{\bm{\lambda}}(b) > 0$, an application of \Cref{lemma:pinv-sum-inequality} gives that
    \begin{align*}
        \bno{M_b^\top Q( q^{\bm{\lambda}})^\dagger M_b} 
        \leq \bno{M_b^\top \brb{q^{\bm{\lambda}}(b) M_b M_b^\top}^\dagger M_b}
        = \frac{\bno{M_b^\top \brb{ M_b M_b^\top}^\dagger M_b}}{q^{\bm{\lambda}}(b)}
        = \frac{\bno{M_b^\dagger M_b}}{q^{\bm{\lambda}}(b)}
        \leq \frac{1}{q^{\bm{\lambda}}(b)} \,,
    \end{align*}
    where the last inequality uses that $M_b^\dagger M_b$ is an orthogonal projector. 
    Then,
    \begin{equation*}
        \sum_{b\in\cA} q^{\bm{\lambda}}_{_-}(b) \bno{M_b^\top Q( q^{\bm{\lambda}})^\dagger M_b} \leq \sum_{b\in\cA \colon q^{\bm{\lambda}}_{_-}(b) > 0} \frac{q^{\bm{\lambda}}_{_-}(b)}{q^{\bm{\lambda}}(b)} \leq
        |\{b\in\cA \colon q^{\bm{\lambda}}_{_-}(b) > 0\}| = |\supp({\bm{\lambda}})| \,.
    \end{equation*}
    For the second term, setting $\pi$ as the uniform distribution over some $S \subseteq \cA$ (assuming $\summ_{s \in S} U^\top M_s M_s^\top U$ is invertible) gives that
    \begin{align*}
        \max_{b \in \cA} \bno{M_b^\top U B(\pi)^{-1}U^\top M_b} &= |S| \max_{b \in \cA} \bno{M_b^\top U \brb{\summ_{s \in S} U^\top M_s M_s^\top U}^{-1}U^\top M_b}\,.
    \end{align*}
    Hence, 
    \begin{align*}
        \min_{\pi \in \Delta_k} \max_{b \in \cA} \bno{M_b^\top U B(\pi)^{-1}U^\top M_b} \leq \min_{S \subseteq \cA} |S| \max_{b \in \cA} \bno{M_b^\top U \brb{\summ_{s \in S} U^\top M_s M_s^\top U}^{-1}U^\top M_b} \eqqcolon w^* 
    \end{align*}
    Note that for any $b \in \cA$, the norm on the right-hand side of the inequality is bounded by $1$ whenever $b \in S$ (again via an application of \Cref{lemma:pinv-sum-inequality} as above). Hence, choosing $S$ as $\cA$ yields $k$ as an upper bound on $w^*$. At the same time,
    \begin{equation*}
        \min_{\pi \in \Delta_k} \max_{b \in \cA} \bno{M_b^\top U B(\pi)^{-1}U^\top M_b} \leq \min_{\pi \in \Delta_k} \max_{b \in \cA} \trace\brb{M_b^\top U B(\pi)^{-1}U^\top M_b} \,.
    \end{equation*}
    The right-hand side is a form of the $G$-optimal design criterion (with matrices as elements of the design space), for which one can show that the minimizer coincides with the maximizer of $\log \det \brb{\sum_{a\in\mathcal{A}}\pi(a)U^\top  M_a M_a^\top  U}$ and attains a value of $\rank(\bm{M})$, see \Cref{lemma:KW-matrix}.

    Let $\beta_{\glo} \coloneqq \min_{\glw \in \Mglo} \beta_\glw$. %
    Combining the four bounds derived above with \Cref{thm:lambda-bound-general-local} (which holds for any $\bm{\lambda} \in \Mloc_\loss$ and $\glw \in \Mglo$) and \Cref{lem:beta-glo-alternative-def} gives that 
    \begin{align*} \label{eq:local-main-bound-intermediate}
        \min_{p \in \Xi_\eta} \Lambda_{\eta, q}(p,\loss) 
        &\leq  \inf_{\bm{\lambda} \in \Mloc_\loss} 8 L^2    \beta_{\bm{\lambda}}^2 
        \min\lcb{\rank(\bm{M}),|\supp(\bm{\lambda})|}  +  2 L \brb{ 1+ \beta_{\glo}^2} \min\{\rank(\bm{M}), w^*\} \,,
    \end{align*}
    as long as 
    $
        1/\eta \geq 2 L \brb{ 1+ \beta_{\glo}^2} \min\{\rank(\bm{M}), w^*\}    
    $.
    Taking the maximum over $\loss \in \Loss$ in this bound yields an upper bound for $\Lambda^*_{\eta, q}$ via \Cref{lem:minimax}, which is also an upper bound for $\Lambda^*_{\eta}$ as it does not depend on $q$.
    Finally, note that the image of the map $\loss \mapsto \Mloc_\loss$ is finite as $\Mloc_\loss$ depends on $\loss$ only via the ordering induced by the latter over the actions; hence, the maximum value over $\loss \in \Loss$ of the bound above is attainable.

\end{proof}

    \subsection{Alternative exploration distributions} \label{appendix:alt-expl-dist}
    We revisit here the key argument in the proof of \Cref{thm:lambda-bound-general-local}; that of choosing $\hat{p}$ to control $\sum_{a\in \cA} q(a) \cE(a,a^*; \hat{p})$ while keeping $\inprod{ \hat{p} - q, H^\top \loss}$ non-positive, having fixed $\loss \in \Loss$ and $a^* \in \argmin_{a \in \cA} \feat{a}^\top \loss$.
    As argued in that proof, for any choice of $\bm{\lambda} \in \Mloc_\loss$ (such that $\feat{a} - \feat{a^*} = \bm{M} \bm{\lambda}_{a}$ for all $a \in \cA$),   
    choosing $\hat{p} = \sum_{a \in \cA^*} q(a) \nu_a$ with $\nu_a$ supported on $\supp(\bm{\lambda},a)$ suffices to ensure that $\inprod{ \hat{p} - q, H^\top \loss} \leq 0$.
    On the other hand, establishing an upper bound on $\cE(a,a^*; \hat{p})$ that is proportional to $\summ_{c \in \supp(\bm{\lambda},a)} \nu_a(c) \bno{M_c^\top Q_\delta(\hat{p})^\dagger M_c}$ allows using the fact that 
    $\sum_{a \in \cA} r(a) \bno{M_a^\top Q_\delta(r)^\dagger M_a}$ is at most $2\rank(\bm{M})$ for any $r \in \Delta_k$ (also, at most twice the size of the support of $r$) to bound $\sum_{a\in \cA} q(a) \cE(a,a^*; \hat{p})$ as such (see the proof of \Cref{cor:simpler-lambda-bound-local} above).
    A slight generalization of an argument in the proof of \Cref{thm:lambda-bound-general-local} gives that for any $\alpha \in \R$ and $a \in \cA^*$ (with $\bm{\lambda}_a \neq \bm{0}$)
    \begin{align*}
        \cE(a,a^*; \hat{p}) &= \lrb{\feat{a} - \feat{a^*}}^\top Q_\delta(\hat{p})^\dagger \lrb{\feat{a} - \feat{a^*}} \nonumber\\
        &= \bno{\brb{Q_\delta(\hat{p})^\dagger}^{\nicefrac{1}{2}}(\feat{a} - \feat{a^*})}^2 \nonumber\\
        &= \bno{\brb{Q_\delta(\hat{p})^\dagger}^{\nicefrac{1}{2}}\bm{M}\bm{\lambda}_a}^2 \nonumber\\
        &= \bno{\summ_{c\in\supp(\bm{\lambda},a)}\brb{\ Q_\delta(\hat{p})^\dagger}^{\nicefrac{1}{2}} M_c \bm{\lambda}_a(c)}^2 \nonumber\\
        &\leq \brb{\summ_{c\in\supp(\bm{\lambda},a)} \bno{\brb{\ Q_\delta(\hat{p})^\dagger}^{\nicefrac{1}{2}} M_c \bm{\lambda}_a(c)}}^2 \nonumber\\
        &\leq \brb{\summ_{c\in\supp(\bm{\lambda},a)} \bno{\brb{\ Q_\delta(\hat{p})^\dagger}^{\nicefrac{1}{2}} M_c} \norm{ \bm{\lambda}_a(c)}}^2 \nonumber\\ 
        &\leq 
        \brb{\summ_{c\in\supp(\bm{\lambda},a)} \norm{ \bm{\lambda}_a(c)}^{2-2\alpha}}
        \brb{\summ_{c\in\supp(\bm{\lambda},a)}  \norm{ \bm{\lambda}_a(c)}^{2 \alpha} \bno{M_c^\top Q_\delta(\hat{p})^\dagger M_c}} \\
        &= \brb{\summ_{c\in\supp(\bm{\lambda},a)} \norm{ \bm{\lambda}_a(c)}^{2-2\alpha}}
        \brb{\summ_{c\in\supp(\bm{\lambda},a)} \norm{ \bm{\lambda}_a(c)}^{2\alpha}}
        \\ &\hspace{7em}\cdot \bbrb{\summ_{c\in\supp(\bm{\lambda},a)}  \frac{\norm{ \bm{\lambda}_a(c)}^{2 \alpha}}{\sum_{b \in \supp(\bm{\lambda},a)} \norm{ \bm{\lambda}_a(b)}^{2 \alpha}}  \bno{M_c^\top Q_\delta(\hat{p})^\dagger M_c}} \,,
    \end{align*}
    where the last inequality is an application of the Cauchy-Schwarz inequality.
    One can then set $\nu_a(c) \propto \norm{ \bm{\lambda}_a(c)}^{2 \alpha}$, where $\alpha = 1/2$ corresponds to the choice of $\nu_a$ used in the proof of \Cref{thm:lambda-bound-general-local} and $\alpha = 0$ yields  $\nu_a(c) \propto \indicator{c \in \supp(\bm{\lambda},a)}$, which is essentially the analogous choice of $\hat{p}$ to that used in the analysis of \citet{lattimore2020exploration} in finite partial monitoring. 
    In any case, the resulting bound is
    \begin{align*}
        \sum_{a\in \cA} q(a) \cE(a,a^*; \hat{p}) &\leq \sum_{a\in \cA} q(a) \brb{\summ_{c\in\supp(\bm{\lambda},a)} \norm{ \bm{\lambda}_a(c)}^{2-2\alpha}}
        \brb{\summ_{c\in\supp(\bm{\lambda},a)} \norm{ \bm{\lambda}_a(c)}^{2\alpha}} 
        \\ &\hspace{18em}\cdot \brb{\summ_{c\in \cA} \nu_a(c)  \bno{M_c^\top Q_\delta(\hat{p})^\dagger M_c}} \\
        &\leq \max_{a \in \cA^*} \brb{\summ_{c\in\supp(\bm{\lambda},a)} \norm{ \bm{\lambda}_{a}(c)}^{2-2\alpha}}
        \brb{\summ_{c\in\supp(\bm{\lambda},a)} \norm{ \bm{\lambda}_{a}(c)}^{2\alpha}} 
        \\ &\hspace{18em}\cdot \sum_{b \in \cA} \hat{p}(b)  \bno{M_b^\top Q_\delta(\hat{p})^\dagger M_b} \,.
    \end{align*}
    Since for any $\alpha \in \R$ the Cauchy-Schwarz inequality gives that 
    \[
        \brb{\summ_{c\in\supp(\bm{\lambda},a)} \norm{ \bm{\lambda}_{a}(c)}^{2-2\alpha}}
        \brb{\summ_{c\in\supp(\bm{\lambda},a)} \norm{ \bm{\lambda}_{a}(c)}^{2\alpha}} 
        \geq \brb{\summ_{c\in\supp(\bm{\lambda},a)} \norm{ \bm{\lambda}_{a}(c)}}^2 \,,
    \]
    choosing $\alpha = 1/2$ minimizes the leading factor in the bound above.
    To be clear, it is not claimed here that this choice of $\hat{p}$ minimizes $\sum_{a\in \cA} q(a) \cE(a,a^*; \hat{p})$, neither under the structural constraint that $\hat{p} = \sum_{a \in \cA^*} q(a) \nu_a$ with $\nu_a$ supported on $\supp(\bm{\lambda},a)$ nor the general constraint that $\inprod{ \hat{p} - q, H^\top \loss} \leq 0$, neither for a fixed $q$ nor after passing to the $\sup$ over $q \in \Delta^*_k$.
    It is not generally clear if such an optimal distribution can be expressed in closed form.
    The aim, rather, was to show that our choice of $\hat{p}$ is fairly natural and justified in pursuit of a simple and interpretable bound (one that isolates in a standalone leading factor the `cost' of using the weights $(\bm{\lambda}_a)_a$) through our linear-bandit-inspired analysis.
    
    To expand on this discussion, we consider now a special case where a tighter analysis leads to a different value of $\alpha$ being the most convenient.
    Assume that $\col(M_b)$ and $\col(M_c)$ are orthogonal for any $b \neq c$.
    In such a case, it holds that
    \begin{align*}
        \cE(a,a^*; \hat{p}) &= \lrb{\feat{a} - \feat{a^*}}^\top Q_\delta(\hat{p})^\dagger \lrb{\feat{a} - \feat{a^*}} \nonumber\\
        &= \lrb{\bm{M}\bm{\lambda}_a}^\top Q_\delta(\hat{p})^\dagger \brb{\bm{M}\bm{\lambda}_a} \nonumber\\
        &= \brb{\summ_{c\in\supp(\bm{\lambda},a)} M_c \bm{\lambda}_a(c)}^\top Q_\delta(\hat{p})^\dagger \brb{\summ_{c'\in\supp(\bm{\lambda},a)} M_{c'} \bm{\lambda}_a(c')} \nonumber\\
        &= \summ_{c\in\supp(\bm{\lambda},a)} \brb{ M_c \bm{\lambda}_a(c)}^\top Q_\delta(\hat{p})^\dagger \brb{ M_{c} \bm{\lambda}_a(c)} \\
        &\leq \summ_{c\in\supp(\bm{\lambda},a)}\norm{\bm{\lambda}_a(c)}^2 \bno{ M_c^\top Q_\delta(\hat{p})^\dagger  M_{c} } \\
        &= \brb{ \summ_{c' \in \supp(\bm{\lambda},a)} \norm{\bm{\lambda}_a(c')}^2 } \summ_{c\in\supp(\bm{\lambda},a)}\frac{\norm{\bm{\lambda}_a(c)}^2}{\summ_{b \in \supp(\bm{\lambda},a)} \norm{\bm{\lambda}_a(b)}^2} \bno{ M_c^\top Q_\delta(\hat{p})^\dagger  M_{c} } 
    \end{align*}
    where the fourth equality follows from the assumed structure of the observation matrices. %
    This immediately suggests setting $\nu_a(c) \propto  \norm{ \bm{\lambda}_a(c)}^{2}$ (or picking $\alpha = 1$), obtaining as result that
    \begin{align*}
        \sum_{a\in \cA} q(a) \cE(a,a^*; \hat{p}) \leq \max_{a \in \cA^*} \brb{\summ_{c\in\supp(\bm{\lambda},a)} \norm{ \bm{\lambda}_{a}(c)}^{2}} 
        \sum_{b \in \cA} \hat{p}(b)  \bno{M_b^\top Q_\delta(\hat{p})^\dagger M_b} \,, 
    \end{align*}
    which features a better leading factor than what we obtained with $\alpha = 1/2$ in the general case.
    Again, this choice is not necessarily optimal in any precise sense, but it allows exploiting the structure of this special case in a simple and natural manner.
    Devising a general choice of $\hat{p}$ that can recover improved bounds in special cases as such is a worthy direction for refining the bound of \Cref{cor:simpler-lambda-bound-local}. 
    
\section{Proofs of \texorpdfstring{\Cref{sec:glo}}{Section 5}} \label{appendix:glo}

The proof of \Cref{cor:simpler-lambda-bound-global} relies on the following proposition. (The definition of $B(\cdot)$ is given in \eqref{def:B}.)

\begin{restatable}{reprop}{thmlambdaboundgeneralglobal}
\label{thm:lambda-bound-general-global}
    In globally observable games, it holds that
    \begin{align*} 
        \Lambda^*_\eta \leq 4 \sqrt{\frac{1}{\eta}}  L  \min \begin{cases}
            \sqrt{(1+\beta_{\glo}^2) \min_{\pi \in \Delta_k} \max_{b \in \cA} \bno{M_b^\top U B(\pi)^{-1} U^\top M_b}} \\[7pt]
            \sqrt{\brb{1+\beta_{2,\glo}^2} \min_{\pi \in \Delta_k} \bno{\bm{M}^\top U B(\pi)^{-1} U^\top \bm{M}}} 
        \end{cases} 
    \end{align*}
    provided that
    \begin{equation*}
        {1}/{\eta} \geq (1 + L^2)  \min \begin{cases}
            (1+\beta_{\glo}^2) \min_{\pi \in \Delta_k} \max_{b \in \cA} \bno{M_b^\top U B(\pi)^{-1} U^\top M_b} \\[7pt]
            \brb{1+\beta_{2,\glo}^2} \min_{\pi \in \Delta_k} \bno{\bm{M}^\top U B(\pi)^{-1} U^\top \bm{M}} 
        \end{cases}
    \end{equation*}
    where
    \[
        \beta_{2,\glo} \coloneqq \min_{\glw \in \Mglo} \max_{a,b \in \cA} \norm{\glw_{a,b}} = \max_{a,b \in \cA} \min_{\bm{v} \in \R^n \colon \feat{a} - \feat{b} = \bm{M} \bm{v}} \norm{v} = \max_{a,b \in \cA} \bno{ \bm{M}^\dagger (\feat{a} - \feat{b}) } \,.
    \]
\end{restatable}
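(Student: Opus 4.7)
The plan is to adapt the construction in the proof of \Cref{thm:lambda-bound-general-local}, now making the simpler choice $\hat p = q$ in the decomposition \eqref{eq:lambda-bound-pi-and-p-main-paper}, which zeros out the term $\inprod{\hat p - q, H^\top \loss}$ identically. Without local observability we cannot exploit any loss-dependent reallocation of $q$, so the exploration distribution $\pi$ must shoulder the entire variance term. This forces $\gamma$ to be of order $\sqrt{\eta}$ rather than $\eta$, producing the $1/\sqrt{\eta}$ rate foreshadowed in the section preamble.

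I would take $\pi$ as a minimizer of either $\pi' \mapsto \max_c \bno{M_c^\top U B(\pi')^{-1} U^\top M_c}$ (for the first bound) or $\pi' \mapsto \bno{\bm{M}^\top U B(\pi')^{-1} U^\top \bm{M}}$ (for the second). Any such minimizer satisfies $\col(Q(\pi)) = \col(\bm{M})$ by \Cref{lemma:full-rank-conditions}, so \Cref{lemma:pinv-indentity} and \Cref{lemma:pinv-sum-inequality} together yield
\[
\cE(a,b;\, \gamma\pi + (1-\gamma) q) \;\leq\; \tfrac{2}{\gamma}(\feat{a} - \feat{b})^\top U B(\pi)^{-1} U^\top (\feat{a} - \feat{b}).
\]
This quadratic form admits two complementary upper bounds under global observability: the Cauchy--Schwarz derivation of \eqref{eq:variance-term-loss-observation-bound-global} paired with \Cref{lem:beta-glo-alternative-def} gives $\beta_{\glo}^2 \max_c \bno{M_c^\top U B(\pi)^{-1} U^\top M_c}$, while writing $\feat{a} - \feat{b} = \bm{M} \bm{v}_{a,b}$ and applying the definition of the spectral norm gives $\norm{\bm{v}_{a,b}}^2 \bno{\bm{M}^\top U B(\pi)^{-1} U^\top \bm{M}}$; minimizing $\norm{\bm{v}_{a,b}}$ over admissible representations is attained by $\bm{M}^\dagger(\feat{a} - \feat{b})$, which recovers the three equivalent forms of $\beta_{2,\glo}$ stated in the proposition.

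The same two strategies, together with the direct estimate $\max_c \bno{M_c^\top Q_\delta(\gamma\pi+(1-\gamma) q)^\dagger M_c} \leq (2/\gamma)\max_c \bno{M_c^\top U B(\pi)^{-1} U^\top M_c}$ (and the trivial $\max_c \bno{M_c^\top X M_c} \leq \bno{\bm{M}^\top X \bm{M}}$ for PSD $X$, used in the second case), bound $z(\gamma\pi + (1-\gamma) q)$ by $(2/\gamma)(1+\beta^2) C$, where $\beta \in \{\beta_{\glo},\beta_{2,\glo}\}$ and $C$ is the associated matrix-norm quantity. The admissibility constraint $z(p) \leq 2/(\eta L)$ thus imposes $\gamma \geq \eta L(1+\beta^2) C$.

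Because $\hat p = q$, the functional then simplifies to
\[
\Lambda_{\eta,q}(\gamma\pi + (1-\gamma) q, \loss) \;\leq\; \frac{2\gamma}{\eta} + \frac{2 L^2 \beta^2 C}{\gamma},
\]
uniformly in $(q,\loss)$. Setting $\gamma = \sqrt{\eta}\,L(1+\beta)\sqrt{C}$---manifestly of order $\sqrt{\eta}$---and applying AM--GM yields the stated bound $4\sqrt{1/\eta}\,L(1+\beta)\sqrt{C}$, provided that, under the hypothesized lower bound on $1/\eta$, this choice satisfies both $\gamma \geq \eta L(1+\beta^2) C$ (admissibility in $\Xi_\eta$) and $\gamma \leq 1/2$. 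Since the estimate is uniform in $q$, \Cref{lem:minimax} immediately converts it into the asserted bound on $\Lambda^*_\eta$. The main obstacle is tracking the two $\gamma$-constraints simultaneously against the AM--GM balance; once that is in hand, the rest of the argument is routine.
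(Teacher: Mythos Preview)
Your approach is essentially the paper's: set $\hat p = q$ so the cross term vanishes, pick $\pi$ as a minimizer of the relevant design criterion (so $\col(Q(\pi))=\col(\bm M)$), bound both the variance term and $z(\cdot)$ via the two global representations of $\feat a-\feat b$, and balance $\gamma\sim\sqrt{\eta}$. The two branches and the justification of the three equivalent forms of $\beta_{2,\glo}$ are all as in the paper.

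There is, however, a small but genuine gap in your chosen $\gamma$. The paper first relaxes the variance estimate from $\tfrac{2}{\gamma}L^2\beta^2 C$ to $\tfrac{2}{\gamma}L^2(1+\beta^2)C$ (harmless, and it now matches the $z$-bound), and then takes the AM--GM optimum $\gamma=\sqrt{\eta}\,L\sqrt{(1+\beta^2)C}$. This yields exactly the stated constant $\sqrt{(1+\beta^2)C}$ (your $(1+\beta)\sqrt{C}$ is strictly larger and is not what the proposition asserts) and, more importantly, makes both feasibility constraints collapse cleanly: $\gamma\le 1$ becomes $1/\eta\ge L^2(1+\beta^2)C$, and $\gamma\ge \eta L(1+\beta^2)C$ (membership in $\Xi_\eta$) becomes $1/\eta\ge(1+\beta^2)C$; their conjunction is precisely the stated hypothesis $1/\eta\ge(1+L^2)(1+\beta^2)C$. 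With your choice $\gamma=\sqrt{\eta}\,L(1+\beta)\sqrt{C}$, the condition $\gamma\le 1$ (the paper only needs $\gamma\in(0,1]$, not $\le 1/2$) translates to $1/\eta\ge L^2(1+\beta)^2C$, which is \emph{not} implied by the stated hypothesis: $(1+L^2)(1+\beta^2)-L^2(1+\beta)^2=1+\beta^2-2L^2\beta$ is negative whenever $L^2>\tfrac{1+\beta^2}{2\beta}$. So the ``obstacle'' you flagged is real for your $\gamma$, and the fix is exactly the paper's choice.
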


\begin{proof}
    Fix $\loss \in \Loss$ and $q \in \Delta^*_k$.
    As in the proof of \Cref{thm:lambda-bound-general-local}, we choose a certain distribution $p \in \Xi_\eta$ and bound $\Lambda_{\eta, q}(p,\loss)$.
    Here, we simply set $p = \gamma \pi + (1-\gamma) q$, with $\gamma \in (0,1]$ and $\pi \in \Delta_k$ such that $\col(Q(\pi)) = \col(\bm{M})$. For brevity, let $\overbar{\gamma} = 1-\gamma$.
    With these choices we obtain that
    \begin{align} 
        \Lambda_{\eta, q} (\gamma \pi + \overbar{\gamma} q,\loss) 
        &= \frac{\gamma}{\eta} \inprod{ \pi  - q, H^\top \loss} + \frac{\overbar{\gamma}}{\eta} \inprod{q - q, H^\top \loss} + L^2 \sum_{a,b \in \cA} q(a) q(b) \cE(a,b; \gamma \pi + \overbar{\gamma} q)  \nonumber\\ \label{eq:lambda-bound-pi-and-q}
        &\leq \frac{2 \gamma}{\eta} + L^2 \sum_{a,b \in \cA} q(a) q(b) \cE(a,b; \gamma \pi + \overbar{\gamma} q) \,,
    \end{align}
    where the inequality uses the assumption that $\max_{a,b \in \cA, \loss \in \Loss} | (\feat{a}-\feat{b})^\top \loss | \leq 2$.   
    Fix some $\glw \in \Mglo$, which exists via global observability.
        Then, similarly to \eqref{eq:variance-term-loss-observation-bound-global} in the proof of \Cref{thm:lambda-bound-general-local}, for any $a,b \in \cA$, 
    \begin{align*}
        \cE(a,b; \gamma \pi + \overbar{\gamma} q) &\leq 
        \beta_{\glw}^2
        \max_{c \in \cA} \bno{\brb{Q_\delta(\gamma \pi + \overbar{\gamma} q)^\dagger}^{\nicefrac{1}{2}}M_c}^2 \\
        &\leq \frac{1}{(1-\delta) \gamma} \beta_{\glw}^2
        \max_{c \in \cA} \bno{M_c^\top Q(\pi)^\dagger M_c} \\
        &\leq \frac{2}{ \gamma}\beta_{\glw}^2
        \max_{c \in \cA} \bno{M_c^\top Q(\pi)^\dagger M_c}\,,
    \end{align*}
    where the second step follows from \Cref{lemma:pinv-sum-inequality} using the assumption that $\col(Q(\pi)) = \col(\bm{M})$, and the last step uses that $\delta \leq \nicefrac{1}{2}$.
    Then, choosing $\glw \in \argmin_{\glw' \in \Mglo} \beta_{\glw'}$, we get via \Cref{lem:beta-glo-alternative-def} that
    \begin{align*}
        \sum_{a,b \in \cA} q(a) q(b) \cE(a,b; \gamma \pi + \overbar{\gamma} q) &\leq \frac{2}{\gamma} \beta_{\glo}^2
        \max_{b \in \cA} \bno{M_b^\top Q(\pi)^\dagger M_b} \\
        &\leq \frac{2}{\gamma} \brb{1+\beta_{\glo}^2}
        \max_{b \in \cA} \bno{M_b^\top Q(\pi)^\dagger M_b} \,. 
    \end{align*}
    
    Alternatively, for any $\glw \in \Mglo$, we also have that
    \begin{align*}
        \cE(a,b; \gamma \pi + \overbar{\gamma} q) &= \lrb{\feat{a} - \feat{b}}^\top Q_\delta(\gamma \pi + \overbar{\gamma} q)^\dagger \lrb{\feat{a} - \feat{b}} \\
        &= \bno{\brb{Q_\delta(\gamma \pi + \overbar{\gamma} q)^\dagger}^{\nicefrac{1}{2}}(\feat{a} - \feat{b})}^2 \\
        &= \bno{\brb{Q_\delta(\gamma \pi + \overbar{\gamma} q)^\dagger}^{\nicefrac{1}{2}}\bm{M}\glw_{a,b}}^2  \\
        &\leq \norm{\glw_{a,b}}^2 \bno{\brb{Q_\delta(\gamma \pi + \overbar{\gamma} q)^\dagger}^{\nicefrac{1}{2}}\bm{M}}^2 \\
        &\leq \frac{2}{\gamma} \norm{\glw_{a,b}}^2 \bno{\bm{M}^\top Q( \pi)^\dagger\bm{M}} \,,
    \end{align*}
    again using the assumption that $\col(Q(\pi)) = \col(\bm{M})$.
    With $\glw \in \argmin_{\glw' \in \Mglo} \max_{a,b \in \cA} \norm{\glw'_{a,b}}$ we obtain that
    \begin{align*}
        \sum_{a,b \in \cA} q(a)q(b)  \cE(a,b; \gamma \pi + \overbar{\gamma} q) \leq \frac{2}{\gamma} \beta_{2,\glo}^2 \bno{\bm{M}^\top Q( \pi)^\dagger\bm{M}} \leq \frac{2}{\gamma} \brb{1+\beta_{2,\glo}^2} \bno{\bm{M}^\top Q( \pi)^\dagger\bm{M}} \,,
    \end{align*}
    where 
    \[\beta_{2,\glo} \coloneqq \max_{a,b \in \cA} \bno{ \bm{M}^\dagger (\feat{a} - \feat{b}) } = \max_{a,b \in \cA} \min_{\bm{v} \in \R^n \colon \feat{a} - \feat{b} = \bm{M} \bm{v}} \norm{v} = \min_{\glw \in \Mglo} \max_{a,b \in \cA} \norm{\glw_{a,b}} \,.\]
    The first equality follows via the minimum norm property of the Moore-Penrose inverse, and the second is due to the structure of $\Mglo$ (see also the proof of \Cref{lem:beta-glo-alternative-def} in \Cref{appendix:loc}).
    Let \[v_1(\pi) \coloneqq (1+\beta_{\glo}^2)  \max_{b \in \cA} \bno{M_b^\top U B(\pi)^{-1} U^\top M_b}\] and \[v_2(\pi) \coloneqq \brb{1+\beta_{2,\glo}^2} \bno{\bm{M}^\top U B(\pi)^{-1} U^\top \bm{M}} \,.\]
    Returning back to \eqref{eq:lambda-bound-pi-and-q}, we have shown that
    \begin{equation*}
        \Lambda_{\eta, q} (\gamma \pi + \overbar{\gamma} q,\loss) 
        \leq \frac{2 \gamma}{\eta} + \frac{2}{\gamma} L^2 \min\{v_1(\pi),v_2(\pi)\} \,.
    \end{equation*}
    This expression is minimized at $\gamma = \sqrt{\eta} L \min\{\sqrt{v_1(\pi)},\sqrt{v_2(\pi)}\}$, yielding that
    \begin{align*}
        \Lambda_{\eta, q} (\gamma \pi + \overbar{\gamma} q,\loss) &\leq 4 \sqrt{\frac{1}{\eta}}  L \min\{\sqrt{v_1(\pi)},\sqrt{v_2(\pi)}\} \,.
    \end{align*}
    For $\gamma \leq 1$ to hold, it must hold that 
    \begin{equation*}
        {1}/{\eta} \geq L^2 \min\{v_1(\pi),v_2(\pi)\} \,.
    \end{equation*}
    Consider also that 
    \begin{align*}
        z(\gamma \pi + \overbar{\gamma} q)
        &= \max_{a,b \in \cA} \cE(a,b; \gamma \pi + \overbar{\gamma} q)  + \max_{c \in \cA}\norm{M_{c}^\top Q_\delta(\gamma \pi + \overbar{\gamma} q)^\dagger M_{c}} \\
        &\leq \frac{2}{\gamma} \brb{1 + \beta_{\glo}^2}  \max_{b \in \cA} \bno{M_b^\top Q(\pi)^\dagger M_b} \,.
    \end{align*}
    And that at the same time,
    \begin{align*}
        z(\gamma \pi + \overbar{\gamma} q)
        &=\max_{a,b \in \cA} \cE(a,b; \gamma \pi + \overbar{\gamma} q)  + \max_{c \in \cA}\norm{M_{c}^\top Q_\delta(\gamma \pi + \overbar{\gamma} q)^\dagger M_{c}}\\
        &\leq \frac{2}{\gamma}\beta_{2,\glo}^2 \bno{\bm{M}^\top Q( \pi)^\dagger\bm{M}} +  \frac{2}{\gamma} \max_{b \in \cA} \bno{M_b^\top Q(\pi)^\dagger M_b}\\
        &\leq \frac{2}{\gamma}\brb{1+\beta_{2,\glo}^2} \bno{\bm{M}^\top Q( \pi)^\dagger\bm{M}}  \,,
    \end{align*}
    where the second inequality holds since the columns of every $M_b$ are also columns of $\bm{M}$ via its definition; hence, we can construct a matrix $C_b$ with $\norm{C_b} \leq 1$ such that $\bm{M} = M_b C_b$.
    We have then shown that,
    \begin{align*}
        z(\gamma \pi + \overbar{\gamma} q) &\leq \frac{2}{\gamma} \min\{v_1(\pi),v_2(\pi)\} \\
        &= \frac{2}{\sqrt{\eta} L} \min\{\sqrt{v_1(\pi)},\sqrt{v_2(\pi)}\} \,.
    \end{align*}
    Therefore, for $\gamma \pi + \overbar{\gamma} q \in \Xi_\eta$ to hold, it suffices that the last expression is no larger than $\nicefrac{2}{\eta L}$; meaning that $\eta$ must also satisfy that
    \[
        1/\eta \geq \min\{v_1(\pi),v_2(\pi)\} \,.
    \]
    Thus, if we pick $\pi \in \argmin_{\pi' \in \Delta_k } \min\{v_1(\pi'),v_2(\pi')\}$,\footnote{Note that the functions $v_1(\pi)$ and $v_2(\pi)$ do attain their minimum in $\pi$ over $\Delta_k$. A minimizer $\pi^*$ of either function must satisfy our requirement that $\col(Q(\pi^*)) = \col(\bm{M})$ as this is equivalent to $B(\pi)$ being positive definite, see \Cref{lemma:full-rank-conditions}, which is necessary for $B(\pi)$ to be non-singular.}
    then enforcing that
    \begin{equation*}
        1/\eta \geq (1 + L^2) \min_{\pi \in \Delta_k} \min\{v_1(\pi),v_2(\pi)\} = (1 + L^2) \min\Bcb{ \min_{\pi \in \Delta_k}v_1(\pi), \min_{\pi \in \Delta_k}v_2(\pi)}
    \end{equation*}
    suffices to have
    \begin{align*}
        \min_{p \in \Xi_\eta} \Lambda_{\eta, q}(p,\loss) &\leq 4 \sqrt{\frac{1}{\eta}}  L  \min_{\pi \in \Delta_k} \min\{\sqrt{v_1(\pi)},\sqrt{v_2(\pi)}\} \\
        &=  4 \sqrt{\frac{1}{\eta}}  L \min\Bcb{ \min_{\pi \in \Delta_k} \sqrt{v_1(\pi)},  \min_{\pi \in \Delta_k}\sqrt{v_2(\pi)}}\\
        &=  4 \sqrt{\frac{1}{\eta}}  L \min\Bcb{ \sqrt{\min_{\pi \in \Delta_k} v_1(\pi)},  \sqrt{\min_{\pi \in \Delta_k}v_2(\pi)}}\,.
    \end{align*}
    Since this bound does not depend on $\loss$ or $q$, it is also an upper bound for $\Lambda^*_{\eta,q}$ (via \Cref{lem:minimax}) and $\Lambda^*_{\eta}$.
\end{proof}

\corsimplerlambdaboundglobal*
\begin{proof}
    It follows from the proof of \Cref{cor:simpler-lambda-bound-local} that
    \begin{align*}
        \sqrt{\min_{\pi \in \Delta_k} \max_{b \in \cA}  \bno{M_b^\top U B(\pi)^{-1}U^\top M_b}} \leq \sqrt{\min\{\rank(\bm{M}), w^*\}} \,.
    \end{align*}
    Similarly, restricting $\pi$ to be uniform over a subset of actions gives that
    \begin{align*}
        \min_{\pi \in \Delta_k} \bno{\bm{M}^\top U B(\pi)^{-1} U^\top \bm{M}} \leq \min_{S \subseteq \cA} |S| \bno{\bm{M}^\top U \brb{\summ_{s \in S} U^\top M_s M_s^\top U}^{-1}U^\top \bm{M}} \eqqcolon u^* \,.
    \end{align*}
   This is again bounded by $k$, which is attained with $S = \cA$.
   The theorem then follows from \Cref{thm:lambda-bound-general-global}.
\end{proof}
    
\section{Implementation}
\label{appendix: computational complexity}
Define
\begin{align*}
    \Phi_{\eta,q}(p) & \coloneqq \max_{\loss\in\Loss}\Lambda_{\eta,q}(p,\loss) \\
    &=\frac{1}{\eta} \underbrace{\max_{\loss\in\Loss} \langle p-q,\,H^\top\loss\rangle}_{\eqqcolon \sigma_\Loss (H(p-q))} + L^2 \sum_{a,b \in \cA}q(a)q(b)
    \underbrace{\lrb{\feat{a} - \feat{b}}^\top  Q_\delta(p)^\dagger \lrb{\feat{a} - \feat{b}}}_{\eqqcolon \cE(a,b;p)} \,,
\end{align*}
where $\sigma_\Loss$ is the support function of $\Loss$, which we assumed to be compact.
In \Cref{alg:main}, we are asked to solve an optimization problem of the following form at every round:
\begin{equation*}
    \min_{p\in \Xi_\eta} \Phi_{\eta,q}(p), \qquad \Xi_\eta \coloneqq  \{ p \in \Delta_k \colon z(p) \leq \nicefrac{2}{\eta L} \} \,,
\end{equation*}
where $\eta > 0$ and $q \in \Delta_k$ are given and $z(p)\coloneqq \max_{a,b\in\mathcal{A}}\cE(a,b;p) + \max_{c\in\cA} \|M_c^\top Q_\delta(p)^\dagger M_c\|$. 
This was shown to be a convex problem in the proof of \Cref{lem:minimax}.
Going further, we point out here that it can in many interesting cases be represented as a semidefinite program, a well-structured and widely-studied class of convex programs that can be solved in polynomial time via interior‑point methods \cite{BenTalNemirovski01}. 
In terms of semidefinite representability, the crucial components to study here are the functions $\cE(a, b;\cdot)$, $\|M_c^\top Q_\delta(\cdot)^\dagger M_c\|$, and $\sigma_\Loss(\cdot)$; that the program is semidefinite representable (SDR) would then follow via the combination rules outlined in \cite[Chapters 3 and 4]{BenTalNemirovski01}, which include taking the intersection of SDR sets, taking the maximum of finitely many SDR functions, and summing (with non-negative weights) SDR functions.

Fix some positive integer $r$ and a $d \times r$ matrix $C$ that satisfies $\col(C) \subseteq \col(\bm{M})$, and consider the mapping $p \mapsto \|C^\top Q_\delta(p)^\dagger C\|$ for $p \in \Delta_k$. 
This mapping, assuming global observability, subsumes the first two functions specified above; namely, $p \mapsto \cE(a, b; p)$ and $p \mapsto \|M_c^\top Q_\delta(p)^\dagger M_c\|$ for any $a,b,c \in \cA$.
We show now that the mapping in question is SDR by showing that the set 
\[\Gamma \coloneqq \bcb{ (p,G,t) \in \Delta_k \times \cS^r \times \R \mid C^\top Q_\delta(p)^\dagger C \preceq G \:\: \text{and} \:\: \norm{G} \leq t} \]
coincides with the solution set of a number of linear matrix inequalities, where $\cS^r$ is the set of symmetric $r \times r$ matrices.
This is sufficient since the epigraph of the mapping in question coincides with the projection of $\Gamma$ onto the $(p,t)$ space.
We firstly note that the semidefinite representability $\Delta_k$ is immediate; it can be characterized by a set of simple linear inequalities.
Consider now the following symmetric matrix:
\[
J(p,G)\ :=\
\begin{bmatrix}
Q_\delta(p) & C \\
C^\top & G
\end{bmatrix}.
\]
By the properties of the (generalized) Schur complement (see, e.g., \cite[Theorem~1.20]{zhang2006schur}), we have that 
\[
J(p,G) \succeq 0
\quad\Longleftrightarrow\quad
Q_\delta(p)\succeq 0\,,\ \ \col(C) \subseteq \col(Q_\delta(p))\,,\ \ \text{and} \ \ C^\top Q_\delta(p)^\dagger C \preceq G \,.
\]
For any $p \in \Delta_k$, the first two conditions are always satisfied thanks to the structure of $Q_\delta(p)$ (convex combination of positive semidefinite matrices), the assumption that $\col(C) \subseteq \col(\bm{M})$, and the fact that $\col(Q_\delta(p)) = \col(\bm{M})$ as asserts \Cref{lemma:smoothed_properties}.
Moreover, it holds that
\[ \norm{G} \leq t \quad \Longleftrightarrow \quad G \preceq t \bm{I}_r \:\: \text{and} \:\: G \succeq - t \bm{I}_r \,.\]
Hence, for $p \in \Delta_k$ and $G \in \cS^r$,
\[
    (p,G,t) \in \Gamma \quad \Longleftrightarrow \quad  J(p,G) \succeq 0\,, \:\: G \preceq t \bm{I}_r \,, \:\: \text{and} \:\: G \succeq - t \bm{I}_r \,;  
\]
that is, membership in $\Gamma$ can be characterized by linear matrix inequalities (recalling that $Q_\delta(p)$ is affine in $p$), implying it is SDR.

As for $\sigma_\Loss$, consider now the common scenario when $\Loss = \cB_p(c) \coloneqq \bcb{ \loss \in \R^d \mid \norm{\loss}_p \leq c}$ for some $p\geq 1$ and $c > 0$; that is, $\Loss$ is a centered $L_p$ ball.
The support function then becomes $\sigma_\Loss(x) = c \norm{x}_q$ with $1/p + 1/q = 1$.
For rational $q$, the $L_q$ norm is cone quadratic representable \cite[Chapter 3]{BenTalNemirovski01}, hence SDR (note that the input to $\sigma_\Loss$ is $H(p-q)$, which is affine in our variable $p$ and thus preserves semidefinite representability).
This also holds when $p=1$ ($q=\infty$) and $p=\infty$ ($q=1$).

Another common case is when $\Loss$ is $\cX^\circ$, the absolute polar set of $\cX$.\footnote{Note that for $\cX^\circ$ to be compact, $\cX$ must span $\R^d$. } 
Here, $\sigma_\Loss(H(p-q))$ is the optimal value of the following program:
\[\max_{\loss \in \R^d} \sum_{a \in \cA} (p(a) - q(a)) \feat{a}^\top \loss \quad \text{s.t.} \quad | \feat{a}^\top \loss | \leq 1 \:\: \forall a \in \cA \,. \]
Via strong duality, it is straightforward to show that \[\sigma_\Loss(H(p-q)) = \min_{\bm{w} \in \R^k \colon H(p-q) = H \bm{w}} \norm{\bm{w}}_1 = \min_{\bm{u}_1, \bm{u}_2 \in \R^k_{\geq 0} \colon H(p-q) = H (\bm{u}_1 - \bm{u}_2)} (\bm{u}_1 + \bm{u}_2)^\top \bm{1}_k \,.\]
This is the minimum value of a linear function under linear constraints, which can be easily incorporated into the full program adding $\bm{u}_1$ and $\bm{u}_2$ as auxiliary variables.

\section{Adaptive Learning Rate}
\label{appendix: adaptive learning rate}

\begin{algorithm} [h]
    \caption{Adaptive Anchored Exploration-by-Optimization}
    \label{alg:adaptive}
    \begin{algorithmic}[1]
        \State \textbf{input:} stability parameter $\delta \in (0,\nicefrac{1}{2}]\,,$ sub-optimality tolerance $\varepsilon \geq 0\,,$ scale parameter $L\geq \scale\,,$ $B > 0$ 
        \State \textbf{initialize:} $\eta_1 = \min\{B^{-1},\sqrt{\log k}\}$, $\hat{y}_0(a) = 0$ $\forall a\in \cA$   
        \For{$t=1,\dotsc,T$}
            \State $\forall a\in \cA \,,$ set  $q_t(a) \propto { \indicator{a \in \cA^*} \exp\brb{-\eta_t \sum_{s=1}^{t-1} \hat{y}_s(a) }}$
            \State choose $\tilde{p_t} \in \Delta_k$ such that 
            $\max_{\loss \in \Loss} \Lambda_{\eta_t, q_t}(\tilde{p}_t,\loss) \leq \Lambda^*_{\eta_t} + \varepsilon \:\: \text{and} \:\: z(\tilde{p}_t) \leq \frac{2}{\eta_t L}$
            \State set $p_t = (1-\delta) \tilde{p}_t + \delta \bm{1}_k/k$
            \State execute $A_t \sim p_t$ and observe $\signal_t = M_t^\top  \loss_t$
            \State $\forall a\in \cA \,,$  set $\hat{y}_t(a) = (\feat{a} - H q_t)^\top Q(p_t)^\dagger M_{A_t} \signal_t$
            \State set $V_t = \max\bcb{ 0, \max_{\loss \in \Loss} \Lambda_{\eta_t, q_t}(\tilde{p}_t,\loss) }$
            and $\eta_{t+1} = \min\Bigl\{\frac{1}{B},\;\sqrt{\frac{\log k}{1 + \sum_{s=1}^{t}V_s}}\Bigr\}$
        \EndFor
    \end{algorithmic}
\end{algorithm}

Following \citet{lattimore2020exploration}, we briefly describe here an `adaptive' variant of \Cref{alg:main} that tunes the learning rate in an online fashion, sidestepping (for the most part) the need to have prior knowledge concerning the value of $\Lambda^*_\eta$.
This variant is described in \Cref{alg:adaptive}, with the main difference being the use of a learning rate schedule $(\eta_t)_t$, where $\eta_t$ is set based on the attained values at the optimization problems of past rounds.
Note that the algorithm is not completely parameter-free, besides the stability ($\delta$) and tolerance ($\epsilon$) parameters (which can in principle be reduced at will), it requires a sufficiently large constant $B > 0$ as input to guarantee that the learning rates are sufficiently small to meet the requirements of Theorems \ref{cor:simpler-lambda-bound-local} and \ref{cor:simpler-lambda-bound-global}.
However, $B$ can be picked conservatively as it only affects the regret through an additive term as shown below.

Adapting our analysis along the same lines as the proof of Theorem 6 in \cite{lattimore2020exploration}, one can show (analogously to \Cref{prop: regret UB with Lambda star}) that \Cref{alg:adaptive} satisfies (assuming global observability)
\begin{align*}
    R_T &\leq 2\delta T + \frac{\log k}{\eta_T} + \E\lsb{\sum_{t=1}^T \eta_t V_t} \\
    &\leq 2\delta T + 5\mathbb{E}\left[\sqrt{\left(1 + \sum_{t=1}^T V_t\right)\log k}\right]+\mathbb{E}\Bigl[\max_{t\in[T]}V_t\Bigr]\sqrt{\log k}+B\log k \,.
\end{align*}

\paragraph{Locally observable games.} 
Via \Cref{cor:simpler-lambda-bound-local}, there exist constants $\alpha, B'$ such that $\Lambda^*_\eta \leq \alpha$ given that $\eta \leq 1 / B'$.
Hence, as long as $B \geq B'$, we have that $V_t \leq \Lambda^*_{\eta_t} + \epsilon \leq \alpha + \epsilon$.
Thus,
\[
   R_T \leq 2\delta T +  5\sqrt{\left(1 + (\alpha+\epsilon) T \right)\log k} + (\alpha+\epsilon) \sqrt{\log k} + B \log k \,,
\]
where, assuming $\delta$ and $\epsilon$ are picked sufficiently small (e.g., $o(T^{-1/2})$), the dominant term is of order $\sqrt{\alpha T \log k}$.

\paragraph{Globally observable games.} 
\Cref{cor:simpler-lambda-bound-global} asserts that there exist constants $\alpha, B'$ such that $\Lambda^*_\eta \leq \alpha/\sqrt{\eta}$ given that $\eta \leq 1 / B'$.
Hence, assuming that $B \geq \max\{1,B'\}$, we get that 
\[
    V_t \leq \Lambda^*_{\eta_t} + \epsilon \leq \alpha/\sqrt{\eta_t} + \epsilon \leq \alpha/\sqrt{\eta_t} 
    + \epsilon /\sqrt{\eta_t} \,.
\]
Then, from the proof of Proposition 7 in \cite{lattimore2020exploration} we obtain that
\begin{multline*}
    R_T \leq 2 \delta T + 5\lrb{ \frac{\alpha + \epsilon}{(\log k)^{\nicefrac{1}{4}}} T + \max\{1, B^2 \log k\}^{\nicefrac{3}{4}}}^{\nicefrac{2}{3}} \sqrt{\log k} 
    \\+ \alpha \lrb{ \frac{\alpha + \epsilon}{(\log k)^{\nicefrac{1}{4}}} T + \max\{1, B^2 \log k\}^{\nicefrac{3}{4}}}^{\nicefrac{1}{3}}(\log k)^{\nicefrac{1}{4}} + \epsilon + B\log k \,,
\end{multline*}
where, assuming $\delta$ and $\epsilon$ are picked sufficiently small (e.g., $o(T^{-1/3})$), the dominant term is of order $(\alpha T)^{\nicefrac{2}{3}}(\log k)^{\nicefrac{1}{3}}$.

As a final remark, note that $\epsilon$ is our error tolerance when minimizing $\max_{\loss \in \Loss} \Lambda_{\eta_t, q_t}(p,\loss)$ in $p \in \Xi_{\eta_t}$, where the optimal value is $\Lambda^*_{\eta_t, q_t}$.
The properties of this optimization program are discussed in \Cref{appendix: computational complexity}.
On the other hand, neither algorithm undertakes the (generally) more challenging task of computing $\Lambda^*_{\eta_t}$ up to some tolerance.
Still, when tuning a fixed learning rate, this task is somewhat unavoidable if tight regret guarantees are sought, and therein lies the utility of the online learning rate schedule described here.

\section{Examples - Continued}
\label{appendix: examples}
\subsection{Linear bandits}
Since $\feat{a} = M_a$ for all $a \in \cA$ and $\Loss = \cX^\circ$, $\max_{a \in \cA, \loss \in \Loss} \abs{M_a^\top \loss} \leq 1$.
Hence, $\scale <= 1$ (see its definition in \Cref{prop:decomposition-linear-estimator}) and $L$ can be chosen as $1$.
Moreover, trivially, $\feat{a} - \feat{b} = M_a - M_b = \bm{M} (e_a - e_b)$, which shows that the game is locally observable.
Finally, it is immediate that for any $\loss \in \Loss$ and $a^* \in \argmin_{a \in \cA} \feat{a}^\top \loss$, $(e_a - e_{a^*})_{a \in \cA^*} \in \Mloc_\loss$, and consequently, $\beta_{\loc} \coloneqq \max_{\loss \in \Loss} \min_{\bm{\lambda} \in \Mloc_{\loss}} \max_{a \in \cA^*}\norm{\bm{\lambda}_a}_1 \leq 2$. 
We then obtain via \Cref{cor:simpler-lambda-bound-local} (and \Cref{lem:beta-glo-loc-comparison}) that $\Lambda_\eta^* \lesssim d$ (i.e., $\Lambda_\eta^*$ is no larger than $d$ up to a universal constant).

\subsection{Linear dueling bandits}
\label{subsec: appendix continue example linear dueling bandits}
Recall that here, the action set has the form $\cA = [m] \times [m]$ for some positive integer $m$.
With a fixed feature mapping $a \mapsto \feat{a}$ from $[m]$ to $\R^d$, we have that for $(a,b) \in \cA$, $\feat{a,b} \coloneqq \feat{a} + \feat{b}$ and $M_{a,b} = \feat{a} - \feat{b}$.
If $\Loss = ((\feat{a})_{a \in \cA})^\circ$, then $\scale \leq 2$ and $L$ can be taken as $2$. 
As pointed out in \cite{kirschner2023linear}, the Pareto optimal actions are those of the form $(a,a)$ for $a \in [m]$.
Fix $\loss \in \Loss$, and let $(a,b) \in \argmin_{(a',b') \in \cA} \inprod{\feat{a',b'}, \loss}$. (Here, $a$ and $b$ are not necessarily distinct and, in any case, satisfy $\feat{a} = \feat{b} = \min_{c \in [m]} \inprod{\feat{c}, \loss }$.)
Now, for any $(c,c) \in \cA$, we can write
\begin{align*}
    \feat{c,c} - \feat{a,b}  
    =  \feat c + \feat c -  \feat a - \feat b 
    = M_{c,a} + M_{c,b} \,.
\end{align*}
At the same time, $\inprod{\feat{c,a} - \feat{c,c}, \loss} = \inprod{\feat{a} - \feat{c}, \loss} \leq 0$ and $\inprod{\feat{c,b} - \feat{c,c}, \loss} = \inprod{\feat{b} - \feat{c}, \loss} \leq 0$.
This shows that the game is locally observable (see \Cref{lem:better-actions}) and that $\beta_{\loc} \leq 2$ as in the linear bandit case. 
Hence, it also holds here that $\Lambda_\eta^* \lesssim d$.

\subsection{Learning with graph feedback}\label{subsec: appendix continue example graph feedback}

The game here, to recall, is characterized by an undirected graph 
$\cG=(\cA,E)$
and we have that $d = k$, $\feat{a} = e_a$, and $M_a \in \R^{k \times |N_\cG(a)|}$ has the vectors $(e_b)_{b \in N_\cG(a)}$ as columns, where $N_\cG(a)$ is the neighborhood of $a$ in $\cG$ (which need not include $a$).
In the terminology of \cite{alon2015}, a graph is said to be weakly observable if $\forall a \in \cA$, $\exists b \in \cA \colon a \in N_\cG(b)$, while in a strongly observable graph, it holds that $\forall a \in \cA$, either $a \in N_\cG(a)$, $a \in N_\cG(b)$ $\forall b \in \cA$, or both.
Here, and in all the examples to follow in this section, all actions are Pareto optimal.
We consider $\Loss$ in this example to be $\cB_\infty(1)$, which coincides with $((e_a)_{a \in [d]})^\circ$.
Note that $\max_{a \in \cA, \loss \in \Loss} \norm{M_a^\top \loss} = \max_{a \in \cA} \sqrt{|N_{\cG}(a)|}$, which can be as large as $\sqrt{k}$, so we derive in the following a tighter bound on $\scale$. 

Recall from \Cref{prop:decomposition-linear-estimator} that 
\[
    \omega \coloneqq \sup_{a,b,c \in \cA,\: p \in \Delta_k,\: \loss \in \Loss } \frac{| (\feat{b} - \feat{c})^\top  Q_\delta(p)^\dagger  M_{a} M_a^\top \loss|}{\bno{M_{a}^\top  Q_\delta(p)^\dagger (\feat{b} - \feat{c}) } } \,.
\]
Now fix $a,b,c \in \cA$, 
$p \in \Delta_k$, and $\loss \in \Loss$. Firstly, the Cauchy-Schwarz inequality gives that
\[
    | (\feat{b} - \feat{c})^\top  Q_\delta(p)^\dagger  M_{a} M_a^\top \loss| \leq \bno{M_{a}^\top  Q_\delta(p)^\dagger (\feat{b} - \feat{c}) }_1  \norm{M_a^\top \loss}_\infty \,.
\]
Notice now that $\norm{M_a^\top \loss}_\infty \leq \norm{\loss}_\infty \leq 1$.
By the structure of the observation matrices, $Q_\delta(p)$ is a diagonal matrix with $(P(a))_{a \in \cA}$ as the diagonal entries, where 
\[
    P(a) \;:=\; \delta + (1-\delta) \sum_{b \in N_{\cG}(a)} p(b) \,.
\]
Hence, $Q_\delta(p)^\dagger (\feat{b} - \feat{c}) = P(b)^{-1} e_b - P(c)^{-1} e_c$, and consequently, $M_{a}^\top  Q_\delta(p)^\dagger (\feat{b} - \feat{c})$ has at most two possibly non-zero entries: 
\[
    \alpha \coloneqq P(b)^{-1} \indicator{b \in N_{\cG}(a)} \qquad \text{and} \qquad  \beta \coloneqq - P(c)^{-1} \indicator{c \in N_{\cG}(a)} \,.
\]
Then, using that $\abs{\alpha} + \abs{\beta} =  \sqrt{(\abs{\alpha} + \abs{\beta})^2} \leq \sqrt{2} \sqrt{\alpha^2 + \beta^2}$, we get that
\[
    \bno{M_{a}^\top  Q_\delta(p)^\dagger (\feat{b} - \feat{c}) }_1 \leq \sqrt{2} \bno{M_{a}^\top  Q_\delta(p)^\dagger (\feat{b} - \feat{c}) } \,, 
\]
which implies via the arguments above that 
\[
    | (\feat{b} - \feat{c})^\top  Q_\delta(p)^\dagger  M_{a} M_a^\top \loss| \leq \sqrt{2} \bno{M_{a}^\top  Q_\delta(p)^\dagger (\feat{b} - \feat{c}) } \,,
\]
and consequently, $\scale \leq \sqrt{2}$. Alternatively, if $\Loss = [0,1]^k$, the non-negativity of the losses and the structure of $M_a^\top Q_\delta(p)^\dagger (\feat{b} - \feat{c})$ described above yield that
\begin{align*}
    | (\feat{b} - \feat{c})^\top  Q_\delta(p)^\dagger  M_{a} M_a^\top \loss| &\leq \max\{|\alpha|, |\beta|\} \\
    &= \bno{M_{a}^\top  Q_\delta(p)^\dagger (\feat{b} - \feat{c}) }_\infty \leq \bno{M_{a}^\top  Q_\delta(p)^\dagger (\feat{b} - \feat{c}) } \,.
\end{align*}
Hence, it simply holds in this variant that $\scale \leq 1$.

Next, we study the problem-dependent terms appearing in the bounds of \Cref{cor:simpler-lambda-bound-local} and \Cref{cor:simpler-lambda-bound-global} in weakly and strongly observable observable graphs respectively, leading to a bound of order $\alpha(\cG)$ on $\Lambda_\eta^*$ in the former and a bound of order $\sqrt{\delta(\cG)/\eta}$ in the latter.

\paragraph{Strongly observable graphs.} 
Fix $\loss \in \Loss$, as mentioned in \Cref{sec:loc}, we aim to find $\bm{\lambda} \in \Mloc_\ell$ that is (essentially) supported on an independent set and satisfying $\beta_{\bm{\lambda}} \leq 2$.
Let $a^* \in \argmin_{a \in \cA} \inprod{\feat{a}, \loss} $.
Notice here that $\cA^* = \cA$ and $\inprod{\feat{a}, \loss} = \loss(a)$.
We construct such a $\bm{\lambda}$ by iteratively constructing two functions $g_1,g_2 \colon \cA \setminus\{a^*\}  \rightarrow \cA$ such that for all $a \in \cA \setminus \{a^*\}$,
\begin{enumerate} [label=\roman*)]
    \item $a \in N_\cG(g_1(a)) $
    \item $a^* \in N_\cG(g_2(a)) $
    \item $\loss(g_1(a)) \leq \loss(a) $
    \item $\loss(g_2(a)) \leq \loss(a) $
    \item $|\img(g_1) \cup \img(g_2)| \leq \alpha(\cG) + 1 \,.$
\end{enumerate}
Fix $a\in \cA \setminus \{a^*\}$.
Note that for $\bm{\lambda} \in \Mloc_\ell$ and $b \in \cA$, $\bm{\lambda}_a(b) \in \R^{|N_{\cG}(b)|}$ with each coordinate corresponding to a neighbor of $b$. 
Since, by assumption, $a \in N_\cG(g_1(a)) $, we set $\bm{\lambda}_a(g_1(a)) = e_a$ where we abuse notation and use $e_a$ in this expression as the indicator vector in $\R^{|N_{\cG}(b)|}$ for the coordinate corresponding to $a$.
Similarly, set $\bm{\lambda}_a(g_2(a)) = -e_{a^*}$ using that $a^* \in N_\cG(g_2(a)) $.
While for $b \notin \{ g_1(a), g_2(a) \}$, set $\bm{\lambda}_a(b)= \bm{0}$.
Finally, we naturally set $\bm{\lambda}_{a^*}(b) = \bm{0}$ for all $b \in \cA$.
Using the assumed properties above, we get that indeed $\feat{a} - \feat{a^*} = \bm{M} \bm{\lambda}_a$,
$\bm{\lambda} \in \Mloc_\loss$ (since $\loss(g_1(a)),\loss(g_2(a)) \leq \loss(a)$), $\beta_{\bm{\lambda}} \leq 2$, and $\supp(\bm{\lambda}) \leq \alpha(\cG) + 1$, as desired.

The two functions can be constructed using the iterative node elimination procedure detailed in \Cref{alg:graph-construction}.
This construction satisfies the required properties for $g_1$ and $g_2$ as listed above.
In particular, $\img(g_1)$ is an independent set, and $|\img(g_2) \setminus \img(g_1)| \leq 1$.
The arguments laid so far illustrate that the game is locally observable (see \Cref{lem:better-actions}), and, moreover, that the first addend in the bound of \Cref{cor:simpler-lambda-bound-local} is of order $\alpha(\cG)$.
What remains now is bounding $w^*$ (note that $\beta_{\glo} \leq 2$ using \Cref{lem:beta-glo-loc-comparison} and the fact that $\beta_{\loc} \leq 2$).
Let $S^* = \img(g_1) \cup \img(g_2)$, which is a total dominating set (that is, every node in the graph, including the ones in $S^*$, is connected to a node in $S^*$). Then,
\begin{align*}
    w^* &\coloneqq \min_{S \subseteq \cA} |S| \max_{b \in \cA} \bno{M_b^\top  \brb{\summ_{s \in S}  M_s M_s^\top}^{-1} M_b} \\
    &\leq  |S^*| \max_{b \in \cA} \bno{M_b^\top  \brb{\summ_{s \in S^*}  M_s M_s^\top }^{-1} M_b} \leq |S^*| \max_{b \in \cA} \bno{M_b^\top  M_b} \leq |S^*| \leq \alpha(\cG) + 1 \,,
\end{align*}
where we have used that $\summ_{s \in S^*}  M_s M_s^\top$ is a diagonal matrix where every entry on the diagonal is lower bounded by $1$ (thanks to $S^*$ being a total dominating set).

\begin{algorithm} [ht]
    \caption{Constructing $g_1$ and $g_2$}
    \label{alg:graph-construction}
    \begin{algorithmic}
        \State \textbf{input:} undirected graph $\cG = (\cA, E)$, $\loss \in \R^d$, $a^* \in \argmin_{a \in \cA} \loss(a)$
        \State\textbf{output:} $g_1,g_2 \colon \cA \setminus\{a^*\}  \rightarrow \cA$
        
        \State \textbf{initialize:} $\cG' \gets \cG$   
        \While{$\cG'$ contains at least one vertex}
            \IfThenElse{$V_{\cG'} = \cA$}{$b \gets a^*$}{select $b \in \argmin_{a \in V_{\cG'}} \ell(a)$} \\
            \Comment{\textcolor{gray}{$V_{\cG'}$ denotes the set of all vertices in $\cG'$}}
            \State set $g_1(a) = b$ for all $a \in N_{\cG'}(b) \setminus \{a^*\}$ \\
            \Comment{\textcolor{gray}{note that $b \in N_{\cG'}(b)$ if $b \neq a^*$, otherwise $b \in N_{\cG}(a^*)$ implying it has already been eliminated}}
            \State $\cG' \gets \cG'[V_{\cG'} \setminus (N_{\cG'}(b) \cup \{b\} ) ]$ \Comment{\textcolor{gray}{for $S \subseteq V_{G'}$, $\cG'[S]$ denotes the sub-graph induced by $S$}}
        \EndWhile
        \If{$a^* \in N_{\cG}(a^*)$}
        \State set $g_2(a) = a^*$ for all $a \in \cA \setminus a^*$
        \Else \Comment{\textcolor{gray}{in this case, $N_{\cG}(a^*) = \cA \setminus \{a^*\}$ and $g_1(a) = a^*$ for all $a \in \cA \setminus \{a^*\}$}}
        \State select $c \in \argmin_{a\in \cA \setminus a^* } \loss(a)$
        \State set $g_2(a) = c$ for all $a\in \cA \setminus a^*$
        \EndIf \\
        \Return $g_1$ and $g_2$
    \end{algorithmic}
\end{algorithm}

\paragraph{Weakly observable graphs.} 
It is immediate to verify that weakly observable graphs are globally observable with $\beta_{\glo} \leq 2$.
Moreover, $w^*$ is easily seen to be bounded by the total domination number $\delta(\cG)$ (size of a minimal total dominating set) by following the same argument laid above in the strongly observable case.

\subsection{Full information}
This is a special case of strongly observable graphs, with the graph being a clique (fully connected graph) with self loops.
For this graph, $\alpha(\cG)=1$; hence, the arguments laid out in the previous example yield that $\Lambda^*_\eta$ is simply upper bounded by a universal constant (independent of the number of actions).
In fact, since $M_a$ is the identity matrix for all $a$, it is easily seen that $w^*=1$.
Moreover, the construction we described for strongly observable graphs (\Cref{alg:graph-construction}) trivially yields, for every $\loss \in \Loss$, weights $\bm{\lambda} \in \Mloc_\loss$ that satisfy $\supp(\bm{\lambda}) = \{a^*\}$ (with $a^* \in \argmin_{a \in \cA} \feat{a}^\top \loss$) and $\beta_{\bm{\lambda}} = \max_{a \in \cA} \norm{\bm{\lambda}_a(a^*)} = \max_{a \in \cA} \norm{e_a - e_{a^*}} = \sqrt{2}$.

\subsection{Bandits with ill-conditioned observers}
In this game: $d = k$,
$\feat{a} = e_a$, and $M_a = (1-\varepsilon)\bm{1}_k/k + \varepsilon e_a$ for some $\varepsilon \in (0,1]$.
Moreover, we take $\Loss = \cB_\infty(1)$.
Since $M_a \in \Delta_k$ for all $a$, we have that $\norm{M_a}_1 = 1$.
Hence, the Cauchy-Schwarz inequality gives that $\max_{a \in \cA, \loss \in \Loss} \abs{M_a^\top \loss} \leq 1$, implying that $\scale \leq 1$.
As noted in \Cref{sec:loc}, $e_a - e_b = \nicefrac{1}{\varepsilon} (M_a-M_b)$ for any $a,b \in \cA$.
Hence, it holds for any $\loss \in \Loss$ and $a^* \in \argmin_{a \in \cA} \feat{a}^\top \loss$ that $(e_a/\epsilon - e_{a^*}/\epsilon)_{a \in \cA} \in \Mloc_\loss$, and consequently, that $\beta_{\loc} \leq \max_{a,b \in \cA}\norm{e_a/\epsilon - e_{b}/\epsilon}_1 = 2 / \epsilon$. 
\Cref{cor:simpler-lambda-bound-local} then gives that $\Lambda^*_\eta \lesssim d / \epsilon^2$.

\subsection{Learning with composite graph feedback}
\label{subsec: appendix continue example on composite feedback graphs}
This game is a variant of the graph feedback problem (on an undirected graph $G=(\cA,E)$ \emph{with all self-loops}) where, again, $d = k$ and $\feat{a} = e_a$; however, the feedback vector $M_a$ is the $a$-th row of the $k \times k$ matrix $D^{-1}A$; hence $\bm{M} = (D^{-1}A)^\top = A D^{-1}$.
Here, $D$ is the degree matrix and $A$ is the adjacency matrix. Hence, after playing $A_t$, the learner observes the average of the losses 
$\{ \loss_t(i) \colon i \in N_{\cG}(A_t)\}$, which include $\loss_t(A_t)$.
As mentioned in \Cref{sec:pre}, this feedback is less informative than standard graph feedback, and draws inspiration from problems studied in \cite{spectralbandits,cheapbandits} motivated by applications in signal processing (SNETs \cite{sensors}).
Once again we take $\Loss = \cB_\infty(1)$.
Note that $\norm{M_a}_1 = 1$; hence, $\max_{a \in \cA, \loss \in \Loss} \abs{M_a^\top \loss} \leq 1$, implying again that $\scale \leq 1$.
Depending on the structure of the graph, the game could be hopeless, globally or locally observable (or even trivial if there is just one action). 

If the graph is empty of edges except self-loops (i.e., $\bm{M}=\bm{I}_k$), we recover the standard bandit game, which is locally observable. %
In fact, this is a necessary condition for locally observability.
To see this, fix $\loss \in \Loss$ such that $\loss(a) < \loss(b) < \min_{c \in \cA\setminus\{a,b\}} \loss(c)$.
Then, $e_b - e_a \notin \spn\{M_a, M_b\}$ if $a$ and $b$ are connected in the graph, which rules out local observability via \Cref{lem:better-actions} (note that all actions are Pareto optimal).
As for global observability, one needs that $e_b-e_a \in \mathrm{span} \{M_c \colon c\in\cA\}$ for all $a,b\in\cA$. 
If we define $\cS \coloneqq \mathrm{span}\{e_a-e_b\colon \ \text{for all }a,b\in \cA\}$, global observability then translates to $\cS \subseteq \col(\bm{M})$.
Now, suppose that $\det(\bm{M}) = 0$.
This would imply that $\dim(\col(\bm{M}))$ is at most $k-1$.
Then, since $\dim(\cS) = k-1$, global observability would require that $\dim(\col(\bm{M}))=k-1$ and $\cS=\col(\bm{M})$. 
This can never hold, however, because the vector $\bm{1}_k$ is orthogonal to $\cS$, while any column of $\bm{M}$ has a non-zero component along $\bm{1}_k$, implying hence that $\col(\bm{M}) \not\subset \cS$.
Therefore, in order to be globally observable, the game must satisfy that $\det(\bm{M}) \neq 0$, which is also clearly a sufficient condition.   

Assume in what follows that the game is globally observable, meaning that $\bm{M}$ has full rank.
As mentioned in \Cref{sec:glo}, it holds here that $u^* = k$.
To see this, note that \Cref{lemma:full-rank-conditions} implies in this case that for any orthonormal basis $U$, $\summ_{s \in S} U^\top M_s M_s^\top U$ is invertible only when $S = \cA$.
Hence,
\[u^* = k \bno{\bm{M}^\top U \brb{ U^\top \bm{M} \bm{M}^\top U}^{-1}U^\top \bm{M}} = k \,.\]
As for $w^*$, we know from \Cref{cor:simpler-lambda-bound-local} that $w^* \leq k$. 
Moreover, recalling that $(M_a)_{a \in \cA}$ are vectors in this example, we have that
\[
    w^* \geq \min_{\pi \in \Delta_k} \max_{b \in \cA}  M_b^\top U B(\pi)^{-1}U^\top M_b = k \,,
\]
where the equality follows from \Cref{lemma:KW-matrix} using that $\rank(\bm{M}) = k$ by assumption. 
Hence, it holds too that $w^* = k$.
\Cref{cor:simpler-lambda-bound-global} then yields a bound of order $\beta_{2,\glo} \sqrt{k/\eta}$ on $\Lambda^*_\eta$.

We now examine the two quantities $\beta_{\glo}$ and $\beta_{2,\glo}$ in this problem. 
The assumed Invertibility of $\bm{M}$ implies that
\[  
    \beta_{\glo} = \max_{a,b \in \cA} \bno{ \bm{M}^{-1} (e_a - e_{b})}_1 \qquad \text{and} \qquad \beta_{2,\glo} = \max_{a,b \in \cA} \bno{ \bm{M}^{-1} (e_a - e_{b})}_2 \,.
\]
It holds here, as it does in general, that $\beta_{2,\glo} \leq \beta_{\glo} \leq \sqrt{k} \beta_{2,\glo}$.
Recall that $\bm{M} = A D^{-1}$ and let $S \coloneqq D^{-1/2} A D^{-1/2}$.
Note that $S$ is symmetric and $\bm{M}$ could be rewritten in terms of $S$ as $\bm{M}=D^{1/2} S D^{-1/2}$. 
Further, $\bm{M}$ and $S$ are similar, thus they have the same eigenvalues.
For a square matrix $V$, let $\sigma(V)$ denote its smallest singular value, and let $\{w_i(V)\}_{i=1,\ldots,k}$ denote its eigenvalues.
Lastly, let $\mathrm{deg}(a)$ denote the degree of action $a$, and let $\mathrm{deg}_{\mathrm{min}}$ and $\mathrm{deg}_{\mathrm{max}}$ denote, respectively, the smallest and the largest degree of any action in the graph.
Now,
\begin{align*}
    \beta_{2,\glo} &= \max_{a,b \in \cA} \bno{ \bm{M}^{-1} (e_a - e_{b})}_2 
    = \max_{a,b\in\mathcal{A}} \|D^{1/2}S^{-1}D^{-1/2}(e_a-e_b)\|_2 \\
    &\leq \sqrt{\frac{\mathrm{deg}_{\mathrm{max}}} {\mathrm{deg}(a)} + \frac{\mathrm{deg}_{\mathrm{max}}} {\mathrm{deg}(b)}} \|S^{-1}\|_2
    \leq \frac{\sqrt{2\mathrm{deg}_{\mathrm{max}}/\mathrm{deg}_{\mathrm{min}}}}{\sigma(S)} \\ &
    = \frac{\sqrt{2\mathrm{deg}_{\mathrm{max}}/\mathrm{deg}_{\mathrm{min}}}}{\min_{i=1,..,k} |w_i(S)|} 
    = \frac{\sqrt{2\mathrm{deg}_{\mathrm{max}}/\mathrm{deg}_{\mathrm{min}}}}{\min_{i=1,\ldots,k} |w_i(\bm{M})|} \,,
\end{align*}
where the penultimate step follows from the symmetry of $S$, and the last step uses that $\bm{M}$ and $S$ are similar.
Note also that $w_i(S) = 1-w_i(L^{\mathrm{sym}})$
where $L^{\mathrm{sym}} = \bm{I}_k - S$ is the symmetrically normalized Laplacian of the graph $\mathcal{G}$.
Hence, it also holds that
\begin{equation*}
    \beta_{2,\glo} \leq \frac{\sqrt{2\mathrm{deg}_{\mathrm{max}}/\mathrm{deg}_{\mathrm{min}}}}{\min_{i=1,\ldots,k} |1-w_i(L^{\mathrm{sym}})|} \,.
\end{equation*}

\subsubsection{Cycle graphs}
As an example, take $\cG$ to be a $k$-nodes cycle graph (with self-loops), where $k \geq 3$. 
For convenience, we index actions starting from $0$ instead of $1$ in the following discussion; that is we take $\cA = \{0,\dots,k-1\}$. 
We also extend this notation to vectors and matrices indexed by the actions. 
For this family of graphs, we have that
\[
    A_{a,b} = \indicator{ b \in \lcb{ (a-1) \bmod k,\: a,\: (a+1) \bmod k }} \qquad \text{and} \qquad \bm{M} = \frac{1}{3} A \,. 
\]
If $k \bmod 3 = 0$, then $\bm{M}$ is singular, which is synonymous with a hopeless (i.e., not globally observable) game in this problem as argued before.
A simple non-zero vector $v$ that belongs to $\ker(\bm{M})$ in this case is given by 
\[
    v(a) = \begin{cases}
        2\,, \qquad \text{if } a \bmod 3 = 0 \\
        -1\,, \qquad \text{otherwise}
    \end{cases}
\]
for $a\in \cA$, see Graph (i) in \Cref{fig:cycles} for a schematic representation on a $9$-node cycle. 
When $k \bmod 3 \in \{1,2\}$, $\bm{M}$ is invertible and
the game is globally observable. In particular, for $a,b \in \cA$, one can verify that 
\[
    \bm{M}^{-1}_{a,b} = \frac{2}{\sqrt{3}}\sin\left(\frac{2\pi k}{3}\right)\cos\left(\frac{2\pi h(a,b)}{3}\right) + \sqrt{3}\sin\left(\frac{2\pi h(a,b)}{3}\right) \,,
\]
where $h(a,b) = (a-b) \bmod k$.
We now characterize $\beta_{\glo}$ and $\beta_{2,\glo}$ up to small constants, focusing on the case when $k \bmod 3 = 1$. 
(The remaining case, when $k \bmod 3 = 2$, can be treated in a similar manner.)
Firstly, via the triangle's inequality,
\begin{equation*}
    \beta_{2,\glo} = \max_{a,b \in \cA} \bno{ \bm{M}^{-1} (e_a - e_{b})}_2 \leq 2 \max_{a \in \cA} \bno{ \bm{M}^{-1} e_a }_2 \,.
\end{equation*}
And similarly, $\beta_{\glo} \leq 2 \max_{a \in \cA} \bno{ \bm{M}^{-1} e_a }_1$.
As the entries of $\bm{M}^{-1}$ are determined by $h(a,b)$, all its columns are identical up to cyclical shifts. 
Hence, the bound above is attained by any action.
From the formula stated above for $\bm{M}^{-1}$, one can check that each of its columns contains $1+2(k-1)/3$ entries with value $1$ and $(k-1)/3$ entries with value $-2$, Graph (ii) in \Cref{fig:cycles} provides a schematic representation of one column of $\bm{M}^{-1}$ on a 10-node cycle.
Evaluating the $L_1$ and $L_2$ norms we obtain
\[
    \beta_{2,\glo} \leq 2 \sqrt{2k -1} \qquad \text{and} \qquad \beta_{\glo} \leq 2 + 8 (k-1)/3 \,.
\]
On the other hand, we can obtain a lower bound by taking $a$ and $b$ as neighbors.
In which case, $\bm{M}^{-1} (e_a - e_{b})$ contains $(k-1)/3$ entries with value $3$, $(k-1)/3$ entries with value $-3$, and $1 + (k-1)/3$ entries with value $0$, a visual representation is provided by Graph (iii) in \Cref{fig:cycles}. 
Then, again evaluating the $L_1$ and $L_2$ norms gives that
\[
    \beta_{2,\glo} \geq \sqrt{6(k -1)} \qquad \text{and} \qquad \beta_{\glo} \geq 2(k-1) \,.
\]
This shows that $\beta_{\glo}$ is $\Theta(k)$, while $\beta_{2,\glo}$ is $\Theta(\sqrt{k})$, providing thus an appreciable improvement relative to the former in the bound of \Cref{cor:simpler-lambda-bound-global} (recalling that here, $w^*=u^*=k$).
\begin{figure}%
    \centering
    \subfloat[\centering]{{\includegraphics[width=0.325\linewidth]{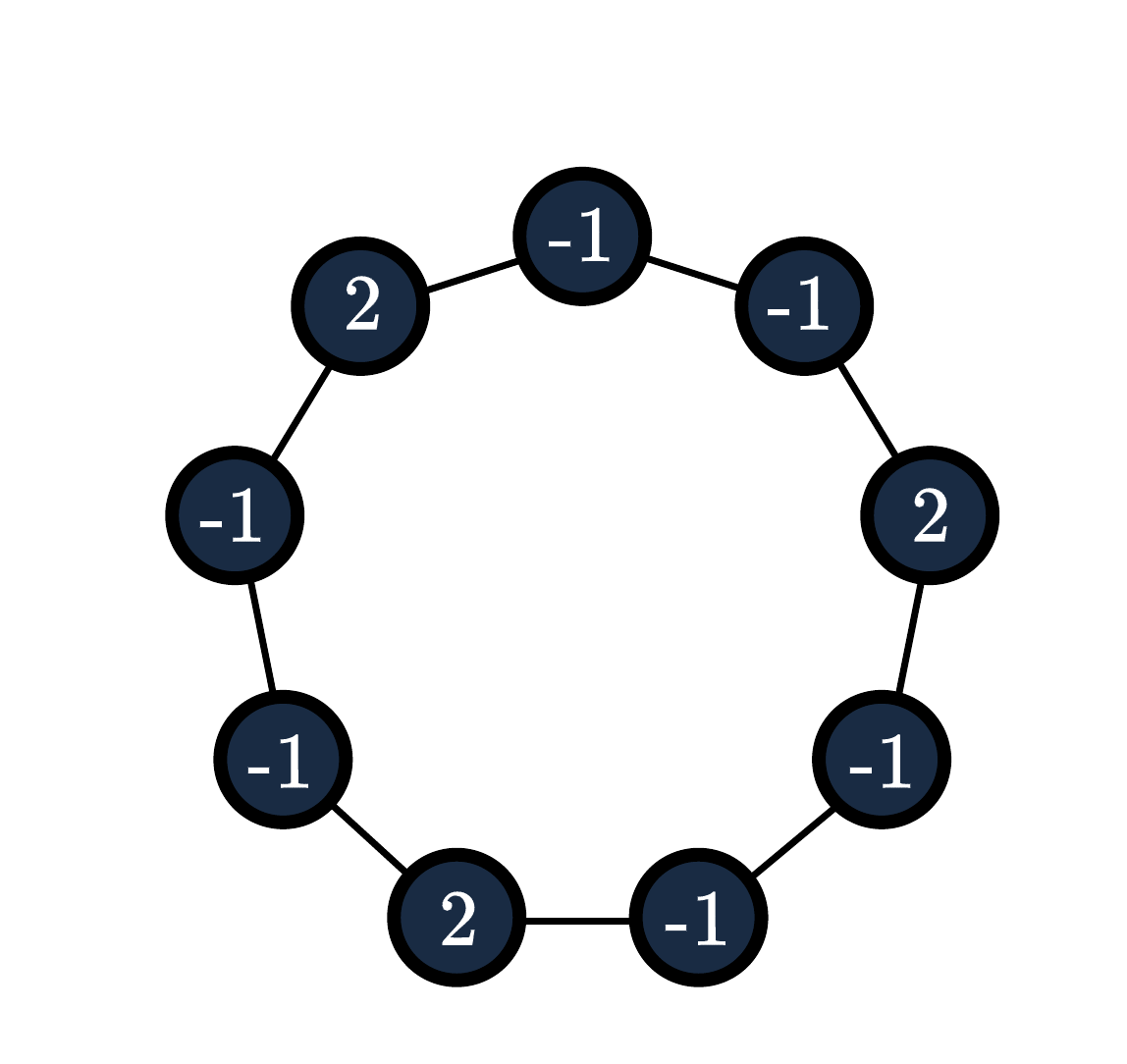} }}%
    \subfloat[\centering]{{\includegraphics[width=0.325\linewidth]{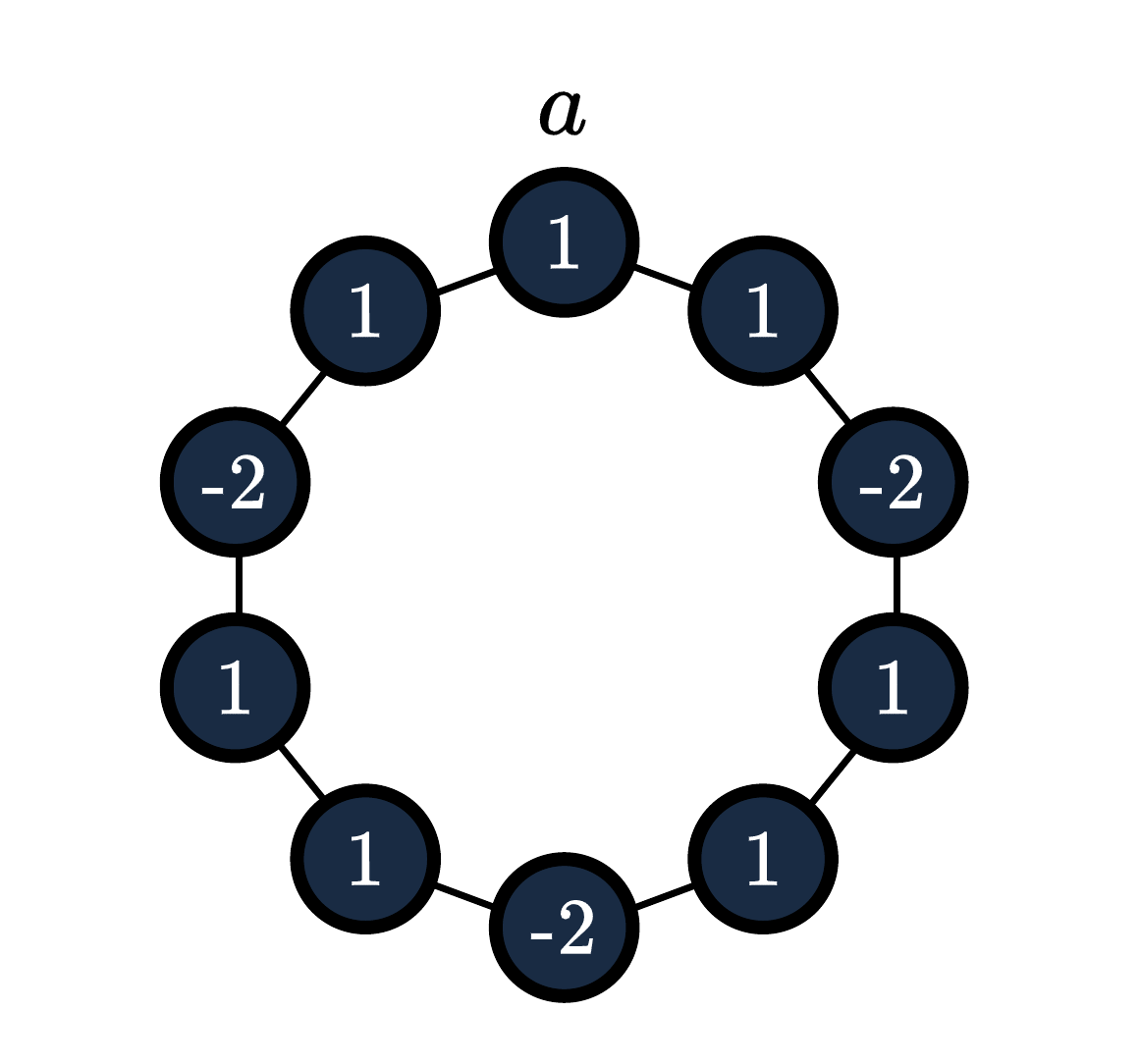} }}%
    \subfloat[\centering]{{\includegraphics[width=0.325\linewidth]{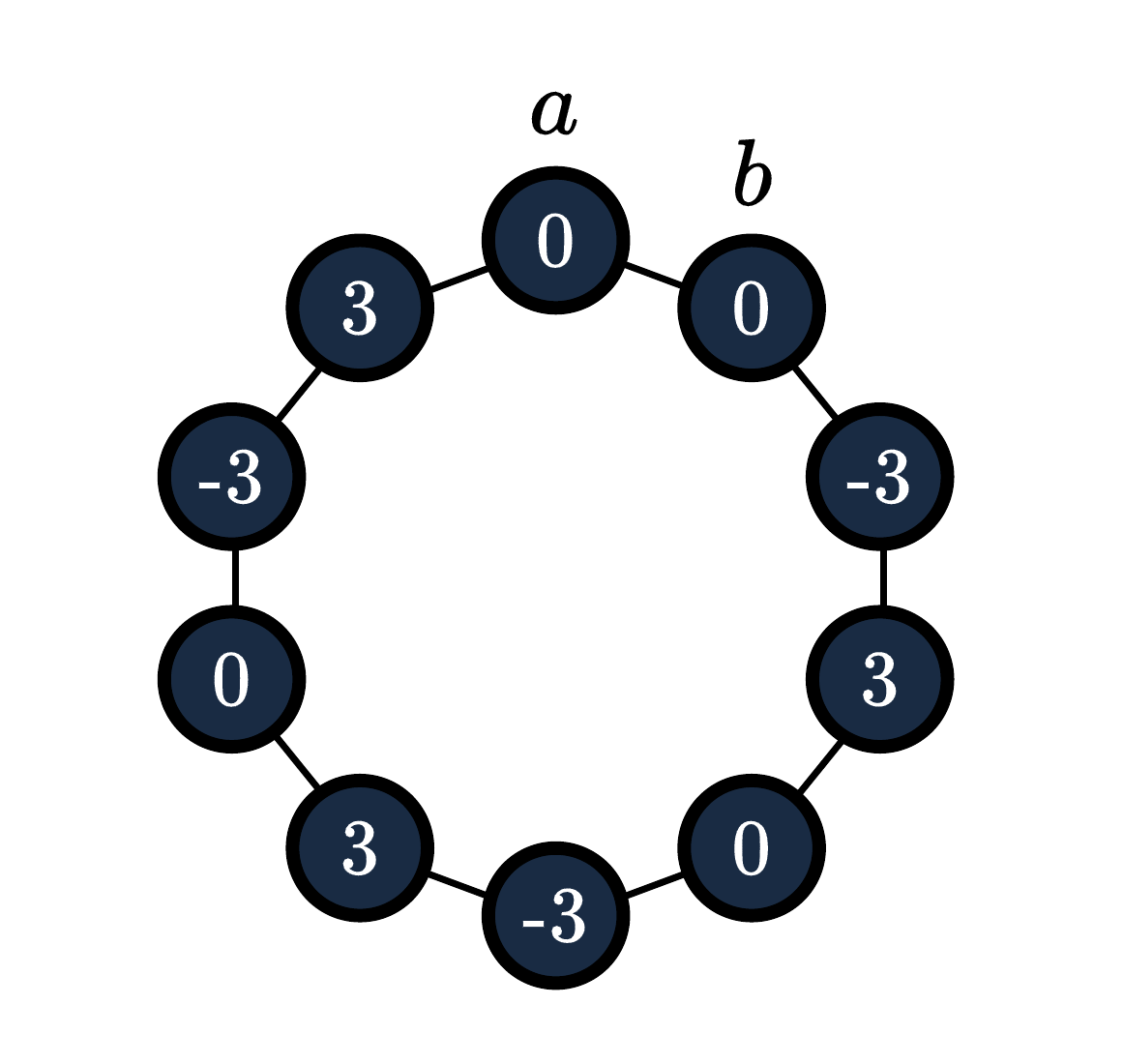} }}%
    \caption{The first figure represents a (non-zero) assignment of weights on a $9$-node cycle such that the sum at any three consecutive nodes is $0$, illustrating that $\bm{M}$ is singular.
    On a 10-node cycle, the second figure provides a representation of $\bm{M}^{-1} e_a$; i.e., the $a$-th column of $\bm{M}^{-1}$, where $a$ is an arbitrary action in $\cA$.
    The last figure is a representation of $\bm{M}^{-1} (e_a-e_b)$ on the same graph, where $b$ is a neighbor of $a$.}%
    \label{fig:cycles}%
\end{figure}

\subsubsection{Complete bipartite graphs}
As a second example, we consider a complete balanced bipartite graph $G_{k/2,k/2}$ (with self loops), assuming $k$ is even. 
The vertices of this graph are partitioned into two subsets of of size $k$; $L=\{1,...,k/2\}$ and $R=\{k/2+1,...,k\}$ such that for $a, b \in \cA$ with $a < b$,
\[
    A_{a,b} = A_{b,a} = \indicator{ a \in L \text{  and  } b \in R } \,,
\]
while $A_{a,a} = 1$ for all $a \in \cA$.
Hence, all nodes in this graph have degree $k/2 + 1$ and
\[
    \bm{M} = \frac{1}{k/2 + 1} A = \frac{1}{k/2 + 1} \begin{pmatrix}
    \bm{I}_{k/2} & \bm{J}_{k/2} \\[3pt]
    \bm{J}_{k/2} & \bm{I}_{k/2}
    \end{pmatrix} \,.
\]
where $\bm{I}_{k/2}$ is the $(k/2) \times (k/2)$ identity matrix and $\bm{J}_{k/2}$ is the $(k/2) \times (k/2)$ matrix with all entries equal to $1$.
In a similar manner, we can write any vector $x\in\mathbb{R}^{k}$ in the block form 
\[x=\begin{pmatrix}
x_L \\
x_R
\end{pmatrix}\]
with $x_L,x_R \in \mathbb{R}^{k/2}$.
If $k=2$, then $\bm{M}$ is singular and the game is hopeless.
Otherwise, when $k \geq 4$, $\bm{M}$ is full rank and its inverse is given by
\[
    \bm{M}^{-1} = \frac{1}{k/2 - 1}
    \begin{pmatrix}
    (k^2/4 -1 )\bm{I}_{k/2} - (k/2) \bm{J}_{k/2} &  \bm{J}_{k/2} \\[3pt]
    \bm{J}_{k/2} & (k^2/4 -1 )\bm{I}_{k/2} - (k/2) \bm{J}_{k/2} 
    \end{pmatrix} \,.
\]
If two actions $a$ and $b$ are on the same side; i.e., $a,b \in L$ or $a,b \in R$, then
\begin{equation*}
    \bm{M}^{-1} (e_a - e_b) = (k/2 + 1) (e_a - e_b) \,.
\end{equation*}
Otherwise, if $a \in L$ and $b \in R$, then
\begin{equation*}
    \bm{M}^{-1} (e_a - e_b) = (k/2 + 1) \lrb{ e_a - e_b + \frac{1}{k/2 - 1}\begin{pmatrix}
    - \bm{1}_{k/2} \\[3pt]
     \bm{1}_{k/2} 
    \end{pmatrix} }\,.
\end{equation*}
In either case, both $\bno{\bm{M}^{-1} (e_a - e_b)}_2$ and $\bno{\bm{M}^{-1} (e_a - e_b)}_1$ are $\Theta(k)$.
Hence, both $\beta_{2,\glo}$ and $\beta_{\glo}$ are $\Theta(k)$.
Compared to the previous example, $\beta_{2,\glo}$ grows faster with the number of actions, while $\beta_{\glo}$ remains of the same order.

\section{Lower Bounds} \label{appendix:lower-bounds}

In this section, we provide a proof for \Cref{prop:lowerbounds}. This is an extension of the lower bounds in \cite[Appendix G]{kirschner2020information}, with a construction that is compatible with the adversarial setting.
The proof still relies on designing hard instances in a stochastic linear partial monitoring framework.
However, the noise is added to the loss vector itself.
This noisy loss vector, moreover, is designed to respect a boundedness condition.
The latter is achieved via truncation, following \cite{cohen2017tight}, at the cost of a factor logarithmic in the time horizon.
We also provide a tailored lower bound for the bandit game with ill-conditioned observers that matches the upper bound we achieved.

\subsection{Stochastic Linear Partial Monitoring with Parameter Noise}
We will assume henceforth that, for some positive integer $m$, $n(a) = m$ for all $a \in \cA$; this can be taken as $\max_a n(a)$ while padding with zeros the matrices with fewer columns.
Moreover, as mentioned in \Cref{sec:pre}, we assume that $\cB_2(r) \subseteq \Loss$ for some $r>0$.
Let $\theta \in \R^d$. We will use $\theta$ to parameterize a stochastic linear partial monitoring problem as follows. 
At round $t$, the player select a (possibly random) action $A_t \in \cA$ and observes the signal $\phi_t = M_{A_t}^\top \ell_t \in \R^m$, where $\ell_t \in \R^d$ is given by
\[
    \ell_t = \theta + X_t \,,
\]
with $X_t$ drawn i.i.d. across rounds from a Gaussian distribution $\cN(\bm{0}, \Sigma)$, where $\Sigma \in \R^{d \times d}$ is some covariance matrix. 
Extending \cite{cohen2017tight} (see also \cite[Theorem 9]{eldowa2024information}), we describe in the following a truncation scheme that allows adapting to the adversarial setting lower bound constructions that employ unbounded noise. 
Let $\clip \colon \R^d \rightarrow \Loss$ denote some clipping operation.
Consider the following two notions of (random) regret: 
\begin{gather*}
        \hat{R}_T(\theta) \coloneqq \max_{a \in \cA}   \sum_{t=1}^T (\feat{A_t} - \feat{a})^\top \loss_t 
    \\
        \tilde{R}_T(\theta) \coloneqq \max_{a \in \cA}   \sum_{t=1}^T (\feat{A_t} - \feat{a})^\top \clip(\loss_t)\,,
    \end{gather*}
    where $(\loss_t)_t$ are drawn as described above and the actions $(A_t)_t$ are chosen via some policy $(\pi_t)_t$.
    Moreover, define the stochastic regret $\storeg(\theta)$ as:
    \begin{equation*}
        \storeg(\theta) \coloneqq \max_{a \in \cA} \E_\theta \sum_{t=1}^T (\feat{A_t} - \feat{a})^\top \loss_t = \max_{a \in \cA} \E_\theta \sum_{t=1}^T (\feat{A_t} - \feat{a})^\top \theta \,,
    \end{equation*}
    where the expectation is taken over the aforementioned sequences $(\loss_t)_t$ and $(A_t)_t$. (The dependence on $(\pi_t)_t$ is not explicit in the notation since it will be fixed throughout.)
    The equality in the above display follows from the fact that $X_t$ is centered and independent from $A_t$.
    \begin{lemma} \label{lem:reg-truncation}
        Let $\Theta$ be a finite subset of $\R^d$ with $|\Theta| = N$, and let $\alpha^* \coloneqq  \max_{a,b \in \cA} \norm{\feat{a} - \feat{b}}$. Then,
        \begin{align*}
            R_T^* \geq \frac{1}{N} \sum_{\theta \in \Theta} \storeg(\theta) - \frac{\sqrt{2} \alpha^* T^{3/2}}{N} \sum_{\theta \in \Theta}  \brb{ \norm{ \theta} +   \sqrt{\E \norm{X_1}^2}  } \sqrt{\mathbb{P}_\theta\brb{ \clip(\ell_1) \neq \ell_1 }}  \,.
        \end{align*}
        While if there exists some $\cR > 0$ such that $\max_{\theta \in \Theta} \hat{R}_T(\theta) \leq \cR$ uniformly, then
        \begin{align*}
            R_T^* \geq \frac{1}{N} \sum_{\theta \in \Theta} \storeg(\theta) - \frac{\cR T}{N} \sum_{\theta \in \Theta}  \mathbb{P}_\theta\brb{ \clip(\ell_1) \neq \ell_1 }  \,.
        \end{align*}
    \end{lemma}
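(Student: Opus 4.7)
The plan is a Bayesian--minimax reduction augmented by a truncation step \`a la \cite{cohen2017tight}: I would lift each stochastic instance indexed by $\theta\in\Theta$ to an admissible adversarial distribution on $\Loss^T$ via $\clip$, and then translate the resulting clipped-regret lower bound back into one involving the stochastic regret $\storeg$.

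First I would invoke the standard distribution-over-losses inequality: for any policy $(\pi_t)$ and any distribution $P$ supported on $\Loss^T$, $\sup_{\ell_{1:T}\in \Loss^T} R_T \geq \E_{\ell_{1:T}\sim P}[R_T]$. Specializing $P=P_\theta$ as the law of the i.i.d.\ sequence $(\clip(\theta+X_t))_{t=1}^T$ (admissible since $\clip:\R^d\to\Loss$) and averaging over $\theta\in\Theta$, a direct computation identifies $\E_{P_\theta}[R_T]$ with $\E\tilde R_T(\theta)$, since the paper's $R_T$ applied to losses $\ell_t=\clip(\theta+X_t)$ is precisely the random regret $\tilde R_T(\theta)$. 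Taking $\inf_{(\pi_t)}$ on both sides then yields
\[
    R_T^* \;\geq\; \inf_{(\pi_t)}\,\frac{1}{N}\sum_{\theta\in\Theta}\E\tilde R_T(\theta).
\]

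Second, to connect $\E\tilde R_T(\theta)$ with $\storeg(\theta)$, I would pass through $\E\hat R_T(\theta)$. Since the $\ell_t$ are i.i.d.\ and $A_t$ is measurable with respect to $(\phi_s)_{s<t}$, $A_t$ is independent of $X_t$; combined with $\E X_t=0$ and Jensen's inequality ($\max_a \E f_a\leq \E \max_a f_a$), this gives
\[
    \storeg(\theta) \;=\; \max_a \E\!\sum_t (\feat{A_t}-\feat{a})^\top\ell_t \;\leq\; \E\hat R_T(\theta),
\]
so that $\E\tilde R_T(\theta) \geq \storeg(\theta)-\E|\hat R_T(\theta)-\tilde R_T(\theta)|$.

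The core step is then controlling $\E|\hat R_T-\tilde R_T|$ via the clipping probability. Let $B=\{\exists\,t\leq T:\clip(\ell_t)\neq \ell_t\}$ and $D_t=\ell_t-\clip(\ell_t)$ (which vanishes off $B$). The elementary inequality $|\max_a f_a-\max_a g_a|\leq\max_a|f_a-g_a|$ together with Cauchy--Schwarz in $\R^d$ yield $|\hat R_T-\tilde R_T|\leq \alpha^*\sum_t\norm{D_t}\indicator{B}$. For the first bound, I would apply Cauchy--Schwarz twice: pointwise in $t$, so that $(\sum_t\norm{D_t})^2\leq T\sum_t\norm{D_t}^2$; and once in expectation, so that $\E[|\hat R_T-\tilde R_T|\indicator{B}]\leq\sqrt{\E(\hat R_T-\tilde R_T)^2}\sqrt{\mathbb{P}_\theta(B)}$. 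The union bound $\mathbb{P}_\theta(B)\leq T\,\mathbb{P}_\theta(\clip(\ell_1)\neq\ell_1)$ then produces the $T^{3/2}$ factor, while the remaining norm factor follows from $\E\norm{D_1}^2\leq 2(\norm{\theta}^2+\E\norm{X_1}^2)$ (using the projection-like structure of $\clip$ and $0\in\Loss$, so $\norm{D_1}\leq\norm{\ell_1}$, then expanding $\E\norm{\ell_1}^2=\norm{\theta}^2+\E\norm{X_1}^2$) combined with $\sqrt{a^2+b^2}\leq a+b$. For the second bound, the hypothesis $\hat R_T(\theta)\leq\cR$ a.s.\ (together with Assumption~\ref{assump:bounded} giving $\tilde R_T\leq 2T$, so $\cR$ effectively bounds both regrets in the regime of interest) implies $|\hat R_T-\tilde R_T|\leq\cR$ on $B$, hence $\E|\hat R_T-\tilde R_T|\leq\cR\,\mathbb{P}_\theta(B)\leq\cR T\,\mathbb{P}_\theta(\clip(\ell_1)\neq\ell_1)$. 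The most delicate point I anticipate is the first identification: carefully verifying that under $P_\theta$ the expected adversarial regret truly equals $\E\tilde R_T(\theta)$ given the paper's convention where $R_T$ already averages over algorithmic randomness, and checking the measurability details that underpin $A_t\perp X_t$ under the joint law.
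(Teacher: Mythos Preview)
Your overall strategy---Bayesian reduction to the clipped distribution, Jensen for $\storeg\leq\E\hat{R}_T$, splitting on the bad event, Cauchy--Schwarz plus a union bound---matches the paper's. The one substantive difference is what gets bounded on the bad event. The paper bounds $\hat{R}_T(\theta)$ \emph{itself} via
\[
\hat{R}_T(\theta)\;\leq\;\alpha^*\sum_{t}\|\ell_t\|\;\leq\;\alpha^*\Bigl(\|\theta\|\,T+\sum_t\|X_t\|\Bigr),
\]
which holds for any map $\clip:\R^d\to\Loss$, and then applies Cauchy--Schwarz to $\E[\indicator{\bar B}\,\hat{R}_T]$. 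You instead bound the \emph{difference} $\hat R_T-\tilde R_T$ through $\|D_t\|=\|\ell_t-\clip(\ell_t)\|$ and then invoke $\|D_1\|\leq\|\ell_1\|$. That last inequality is not available at the lemma's level of generality: $\clip$ is declared as an arbitrary map into $\Loss$, with no projection structure assumed. (It would hold for the metric projection onto a convex set containing the origin, which is how the paper instantiates the lemma downstream, but not as the lemma is stated.) The paper's route sidesteps the issue entirely because $\clip(\ell_t)$ never appears in its correction term.

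Your handling of the second bound has the same flavor of gap. You want $|\hat R_T-\tilde R_T|\leq\cR$ on the bad event, but the hypothesis $\hat R_T\leq\cR$ is one-sided, and the parenthetical about $\tilde R_T\leq 2T$ points the wrong way---you would need a lower bound on $\tilde R_T$, not an upper bound. The paper instead decomposes $\E\hat R_T=\E[\indicator{B}\,\tilde R_T]+\E[\indicator{\bar B}\,\hat R_T]$ (using $\hat R_T=\tilde R_T$ on $B$) and bounds only $\E[\indicator{\bar B}\,\hat R_T]\leq\cR\,\mathbb P(\bar B)$, which uses exactly the one-sided hypothesis and avoids touching $\tilde R_T$ on the bad event.
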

    \begin{proof}
        Firstly, note that 
        \begin{align*}
            R^*_T \geq  \sup\nolimits_{\theta} \E_\theta \tilde{R}_T(\theta) \geq \frac{1}{N} \sum_{\theta \in \Theta} \E_\theta \tilde{R}_T(\theta) \quad \text{and} \quad \E \hat{R}_T(\theta) \geq \storeg(\theta) \,,
        \end{align*}
        the latter holding via Jensen's inequality.
        Define the event $B \coloneqq \{\forall t\in [T] ,\: \clip(\loss_t) = \loss_t \}$.
        Clearly, $\tilde{R}_T(\theta) = \hat{R}_T(\theta)$ whenever $B$ occurs.
        Hence,
        \begin{align*}
           \E_\theta \hat{R}_T(\theta) = \E_\theta \bsb{\indicator{B}\hat{R}_T(\theta) } + \E_\theta \bsb{ \indicator{\overbar{B}}\hat{R}_T(\theta)} \leq \E_\theta \tilde{R}_T(\theta) + \E_\theta \bsb{ \indicator{\overbar{B}}\hat{R}_T(\theta)} \,.
        \end{align*}
        Rather crudely, we have that
        \begin{align*}
            \hat{R}_T(\theta) &= \max_{a \in \cA}   \sum_{t=1}^T (\feat{A_t} - \feat{a})^\top (X_t + \theta) \\
            &\leq \max_{a \in \cA}   \sum_{t=1}^T \norm{\feat{A_t} - \feat{a}} \norm{X_t + \theta} \\
            &\leq \max_{a,b \in \cA} \norm{\feat{a} - \feat{b}} \sum_{t=1}^T \norm{X_t + \theta} \\
            &\leq \max_{a,b \in \cA} \norm{\feat{a} - \feat{b}} \bbrb{ \norm{ \theta} T +  \sum_{t=1}^T \norm{X_t} }  \,.
        \end{align*}
        And,
        \begin{align*}
            \hat{R}^2_T(\theta) 
            &\leq \max_{a,b \in \cA} \norm{\feat{a} - \feat{b}}^2 \bbrb{ \norm{ \theta} T +  \sum_{t=1}^T \norm{X_t} }^2 \\
            &\leq 2 \max_{a,b \in \cA} \norm{\feat{a} - \feat{b}}^2 \bbrb{ \norm{ \theta}^2 T^2 +  \bbrb{\sum_{t=1}^T \norm{X_t}}^2 } \\
            &\leq 2 \max_{a,b \in \cA} \norm{\feat{a} - \feat{b}}^2 \bbrb{ \norm{ \theta}^2 T^2 +  T \sum_{t=1}^T \norm{X_t}^2 } \,.
        \end{align*}
        Now, via the Cauchy-Schwarz inequality,
        \begin{align*}
            \E_\theta \bsb{ \indicator{\overbar{B}}\hat{R}_T(\theta)} &\leq  \sqrt{\E_\theta\bsb{ \indicator{\overbar{B}}^2}\E_\theta\bsb{\hat{R}^2_T(\theta)}}\\
            &= \sqrt{\E_\theta\bsb{ \indicator{\overbar{B}}}\E_\theta\bsb{\hat{R}^2_T(\theta)}}\\
            &= \sqrt{\mathbb{P}_\theta\brb{ \overbar{B}}\E_\theta\bsb{\hat{R}^2_T(\theta)}}\\
            &\leq \sqrt{2} \max_{a,b \in \cA} \norm{\feat{a} - \feat{b}} \sqrt{\mathbb{P}_\theta\brb{ \overbar{B}}}  \lrb{ \norm{ \theta} T +  \sqrt{T} \sqrt{\sum_{t=1}^T \E \norm{X_t}^2 } }\\
            &\leq \sqrt{2} \max_{a,b \in \cA} \norm{\feat{a} - \feat{b}} \brb{ \norm{ \theta} +   \sqrt{\E \norm{X_1}^2}  } \sqrt{\mathbb{P}_\theta\brb{ \overbar{B}}}   T \,,
        \end{align*}
        Where we have used the fact that $(X_t)_t$ is an i.i.d. sequence.
        Moreover, a union bound gives that
        \begin{align*}
            \mathbb{P}_\theta\brb{ \overbar{B}} \leq \sum_{t=1}^T \mathbb{P}_\theta\brb{ \clip(\ell_t) \neq \ell_t } = T \mathbb{P}_\theta\brb{ \clip(\ell_1) \neq \ell_1 } \,.
        \end{align*}
        Let $\alpha^* \coloneqq  \max_{a,b \in \cA} \norm{\feat{a} - \feat{b}}$. We have shown that
        \begin{align*}
            \E_\theta \bsb{ \indicator{\overbar{B}}\hat{R}_T(\theta)} \leq \sqrt{2} \alpha^* \brb{ \norm{ \theta} +   \sqrt{\E \norm{X_1}^2}  } \sqrt{\mathbb{P}_\theta\brb{ \clip(\ell_1) \neq \ell_1 }}   T^{3/2} \,.
        \end{align*}
        In the simpler case when $\max_{\theta \in \Theta} \hat{R}_T(\theta) \leq \cR$ uniformly, we can similarly show that
        \begin{align*}
            \E_\theta \bsb{ \indicator{\overbar{B}}\hat{R}_T(\theta)} \leq  \mathbb{P}_\theta\brb{ \clip(\ell_1) \neq \ell_1 } \cR T \,.
        \end{align*}
        The theorem then follows using that
        \begin{align*}
            \E_\theta \tilde{R}_T(\theta) \geq \E_\theta \hat{R}_T(\theta) - \E_\theta \bsb{ \indicator{\overbar{B}}\hat{R}_T(\theta)} \geq \storeg(\theta) - \E_\theta \bsb{ \indicator{\overbar{B}}\hat{R}_T(\theta)} \,.
        \end{align*}
    \end{proof}

    In the following, we will generally use $\pr_\theta$ to denote the probability measure over the history of interaction 
    $\his_T \coloneqq (A_s, \signal_s)_{s=1}^T$ induced by our fixed policy $(\pi_t)_t$ and the environment parameterized by $\theta$. Moreover, $\pr_{\theta,a}$ will denote the distribution over observations when action $a$ is played. This is stationary (does not depend on $t$) since $(\loss_t)_t$ is an i.i.d. sequence.
    Let $\kl{P}{Q}$ denote the KL-divergence between two measure $P$ and $Q$.
    The following is a standard result based on the chain rule for the KL-divergence; see, e.g., \cite[Exercise 15.8]{banditbook}.
    \begin{lemma} \label{lem:kl}
        For any $\theta, \theta' \in \R^d$, it holds that
        \begin{align*}
            \kl{\pr_\theta}{\pr_{\theta'}} = \sum_{t=1}^T \E_\theta \bsb{\kl{\pr_{\theta,A_t}}{\pr_{\theta',A_t}}} = \sum_{a \in \cA} T_\theta(a) \kl{\pr_{\theta,A_t}}{\pr_{\theta',A_t}} \,,
        \end{align*}
        where $T_\theta(a) \coloneqq \E_\theta \sum_{t=1}^T \indicator{A_t = a}$.
    \end{lemma}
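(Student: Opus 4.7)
The statement is a standard chain-rule decomposition of the KL divergence along the trajectory, so the plan is mostly bookkeeping; I will phrase it in a way that makes clear which structural properties of the problem are being invoked.

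First, I would write out the chain rule for the KL divergence along the filtration generated by $\mathcal{H}_T = (A_1, \phi_1, \dots, A_T, \phi_T)$. This yields
\[
\kl{\pr_\theta}{\pr_{\theta'}} = \sum_{t=1}^T \E_\theta \bsb{ \kl{\pr_\theta(A_t,\phi_t \mid \mathcal{H}_{t-1})}{\pr_{\theta'}(A_t,\phi_t \mid \mathcal{H}_{t-1})} }.
\]
Then I would split each one-step conditional KL into an action part and an observation part by conditioning further on $A_t$. The action part vanishes because the learner's policy kernel $\pi_t$ depends only on $\mathcal{H}_{t-1}$ and not on $\theta$; hence the conditional laws of $A_t$ given $\mathcal{H}_{t-1}$ coincide under $\pr_\theta$ and $\pr_{\theta'}$. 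The observation part equals $\kl{\pr_{\theta,A_t}}{\pr_{\theta',A_t}}$, relying on the two facts singled out in the excerpt just before the lemma: that $(\ell_t)_t$ is i.i.d., so the conditional law of $\phi_t = M_{A_t}^\top \ell_t$ given $(\mathcal{H}_{t-1},A_t)$ is stationary and equals $\pr_{\theta,A_t}$ (respectively $\pr_{\theta',A_t}$), and that the randomness in $\ell_t$ is independent of the policy's internal randomization.

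This establishes the first equality. For the second, I would expand the inner expectation over $A_t$ to obtain
\[
\E_\theta\bsb{\kl{\pr_{\theta,A_t}}{\pr_{\theta',A_t}}} = \sum_{a\in\cA} \pr_\theta(A_t=a)\, \kl{\pr_{\theta,a}}{\pr_{\theta',a}},
\]
and then swap the sums over $t$ and $a$, using the definition $T_\theta(a) = \sum_{t=1}^T \pr_\theta(A_t=a) = \E_\theta \sum_{t=1}^T \indicator{A_t=a}$.

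There is no real obstacle here; the only subtlety is being careful that the policy kernels are identical under the two measures (so the action-level KL term really is zero at each step) and that the observation kernel given action is stationary, which is exactly where the i.i.d. assumption on $(X_t)_t$ is used. The result is the standard divergence decomposition for stochastic bandit-type settings, e.g.\ \cite[Exercise 15.8]{banditbook}.
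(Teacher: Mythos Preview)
Your proposal is correct and matches the paper's approach: the paper does not give a proof but simply states that the result is standard via the chain rule for KL-divergence and cites \cite[Exercise 15.8]{banditbook}, which is exactly the argument you spell out. Your breakdown into the action part (vanishing because the policy kernels coincide) and the stationary observation part (using the i.i.d.\ structure of $(\ell_t)_t$) is precisely the intended reasoning.
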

    As an extra piece of helpful notation, for $S \subseteq \cA$, let $T_\theta(S) \coloneqq \E_\theta \sum_{t=1}^T \indicator{A_t \in S}$.
    
    \subsection{Regime-based lower bounds}
    Besides the use of truncation and parameter noise, the constructions used in the following three lower bounds are adapted from \cite[Appendix G]{kirschner2020information} and \cite[Chapter 37]{banditbook}. Analogous constructions appear also in earlier works on finite partial monitoring like \cite{Antos2013,bartok2014partial}. 
    \begin{theorem} \label{thm:global-lower-bound}
        Assume that $\cB_2(r) \subseteq \Loss$ for some $r>0$. If the game is globally but not locally observable then $R^*_T$ is $\tilde{\Omega}\brb{T^{\nicefrac{2}{3}}}$.
    \end{theorem}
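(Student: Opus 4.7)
The plan is to adapt the standard two-hypothesis construction used for globally-but-not-locally observable finite games (as in \citep{Antos2013,bartok2014partial,kirschner2020information}) to our adversarial linear setting via the truncation reduction of \Cref{lem:reg-truncation}. First, by negating local observability while retaining global observability (using \Cref{lem:local-defs-expanded} in \Cref{appendix:geometry}), I would fix a pair of neighboring Pareto-optimal actions $a_1,a_2 \in \cA^*$ such that $\feat{a_1}-\feat{a_2} \notin W := \spn\{M_c : c \in \cN_{a_1,a_2}\}$. Let $v$ denote the (nonzero) orthogonal projection of $\feat{a_1}-\feat{a_2}$ onto $W^\perp$, rescaled so that $\|v\|\le 1$; by construction $M_c^\top v = 0$ for every $c \in \cN_{a_1,a_2}$ while $\rho := \langle v, \feat{a_1}-\feat{a_2}\rangle = \|v\|^2 > 0$. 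Pick $\theta_0$ in the relative interior of the $(d-1)$-dimensional face $C_{a_1}\cap C_{a_2}$, rescaled so that $\theta_0 \in \Loss$ with $L_2$-distance at least $r/2$ from the boundary of $\Loss$ (possible since $\cB_2(r)\subseteq\Loss$). For a parameter $\varepsilon \in (0,1)$ to be tuned, set $\theta_1 = \theta_0 - \varepsilon v$ and $\theta_2 = \theta_0 + \varepsilon v$: for small $\varepsilon$, $a_i$ is uniquely optimal under $\theta_i$ with gap $2\varepsilon \rho$ to the competitor, and every action $c \notin \cN_{a_1,a_2}$ has suboptimality gap uniformly bounded below by some constant $\Delta_0 > 0$ (determined by the distance of $\theta_0$ to the $(d-2)$-dimensional sub-faces of $C_{a_1}\cap C_{a_2}$) under both $\theta_1,\theta_2$.

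Second, I would use the Gaussian parameter-noise model $\ell_t = \theta + X_t$ with $X_t \sim \cN(\bm{0},\sigma^2 \bm{I}_d)$ and a clipping operator $\clip$ that is the identity on $\Loss$ and a fixed point in $\Loss$ on the complement. Choosing $\sigma = c\, r / \sqrt{\log T}$ for a small enough absolute constant $c$, the Gaussian tail bound gives $\Pr_\theta(\clip(\ell_1)\ne \ell_1) \le T^{-3}$ for $\theta \in \{\theta_1,\theta_2\}$, so the correction term in \Cref{lem:reg-truncation} is of lower order. The key information-theoretic computation is then: since $M_c^\top v = 0$ for $c \in \cN_{a_1,a_2}$, the observation distributions $\Pr_{\theta_1,c}$ and $\Pr_{\theta_2,c}$ coincide on the neighborhood, and for $c \notin \cN_{a_1,a_2}$ the Gaussian KL yields $\KL(\Pr_{\theta_1,c}\|\Pr_{\theta_2,c}) \le C\,\varepsilon^2/\sigma^2$ for a constant $C$ depending only on $\max_c \|M_c\|$ and $\|v\|$. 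Applying the decomposition of \Cref{lem:kl} gives $\KL(\Pr_{\theta_1}\|\Pr_{\theta_2}) \le (C\varepsilon^2/\sigma^2)\, N$, where $N := \tfrac12\bigl(T_{\theta_1}(\cA\setminus\cN_{a_1,a_2}) + T_{\theta_2}(\cA\setminus\cN_{a_1,a_2})\bigr)$.

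Third, I would lower bound the average stochastic regret by combining two contributions: $(i)$ the exploration cost $\Delta_0 \cdot N$ from playing outside the neighborhood, and $(ii)$ an $\Omega(\varepsilon T)$ term from the Bretagnolle--Huber inequality applied to the event ``$a_2$ played at least $T/2$ times'', yielding $\tfrac12(\storeg(\theta_1)+\storeg(\theta_2)) \gtrsim \varepsilon\rho\, T \exp(-C\varepsilon^2 N/\sigma^2) + \Delta_0 N$. Case-splitting on whether $\varepsilon^2 N/\sigma^2 \le 1$ and tuning $\varepsilon = (\sigma^2/T)^{1/3}$ gives an average stochastic regret of order $(\sigma^2 T^2)^{1/3} = \tilde\Omega(T^{2/3})$. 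Finally, invoking \Cref{lem:reg-truncation} with $\Theta=\{\theta_1,\theta_2\}$ transfers this bound to $R^*_T$: the correction term scales like $T^{3/2}\sqrt{\Pr(\clip\ne\mathrm{id})} = O(1)$ by the choice of $\sigma$, so $R^*_T = \tilde\Omega(T^{2/3})$. The main obstacle I anticipate is verifying the uniform constant $\Delta_0 > 0$ and the uniform signal gap $\varepsilon\rho$ hold simultaneously with $\theta_1,\theta_2$ staying inside $\Loss$ at safe distance $r/2$ from the boundary, but this is ensured by choosing $\theta_0$ in the relative interior of $C_{a_1}\cap C_{a_2}$ (which is $(d-1)$-dimensional by the neighborhood property recalled in \Cref{appendix:geometry}) and taking $\varepsilon$ small enough.
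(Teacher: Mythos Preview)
Your overall architecture matches the paper's proof almost exactly: pick neighbors $a_1,a_2$ violating local observability, perturb a point $\theta_0$ in the relative interior of $C_{a_1}\cap C_{a_2}$ along the component $v$ of $\feat{a_1}-\feat{a_2}$ orthogonal to $\spn\{M_c : c\in\cN_{a_1,a_2}\}$, use Gaussian parameter noise with $\sigma\asymp r/\sqrt{\log T}$, and invoke \Cref{lem:reg-truncation} to port the stochastic lower bound to the adversarial minimax regret. The KL computation (zero on the neighborhood, $O(\varepsilon^2/\sigma^2)$ outside) and the final tuning $\varepsilon\asymp(\sigma^2/T)^{1/3}$ are also essentially the paper's.

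There is, however, one genuine gap in your step~$(ii)$. The event ``$a_2$ played at least $T/2$ times'' does yield $\Omega(\varepsilon\rho T)$ regret under $\theta_1$, but its complement does \emph{not} force $\Omega(\varepsilon\rho T)$ regret under $\theta_2$: the neighborhood $\cN_{a_1,a_2}=\{c:\feat{c}\in[\feat{a_1},\feat{a_2}]\}$ may contain duplicates of $a_2$ or actions with features arbitrarily close to $\feat{a_2}$, all of which have vanishing gap under $\theta_2$. So Bretagnolle--Huber applied to this event does not deliver the claimed $\varepsilon\rho\,T\exp(-\cdot)$ term. The paper handles this by partitioning $\cN_{a_1,a_2}$ into $S_a=\{c:\text{gap}_{\theta_a}(c)\ge\tilde\Delta/2\}$ and its complement $S_b$, and noting that every $c\in S_b$ has gap at least $\tilde\Delta/2$ under $\theta_b$ (this uses that the two gaps sum to $\tilde\Delta$ on the segment). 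It then bounds $\storeg(\theta_b)\ge(\tilde\Delta/2)(T-T_{\theta_b}(S_a))$ and relates $T_{\theta_b}(S_a)$ to $T_{\theta_a}(S_a)$ via Pinsker. Your argument is easily repaired along the same lines: replace ``$a_2$'' in your event by ``some action in $S_b$'' (equivalently, an action whose feature lies in the half of the segment nearer $\feat{a_2}$).

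Two smaller inaccuracies: the gap between $a_1$ and $a_2$ is $\varepsilon\rho$, not $2\varepsilon\rho$; and your $N$ is defined as the average of $T_{\theta_1}(\cdot)$ and $T_{\theta_2}(\cdot)$, but \Cref{lem:kl} gives $\KL(\Pr_{\theta_1}\Vert\Pr_{\theta_2})$ in terms of $T_{\theta_1}(\cdot)$ alone, so the bookkeeping in the displayed inequality needs adjustment (the paper keeps everything in terms of $T_{\theta_a}(\overbar{\cN}_{a,b})$).
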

    \begin{proof}
        By \Cref{def:obs condition} and \Cref{lem:local-defs-expanded}, there exist at least three non-duplicate Pareto optimal actions, two of which, say $a$ and $b$, are neighbors and satisfy
        \[
            \feat{a} - \feat{b} \;\notin\; \spn \{M_c : c \in \cN_{a,b}\} \,.
        \]
        Hence, we can write $\feat{a} - \feat{b} = u + v$ with $u$ belonging to $\spn \{M_c : c \in \cN_{a,b}\}$ and $v \neq 0$ belonging to its orthogonal complement.
        Let $\theta \in \relint(C_a \cap C_b)$. Then $\cP(\theta) $ coincides with $ \cN_{a,b}$; in particular, there exists a game dependent $\epsilon >0$ such that any $c \notin \cN_{a,b}$ satisfies $\inprod{\feat{c} - \feat{a}, \theta} \geq \epsilon$.
        Moreover, we can assume that $\norm{\theta} = r/4$ ($C_a \cap C_b$ is a cone).
        Let $0 < \Delta \leq 1$ be a constant to be chosen to later, and let $q \coloneqq \frac{r}{4\norm{v}} v$.
        Now, define
        \[
            \theta_a = \theta - \Delta q \qquad \text{and} \qquad \theta_b = \theta + \Delta q \,,
        \]
        and notice that \begin{align*}
            \inprod{\feat{a} - \feat{b}, \theta - \Delta q} =-  \frac{\Delta r}{4 \norm{v}}\inprod{\feat{a} - \feat{b}, v} = - \frac{\Delta r \norm{v}^2}{4 \norm{v}} = - \frac{\Delta r \norm{v}}{4} < 0 \,.
        \end{align*}
        Similarly, $\inprod{\feat{b} - \feat{a}, \theta + \Delta q} = - \frac{\Delta r \norm{v}}{4} < 0$.
        For brevity, let $\tilde{\Delta} \coloneqq \frac{\Delta r \norm{v}}{4}$.
        Note that any $c \in \cN_{a,b}$ satisfies that $c \in \cP(\theta)$. Hence, 
        \begin{align*}
            \inprod{\feat{c} - \feat{a}, \theta - \Delta q} + \inprod{\feat{c} - \feat{b}, \theta + \Delta q} = \inprod{\feat{c} - \feat{b} -\feat{c} + \feat{a}, \Delta q} =  \tilde{\Delta} \,.
        \end{align*}
        Therefore, $\max \bcb{ \inprod{\feat{c} - \feat{a}, \theta_a},  \inprod{\feat{c} - \feat{b}, \theta_b} } \geq \tilde{\Delta} / 2$.
        In particular, define 
        \[S_a \coloneqq \{ c \in \cN_{a,b} \mid \inprod{\feat{c} - \feat{a}, \theta_a} \geq \tilde{\Delta} / 2 \}  \,, \:\:\:\: S_b \coloneqq \{ c \in \cN_{a,b} \mid \inprod{\feat{c} - \feat{a}, \theta_a} < \tilde{\Delta} / 2 \}  = \cN_{a,b} \setminus S_a \,.\]
        Also note that $\min \bcb{ \inprod{\feat{c} - \feat{a}, \theta_a},  \inprod{\feat{c} - \feat{b}, \theta_b} }  \geq 0$ for any $c \in \cN_{a,b}$ since $\feat{c} \in [\feat{a},\feat{b}]$.
        Moving over to actions $c \notin \cN_{a,b}$,
        we have that
        \begin{align*}
            \inprod{\feat{c} - \feat{a}, \theta - \Delta q} 
            \geq \epsilon - \inprod{\feat{c} - \feat{a}, \Delta q} 
            \geq \epsilon - \Delta \norm{\feat{c} - \feat{a}}\norm{q} 
            \geq \epsilon - \frac{r \Delta \alpha^*}{4} \,, 
        \end{align*}
        where $\alpha^* \coloneqq \max_{c,d \in \cA} \norm{\feat{c} - \feat{d}} > 0$.
        Assume that $\Delta \leq \frac{2 \epsilon}{ r \alpha^*}$; thus, $\inprod{\feat{c} - \feat{a}, \theta_a} \geq \epsilon/2$, and similarly, $\inprod{\feat{c} - \feat{b}, \theta_b} \geq \epsilon/2$.
        Moreover, since $\norm{v} \leq \alpha^*$, we have that $\Delta \leq \frac{2 \epsilon}{ r \norm{v}}$, implying that $\tilde{\Delta} \leq \epsilon / 2$.
        For brevity, let $\overbar{\cN}_{a,b} = \cA \setminus \cN_{a,b}$. 
        Overall, we have that $\theta_a \in C_a$ and $\theta_b \in C_b$.
        In particular,
        \begin{align*}
            \storeg(\theta_a) = \E_{\theta_a} \sum_{t=1}^T (\feat{A_t} - \feat{a})^\top \theta_a \geq \frac{\epsilon}{2} T_{\theta_a}(\overbar{\cN}_{a,b}) + \frac{\tilde{\Delta}}{2} T_{\theta_a}(S_a) \,,
        \end{align*}
        and,
        \begin{align*}
            \storeg(\theta_b) = \E_{\theta_b} \sum_{t=1}^T (\feat{A_t} - \feat{b})^\top \theta_b &\geq \frac{\epsilon}{2} T_{\theta_b}(\overbar{\cN}_{a,b}) + \frac{\tilde{\Delta}}{2} T_{\theta_b}(S_b) \\
            &\geq \frac{\tilde{\Delta}}{2} T_{\theta_b}\brb{\overbar{\cN}_{a,b} \cup S_b} \\
            &= \frac{\tilde{\Delta}}{2} \brb{T - T_{\theta_b}(S_a)} \\
            &= \frac{\tilde{\Delta}}{2} \brb{T - T_{\theta_a}(S_a)} - \frac{\tilde{\Delta}}{2} \brb{T_{\theta_b}(S_a) - T_{\theta_a}(S_a)} \\
            &\geq \frac{\tilde{\Delta}}{2} \brb{T - T_{\theta_a}(S_a)} - \frac{\tilde{\Delta}}{2} T \sqrt{(1/2)\kl{\pr_{\theta_a}}{\pr_{\theta_b}}} \,,
        \end{align*}
        where the last inequality follows from \cite[Exercise 14.4]{banditbook} and Pinsker's inequality.
        Hence,
        \begin{align*}
            \storeg(\theta_a) + \storeg(\theta_b) \geq \frac{\epsilon}{2} T_{\theta_a}(\overbar{\cN}_{a,b}) + \frac{\tilde{\Delta}}{2} T \brb{1 - \sqrt{(1/2) \kl{\pr_{\theta_a}}{\pr_{\theta_b}}}} \,.
        \end{align*}

        Now, let $X_t \sim \cN(\bm{0}, \sigma^2 \bm{I}_d)$.
        Recall that, $\loss_t = \theta_a + X_t$ under environment $\theta_a$, and $\loss_t = \theta_b + X_t$ under environment $\theta_b$. For any action $c$, it then holds that $\pr_{\theta_a, c}$ is $\cN \brb{M_c^\top \theta_a, \sigma^2 M_c^\top M_c}$, and that 
        \begin{align*}
            \kl{\pr_{\theta_a, c}}{\pr_{\theta_b, c}} = \Bkl{\cN \brb{M_c^\top \theta_a, \sigma^2 M_c^\top M_c}}{\cN \brb{M_c^\top \theta_b, \sigma^2 M_c^\top M_c}} \,.
        \end{align*}
        Note that the first distribution is supported on $M_c^\top \theta_a + \col(M_c^\top M_c) = \col(M_c^\top)$, which coincides with the support of the second. Hence, the KL-divergence is finite and can be written as (using, e.g., Equation 3.1 by \citet{holbrook2018}):
        \begin{align*}
            \kl{\pr_{\theta_a, c}}{\pr_{\theta_b, c}} &= \frac{1}{2\sigma^2} \brb{ M_c^\top \theta_a - M_c^\top \theta_b }^\top \brb{M_c^\top M_c}^\dagger \brb{ M_c^\top \theta_a - M_c^\top \theta_b } \\
            &= \frac{1}{2\sigma^2} \lrb{ \theta_a - \theta_b }^\top M_c \brb{M_c^\top M_c}^\dagger M_c^\top \lrb{ \theta_a - \theta_b } \\
            &= \frac{1}{2\sigma^2} \lrb{ \theta_a - \theta_b }^\top M_c M_c^\dagger \lrb{ \theta_a - \theta_b } \\
            &= \frac{1}{2\sigma^2} \lrb{ -2 \Delta q }^\top M_c M_c^\dagger \lrb{ -2 \Delta q } \\
            &= \frac{2 \Delta^2}{\sigma^2} q^\top M_c M_c^\dagger q = \frac{2 \Delta^2}{\sigma^2} \bno{ M_c M_c^\dagger q}^2 \,,
        \end{align*}
        where we have used that $M_c M_c^\dagger$ is the orthogonal projection onto $\col(M_c)$. This also means that if $c \in \cN_{a,b}$, then $M_c M_c^\dagger q = 0$ by the construction of $q$. Otherwise, we simply have that $\bno{ M_c M_c^\dagger q} \leq \norm{q} = r / 4$.
        Hence, via \Cref{lem:kl}, we have that
        \begin{align*}
            \kl{\pr_{\theta_a}}{\pr_{\theta_b}} \leq \frac{r^2 \Delta^2}{8\sigma^2} T_{\theta_a}(\overbar{\cN}_{a,b}) = \frac{2 \tilde{\Delta}^2}{\sigma^2 \norm{v}^2} T_{\theta_a}(\overbar{\cN}_{a,b}) \,.
        \end{align*}
        Which gives that
        \begin{align*}
            \storeg(\theta_a) + \storeg(\theta_b) &\geq \frac{\epsilon}{2} T_{\theta_a}(\overbar{\cN}_{a,b}) + \frac{\tilde{\Delta}}{2} T \bbrb{1 - \frac{\tilde{\Delta}}{\sigma \norm{v}} \sqrt{ T_{\theta_a}(\overbar{\cN}_{a,b})}} \,.
        \end{align*}
        Now, set $\Delta = {2 \sigma T^{-\nicefrac13}}/{r}$, assuming $T$ is large enough so that 
        $\Delta \leq \min \bcb{ 1, \frac{2 \epsilon}{r\alpha^* }}$.
        Precisely, assuming that 
        $T^{\nicefrac13} \geq \sigma \max\{ \nicefrac{2}{r}, \nicefrac{\alpha^*} { \epsilon}\}$.
        We then have that $\tilde{\Delta} = \sigma \norm{v} T^{-\nicefrac13} /2 $.
        Now, if $T_{\theta_a}(\overbar{\cN}_{a,b}) \leq T^{\nicefrac23}$, then,
        \begin{align*}
            \storeg(\theta_a) + \storeg(\theta_b) &\geq  \frac{\tilde{\Delta}}{2} T \bbrb{1 - \frac{\tilde{\Delta}}{\sigma \norm{v}} T^{\nicefrac{1}{3}}} = \frac{\tilde{\Delta}}{4} T = \frac{\sigma \norm{v}}{8} T^{\nicefrac{2}{3}} \,.
        \end{align*}
        Otherwise, we directly get that $\storeg(\theta_a) + \storeg(\theta_b) \geq \frac{\epsilon}{2} T^{\nicefrac{2}{3}} \,. $
        Hence, in total, we have that
        \begin{align*}
            \frac{1}{2} \brb{\storeg(\theta_a) + \storeg(\theta_b)} \geq \min \bbcb{ \frac{\sigma \norm{v}}{16},\frac{\epsilon}{4} } T^{\nicefrac{2}{3}} \,.
        \end{align*}
        We now prepare for an application of \Cref{lem:reg-truncation}.
        First, notice that
        $\E \norm{X_1}^2 = \sigma^2 d$.
        Second, pick the clipping operator as
        \begin{equation*}
            \clip(\loss_t) \coloneqq \indicator{ \norm{\loss_t} \leq r } \loss_t + \indicator{ \norm{\loss_t} > r } \frac{ r \loss_t}{\norm{\loss_t}} \,.
        \end{equation*}
        Then, for $\theta' \in \{\theta_a,\theta_b\}$,
        \begin{align*}
            \mathbb{P}_{\theta'}\brb{ \clip(\ell_1) \neq \ell_1 } 
            = \mathbb{P}_{\theta'}\brb{ \norm{\loss_t} > r } 
            &\leq \mathbb{P}_{\theta'}\brb{ \norm{\theta'} + \norm{X_1} > r } \\
            &\leq \mathbb{P}_{\theta'}\brb{ \norm{X_1} > r/2 } \\
            &= \mathbb{P}_{\theta'}\brb{ \norm{X_1}^2 > r^2/4 } \,,
        \end{align*}
        which uses that $\norm{\theta_a} \leq \norm{\theta} + \Delta \norm{q} \leq r/2$, with the same holding for $\theta_b$.
        Lemma 1 by \citet{laurent2000} gives that for $x >0$,
        \begin{align*}
            \mathbb{P} \brb{ \norm{X_1}^2 > \sigma^2 d + 2 \sigma^2 \sqrt{d x} + 2 \sigma^2 x } \leq e^{-x} \,.
        \end{align*}
        Assuming $\sigma^2 \leq r^2/(8 d)$, setting $x = r^2/(80 \sigma ^2)$ gives that
        \begin{align*}
            \mathbb{P} \brb{ \norm{X_1}^2 > r^2/4 } \leq e^{-r^2/(80 \sigma ^2)} \,.
        \end{align*}
        Then, \Cref{lem:reg-truncation} yields that
        \begin{align*}
            R_T^* &\geq \frac{1}{2} \sum_{\theta' \in \{\theta_a,\theta_b\}} \storeg(\theta') - \frac{\sqrt{2} \alpha^* T^{3/2}}{2} \sum_{\theta' \in \{\theta_a,\theta_b\}}  \brb{ \norm{ \theta'} +   \sqrt{\E \norm{X_1}^2}  } \sqrt{\mathbb{P}_{\theta'}\brb{ \clip(\ell_1) \neq \ell_1 }} \\
            &\geq \min \bbcb{ \frac{\sigma \norm{v}}{16},\frac{\epsilon}{4} } T^{\nicefrac{2}{3}} - {\sqrt{2} \alpha^* T^{\nicefrac32}}  \brb{ {r}/{2} + \sigma \sqrt{d}  } e^{-r^2/(160 \sigma ^2)} \\
            &\geq \min \bbcb{ \frac{\sigma \norm{v}}{16},\frac{\epsilon}{4} } T^{\nicefrac{2}{3}} - {\sqrt{2} \alpha^* r T^{\nicefrac32}}  e^{-r^2/(160 \sigma ^2)} \,,
        \end{align*}
        where the last step uses that $\sigma \leq r/\sqrt{8 d}$.
        Now, we can pick $\sigma = \frac{r}{\sqrt{240 \log(T)}}$, which is a valid choice when $T$ is large enough so that $\sigma^2 \leq r^2/(8 d)$, meaning that $T \geq e^{\nicefrac{d}{30}}$.
        With this choice of $\sigma$ we obtain that
        \begin{align*}
            R_T^* \geq \min \bbcb{ \frac{r \norm{v}}{32 \sqrt{60 \log(T)}},\frac{\epsilon}{4} } T^{\nicefrac{2}{3}} - {\sqrt{2} \alpha^* r } \,, 
        \end{align*}
        which concludes the proof.
    \end{proof}

    \begin{theorem} \label{thm:local-lower-bound}
        Assume that $\cB_2(r) \subseteq \Loss$ for some $r>0$. If the game is locally observable and there is at least one pair of non-duplicate Pareto optimal actions, then $R^*_T$ is $\tilde{\Omega}\brb{\sqrt{T}}$.
    \end{theorem}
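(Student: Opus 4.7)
The plan is to adapt the hard-instance construction from the proof of \Cref{thm:global-lower-bound} to exploit the stronger observability assumption. By Balinski's theorem (see \Cref{appendix:geometry}), the $1$-skeleton of the polytope $\cV$ is connected, so the assumption of at least two non-duplicate Pareto optimal actions guarantees the existence of two such actions $a, b$ that are moreover neighbors. I would pick $\theta \in \relint(C_a \cap C_b)$ with $\norm{\theta} = r/4$ and, in contrast with the global proof, set $q = (r/(4\norm{\feat{a}-\feat{b}}))(\feat{a} - \feat{b})$, aligned with $\feat{a} - \feat{b}$ itself rather than with a component orthogonal to $\spn\{M_c \mid c \in \cN_{a,b}\}$ (which may well be trivial here). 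Defining $\theta_a = \theta - \Delta q$, $\theta_b = \theta + \Delta q$, and again using Gaussian parameter noise $X_t \sim \cN(\bm{0}, \sigma^2 \bm{I}_d)$, the same arguments as in \Cref{thm:global-lower-bound} show that $\theta_a \in C_a$ and $\theta_b \in C_b$ once $\Delta \leq 4\epsilon/(r\alpha^*)$ (with $\epsilon, \alpha^*$ as there), that $b$ has gap $\tilde{\Delta} \coloneqq \Delta r \norm{\feat{a}-\feat{b}}/4$ under $\theta_a$, and that any action outside $\cN_{a,b}$ has gap at least $\epsilon/2$ under either environment.

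For the regret decomposition I would partition $\cN_{a,b} = N_a \cup N_b$ via the convex coefficients $\feat{c} = \alpha_c \feat{a} + (1-\alpha_c)\feat{b}$: put $c \in N_a$ iff $\alpha_c \leq 1/2$ (so $N_a$ contains $b$) and $c \in N_b$ otherwise (so $N_b$ contains $a$). Under $\theta_a$, every action in $N_a \cup \overbar{\cN}_{a,b}$ has instantaneous gap at least $\tilde{\Delta}/2$, and symmetrically under $\theta_b$, giving
\[
    \storeg(\theta_a) + \storeg(\theta_b) \;\geq\; \frac{\tilde{\Delta}}{2}\bigl(2T - T_{\theta_a}(N_b) - T_{\theta_b}(N_a)\bigr).
\]
The key departure from the global proof is the KL bound: because $q$ now lies along $\feat{a}-\feat{b}$, we no longer have $M_c^\top q = 0$ for $c \in \cN_{a,b}$, but the uniform estimate $\norm{M_c M_c^\dagger q}^2 \leq \norm{q}^2 = r^2/16$ plugged into the Gaussian KL identity already derived in the previous proof yields the uniform bound $\kl{\pr_{\theta_a}}{\pr_{\theta_b}} \leq \Delta^2 r^2 T/(8\sigma^2)$. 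A one-shot Le Cam argument (via $|T_{\theta_a}(N_b) - T_{\theta_b}(N_b)| \leq T\sqrt{(1/2)\kl{\pr_{\theta_a}}{\pr_{\theta_b}}}$ from \cite[Exercise~14.4]{banditbook} and Pinsker, combined with $T_{\theta_a}(N_a) + T_{\theta_a}(N_b) \leq T$ since the union is disjoint) then produces
\[
    \storeg(\theta_a) + \storeg(\theta_b) \;\geq\; \frac{\tilde{\Delta} T}{2}\Bigl(1 - \sqrt{(1/2)\kl{\pr_{\theta_a}}{\pr_{\theta_b}}}\Bigr).
\]

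Tuning $\Delta = 2\sigma/(r\sqrt{T})$ makes the KL at most $1/2$ and yields $\storeg(\theta_a) + \storeg(\theta_b) \geq \tilde{\Delta} T/4 = \Omega(\sigma \norm{\feat{a}-\feat{b}}\sqrt{T})$. To transfer this to the adversarial minimax regret I would reuse verbatim the clipping operator and the Laurent--Massart Gaussian tail bound from the proof of \Cref{thm:global-lower-bound}, and invoke \Cref{lem:reg-truncation}; choosing $\sigma = \Theta(r/\sqrt{\log T})$ makes the truncation correction polynomially small while preserving the bound up to a $\sqrt{\log T}$ factor, giving $R_T^* = \tilde{\Omega}(\sqrt{T})$.

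The hard part is the boundary bookkeeping---$\Delta$ must be small enough for the cone-membership condition, the $\epsilon$-gap condition for non-neighbors, and the truncation correction to be subleading, while large enough to beat the $1/\sqrt{\log T}$ noise scaling---but this is essentially the same balancing act as in the global case. The only genuinely new element is the freedom to align $q$ with $\feat{a}-\feat{b}$, afforded by local observability: this replaces the $O(\Delta^2 T_{\theta_a}(\overbar{\cN}_{a,b}))$ KL bound by a uniform $O(\Delta^2 T)$ one, eliminating the need for a case split on $T_{\theta_a}(\overbar{\cN}_{a,b})$ and thereby trading the $T^{\nicefrac{2}{3}}$ rate for $\sqrt{T}$.
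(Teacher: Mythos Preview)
Your proposal is correct and follows essentially the same route as the paper: take neighbors $a,b$, align $q$ directly with $\feat{a}-\feat{b}$, use the uniform per-action KL bound $\norm{M_cM_c^\dagger q}\le\norm{q}$ over all actions, tune $\Delta\propto\sigma/\sqrt{T}$, and reuse the truncation machinery of \Cref{lem:reg-truncation} with $\sigma\propto r/\sqrt{\log T}$. The only slip is the constraint $\Delta\le 4\epsilon/(r\alpha^*)$, which should be $2\epsilon/(r\alpha^*)$ if you actually want the $\epsilon/2$ gap for actions outside $\cN_{a,b}$ (needed so that those actions too contribute at least $\tilde{\Delta}/2$ in your decomposition); this is a numerical typo, not a structural issue.
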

    \begin{proof}
        By the connectedness of the neighborhood graph, there must be at least two pair of neighboring actions, which we again denote by $a$ and $b$.
        Most of the proof proceeds almost identically to that of \Cref{thm:global-lower-bound}.
        The main difference is that $\feat{a} - \feat{b} \in \spn \{M_c : c \in \cN_{a,b}\} $ by local observability. Moreover, $\overbar{\cN}_{a,b}$ could be empty.
        Nevertheless, we can directly take
        \[
            v \coloneqq \feat{a} - \feat{b} \,,
        \]
        which is strictly positive, and again define $q \coloneqq \frac{r}{4\norm{v}} v $.
        Then, for some $\theta \in \relint(C_a \cap C_b)$,
        we again choose
        \[
            \theta_a = \theta - \Delta q \qquad \text{and} \qquad \theta_b = \theta + \Delta q \,.
        \]
        Following the proof of \Cref{thm:global-lower-bound} verbatim, we reach that
        \begin{align*}
            \storeg(\theta_a) + \storeg(\theta_b) &\geq \frac{\epsilon}{2} T_{\theta_a}(\overbar{\cN}_{a,b}) + \frac{\tilde{\Delta}}{2} T \brb{1 - \sqrt{(1/2) \kl{\pr_{\theta_a}}{\pr_{\theta_b}}}} \\
            &\geq  \frac{\tilde{\Delta}}{2} T \brb{1 - \sqrt{(1/2) \kl{\pr_{\theta_a}}{\pr_{\theta_b}}}} \,.
        \end{align*}
        We choose, again, that $X_t \sim \cN(\bm{0}, \sigma^2 \bm{I}_d)$.
        Now, however, we use the bound $\kl{\pr_{\theta_a, c}}{\pr_{\theta_b, c}} \leq \frac{2 \tilde{\Delta}^2}{\sigma^2 \norm{v}^2}$ for all actions $c$, even the ones in $\cN_{a,b}$.
        This yields via \Cref{lem:kl} that
        \begin{align*}
            \storeg(\theta_a) + \storeg(\theta_b) &\geq  \frac{\tilde{\Delta}}{2} T \bbrb{1 - \frac{ \tilde{\Delta}}{\sigma \norm{v}} \sqrt{ T}} \,.
        \end{align*}
        Taking $\Delta = {2 \sigma T^{-\nicefrac12}}/{r}$, (assuming again that $T$ is large enough to satisfy the condition set on $\Delta$ in the proof of \Cref{thm:global-lower-bound}),\footnote{If $\overbar{\cN}_{a,b}$ is empty (also meaning that $T_{\theta}(\overbar{\cN}_{a,b}) = 0$), one can satisfy the conditions on $\Delta$ in the proof of \Cref{thm:global-lower-bound} taking $\epsilon$ as an arbitrarily large number.} we have that $\tilde{\Delta} = {\sigma \norm{v} T^{-\nicefrac12}}/{2}$, which implies that 
        \begin{align*}
            \storeg(\theta_a) + \storeg(\theta_b) &\geq  \frac{\sigma \norm{v}}{8} \sqrt{T} \,.
        \end{align*}
        The rest of the proof again follows that of \Cref{thm:global-lower-bound}, with an identical use of \Cref{lem:reg-truncation} and tuning of $\sigma$, finally yielding that
        \begin{align*}
            R_T^* \geq   \frac{r \norm{v}}{32 \sqrt{60 \log(T)}} \sqrt{T} - {\sqrt{2} \alpha^* r }
        \end{align*}
        for sufficiently large $T$.
    \end{proof}

    \begin{theorem} \label{thm:hopeless-lower-bound}
        Assume that $\cB_2(r) \subseteq \Loss$ for some $r>0$. If the game is not globally observable, then $R^*_T$ is ${\Omega}(T)$.
    \end{theorem}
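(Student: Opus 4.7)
\emph{Proof sketch.} The plan is to exhibit two constant loss sequences $\loss_t\equiv\theta_1$ and $\loss_t\equiv\theta_2$ lying in $\Loss$ whose induced observation processes coincide for \emph{every} action, yet with $a$ optimal under $\theta_1$ and strictly suboptimal under $\theta_2$. Since the signals are deterministically identical across the two environments, any policy plays $a$ (and its duplicates) with the same frequency under both, forcing $\Omega(T)$ regret on at least one of them.

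First I would locate two Pareto optimal actions $a,b$ with $\feat{a}-\feat{b}\notin\col(\bm{M})$. Such a pair must exist, because every $\feat{c}-\feat{c'}$ is a linear combination of differences of extreme features of $\co(\cX)$, so a failure of global observability must already be witnessed at the level of the vertices; in particular $\feat{a}\neq\feat{b}$, so $b$ is not a duplicate of $a$. I would then decompose $\feat{a}-\feat{b}=u+v$ with $u\in\col(\bm{M})$ and $0\neq v\in\col(\bm{M})^\perp$, noting that $(\feat{a}-\feat{b})^\top v=\norm{v}^2$ because $u\perp v$.

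Since $a$ is Pareto optimal, $\dim C_a=d$ (see \Cref{appendix:geometry}), so I can pick $\theta^*\in\relint(C_a)$ with $\norm{\theta^*}=1$ and set $\epsilon^*\coloneqq\min_{c\notin\cA_a}(\feat{c}-\feat{a})^\top\theta^*>0$, where $\cA_a\coloneqq\{c\in\cA:\feat{c}=\feat{a}\}$ and $\alpha^*\coloneqq\max_{c,c'\in\cA}\norm{\feat{c}-\feat{c'}}$. Choosing $\mu\coloneqq r/(1+(\alpha^*+\epsilon^*)/\norm{v})$ and $\delta\coloneqq\mu(\alpha^*+\epsilon^*)/\norm{v}^2$, the vectors $\theta_1\coloneqq\mu\theta^*$ and $\theta_2\coloneqq\mu\theta^*+\delta v$ both satisfy $\norm{\theta_i}\leq r$ and therefore lie in $\cB_2(r)\subseteq\Loss$. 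Three properties then hold: (a) $M_c^\top\theta_1=M_c^\top\theta_2$ for every $c$, since $v\perp\col(\bm{M})$, so the deterministic observation process is identical under either constant loss sequence and the law of $\his_T$ coincides; (b) under $\theta_1$, $(\feat{c}-\feat{a})^\top\theta_1=\mu(\feat{c}-\feat{a})^\top\theta^*\geq\mu\epsilon^*$ for all $c\notin\cA_a$; (c) under $\theta_2$, $(\feat{a}-\feat{b})^\top\theta_2\geq-\mu\alpha^*+\delta\norm{v}^2=\mu\epsilon^*$, so any duplicate of $a$ is beaten by $b$ by at least $\mu\epsilon^*$.

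To close, let $N\coloneqq\E\bigl[\sum_{t=1}^T\indicator{A_t\in\cA_a}\bigr]$, which by (a) takes the same value under the two constant sequences. Property (b) yields regret at least $\mu\epsilon^*(T-N)$ against $\loss_t\equiv\theta_1$, while property (c) yields regret at least $\mu\epsilon^* N$ against $\loss_t\equiv\theta_2$; taking the worst case gives $R_T^*\geq\tfrac12\mu\epsilon^* T$, which is $\Omega(T)$ since $\mu\epsilon^*$ is a positive game-dependent constant. The only real obstacle is the bookkeeping just described for placing $\theta_1,\theta_2$ inside $\Loss$ while keeping the gap $\mu\epsilon^*$ bounded away from zero; notably, unlike the proofs of \Cref{thm:global-lower-bound,thm:local-lower-bound}, no Gaussian parameter noise or truncation argument (\Cref{lem:reg-truncation}) is needed here, because the two environments are \emph{deterministically} indistinguishable.
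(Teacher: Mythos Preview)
Your proof is correct. Both your argument and the paper's exploit the same core observation: a perturbation along the component $v$ of $\feat{a}-\feat{b}$ orthogonal to $\col(\bm{M})$ is invisible to every observation map $M_c^\top$ yet can flip which action is optimal. The executions differ, however. The paper first reduces to a pair of \emph{neighboring} Pareto optimal actions, places the base point $\theta$ in $\relint(C_a\cap C_b)$, and then reuses verbatim the Gaussian-noise-plus-truncation template from the proofs of \Cref{thm:global-lower-bound,thm:local-lower-bound} (including \Cref{lem:reg-truncation} and the KL computation), only to observe at the end that the KL divergence between the two environments vanishes and that the noise parameter $\sigma$ can be taken arbitrarily small since it no longer appears in the leading term. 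Your route is more direct: you place the base point in $\relint(C_a)$ (so you need only $a$ to be Pareto optimal, not the neighbor reduction), use two purely deterministic constant-loss environments in $\cB_2(r)$, and conclude by a coupling argument that the law of the history---hence the expected count $N$---is literally the same under both. The paper's approach buys uniformity of presentation across the three regimes; yours buys a shorter, self-contained argument for this one that sidesteps the noise, the truncation lemma, and the neighborhood structure altogether.
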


    \begin{proof}
        The game not being globally observable means that there exists $a, b \in \cA$ such that
        \begin{equation*}
            \feat{a} - \feat{b} \;\notin\; \spn\{M_c : c\in \mathcal{A}\}\,.
        \end{equation*}
        This implies that they have distinct features, which in turn implies that there are at least two non-duplicate Pareto optimal actions (vertices).
        Now, if the property above holds for some pair of actions $(a,b)$, then it must hold for at least one pair of non-duplicate Pareto optimal actions.
        This is because the features of any action can be written as a convex combination (with positive weights) of the feature of one or more Pareto optimal actions.
        Hence, $\feat{a} - \feat{b}$ can be written as a linear combination of feature differences for Pareto optimal actions.
        This implies that we can take $a$ and $b$ to be (non-duplicate) Pareto optimal actions.
        In fact, we can take them to be neighbors too; since the neighborhood graph is connected, we can find a path $(a_i)_{i=1}^s$ where $a_1 = a$, $a_s = b$, and $(a_i,a_{i+1})$ are neighbors (endpoint of an edge in the polytope $\cV = \co(\cX)$).
        And since $\feat{a} - \feat{b} = \sum_{i=1}^{s-1} (\feat{a_i} - \feat{a_{i+1}})$, at least one pair of neighbors must satisfy the non-observability condition above.
        Thus, without loss of generality, $a$ and $b$ are assumed to be neighbors.

        Now, define $v$ as the component of $\feat{a} - \feat{b}$ in the orthogonal complement of $\spn\{M_c : c\in \mathcal{A}\}$, and  set $q \coloneqq \frac{r}{4\norm{v}} v$. Neither of which is zero by assumption.
        With this altered choice of $v$ (and the corresponding one for $q$), we can follow the proof of \Cref{thm:global-lower-bound} verbatim till we reach that
        \begin{align*}
            \storeg(\theta_a) + \storeg(\theta_b) &\geq \frac{\epsilon}{2} T_{\theta_a}(\overbar{\cN}_{a,b}) + \frac{\tilde{\Delta}}{2} T \brb{1 - \sqrt{(1/2) \kl{\pr_{\theta_a}}{\pr_{\theta_b}}}} \\
            &\geq  \frac{\tilde{\Delta}}{2} T \brb{1 - \sqrt{(1/2) \kl{\pr_{\theta_a}}{\pr_{\theta_b}}}} \,.
        \end{align*}
        Observe now that $q$ belongs to the orthogonal complement of $\col(M_c)$ for all $c \in \cA$, meaning that, via the derivation in the proof of \Cref{thm:global-lower-bound} (again with $X_t \sim \cN(\bm{0}, \sigma^2 \bm{I}_d)$), $\kl{\pr_{\theta_a, c}}{\pr_{\theta_b, c}} = 0$.
        Hence, $\kl{\pr_{\theta_a}}{\pr_{\theta_b}} = 0$ via \Cref{lem:kl}.
        Then, choosing 
        $\Delta = \min \bcb{ 1, \frac{2 \epsilon}{r \alpha^* }}$
        yields that\footnote{As remarked in the proof of \Cref{thm:local-lower-bound}, we can take $\epsilon$ to be an arbitrarily large number if $\overbar{\cN}_{a,b}$ is empty.}
        \begin{equation*}
            \storeg(\theta_a) + \storeg(\theta_b) \geq \min \Bcb{ \frac{r\norm{v}}{8}, \frac{\epsilon \norm{v}}{4 \alpha^* }} \: T \,.
        \end{equation*}
        Next, a similar application of \Cref{lem:reg-truncation} to that in the proof of \Cref{thm:global-lower-bound} gives (with a sufficiently small $\sigma$) that
        \begin{align*}
            R_T^* \geq \min \Bcb{ \frac{r\norm{v}}{16}, \frac{\epsilon \norm{v}}{8 \alpha^* }} \: T - {\sqrt{2} \alpha^* r T^{\nicefrac32}}  e^{-r^2/(160 \sigma ^2)} \,.
        \end{align*}
        Since $\sigma$ does not appear in the first term (in fact, adding noise here is redundant), it can be chosen sufficiently small so that
        \begin{align*}
            R_T^* \geq \min \Bcb{ \frac{r\norm{v}}{32}, \frac{\epsilon \norm{v}}{16 \alpha^* }} \: T \,,
        \end{align*}
        proving the theorem.
    \end{proof}

    \subsection{A lower bound for bandits with ill-conditioned observers}
    As defined in \Cref{sec:pre}, we have in this game that $d = k$, $\feat{a} = e_a$, and $M_a = (1-\varepsilon)\bm{1}_k/k + \varepsilon e_a$ for some $\varepsilon \in (0,1]$.
    Here, we take $\Loss = \cB_\infty(1)$ (see \Cref{appendix: examples}).
    In fact, we clip the losses in this lower bound construction to belong to $[0,1]^k$.
    This example is adapted from \cite[Section 6]{eldowa2024information}, where they consider the case when $\feat{a}$ too is $(1-\varepsilon)\bm{1}_k/k + \varepsilon e_a$, and prove a $\tilde{\Omega}(\sqrt{kT})$ lower bound in their Theorem 9. The game we consider here is harder as asserts the following theorem. 
    \begin{theorem}
        In the bandit game with ill-conditioned observers, it holds that $R_T^*$ is $\tilde{\Omega}(\nicefrac{1}{\epsilon}\sqrt{kT})$.
    \end{theorem}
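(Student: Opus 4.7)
The plan is to follow the template of the earlier lower bounds in this appendix: instantiate a hard stochastic parameter-noise instance in the style of \Cref{thm:global-lower-bound,thm:local-lower-bound}, then transport it to the adversarial minimax rate via \Cref{lem:reg-truncation} with Gaussian noise scale $\sigma \asymp 1/\sqrt{\log T}$ so that the clipping probability is $1/\mathrm{poly}(T)$ and the truncation correction is absorbed into the $\tilde{\Omega}$. The cleanest route is a reduction to Theorem 9 of \cite{eldowa2024information}, which proves a $\tilde{\Omega}(\sqrt{kT})$ lower bound for the bandit game whose features coincide with our observation matrices $M_a = (1-\epsilon)\bm{1}_k/k + \epsilon e_a$. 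The key observation enabling the reduction is that for any loss vector $\loss \in \Loss$, the optimal action in both their problem and ours is $\arg\min_a \loss(a)$, since $M_a^\top \loss = (1-\epsilon)\bar\loss + \epsilon\loss(a)$ and the first summand is $a$-independent. Therefore, along any action sequence and loss sequence, their regret equals $\epsilon$ times ours, and transporting their $\tilde{\Omega}(\sqrt{kT})$ bound immediately yields $R_T^* \geq \tilde{\Omega}(\sqrt{kT}/\epsilon)$.

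To make the argument self-contained, I would redo the construction directly in our setup, following the pattern of the previous two lower bounds. The hypotheses $\theta_1, \ldots, \theta_k$ would be chosen to share a common mean so that the constant ``average-of-loss'' component of every observation carries no information about the hidden index; a convenient choice is $\theta_i(i) = -\Delta(k-1)/k$ and $\theta_i(j) = \Delta/k$ for $j \neq i$, making arm $i$ uniquely optimal under $\theta_i$ with suboptimality gap $\Delta$ for every other action. With i.i.d.\ noise $X_t \sim \cN(\bm 0, \sigma^2 \bm{I}_k)$ clipped componentwise into $\cB_\infty(1)$, the central computation is
\[
M_a^\top(\theta_i - \theta_j) \;=\; \epsilon\Delta\bigl(\mathbb{1}[a = j] - \mathbb{1}[a = i]\bigr),
\]
so only plays of arms in $\{i,j\}$ help distinguish $\theta_i$ from $\theta_j$, with signal magnitude $\epsilon\Delta$ against the observation-noise variance $\sigma^2 \|M_a\|^2 = \sigma^2(1+\epsilon^2(k-1))/k$. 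The resulting per-play KL is of order $\epsilon^2 \Delta^2 k/\sigma^2$ for the relevant arms and zero otherwise, and the standard Pinsker-plus-Cauchy--Schwarz aggregation over a well-chosen reference hypothesis (as in the proofs of \Cref{thm:global-lower-bound,thm:local-lower-bound}) then pinpoints the critical gap at which the $k$-family becomes indistinguishable, giving a regret lower bound of the stated order after the \Cref{lem:reg-truncation} correction.

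The main obstacle I anticipate is recovering the $\sqrt{k}$ factor and the $1/\epsilon$ factor simultaneously. A naive one-pair Bretagnolle--Huber step only yields $\tilde{\Omega}(\sqrt{T}/\epsilon)$, because the pairwise KL already carries the $\epsilon^{-2}$ discount, and an injudicious choice of reference hypothesis (for instance one that differs from the $\theta_i$'s in the average direction) washes out the observation-level sparsity that produces the $\epsilon^2$ factor. The resolution is exactly what makes the reduction route attractive: the proof of \cite[Theorem 9]{eldowa2024information} is itself a $k$-hypothesis stochastic bandit lower bound whose KL analysis, transplanted into our observation model, produces the $\epsilon^{-2} k$ scaling of the summed informations needed to extract both factors at once. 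The remaining technicalities -- balancing $\sigma$ against the clipping error in \Cref{lem:reg-truncation} and ensuring $\Delta \leq 1$ so that the losses fit in $\cB_\infty(1)$ before truncation -- are identical to those handled in \Cref{thm:global-lower-bound,thm:local-lower-bound}.
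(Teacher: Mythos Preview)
Your reduction in the first paragraph is correct and is essentially the route the paper takes: it notes that the game here is the one from \cite[Section~6]{eldowa2024information} with only the feature map changed from $\feat{a}=M_a$ to $\feat{a}=e_a$, so that for any policy and any loss sequence the regret is exactly $1/\epsilon$ times theirs, and then reruns their construction verbatim. So that part would go through.

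The direct construction you sketch, however, has a real gap in the choice of noise. With isotropic noise $X_t\sim\cN(\bm 0,\sigma^2\bm I_k)$, the scalar observation $M_a^\top\ell_t$ has variance $\sigma^2\|M_a\|^2=\sigma^2(1+\epsilon^2(k-1))/k\approx\sigma^2/k$, not $\sigma^2$. This makes every instance roughly $k$ times easier to distinguish: carrying through the auxiliary-environment averaging one finds $\tfrac{1}{k}\sum_i \kl{\pr_{\theta_0}}{\pr_{\theta_i}}$ of order $\epsilon^2\Delta^2 T/\sigma^2$ rather than $\epsilon^2\Delta^2 T/(k\sigma^2)$, forcing $\Delta\asymp\sigma/(\epsilon\sqrt T)$ and a final bound of only $\tilde\Omega(\sqrt T/\epsilon)$, missing the $\sqrt k$. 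You cannot repair this by inflating $\sigma$, since each coordinate of $X_t$ must remain $O(1)$ with high probability for the truncation step to be negligible. Relatedly, your ``zero otherwise'' claim for the per-play KL holds when comparing $\theta_i$ to $\theta_j$, but not when comparing to a reference $\theta_0$, which is what the $k$-hypothesis averaging actually uses.

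The paper's remedy is to take \emph{rank-one} noise $X_t\sim\cN(\bm 0,\sigma^2\bm 1_k\bm 1_k^\top)$, i.e.\ $\ell_t=\theta+Z_t\bm 1_k$ with scalar $Z_t\sim\cN(0,\sigma^2)$. Since each $M_a\in\Delta_k$, the observation noise is $M_a^\top X_t=Z_t$, with variance $\sigma^2$ independent of $k$; and a single event $|Z_t|\le 1/4$ controls the clipping for all coordinates at once. Taking $\theta_a=(\tfrac12+\Delta(1-\epsilon)/k)\bm 1_k-\Delta e_a$ and $\theta_0=\tfrac12\bm 1_k$, one gets $M_c^\top(\theta_0-\theta_a)=\epsilon\Delta\indicator{c=a}$ exactly, hence per-play KL $\tfrac{\epsilon^2\Delta^2}{2\sigma^2}\indicator{c=a}$ with no leakage to other arms, and the standard averaging then yields both factors simultaneously with $\Delta\asymp\sigma\sqrt{k/T}/\epsilon$. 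An incidental bonus of the rank-one noise is that $(\feat{c}-\feat{d})^\top\ell_t$ is deterministic, so the second (uniformly bounded) clause of \Cref{lem:reg-truncation} applies directly.
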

    \begin{proof}
        Besides the altered definition of $\feat{a}$, the proof is essentially identical to that of \cite[Proposition 8]{eldowa2024information}. Nonetheless, we report here the full proof in our notation for completeness.
        Consider $k$ environments $\{\theta_a\}_{a \in \cA}$ such that
        \[
            \theta_a \coloneqq \Brb{ \frac{1}{2} + \Delta \frac{1-\epsilon}{k} } \bm{1}_k - \Delta e_a \,, 
        \]
        besides an auxiliary environment $\theta_0 \coloneqq ({1}/{2}) \bm{1}_k$, where $0 <\Delta \leq 1/4$.
        Notice then that for any $b \in \cA \setminus \{a\}$,
        \begin{align*}
            \inprod{ \feat{b} - \feat{a}, \theta_a} = \inprod{ e_b - e_a, \theta_a} = \Delta \,. 
        \end{align*}
        Hence, $\theta_a \in C_a$, and in a similar fashion to the proof of \Cref{thm:global-lower-bound}, we have that
        \begin{align*}
            \storeg(\theta_a) = \E_{\theta_a} \sum_{t=1}^T \inprod {\feat{A_t} - \feat{a}, \theta_a } 
            &= \Delta (T - T_{\theta_a}(a)) \\ 
            &= \Delta (T - T_{\theta_0}(a)) - \Delta (T_{\theta_a}(a) - T_{\theta_0}(a)) \\
            &\geq \Delta (T - T_{\theta_0}(a)) - \Delta T \sqrt{(1/2)\kl{\pr_{\theta_0}}{\pr_{\theta_a}}}  \,.
        \end{align*}
        In this game, we will take $X_t$ to be distributed according to $\cN(\bm{0}, \sigma^2 \bm{1}_d \bm{1}_d^\top)$, for some $\sigma > 0$.
        This means that in environment $\theta_a$, $\loss_t = \theta_a + Z_t \bm{1}_k$, where $(Z_t)_t$ is an i.i.d. sequence of random variables, each sampled from $\cN(0, \sigma^2)$. Notice that, for all $c \in \cA$, $M_c \in \Delta_k$; hence, $M_c^\top \loss_t = M_c^\top \theta_a + Z_t$. So we get that
        \begin{align*}
            \kl{\pr_{\theta_0, c}}{\pr_{\theta_a, c}} &= \Bkl{\cN \brb{M_c^\top \theta_0, \sigma^2}}{\cN \brb{M_c^\top \theta_a, \sigma^2}} \\
            &= \frac{1}{2\sigma^2} \lrb{ \theta_0 - \theta_a }^\top M_c  M_c^\top \lrb{ \theta_0 - \theta_a } \,.
        \end{align*}
        Notice, moreover that 
        \begin{align*}
            \lrb{ \theta_0 - \theta_a }^\top M_c = \Delta \Brb{e_a  - \frac{1-\epsilon}{k} \bm{1}_k }^\top \Brb{ \frac{1-\epsilon}{k}\bm{1}_k + \epsilon e_c } = \epsilon \Delta \indicator{a = c} \,. 
        \end{align*}
        Thus, $\kl{\pr_{\theta_0, c}}{\pr_{\theta_a, c}}= \frac{\epsilon^2 \Delta^2}{2\sigma^2} \indicator{a = c}$
        \Cref{lem:kl} then implies that
        \begin{align*}
            \storeg(\theta_a) 
            &\geq \Delta (T - T_{\theta_0}(a)) - \Delta T \sqrt{\frac{\epsilon^2 \Delta^2}{4\sigma^2} T_{\theta_0}(a)} = \Delta (T - T_{\theta_0}(a)) - \frac{\epsilon \Delta^2}{2\sigma} T \sqrt{ T_{\theta_0}(a)} \,.
        \end{align*}
        Hence, via Jensen's inequality,
        \begin{align*}
            \frac{1}{k} \sum_{a \in \cA} \storeg(\theta_a) 
            &\geq \Delta (T - T/K) - \frac{\epsilon \Delta^2}{2\sigma} T \sqrt{ T/ k} \\
            &\geq \frac{\Delta}{2} T \Brb{ 1 - \frac{\epsilon \Delta}{\sigma} \sqrt{ T/ k} } \\
            &=  \frac{\sigma}{8 \epsilon} \sqrt{k T}
        \end{align*}
        where in the last step we have picked $\Delta = \frac{\sigma \sqrt{k}}{2 \epsilon \sqrt{ T}}$, assuming that $T \geq 4 \sigma^2 k / \epsilon^2$ so that $\Delta \leq 1 / 4$.
        To apply \Cref{lem:reg-truncation}, we choose the clipping operator as follows (where $\loss_t(c) = e_c^\top \loss_t$):
        \begin{equation*}
            [\clip(\loss_t)](c) \coloneqq \max \{ \min \{ 1, \loss_t(c) \}, 0 \}\,.
        \end{equation*}
        Note that under any environment $\theta_a$, $1/4 \leq \theta_a(c) \leq 3/4$ for all $c \in \cA$ by the definition of $\theta_a$ and the fact that $0 \leq \Delta \leq 1/4$. Hence, since $\loss_t(a) = \theta_a(c) + Z_t$, if $\clip(\loss_t) \neq \loss_t$, then it must be that $|Z_t| > 1/4$. And since $Z_t \sim \cN(0, \sigma^2)$, we have that $\pr( |Z_t| > 1/4 ) \leq 2 \exp(-1/(32 \sigma^2))$.
        Moreover, under environment $\theta_a$, it holds for any pair of actions $c$ and $d$ that
        \begin{align*}
            \inprod{ \feat{c} - \feat{d}, \loss_t} = \inprod{ e_c - e_d, \theta_a + Z_t \bm{1}_k} = \inprod{ e_c - e_d, \theta_a} \leq \Delta, 
        \end{align*}
        which implies that $\hat{R}_T(\theta_a) \leq \Delta T$ uniformly.
        Therefore, we can apply the second part of \Cref{lem:reg-truncation} to obtain that
        \begin{align*}
            R_T^* &\geq \frac{\sigma}{8 \epsilon} \sqrt{k T} - 2 \Delta T^2 e^{\nicefrac{-1}{32 \sigma^2}} 
            = \frac{\sigma}{\epsilon} \sqrt{k T} \Brb{ \frac{1}{8} -  T e^{\nicefrac{-1}{32 \sigma^2}}} 
            = \frac{\epsilon^{-1}}{64\sqrt{2 \log (16 T)}} \sqrt{kT} \,,
        \end{align*}
        where the first equality uses our choice of $\Delta$, and to obtain the second we set $\sigma = \frac{1}{4\sqrt{2 \log(16 T)}}$.
    \end{proof}

\end{document}